\theoremstyle{plain}
\newtheorem{theorem}{Theorem}[section]
\newtheorem{lemma}[theorem]{Lemma}
\newtheorem{corollary}[theorem]{Corollary}
\theoremstyle{definition}
\newtheorem{assumption}[theorem]{Assumption}
\theoremstyle{remark}
\NewDocumentCommand\algo{g}{%
\IfNoValueTF{#1}{FedAMD}{FedAMD }%
}%
\newcommand{\gxs}[1]{\boldsymbol{{x}}_{#1}}
\newcommand{\gx}[1]{\tilde{\boldsymbol{x}}_{#1}}
\newcommand{\lxx}[2]{\boldsymbol{x}_{#2}^{(#1)}}
\newcommand{\lx}[3]{\boldsymbol{x}_{#2, #3}^{(#1)}}
\newcommand{\optx}{\boldsymbol{x}_*}
\NewDocumentCommand\x{ggg}{%
    \IfNoValueTF{#1}{\optx}{\IfNoValueTF{#2}{\gx{#1}}{\IfNoValueTF{#3}{\lxx{#1}{#2}}{\lx{#1}{#2}{#3}}}}
}
\newcommand{\E}{\mathbb{E}}
\renewcommand{\eqref}[1]{(\ref{#1})}
\newcommand{\bracket}[1]{\left(#1\right)}
\newcommand{\norm}[1]{\left\|#1\right\|_2^2}
\newcommand{\innerproduct}[2]{\left\langle#1, #2\right\rangle}
\newcommand{\indicator}{\boldsymbol{1}_{\{b<n\}}}
\newcommand{\fullbatch}[2]{\mathcal{B}_{#1, #2}}
\newcommand{\minibatch}[2]{\mathcal{B}^{'}_{#1, #2}}
\newcommand{\fullgrad}[2]{\nabla f_{#1}\left(\x{#2}, \fullbatch{#1}{#2}\right)}
\newcommand{\minigrad}[4]{\nabla f_{#1}\left(\x{#1}{#2}{#3}, \minibatch{#1}{#4}\right)}
\newcommand{\stogradone}[1]{\tilde{g}_{#1}}
\newcommand{\stogradthree}[3]{g^{(#1)}_{#2, #3}}
\NewDocumentCommand\g{ggg}{%
    \IfNoValueTF{#2}{\stogradone{#1}}{\stogradthree{#1}{#2}{#3}}
}
\newcommand{\gradone}[1]{\nabla F\left(\gx{#1}\right)}
\newcommand{\gradtwo}[2]{\nabla F_{#1}\left(\gx{#2}\right)}
\newcommand{\gradthree}[3]{\nabla F_{#1}\left(\x{#1}{#2}{#3}\right)}
\NewDocumentCommand\grad{mgg}{%
    \IfNoValueTF{#2}{\gradone{#1}}{\IfNoValueTF{#3}{\gradtwo{#1}{#2}}{\gradthree{#1}{#2}{#3}}}
}
\icmltitlerunning{Anchor Sampling for Federated Learning with Partial Client Participation}
\begin{document}

\twocolumn[
\icmltitle{Anchor Sampling for Federated Learning with Partial Client Participation}




\begin{icmlauthorlist}
\icmlauthor{Feijie Wu}{purdue}
\icmlauthor{Song Guo}{polyu}
\icmlauthor{Zhihao Qu}{hhu}
\icmlauthor{Shiqi He}{ubc}
\icmlauthor{Ziming Liu}{polyu}
\icmlauthor{Jing Gao}{purdue}
\end{icmlauthorlist}

\icmlaffiliation{purdue}{Purdue University, West Lafayette, IN, USA}
\icmlaffiliation{polyu}{The Hong Kong Polytechnic University, Hong Kong}
\icmlaffiliation{hhu}{Hohai University, Nanjing, China}
\icmlaffiliation{ubc}{The University of British Columbia, Vancouver, Canada}

\icmlcorrespondingauthor{Feijie Wu}{wu1977@purdue.edu}
\icmlcorrespondingauthor{Song Guo}{song.guo@polyu.edu.hk}
\icmlcorrespondingauthor{Zhihao Qu}{quzhihao@hhu.edu.cn}
\icmlcorrespondingauthor{Jing Gao}{jinggao@purdue.edu}

\icmlkeywords{Machine Learning, ICML}

\vskip 0.3in
]



\printAffiliationsAndNotice{}  

\begin{abstract}
Compared with full client participation, partial client participation is a more practical scenario in federated learning, but it may amplify some challenges in federated learning, such as data heterogeneity. The lack of inactive client's updates in partial client participation makes it more likely for the model aggregation to deviate from the aggregation based on full client participation. Training with large batches on individual clients is proposed to address data heterogeneity in general, but their effectiveness under partial client participation is not clear. Motivated by these challenges, we propose to develop a novel federated learning framework, referred to as FedAMD,  for partial client participation. The core idea is anchor sampling, which separates partial participants into anchor and miner groups. Each client in the anchor group aims at the local bullseye with the gradient computation using a large batch. Guided by the bullseyes, clients in the miner group steer multiple near-optimal local updates using small batches and update the global model. By integrating the results of the two groups, \algo{}is able to accelerate the training process and improve the model performance. Measured by $\epsilon$-approximation and compared to the state-of-the-art methods, \algo{}achieves the convergence by up to $O(1/\epsilon)$ fewer communication rounds under non-convex objectives. Empirical studies on real-world datasets validate the effectiveness of \algo{}and demonstrate the superiority of the proposed algorithm: Not only does it considerably save computation and communication costs, but also the test accuracy significantly improves. 


\end{abstract}

\section{Introduction}

Federated learning (FL) \citep{konevcny2015federated, konevcny2016federated, mcmahan2017communication} has attained an increasing interest over the past few years. As a distributed training paradigm, it enables a group of clients to collaboratively train a global model from decentralized data under the orchestration of a central server. By this means, sensitive privacy is basically protected because the raw data are not shared across the clients. Due to the unreliable network connection and the rapid proliferation of FL clients, it is infeasible to require all clients to be simultaneously involved in the training. To address the issue, recent works \citep{li2019convergence, philippenko2020bidirectional, gorbunov2021marina, karimireddy2020scaffold, yang2020achieving, li2020federated, eichner2019semi, yan2020distributed, ruan2021towards, gu2021fast, lai2021oort} introduce a practical setting where merely a portion of clients participates in the training. The partial-client scenario effectively avoids the network congestion at the FL server and significantly shortens the idle time as compared to traditional large-scale machine learning \citep{zinkevich2010parallelized, bottou2010large, dean2012large, bottou2018optimization}.

However, the performance of a model trained with partial client participation is much worse than the one trained with full client participation \citep{yang2020achieving}. 
A well-known challenge in federated learning is data heterogeneity, i.e., different clients have non-i.i.d. data. When only a portion of the clients participate in the learning, there are inevitably updates missing from inactive clients.
With data heterogeneity and partial client participation, the optimal model is subject to the local data distribution, and therefore, the local updates on the clients' models greatly deviate from the update towards optimal global parameters \citep{karimireddy2020scaffold, malinovskiy2020local, pathak2020fedsplit, wang2020tackling, wang2021novel, mitra2021linear,rothchild2020fetchsgd, zhao2018federated, wu2021deterioration}. FedAvg \citep{mcmahan2017communication, li2019convergence, yu2019linear, yu2019parallel, stich2018local}, for example, is less likely to follow a correct update towards the global minimizer because the model aggregation on the active clients critically deviates from the aggregation on the full clients, an expected direction towards global minimizer \citep{yang2020achieving}. 

\begin{table*}[t]
    \centering
    \resizebox{\textwidth}{!}{
    \renewcommand{\arraystretch}{1.2}
    \newcolumntype{M}[1]{>{\centering\arraybackslash}m{#1}}
    \definecolor{LightCyan}{rgb}{0.88,1,1}
    \begin{tabular}{M{0.2\linewidth}M{0.15\linewidth}M{0.05\linewidth}M{0.1\linewidth}M{0.3\linewidth}}
    \Xhline{2\arrayrulewidth}
        Convexity & \multicolumn{2}{c}{Method} & Partial Clients & Communication Rounds \\\hline
        \multirow{7}{*}{Non-convex} & \multicolumn{2}{c}{Minibatch SGD \cite{wang2019stochastic}} & \ding{55} & $\frac{1}{MK\epsilon^2} + \frac{1}{\epsilon}$ \\ 
        & \multicolumn{2}{c}{FedAvg \cite{yang2020achieving}} & \ding{51} & $\frac{K}{A \epsilon^2} + \frac{1}{\epsilon}$ \\ 
        & \multicolumn{2}{c}{SCAFFOLD \cite{karimireddy2020scaffold}} & \ding{51} & $\frac{\sigma^2}{AK\epsilon^2} + \bracket{\frac{M}{A}}^{2/3} \frac{1}{\epsilon}$ \\
        & \multicolumn{2}{c}{BVR-L-SGD \cite{murata2021bias}} & \ding{55} & $\frac{1}{MK\epsilon^{3/2}} + \frac{1}{\epsilon}$\\
        & \multicolumn{2}{c}{VR-MARINA\cite{gorbunov2021marina}} & \ding{55} & $\frac{\sigma}{M \epsilon^{3/2}} + \frac{\sigma^2}{M \epsilon} + \frac{1}{\epsilon}$ \\
        & \multicolumn{2}{c}{\algo{}(Sequential) (Corollary \ref{cor:nonconvex_sequential})} & \ding{51} & $\frac{M}{A\epsilon}$ \\
        & \multicolumn{2}{c}{\algo{}(Constant) (Corollary \ref{corollary:nonconvex_constant})} & \ding{51} & $\frac{M}{A\epsilon}$ \\\hline
        
        \multirow{5}{*}{\makecell{\\PL condition\\ (or *strongly-convex)}} & \multicolumn{2}{c}{Minibatch SGD* \cite{woodworth2020minibatch}} & \ding{55} & $\frac{\sigma^2}{\mu MK \epsilon} + \frac{1}{\mu} \log \frac{1}{\mu \epsilon}$ \\
        & \multicolumn{2}{c}{FedAvg \cite{karimireddy2020mime}} & \ding{51} & $\frac{1+\sigma^2/K}{\mu A \epsilon} + \frac{\sqrt{1+\sigma^2/K}}{\mu \sqrt{\epsilon}} + \frac{1}{\mu} \log \frac{1}{\epsilon}$ \\
        & \multicolumn{2}{c}{SCAFFOLD* \cite{karimireddy2020scaffold}} & \ding{51} & $\frac{\sigma^2}{\mu A K \epsilon} + \bracket{\frac{M}{A} + \frac{1}{\mu}}\log \frac{M\mu}{A\epsilon}$ \\
        & \multicolumn{2}{c}{VR-MARINA \cite{gorbunov2021marina}} & \ding{55} & $\bracket{\frac{\sigma^2}{\mu M \epsilon} + \frac{\sigma}{\mu^{3/2} M \sqrt{\epsilon}} + \frac{1}{\mu}}\log \frac{1}{\epsilon}$ \\
        & \multicolumn{2}{c}{\algo{}(Constant) (Corollary \ref{corollary:pl_constant})} & \ding{51} & $\bracket{\frac{1}{\mu} + \frac{M}{\mu^2 A} + \frac{M}{A}}\log \frac{1}{\epsilon}$ \\
    \Xhline{2\arrayrulewidth}
    \end{tabular}}
    \caption{Number of communication rounds that achieve $\E \norm{\grad{out}} \leq \epsilon$ for non-convex objectives (or $\E F({\x{out}}) - F_* \leq \epsilon$ for PL condition or strongly-convex with the parameter of $\mu$). We optimize an online scenario and set the small batch size to 1. The symbol \ding{51} or \ding{55} for "Partial Clients" is determined by whether full-client participation is required. }
    \label{table:communication}
\end{table*}

There are efforts toward addressing the data heterogeneity challenge in the general federated learning setting. One promising solution is to train the model using large batches. Specifically, each client fully utilizes the local training set or generates a large number of training samples,
 and thus could compute a gradient for an accurate estimation of the expected local update. Under full client participation, this approach follows the optimal update towards the global minimizer. Recent studies \cite{gorbunov2021marina, murata2021bias, tyurin2022dasha, zhao2021fedpage} have demonstrated its great potential from a theoretical perspective. Measured by $\epsilon$-approximation, MARINA \cite{gorbunov2021marina}, for instance, realizes $O(1/M\epsilon^{1/2})$ faster while using large batches, where $M$ indicates the number of clients. 

Despite the success, the large-batch update method has some drawbacks. Typically, a large batch update involves several gradient computations compared to a small batch update. This increases the burden of FL clients, especially on IoT devices like smartphones. On small devices, the hardware hardly accommodates all samples in a large batch simultaneously. Instead, the large batch has to be divided into several small batches to obtain the final gradient.

This large-batch approach is further challenged in the partial client participation scenario studied in this paper. Especially, the convergence property has not be analyzed. Some existing work, such as BVR-L-SGD \cite{murata2021bias} and FedPAGE \cite{zhao2021fedpage},  claim that they can work under partial client participation, but they still require all clients' participation when the algorithms come to the synchronization using a large batch. In this paper, partial client participation refers to the case where only a portion of clients take part at every round during the entire training.

On the other hand, small-batch methods, such as mini-batch SGD and local SGD, are computational-friendly to perform multiple local updates, in contrast to the large batches when doing so. However, they suffer from data heterogeneity and partial client participation. 


Motivated by the aforementioned observations, we propose a novel framework named \algo{}under \underline{fed}erated learning with partial client participation. As far as we know, this is the first work that creatively integrates the complementary advantages of small-batch and large-batch approaches.  The framework is based on anchor sampling, which separates the partial participants into two \underline{d}isjoint groups, namely, \underline{a}nchor and \underline{m}iner groups. 
In the anchor group, clients (a.k.a. anchors) compute the gradient using a large batch cached in the server to estimate the global orientation. In the miner group, clients (a.k.a. miners) perform multiple updates corrected according to the previous and the current local parameters and the last local update volume. Compared with using large-batch updates only, the benefit of involving miners is twofold.  First, multiple local updates without severe deviation can effectively accelerate the training process. Second, \algo{}updates the global model using the local models from the miner group only. Compared with using small-batch updates only, we demonstrate a much better convergence property of the proposed method. Since anchor sampling could distinguish the clients with time-varying probability, we separately consider constant and sequential probability settings. 

\paragraph{Contributions.} We summarize our contributions below: 
\begin{itemize}
    \item 
    \textbf{Algorithmically}, we propose a unified federated learning framework \algo{}based on anchor sampling, which identifies a participant as an anchor or a miner. Clients in the anchor group aim to obtain the bullseyes of their local data with a large batch, while the miners target to accelerate the training with multiple local updates using small batches. 
    
    \item 
    \textbf{Theoretically}, we establish the convergence rate for \algo{}under non-convex objectives in both constant and sequential probability settings. Incorporating the large batches during the training, we prove that the convergence property outstandingly improves. To the best of our knowledge, this is the first work to analyze the effectiveness of large batches under partial client participation. Our theoretical results indicate that, with the proper setting for the probability, \algo{}can achieve a convergence rate of $O(\frac{M}{AT})$ under non-convex objective, and linear convergence under Polyak-\L{}ojasiewicz (PL) condition \cite{polyak1963gradient, lojasiewicz1963topological}. 
    
    \item 
    \textbf{Empirically}, we conduct extensive experiments to compare \algo{}with state-of-the-art approaches. The results provide evidence of the superiority of the proposed algorithm. Achieving the same test accuracy, \algo{}utilizes less computational power measured by the cumulative gradient complexity. 
\end{itemize}

\section{Related Work}

In this section, we discuss the state-of-the-art works that are most relevant to our research. A more comprehensive review is provided in Appendix \ref{appendix:related_work}. 

\paragraph{Mini-batch SGD vs. Local SGD. } 
Distributed optimization aims to train large-scale deep learning systems. Local SGD (also known as FedAvg) \cite{stich2018local, dieuleveut2019communication, haddadpour2019local, haddadpour2019convergence} performs multiple (i.e., $K \geq 1$) local updates with $K$ small batches, while mini-batch SGD computes the gradients averaged by $K$ small batches \cite{woodworth2020minibatch, woodworth2020local} (or with a large batch \cite{shallue2019measuring, you2018imagenet, goyal2017accurate}) on a given model. 
There has been a long discussion on which one is better \cite{lin2019don, woodworth2020local, woodworth2020minibatch, yun2021minibatch}, but there is no existing work applying both techniques at the same time to speed up model training. 

\paragraph{Variance Reduction in FL.} 
The variance reduction techniques have critically driven the advent of FL algorithms \cite{karimireddy2020scaffold,  wu2021deterioration, liang2019variance, karimireddy2020mime, murata2021bias, mitra2021linear} by correcting each local computed gradient with respect to the estimated global orientation. 
Roughly, the methods fall into two categories based on types of correction, namely, static and recursive. The former methods \cite{karimireddy2020scaffold,  wu2021deterioration, liang2019variance, karimireddy2020mime, mitra2021linear} use a consistent gradient correction within a training round. As the local updates go further, the static correction on a client is still dramatically biased to its local optimal solution, which hinders the training efficiency, especially under partial client participation. Recursive correction can avoid the challenge because it recursively corrects the local update based on the latest local model \cite{murata2021bias}. However, existing works require all workers to attain the gradient at the starting point of each round, which is infeasible for federated learning settings. To the best of our knowledge, our work is the first to achieve recursive calibration under partial-client scenarios.

\section{\algo{}} \label{sec:algo}

In this section, we comprehensively describe the technical details of \algo, a federated learning framework with anchor sampling. In specific, it separates the active participants into the anchor group and the miner group with time-varying probabilities. The pseudo-code is illustrated in Algorithm \ref{algo}.  

\paragraph{Problem Formulation.} In an FL system with a total of $M$ clients, the objective function is formalized as
\begin{equation} \label{problem}
    \min_{\mathbf{x} \in \mathbb{R}^d} F(\mathbf{x}) := \frac{1}{M} \sum_{m \in [M]} F_m(\mathbf{x})
\end{equation}
where we define $[M]$ for a set of $M$ clients. $F_m(\cdot)$ indicates the local expected loss function for client $m$, which is unbiased estimated by empirical loss $f_m(\cdot)$ using a random realization $\mathcal{B}_m$ from the local training data $\mathcal{D}_m$, i.e., $\E_{\mathcal{B}_m\sim\mathcal{D}_m} f_m(\mathbf{x}, \mathcal{B}_m) = F_m(\mathbf{x})$. We denote $n$ by the size of a client's local dataset, i.e., $|\mathcal{D}_m|=n$ for all $m \in [M]$, and $n$ can be infinite large in the streaming/online cases. $F_*$ represents the minimum loss for Equation (\ref{problem}). 

\paragraph{Algorithm Description.} 
In \algo, a global model is initialized with arbitrary parameters $\x{0} \in \mathbb{R}^d$. It is required to initialize the caching gradients, and there is one way to set them with $v_0^{(m)} = \fullgrad{m}{0}$ for all $m \in [M]$, where $\fullbatch{m}{0}$ is a $b$-sample batch generated from $\mathcal{D}_m$. Other hyperparameters' settings such as the learning rates and the minibatch sizes will be discussed in Section \ref{sec:theory} and Appendix \ref{appendix:experiments} from the theoretical and the empirical perspectives, respectively. 

\begin{algorithm}[t]
\caption{\algo}\label{algo}
\textbf{Input:} local learning rate $\eta_l$, global learning rate $\eta_s$, minibatch size $b$, $b' < b$, local updates $K$, probability $\{p_t \in [0, 1]\}_{t \geq 0}$, initial model $\x{0}$, initial caching gradient $v_0 = \left\{v_0^{(m)}\right\}_{m \in [M]}$. 
\begin{algorithmic}[1]
\For{$t = 0, 1, 2, \dots$}
    \State Sample clients $\mathcal{A} \subseteq [M]$
    \State Send $\x{t}$ and $\g{t} = avg(v_t)$ to clients $i \in \mathcal{A}$
    \State Initialize subsequent caching gradient $v_{t+1} = v_{t}$
    \For{$i \in \mathcal{A}$ \textbf{in parallel}}
        \If{$Bernoulli(p_t) == 1$}
            \State $v_{t+1}^{(i)} = \fullgrad{i}{t}$ with $|\fullbatch{i}{t}| = b$
            \State Send $v_{t+1}^{(i)}$ to the server
        \Else
            \State Initialize $\x{i}{t}{-1} = \x{i}{t}{0} = \x{t}$, $\g{i}{t}{0} = \g{t}$
            \For{$k = 0, \dots, K-1$}
                \State Generate random realization $|\minibatch{i}{k}|=b'$
                \State Compute $\g{i}{t}{k+1}$ via Equation \eqref{eq:miner_next_update}
                \State $\x{i}{t}{k+1} = \x{i}{t}{k} - \eta_l \cdot \g{i}{t}{k+1}$
            \EndFor
            \State $\Delta\x{i}{t} = \x{t} - \x{i}{t}{K}$
            \State Send $\Delta\x{i}{t}$ to the server
        \EndIf
    \EndFor
    \State Update $\x{t+1}$ via Equation \eqref{eq:server_global_update}
\EndFor
\end{algorithmic}
\end{algorithm}

At the beginning of each round $t \in \{0, 1, 2, \dots\}$, the server randomly picks an $A$-client subset $\mathcal{A}$ from $M$ clients (Line 2). Since each client is independently selected without replacement, under the setting of Equation (\ref{problem}), clients have an equal chance to be selected with the probability of $\frac{A}{M}$. Subsequently, the server distributes the global model $\x{t}$ to the clients in the set $\mathcal{A}$, accompanying the global bullseye (i.e., the averaged caching gradient) $\g{t} = avg(v_t) = \frac{1}{M} \sum_{m \in [M]} v_t^{(m)}$ (Line 3). With the probability of $p_t$, client $i \in \mathcal{A}$ is classified for the anchor group (Line 7--8) or the miner group (Line 10--17), and different groups have different objectives and focus on different tasks. 

\paragraph{Anchor group (Line 7--8).} Clients in this group target to discover the bullseyes based on their local data distribution. According to Line 6, client $i \in \mathcal{A}$ has the probability of $p_t$ to become a member of this group. Then, the client utilizes a large batch $\fullbatch{i}{t}$ with $b$ samples to obtain the gradient $v_{t+1}^{(i)}  = \fullgrad{i}{t}$ (Line 6). Therefore, following the gradient $v_{t+1}^{(i)}$ can find an optimal or near-optimal solution for client $i$. Next, the client pushes the gradient to the server and updates the caching gradient (Line 7). In view that some clients do not participate in the anchor group for obtaining the bullseyes at round $t$, the server spontaneously inherits their previous calculation from $v_t$ (Line 4). As a result, $\g{t}$ in Line 3 indicates an approximate orientation towards global optimal parameters, which directs the local update in the miner group and affects the final global update. Besides, $v_{t+1}^{(i)}$ influences the training from round $t+1$ up to the next time when client $i$ is a member of anchor group. 

\paragraph{Miner group (Line 10--17).} 
Guided by the global bullseye, clients in the miner group perform multiple local updates and finally drive the update of the global model. First, client $i$ initializes the model with $\x{t}$ and the target direction with $\g{t}$ (Line 10). Ideally, in the subsequent $K$ updates (Line 11), client $i$ update the model with the gradient $\nabla F(\x{i}{t}{k})$ for $k \in \{0, \dots, K-1\}$. This is impractical because clients cannot access all others' training sets to compute the noise-free gradients. Instead, the client at $k$-th iteration generates a $b'$-sample realization $\minibatch{i}{k}$ (Line 12) and calculates the update $\g{i}{t}{k+1}$ via a variance-reduced technique, i.e., 
\begin{equation} \label{eq:miner_next_update}
    \g{i}{t}{k+1} = \g{i}{t}{k} - \minigrad{i}{t}{k-1}{k} + \minigrad{i}{t}{k}{k}. 
\end{equation} 
The update $\g{i}{t}{k+1}$ is approximate to $\nabla F(\x{i}{t}{k})$ for two reasons: (i) the first term is used to estimate the global update because $\g{i}{t}{0}$ stores the global bullseye; and (ii) the rest terms remove the perturbation of data heterogeneity and reflect the true update at $\x{i}{t}{k}$. Therefore, the local model update follows $\x{i}{t}{k+1} = \x{i}{t}{k} - \eta_l \cdot \g{i}{t}{k+1}$ (Line 20). After $K$ local updates, the model changes on client $i$ is $\Delta\x{i}{t} = \x{t} - \x{i}{t}{K}$. Then, the client transmits $\Delta\x{i}{t}$ to the server for the purpose of global model update. 

\paragraph{Server (Line 4 and Line 20).} After the local training on the active participants, the server merges the model changes from the miner group (Line 20) and updates the caching gradients from the anchor group (Line 4). At $t$-th round, let $\Delta \boldsymbol{x}_t$ be the subset of $\mathcal{A}$ acting as miners, i.e., $\Delta \boldsymbol{x}_t = \{\Delta \boldsymbol{x}_t^{(a_1)}, \dots, \Delta \boldsymbol{x}_t^{(a_n)}\}$, where $\{a_1, \dots, a_{\nu}\} \subset \mathcal{A}$ and $0 \leq \nu \leq A$. Therefore, the global update follows 
\begin{equation} \label{eq:server_global_update}
    \x{t+1} = \begin{cases}
        \x{t} - \eta_s \cdot \sum_{j=1}^{\nu} \Delta \boldsymbol{x}_t^{(a_j)} / \nu, & \nu \geq 1\\
        \x{t}, & \nu = 0
    \end{cases}
\end{equation}
The reason why we solely use the changes from the miner group is that clients perform multiple local updates regulated by the global target such that the model changes walk towards the global optimal solution. While directly incorporating the new gradients from the anchor group, the global model has a degraded performance because they perform a single update that aims to find out the local bullseye deviated from the global target. Implicitly, clients in the miner group take in the update of caching gradients at iteration $k=0$ to update the local model, which will affect the next global parameters. 

\paragraph{Previous Algorithms as Special Cases.} The probabilities can vary among the rounds that disjoint the participants into the anchor group and the miner group. By setting $A = M$, and the probability $\{p_t\}$ following the sequence of $\{1, 0, 1, 0, \dots\}$, \algo{}reduces to distributed minibatch SGD $(K=1)$ or BVR-L-SGD $(K>1)$. Therefore, \algo{}subsumes the existing algorithms and takes partial client participation into consideration. To obtain the best performance, we should tune the settings of $\{p_t\}$ and $K$. However, accounting for the generality of \algo, it faces substantial additional hurdles in its convergence analysis, which is one of our main contributions, as detailed in the following section.

\paragraph{Discussion on Communication Overhead.} As the anchors are not necessary to obtain the averaged caching gradient (i.e., $\g{t}$) at $t$-th round, the centralized server solely distributes $\g{t}$ to the miners. Compared to FedAvg, the proposed algorithm requires $(1-p_t)/2$ more communication costs, but it achieves convergence with at least $O(\frac{1}{\epsilon})$ less communication rounds (see Table \ref{table:communication}). Therefore, from the perspective of model training progress, \algo{}is more communication efficient than FedAvg. 


\paragraph{Discussion on Massive-Client Settings.}
A typical example of this scenario is cross-device FL \cite{kairouz2019advances}. In this setting, it is not a wise option for the server to preserve all the caching gradients for clients. Therefore, the clients retain their caching gradients while the server keeps their average. Firstly, at $t$-th round, client $i \in [M]$ copies their caching gradient to $(t+1)$-th round, i.e., $v_{t+1}^{(i)} = v_{t}^{(i)}$. For the client $i$ in the anchor group, they will follow Line 13 in Algorithm \ref{algo} to update $v_{t+1}^{(i)}$ and push $\wedge_{t}^{(i)} = v_{t+1}^{(i)} - v_{t}^{(i)}$ to the server. After the server receives the updates of all local caching gradients, it performs $v_{t+1} = v_{t} + \frac{1}{M} \sum \wedge_{t}$, where $\wedge_{t}$ aggregates $\wedge_{t}^{(i)}$ where client $i$ is in anchor group. 


\section{Theoretical Analysis} \label{sec:theory}

In this section, we analyze the convergence rate of \algo{}under non-convex objectives with respect to $\epsilon$-approximation, i.e., $\min_{t \in [T]} \norm{\grad{t}} \leq \epsilon$. Specifically, when it comes to PL condition, $\epsilon$-approximation refers to $F(\x{T}) - F(\x) \leq \epsilon$. In the following discussion, we particularly highlight the setting of $\{p_t\}$ to obtain the best performance. Before showing the convergence result, we make the following assumptions, where the first two assumptions have been widely used in machine learning studies \cite{karimireddy2020scaffold, li2020federated}, while the last one has been adopted in some recent works \cite{gorbunov2021marina, tyurin2022dasha, murata2021bias}. 

\begin{assumption}[L-smooth] \label{ass:L-smooth}
The local objective functions are Lipschitz smooth: For all $v, \bar{v} \in \mathbb{R}^d$, 
\begin{equation*}
    \|\nabla F_i(v) - \nabla F_i(\Bar{v})\|_2 \le L\|v-\Bar{v}\|_2, \quad \forall i \in [M].
\end{equation*}
\end{assumption}
\begin{assumption}[Bounded Noise] \label{ass:noise}
For all $v \in \mathbb{R}^d$, there exists a scalar $\sigma \geq 0$ such that 
\begin{equation*}
    \mathbb{E}_{\mathcal{B} \sim \mathcal{D}_i} \|\nabla f_i(v, \mathcal{B}) - \nabla F_i(v)\|_2^2 \leq \frac{\sigma^2}{|\mathcal{B}|}, \quad \forall i \in [M].
\end{equation*}
\end{assumption}
\begin{assumption}[Average L-smooth] \label{ass:variance_reduction}
For all $v, \bar{v} \in \mathbb{R}^d$, there exists a scalar $L_{\sigma} \geq 0$ such that 
\begin{align*}
    &\E_{\mathcal{B} \sim \mathcal{D}_i} \norm{\bracket{\nabla f_i(v, \mathcal{B}) - \nabla f_i(\bar{v}, \mathcal{B})} - \bracket{\nabla F_i(v) - \nabla F_i(\bar{v})}} \\
    &\leq \frac{L_\sigma^2}{|\mathcal{B}|} \norm{v-\bar{v}}, \quad \forall i \in [M]. 
\end{align*}
\end{assumption}
\paragraph{Remark.}
Assumption \ref{ass:variance_reduction} definitely provides a tighter bound for the patterns of variance reduction. In fact, solely with Assumption \ref{ass:noise}, the term  $\E_{\mathcal{B} \sim \mathcal{D}_i} \norm{\bracket{\nabla f_i(v, \mathcal{B}) - \nabla f_i(\bar{v}, \mathcal{B})} - \bracket{\nabla F_i(v) - \nabla F_i(\bar{v})}}$ can be bounded by a constant. Additionally, the above term will approach 0 as $v$ and $\bar{v}$ are very close.  
Therefore, we could find a coefficient for $\norm{v-\bar{v}}$, and we define it as $L_{\sigma}^2 / |\mathcal{B}|$. 
Furthermore, if the loss function is Lipschitz smooth, e.g., cross-entropy loss \cite{tewari2015generalization}, we can derive a similar structure as presented in Assumption \ref{ass:variance_reduction}.  

\subsection{Sequential Probability Settings} \label{subsec:sequential}

As mentioned in Section \ref{sec:algo}, a recursive pattern appearing in the probability sequence $\{p_t \in \{0, 1\}\}_{t \geq 0}$ can reduce \algo{}to the existing works. We assume that the caching gradient updates every $\tau (\geq 2)$ rounds, such that 
\begin{equation} \label{eq:sequential_prob}
    p_t = \begin{cases}
        1, & t \text{ mod } \tau == 0\\
        0, & \text{Otherwise}
    \end{cases}
\end{equation}
We derive the following results under sequential probability settings. The complete proofs are provided in Appendix \ref{appendix:proofseq}. In detail, Theorem \ref{theorem:nonconvex_full_sequential} and Theorem \ref{theorem:nonconvex_sequential} are proved in Appendix \ref{appendix:sequential_full} and Appendix \ref{appendix:sequential_partial}, respectively. 

\begin{theorem}[Full Client Participation] \label{theorem:nonconvex_full_sequential}
Suppose that Assumption \ref{ass:L-smooth}, \ref{ass:noise} and \ref{ass:variance_reduction} hold. Let the local updates $K \geq 1$, the setting for the local learning rate $\eta_l$ and the global learning rate $\eta_s$ satisfy the following two constraints: (1) $\eta_s \eta_l \leq \frac{1}{6\sqrt{6} K L \tau}$; and (2) $\eta_l \leq \min\left(\frac{1}{6\sqrt{2} KL}, \frac{\sqrt{b'/K}}{2\sqrt{3} L_{\sigma}}\right)$. Then, the convergence rate of \algo{}with the probability setting of Equation \eqref{eq:sequential_prob} for non-convex objectives should be 
\begin{align}
    \min_{t \in [T]} \norm{\grad{t}} \leq &O\bracket{\frac{F(\x{0}) - F_*}{\eta_s \eta_l K T}} \nonumber \\
    &+ O\bracket{(\eta_s + \eta_l K L)\eta_l K L \cdot \indicator \frac{\sigma^2}{M b}}
\end{align}    
\end{theorem}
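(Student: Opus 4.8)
The plan is to run a descent-lemma argument tailored to two nested sources of error: the SARAH-type recursion each miner runs inside a round, and the extra drift created by the cached gradient being frozen at the start of each $\tau$-round block. Fix a miner round $t$ (so $t\bmod\tau\neq0$), let $t_0=\tau\lfloor t/\tau\rfloor$ be the most recent anchor round, and condition on the history $\mathcal F_t$ at the start of round $t$; then $\g{t}=\tfrac1M\sum_{m\in[M]}\fullgrad{m}{t_0}$ is $\mathcal F_t$-measurable and unchanged throughout the block. Under full participation, on a miner round every client is a miner, so $\nu=M$ and, unrolling the local updates and Equation~\eqref{eq:server_global_update},
\[
\gx{t+1}=\gx{t}-\eta_s\eta_l\sum_{k=0}^{K-1}\bar g_{t,k},\qquad \bar g_{t,k}:=\tfrac1M\sum_{i\in[M]}\g{i}{t}{k+1},
\]
whereas on an anchor round every client is an anchor, $\nu=0$, and $\gx{t+1}=\gx{t}$; hence it suffices to bound the progress on miner rounds and telescope.

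The first ingredient is the exact identity for the miner estimator, obtained by summing Equation~\eqref{eq:miner_next_update} and using $\x{i}{t}{-1}=\x{i}{t}{0}=\gx{t}$:
\[
\g{i}{t}{k+1}=\g{t}-\nabla F_i(\gx{t})+\nabla F_i(\x{i}{t}{k})+\sum_{k'=1}^{k}\xi^{(i)}_{k'},
\]
where $\xi^{(i)}_{k'}:=\bigl(\minigrad{i}{t}{k'}{k'}-\minigrad{i}{t}{k'-1}{k'}\bigr)-\bigl(\nabla F_i(\x{i}{t}{k'})-\nabla F_i(\x{i}{t}{k'-1})\bigr)$ is a per-step martingale difference obeying $\E\bigl[\norm{\xi^{(i)}_{k'}}\mid\mathcal F_{t,k'}\bigr]\le\tfrac{L_\sigma^2}{b'}\norm{\x{i}{t}{k'}-\x{i}{t}{k'-1}}=\tfrac{L_\sigma^2\eta_l^2}{b'}\norm{\g{i}{t}{k'}}$ by Assumption~\ref{ass:variance_reduction}. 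Averaging over $i$ and subtracting $\grad{t}$ decomposes the deviation of $\bar g_{t,k}$ into: (a) the caching error $\g{t}-\nabla F(\gx{t})=\bigl(\g{t}-\nabla F(\gx{t_0})\bigr)+\bigl(\nabla F(\gx{t_0})-\nabla F(\gx{t})\bigr)$; (b) the local-drift term $\tfrac1M\sum_i\bigl(\nabla F_i(\x{i}{t}{k})-\nabla F_i(\gx{t})\bigr)$, of norm at most $\tfrac LM\sum_i\|\x{i}{t}{k}-\gx{t}\|$; and (c) the zero-mean recursion noise $\tfrac1M\sum_i\sum_{k'}\xi^{(i)}_{k'}$, whose second moment gains a factor $1/M$ from independence across clients. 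Because the recursion accumulates only gradient differences evaluated on a common minibatch, (c) is self-bounding in terms of $\norm{\g{i}{t}{k'}}$ and produces no standalone $\sigma^2/b'$ term; the only genuinely new stochasticity is $\g{t}-\nabla F(\gx{t_0})=\tfrac1M\sum_m\bigl(\nabla f_m(\gx{t_0},\fullbatch{m}{t_0})-\nabla F_m(\gx{t_0})\bigr)$, whose second moment is at most $\indicator\,\sigma^2/(Mb)$ by Assumption~\ref{ass:noise} and independence of the $M$ large batches (and vanishes when $b=n$, hence the indicator).

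Next I would establish, by a short coupled induction over $k$, the per-round estimates $\E\norm{\g{i}{t}{k'}}\lesssim\norm{\grad{t}}+(\text{drift})+\indicator\tfrac{\sigma^2}{Mb}+\tfrac{L_\sigma^2\eta_l^2}{b'}\sum_{k''<k'}\E\norm{\g{i}{t}{k''}}$ and $\tfrac1M\sum_i\E\norm{\x{i}{t}{k}-\gx{t}}\lesssim\eta_l^2K\sum_{k'}\E\norm{\g{i}{t}{k'}}$: the second half of constraint~(2), $\eta_l\le\sqrt{b'/K}/(2\sqrt3L_\sigma)$, forces $L_\sigma^2\eta_l^2K/b'\le\tfrac1{12}$ so the first recursion contracts, and the first half, $\eta_l\le1/(6\sqrt2KL)$, then closes the drift recursion; in parallel, the intra-block server drift satisfies $\norm{\gx{t}-\gx{t_0}}\le\eta_s^2\eta_l^2\tau K\sum_{t_0<t'<t,\,k}\norm{\bar g_{t',k}}$, which after inserting the bound on $\norm{\bar g_{t',k}}$ and using constraint~(1), $\eta_s\eta_l\le1/(6\sqrt6KL\tau)$ (so $(\eta_s\eta_lK\tau L)^2\le\tfrac1{216}$), becomes a higher-order multiple of the block's gradient norms plus its caching noise. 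Feeding (a)--(c) and these bounds into $L$-smoothness (Assumption~\ref{ass:L-smooth}), $\E_tF(\gx{t+1})\le F(\gx{t})+\innerproduct{\grad{t}}{\E_t[\gx{t+1}-\gx{t}]}+\tfrac L2\E_t\norm{\gx{t+1}-\gx{t}}$, expanding the inner product, absorbing the drift and caching cross terms with Young's inequality, and bounding $\E_t\norm{\gx{t+1}-\gx{t}}\le\eta_s^2\eta_l^2K\sum_k\E_t\norm{\bar g_{t,k}}$, the learning-rate conditions render every $\norm{\grad{t}}$ on the right-hand side absorbable, leaving
\[
\E_tF(\gx{t+1})\le F(\gx{t})-c\,\eta_s\eta_lK\norm{\grad{t}}+C\,\eta_s\eta_lK(\eta_s+\eta_lKL)\eta_lKL\,\indicator\tfrac{\sigma^2}{Mb}
\]
for absolute constants $c,C>0$ (with the last term absent on anchor rounds). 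Summing over $t=0,\dots,T-1$, dividing by $c\,\eta_s\eta_lK\cdot\#\{\text{miner rounds}\}=\Theta(\eta_s\eta_lKT)$ since $\tau\ge2$, and using $F(\gx{T})\ge F_*$ yields the claimed two-term bound on $\min_{t\in[T]}\norm{\grad{t}}$; the noise aggregate carries no $\tau$ because every one of the $\Theta(T)$ miner rounds contributes the same noise coefficient.

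The main obstacle is disentangling the mutually referential recursions: the recursion-noise variance is controlled by $\sum_{k'}\norm{\g{i}{t}{k'}}$, which is controlled by the local drift, which is controlled by the same gradient norms, while the bias also depends on the intra-block server drift, which depends on $\norm{\bar g_{t',k}}$ over the whole block. Making everything close requires the right ordering of the inductions, so that each of the three step-size constraints makes exactly one recursion contractive --- $\sqrt{b'/K}/L_\sigma$ for the within-round SARAH variance, $1/(KL)$ for the within-round local drift, $1/(KL\tau)$ for the block-level server drift --- with no circularity; and it requires carrying the frozen caching error $\g{t}-\nabla F(\gx{t_0})$, shared by all $\tau-1$ miner rounds of a block, through all of these steps while checking that constraint~(1) keeps the server-drift contribution inside the $(\eta_s+\eta_lKL)\eta_lKL$ budget, so that no $\tau$ leaks into the final noise term.
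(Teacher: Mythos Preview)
Your proposal is correct and follows essentially the same approach as the paper. The paper packages the within-round SARAH contraction as a standalone lemma bounding $\sum_k\E\norm{\x{m}{t}{k}-\x{m}{t}{k-1}}\le O(\eta_l^2K)\bigl(\norm{\g{t}-\grad{t}}+\norm{\grad{t}}\bigr)$ rather than inlining your coupled induction on $\norm{\g{i}{t}{k'}}$, and then handles the block-level caching error via the same server-drift decomposition you describe, with the three step-size constraints playing exactly the roles you identify and constraint~(1) absorbing the $\tau$ from the server-drift noise when telescoping over blocks.
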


\begin{corollary} \label{corollary:sequential_full}
    Under the setting of Theorem \ref{theorem:nonconvex_full_sequential}, we assume that the number of rounds $T$ is sufficiently large. To find an $\epsilon$-approximation of non-convex objectives, i.e., $\min_{t \in [T]} \norm{\grad{t}} \leq \epsilon$,  the number of communication rounds $T$ performed by FedAMD with the probability setting of Equation \eqref{eq:sequential_prob} is 
   \begin{itemize}[leftmargin=3mm]
       \item \textbf{$b=O(1)$}: $T = O\bracket{\frac{\sigma^2}{M \epsilon^2}}$ by setting $\eta_l = \frac{1}{K} \left(\frac{M}{\sigma^2 T}\right)^{1/4}$ and $\eta_s = \left(\frac{M}{\sigma^2 T}\right)^{1/4}$
       \item \textbf{$b=O(K)$}: $T = O\bracket{\frac{\sigma^2}{MK \epsilon^2}}$ by setting $\eta_l = \frac{1}{\sqrt{T} K \sigma}$ and $\eta_s = \sqrt{KM}$
       \item \textbf{$b=\min\{n, \frac{\sigma^2}{M \epsilon}\}$}: $T = O\bracket{\frac{\tau}{\epsilon}}$  by setting $K = \frac{L_{\sigma}^2}{6 b' L^2}$, $\eta_l = \frac{1}{6\sqrt{2} KL}$, and $\eta_s = \frac{1}{\sqrt{3} \tau}$
   \end{itemize}
   where we treat $F(\x{0}) - F_*$ and $L$ as constants. 
\end{corollary}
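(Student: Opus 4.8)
The plan is to derive Corollary~\ref{corollary:sequential_full} directly from Theorem~\ref{theorem:nonconvex_full_sequential} by specializing its bound
\[
\min_{t\in[T]}\norm{\grad{t}} \le O\bracket{\frac{F(\x{0})-F_*}{\eta_s\eta_l KT}} + O\bracket{(\eta_s+\eta_l KL)\eta_l KL\cdot\indicator\frac{\sigma^2}{Mb}}
\]
to each of the three batch-size regimes. For every choice of $(\eta_l,\eta_s,b)$ I would carry out three steps: (i) check that constraints~(1)--(2) of the theorem hold; (ii) substitute the step sizes and batch size into the two error terms, treating $F(\x{0})-F_*$ and $L$ as constants; and (iii) upper-bound the result by $\epsilon$ and solve for $T$.

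For the regime $b=O(1)$ the prescribed rates give $\eta_l K = \eta_s = (M/(\sigma^2T))^{1/4}$, so $\eta_s\eta_l KT = \sqrt{MT}/\sigma$ and $(\eta_s+\eta_l KL)\eta_l KL = O(\sqrt{M}/(\sigma\sqrt{T}))$; dividing the second quantity by $b=O(1)$ and multiplying by $\sigma^2/M$, both error terms collapse to $O(\sigma/\sqrt{MT})$, and $\sigma/\sqrt{MT}\le\epsilon$ yields $T = O(\sigma^2/(M\epsilon^2))$. For $b=\Theta(K)$ one has $\eta_l K = 1/(\sqrt{T}\sigma)$ and $\eta_s=\sqrt{KM}$, so $\eta_s\eta_l KT = \sqrt{KMT}/\sigma$, and since $\eta_s$ dominates $\eta_l KL$ for large $T$ the noise term is $O\bracket{\sqrt{KM}\cdot\frac{L}{\sqrt{T}\sigma}\cdot\frac{\sigma^2}{MK}} = O(\sigma/\sqrt{MKT})$, matching the first term; hence $T = O(\sigma^2/(MK\epsilon^2))$. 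In both regimes $\eta_s\eta_l\to0$ and $\eta_l\to0$ as $T\to\infty$, so constraints~(1)--(2) are satisfied once $T$ is large enough, which is precisely what the hypothesis of the corollary supplies.

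For the regime $b=\min\{n,\sigma^2/(M\epsilon)\}$ the constraints are instead met with equality by construction: with $K = L_\sigma^2/(6b'L^2)$ one has $\frac{\sqrt{b'/K}}{2\sqrt3 L_\sigma} = \frac{1}{6\sqrt2 KL}$, so $\eta_l = \frac{1}{6\sqrt2 KL}$ meets constraint~(2), and $\eta_s = \frac{1}{\sqrt3\tau}$ makes $\eta_s\eta_l = \frac{1}{6\sqrt6 KL\tau}$, which is constraint~(1). Then $\eta_s\eta_l KT = T/(6\sqrt6 L\tau)$, so the first error term is $O(\tau/T)$; since $\eta_l KL = O(1)$ and $\eta_s = O(1)$ (using $\tau\ge2$), the noise term is $O\bracket{\indicator\,\sigma^2/(Mb)}$, which is $0$ when $b=n$ and $O(\epsilon)$ when $b=\sigma^2/(M\epsilon)<n$. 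Thus $\min_{t\in[T]}\norm{\grad{t}} = O(\tau/T)+O(\epsilon)$, and forcing the first term below $\epsilon$ gives $T = O(\tau/\epsilon)$.

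The difficulty here is bookkeeping rather than new analysis. The points that need care are: tracking the numerical constants closely enough to verify constraints~(1)--(2) --- these are tight equalities in the third regime --- handling the indicator $\indicator$ together with the two sub-cases of $b$ there, and noting that although the step sizes in the first two regimes are written as functions of the planned horizon $T$, the error bound then becomes an explicit decreasing function of $T$ that is simply inverted; there is no circularity provided one invokes the ``$T$ sufficiently large'' hypothesis to discharge the constraint conditions.
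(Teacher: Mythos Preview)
Your proposal is correct and follows exactly the intended route: the paper does not spell out a separate proof of this corollary, and the derivation is precisely the parameter-plugging you describe, substituting each choice of $(\eta_l,\eta_s,b)$ into the bound of Theorem~\ref{theorem:nonconvex_full_sequential}, verifying constraints~(1)--(2), and inverting for $T$. Your bookkeeping in all three regimes, including the equality case for the constraints when $b=\min\{n,\sigma^2/(M\epsilon)\}$ and the handling of the indicator, is accurate.
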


\paragraph{Effectiveness of Using Large Batches.}\quad The corollary summarizes the convergence results under various settings of the large-batch sizes. When the anchors update the caching gradients with an (averaged) SGD gradient, the convergence rate remains $O(1/\sqrt{T})$, which is consistent with the algorithms using small batches only, e.g., SCAFFOLD \cite{karimireddy2020scaffold}. When we apply large-batch gradients to model training, i.e., compute the caching gradients with the entire training set, the convergence rate significantly lifts to $O(1/T)$. 

\paragraph{Comparison with BVR-L-SGD.} 
As discussed in Section \ref{sec:algo}, \algo{}reduces to BVR-L-SGD \cite{murata2021bias} when $\tau = 2$ and all clients participate in the training. In this case, Corollary \ref{corollary:sequential_full} shows a total of $T=O(1/\epsilon)$ communication rounds are needed. This result coincides with the complexity of BVR-L-SGD in Table \ref{table:communication} by the setting that (1) $nM \leq \frac{1}{\epsilon}$, and (2) $K \geq \sqrt{n/M}$. In other words, we theoretically prove that BVR-L-SGD still achieves $\min_{t \in [T]} \norm{\grad{t}} \leq \epsilon$ with $T=O(1/\epsilon)$ in a looser constraint. As for computation overhead, our proposed method requires $O\bracket{\frac{\sigma^2}{\epsilon^2} + \frac{M K}{\epsilon}}$, which is less than BVR-L-SGD (i.e., $O\bracket{\frac{\sigma^2}{\epsilon^2} + \frac{\sigma^2 + MK}{\epsilon} + MK}$). 

In addition to the convergence result on full-client participation, we provide the analysis of the partial-client scenarios. 

\begin{theorem}[Partial Client Participation] \label{theorem:nonconvex_sequential}
Suppose that Assumption \ref{ass:L-smooth}, \ref{ass:noise} and \ref{ass:variance_reduction} hold. Let the local updates $K \geq 1$, the minibatch size $b = \min \bracket{\frac{\sigma^2}{M \epsilon}, n}$ and $b' < b$. Additionally, the settings for the local learning rate $\eta_l$ and the global learning rate $\eta_s$ satisfy the following two constraints: (1) $\eta_s \eta_l = \frac{1}{K L} \bracket{1 + \frac{2M\tau}{A}}^{-1}$; and (2) $\eta_l \leq \min \bracket{\frac{1}{2\sqrt{6}KL}, \frac{\sqrt{b'/K}}{4\sqrt{3}L_{\sigma}}}$. Then, to find an $\epsilon$-approximation of non-convex objectives, i.e., $\min_{t \in [T]} \norm{\grad{t}} \leq \epsilon$, the number of communication rounds $T$ performed by \algo{}with the probability setting of Equation \eqref{eq:sequential_prob} is
\begin{equation*}
    T = O\bracket{\bracket{1 + \frac{2M\tau}{A}} \cdot \frac{\tau}{\tau - 1} \cdot \frac{1}{\epsilon}}
\end{equation*}
where we treat $F(\x{0}) - F_*$ and $L$ as constants. 
\end{theorem}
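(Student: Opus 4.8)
The plan is to run a descent argument on $F$ along the global iterates $\{\x{t}\}$, organized block-by-block: since $p_t=1$ exactly when $t\equiv 0\pmod\tau$, at those rounds every active client is an anchor and $\x{t+1}=\x{t}$, so genuine progress is made only on the $\tau-1$ miner rounds of each $\tau$-block. On a miner round $\x{t+1}-\x{t}=-\eta_s\eta_l\,\tfrac1A\sum_{i\in\mathcal A}\sum_{k=0}^{K-1}\g{i}{t}{k+1}$, and since the $A$ clients are drawn uniformly without replacement, $\E_{\mathcal A}\big[\tfrac1A\sum_{i\in\mathcal A}\nabla F_i(\x{t})\big]=\grad{t}$. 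Starting from Assumption~\ref{ass:L-smooth}, $F(\x{t+1})\le F(\x{t})+\innerproduct{\grad{t}}{\x{t+1}-\x{t}}+\tfrac L2\norm{\x{t+1}-\x{t}}$, the core task is to show the inner-product term contributes essentially $-\eta_s\eta_l K\norm{\grad{t}}$ up to three controllable errors: (a) the local drift $\norm{\x{i}{t}{k}-\x{t}}$; (b) the miner correction error $\norm{\g{i}{t}{k+1}-\nabla F(\x{i}{t}{k})}$, i.e.\ how well the recursive estimator~\eqref{eq:miner_next_update} tracks the true global gradient at the current local point; and (c) the caching error $\norm{\g{t}-\grad{t}}$.

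For (a) and (b) I would reuse the local-dynamics analysis already needed for Theorem~\ref{theorem:nonconvex_full_sequential}, since the within-round miner recursion is identical to the full-participation case. Unrolling \eqref{eq:miner_next_update} from $\g{i}{t}{0}=\g{t}$ and taking conditional expectations, Assumption~\ref{ass:variance_reduction} bounds the variance of each recursive increment by $\tfrac{L_\sigma^2}{b'}\norm{\x{i}{t}{k}-\x{i}{t}{k-1}}$, which together with Assumption~\ref{ass:L-smooth} yields a coupled recursion in $\E\norm{\x{i}{t}{k}-\x{t}}$ and $\E\norm{\g{i}{t}{k+1}-\nabla F(\x{i}{t}{k})}$. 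The two hypotheses $\eta_l\le\tfrac{1}{2\sqrt6 KL}$ and $\eta_l\le\tfrac{\sqrt{b'/K}}{4\sqrt3 L_\sigma}$ make this recursion a mild contraction over $k=0,\dots,K-1$, so the accumulated drift over a miner round is at most a small constant times $\eta_l^2K^2\bracket{\norm{\grad{t}}+\norm{\g{t}-\grad{t}}}$ plus a minibatch-noise term of order $\eta_l^2 K\,\indicator\sigma^2/b'$.

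The heart of the partial-participation analysis is (c). Writing $\g{t}=\tfrac1M\sum_{m\in[M]}v_t^{(m)}$ with $v_t^{(m)}=\nabla f_m(\x{s_m(t)},\mathcal B_m)$, where $s_m(t)$ is the most recent anchor round at which $m$ was sampled, I would split $\E\norm{\g{t}-\grad{t}}$ into (i) a minibatch term bounded by $\sigma^2/(Mb)\le\epsilon$ via Assumption~\ref{ass:noise} and the choice $b=\min(\sigma^2/(M\epsilon),n)$, and (ii) a staleness term bounded via Assumption~\ref{ass:L-smooth} by $\tfrac{L^2}{M}\sum_m\E\norm{\x{s_m(t)}-\x{t}}$. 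Because each client is sampled at an anchor round with probability $A/M$ independently across the $\tau$-blocks, the age $t-s_m(t)$ is stochastically $\tau$ times a geometric random variable of mean $M/A$ (plus a within-block offset below $\tau$), so the staleness term is of order $\tfrac{M\tau}{A}$ times a per-round displacement bound $\E\norm{\x{r+1}-\x{r}}\lesssim\eta_s^2\eta_l^2K^2\norm{\grad{r}}+(\text{drift/noise})$. This is the step I expect to be the main obstacle: the bound on (c) is stated in terms of gradient norms at nearby iterates — exactly the quantities the descent argument is trying to bound — so the inequality is self-referential and the geometric tail over unboundedly many missed blocks must be summed carefully; moreover the paper assumes no bounded-heterogeneity constant, so all cross-client discrepancy must be absorbed through $L$-smoothness and the variance-reduced structure rather than a $\zeta^2$ term.

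Finally I would assemble the pieces. Substituting (a)--(c) into the descent inequality, the $\tfrac{M\tau}{A}$ factor from the staleness term multiplies $L\eta_s\eta_l K\norm{\grad{t}}$, so the choice $\eta_s\eta_l=\tfrac{1}{KL}\bracket{1+\tfrac{2M\tau}{A}}^{-1}$ is exactly what keeps a net negative coefficient on $\norm{\grad{t}}$ after moving the self-referential terms to the left-hand side; the minibatch contributions collapse to $O(\epsilon)$ by the choice of $b$. Summing the resulting one-step descent over the $\tau-1$ miner rounds of a block and then over the $T/\tau$ blocks telescopes to $F(\x{0})-F_*$; dividing by the number of descent steps $\sim T(\tau-1)/\tau$ and by $\eta_s\eta_l K=\tfrac1L\bracket{1+\tfrac{2M\tau}{A}}^{-1}$ gives $\min_{t\in[T]}\E\norm{\grad{t}}=O\bracket{L\bracket{1+\tfrac{2M\tau}{A}}\tfrac{\tau}{\tau-1}\tfrac{F(\x{0})-F_*}{T}}+O(\epsilon)$. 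Treating $L$ and $F(\x{0})-F_*$ as constants and forcing the first term below $\epsilon$ yields $T=O\bracket{\bracket{1+\tfrac{2M\tau}{A}}\tfrac{\tau}{\tau-1}\tfrac1\epsilon}$, as claimed.
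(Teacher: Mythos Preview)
Your overall strategy is sound and would reach the stated bound, but the paper handles the self-referential step you flag as ``the main obstacle'' by a different, cleaner device. Instead of the plain smoothness inequality, the paper applies its Lemma~\ref{lemma:new_l_smooth} with $\eta'=\eta_l K$, which produces a descent inequality of the form
\[
\E F(\x{t+1})-F(\x{t})\le -\tfrac{\eta_s\eta_l K}{2}\norm{\grad{t}}-\Bigl(\tfrac{1}{2\eta_s\eta_l K}-\tfrac{L}{2}\Bigr)\norm{\x{t+1}-\x{t}}+\tfrac{\eta_s}{2\eta_l K}\norm{avg(\Delta\boldsymbol{x}_t)-\eta_l K\grad{t}},
\]
i.e.\ it \emph{keeps a negative displacement term} rather than the positive $\tfrac{L}{2}\norm{\x{t+1}-\x{t}}$ you start from. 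The caching error is then bounded, after the geometric unrolling over blocks, by $\tfrac{M^2\tau^2}{A^2}L^2\sum_r\norm{\x{r+1}-\x{r}}$ (note the square, not the first power $\tfrac{M\tau}{A}$ you quote: the $\tfrac{M}{A}$ from the Young-inequality splitting and the second $\tfrac{M\tau}{A}$ from summing the geometric weights over $t$ combine). This is absorbed \emph{directly} into the negative displacement term; the condition $\tfrac{1}{2\eta_s\eta_l K}-\tfrac{L}{2}\ge \eta_s\eta_l K\,\tfrac{M^2\tau^2}{A^2}L^2$ is exactly what $\eta_s\eta_l=\tfrac{1}{KL}(1+\tfrac{2M\tau}{A})^{-1}$ is chosen to guarantee, and no self-referential inequality in $\norm{\grad{r}}$ ever arises.

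Your route---push $\norm{\x{r+1}-\x{r}}$ back to $\eta_s^2\eta_l^2K^2\bigl(\norm{\grad{r}}+\norm{\g{r}-\grad{r}}\bigr)$ and then close the fixed point---also works and yields the same stepsize constraint (since $\tfrac{M^2\tau^2}{A^2}L^2\eta_s^2\eta_l^2K^2\lesssim 1$ is equivalent to $\eta_s\eta_l KL\cdot\tfrac{M\tau}{A}\lesssim 1$), but it costs an extra contraction argument and some care with the positive $\tfrac{L}{2}\norm{\x{t+1}-\x{t}}$ term you have not explicitly disposed of. The paper's absorption-into-displacement trick is the simpler path and is worth knowing; it is the same device used in their constant-probability proof.
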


\paragraph{Discussion on the selection of $\tau$.}
According to Theorem \ref{theorem:nonconvex_sequential}, we notice that $\tau=2$ achieves $\min_{t \in [T]} \norm{\grad{t}} \leq \epsilon$ with the fewest communication rounds. The following corollary discloses the relation between computation overhead and the value of $\tau$.  
\begin{corollary} \label{cor:nonconvex_sequential}
Under the setting of Theorem \ref{theorem:nonconvex_sequential}, \algo{}computes $O\bracket{\frac{Mb}{\epsilon} + \frac{\tau M K b'}{\epsilon}}$ gradients and consumes a communication overhead of $O\bracket{\frac{M \tau}{A \epsilon}}$ during the model training. 
\end{corollary}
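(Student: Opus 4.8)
The plan is to bolt an elementary per-round resource count onto the round-complexity bound of Theorem~\ref{theorem:nonconvex_sequential}, handling separately the two kinds of rounds produced by the period-$\tau$ schedule~\eqref{eq:sequential_prob}; throughout I would treat $F(\x{0})-F_*$ and $L$ as constants, as in the theorem, and use the structural facts $M\ge A$ and $\tau\ge2$. \textbf{Communication overhead.} By Theorem~\ref{theorem:nonconvex_sequential}, reaching $\min_{t\in[T]}\norm{\grad{t}}\le\epsilon$ needs $T=O\bracket{\bracket{1+\tfrac{2M\tau}{A}}\cdot\tfrac{\tau}{\tau-1}\cdot\tfrac{1}{\epsilon}}$ rounds; since $M\ge A$ and $\tau\ge2$ give $1+\tfrac{2M\tau}{A}=\Theta\bracket{\tfrac{M\tau}{A}}$ and $\tfrac{\tau}{\tau-1}\le2$, this is $T=O\bracket{\tfrac{M\tau}{A\epsilon}}$. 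Each round of Algorithm~\ref{algo} is one server broadcast followed by uploads from the $A$ active clients, so the communication overhead, counted in rounds, is exactly this $T$; this part is immediate once the round count is simplified.

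\textbf{Gradient complexity.} I would split the $T$ rounds by the schedule. A round with $t\bmod\tau=0$ has $p_t=1$, so all $A$ active clients enter the anchor branch (Line~7), each evaluating one $b$-sample gradient, for $Ab$ evaluations, while the $\nu=0$ case of~\eqref{eq:server_global_update} adds nothing; there are $\lceil T/\tau\rceil$ such rounds. A round with $t\bmod\tau\ne0$ has $p_t=0$, so all $A$ active clients are miners, each running $K$ inner iterations and evaluating the two $b'$-sample minibatch gradients of~\eqref{eq:miner_next_update} per iteration, for $O(Kb')$ per miner and $O(AKb')$ per round, over the remaining $T-\lceil T/\tau\rceil$ rounds. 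Adding the one-time initialization $v_0^{(m)}=\fullgrad{m}{0}$ over all $m\in[M]$, costing $Mb$, the total is $O(Mb+\tfrac{T}{\tau}Ab+T\,AKb')$. Substituting $T=O\bracket{\tfrac{M\tau}{A}\cdot\tfrac{\tau}{\tau-1}\cdot\tfrac{1}{\epsilon}}$ and using $\tau\ge2$ to get $\tfrac{\tau}{\tau-1}\le2$ and $\tfrac{\tau^2}{\tau-1}\le2\tau$, the three summands collapse to $O\bracket{\tfrac{Mb}{\epsilon}}$, $O\bracket{\tfrac{Mb}{\epsilon}}$, and $O\bracket{\tfrac{\tau MKb'}{\epsilon}}$ (the bare $Mb$ being dominated by $\tfrac{Mb}{\epsilon}$ as $\epsilon<1$), giving the claimed $O\bracket{\tfrac{Mb}{\epsilon}+\tfrac{\tau MKb'}{\epsilon}}$.

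\textbf{Main obstacle.} There is no real obstacle — this is bookkeeping on top of Theorem~\ref{theorem:nonconvex_sequential} — but the point I would flag is the mild counterintuitiveness that partial participation does \emph{not} shrink the gradient count: only $A$ clients work per round, yet $T\sim M\tau/(A\epsilon)$, so the factors of $A$ cancel and the large-batch cost resurfaces as $Mb/\epsilon$ exactly as under full participation. I would also verify two harmless edge cases that change no order: at $k=0$ the two minibatch gradients in~\eqref{eq:miner_next_update} are evaluated at the same point and cancel, saving only a constant per miner; and the cap $b=\min\{\sigma^2/(M\epsilon),n\}$ from Theorem~\ref{theorem:nonconvex_sequential} enters only implicitly — it lets one rewrite $Mb/\epsilon$ as $O(\sigma^2/\epsilon^2)$ elsewhere — and plays no role in the count itself.
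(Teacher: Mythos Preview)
Your proposal is correct and is exactly the kind of per-round bookkeeping the paper has in mind; indeed, the paper does not spell out a separate proof for Corollary~\ref{cor:nonconvex_sequential} but treats it as an immediate consequence of the round count $T=O\bracket{\tfrac{M\tau}{A\epsilon}}$ from Theorem~\ref{theorem:nonconvex_sequential} combined with the obvious per-round costs ($Ab$ gradient samples on anchor rounds, $O(AKb')$ on miner rounds, $O(1)$ model-sized messages per round). Your handling of the simplifications via $M\ge A$, $\tau\ge2$, and the cancellation of $A$ in the gradient count is clean and matches the paper's Remark following the corollary.
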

\paragraph{Remark.}
\algo{}requires an increasing computation and communication cost as $\tau$ gets larger. Therefore, $\tau=2$ possesses the most outstanding performance in the sequential probability settings. In this case, \algo{}requires the communication rounds of $O\bracket{\frac{M}{A\epsilon}}$, the communication overhead of $O\bracket{\frac{M}{A\epsilon}}$, and the computation cost of $O\bracket{\frac{\sigma^2}{\epsilon^2} + \frac{M K}{\epsilon}}$ while optimizing an online scenario where the size of local dataset is infinity large.

\subsection{Constant Probability Settings} \label{subsec:constant}

Apparently, when we set the constant probability as 1, all participants are in the anchor group such that the model cannot be updated. Likewise, when the constant probability is 0, all participants are in the miner group such that the global target cannot be updated, leading to degraded performance. Therefore, we manually define a constant $p \in (0, 1)$ such that $\{p_t = p\}_{t \geq 0}$. In this section, we derive the following results with partial client participation. Detailed proof is provided in Appendix \ref{appendix:proofconstant}. Specifically, Appendix \ref{appendix:nonconvex_constant} and Appendix \ref{appendix:pl_constant} proves the convergence rate for Theorem \ref{theorem:nonconvex_constant} and Theorem \ref{theorem:pl_constant}, respectively. 

\begin{theorem}\label{theorem:nonconvex_constant}
Suppose that Assumption \ref{ass:L-smooth}, \ref{ass:noise} and \ref{ass:variance_reduction} hold. Let the local updates $K \geq \max\bracket{1, \frac{2 L_{\sigma}^2}{b' L^2}}$, the minibatch size $b = \min \bracket{\frac{\sigma^2}{M \epsilon}, n}$ and $b' < b$, the local learning rate $\eta_l = \frac{1}{2\sqrt{6}KL}$, and the global learning rate $\eta_s = \frac{2\sqrt{6}}{1+\frac{2M}{Ap} \sqrt{1-p^A}}$. Then, to find an $\epsilon$-approximation of non-convex objectives, i.e., $\min_{t \in [T]} \norm{\grad{t}} \leq \epsilon$, the number of communication rounds $T$ performed by \algo{}with constant probability $p_t = p$ is
\begin{equation*}
    T = O\bracket{\frac{1}{\epsilon} \bracket{\frac{1}{1-p^A} + \frac{M}{Ap \sqrt{1-p^A}}}}
\end{equation*}
where we treat $F(\x{0}) - F_*$ and $L$ as constants. 
\end{theorem}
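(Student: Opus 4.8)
The plan is to combine a one-step descent inequality from $L$-smoothness with a Lyapunov argument that controls the two error sources in \algo{}: the staleness of the cached gradient $\g{t}$ relative to $\grad{t}$, and the combined client drift of the miners' iterates together with the accumulated variance of the recursive estimator \eqref{eq:miner_next_update}. Conditioning on $\x{t}$ and the cached gradients $v_t$, I would start from
\begin{equation*}
\E F(\x{t+1}) \le F(\x{t}) + \innerproduct{\grad{t}}{\E[\x{t+1} - \x{t}]} + \frac{L}{2}\,\E\norm{\x{t+1}-\x{t}},
\end{equation*}
expand $\x{t+1}-\x{t}$ through \eqref{eq:server_global_update}, and aim to show the cross term produces a genuine descent of order $-\eta_s\eta_l K(1-p^A)\norm{\grad{t}}$ with the second-moment term of lower order once the learning rates are fixed.

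First I would dispatch the sampling randomness. Writing $\nu$ for the random number of miners among the $A$ sampled clients, symmetry of the client selection makes $\sum_{j=1}^{\nu}\Delta\boldsymbol{x}_t^{(a_j)}/\nu$, conditioned on $\nu\ge1$, an unbiased estimate of the all-client average model change; hence $\E[\x{t+1}-\x{t}]=-\eta_s(1-p^A)\cdot\frac1M\sum_{m\in[M]}\E\big[\Delta\boldsymbol{x}_t^{(m)}\mid m\text{ a miner}\big]$, where $1-p^A=\Pr[\nu\ge1]$ is exactly the per-round probability that the global model moves — the origin of the leading $\tfrac1{1-p^A}$. For the second moment I would pass to the variance of the $1/\nu$-normalized sum, using independence of the clients' local randomness; and the rate at which a fixed client refreshes its anchor, namely $\tfrac{Ap}{M}$ per round (selected into $\mathcal A$ with probability $A/M$, then an anchor with probability $p$), is what produces the $\tfrac{M}{Ap}$ factor.

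The core estimates bound $\E\norm{\Delta\boldsymbol{x}_t^{(m)}}$ and relate $\frac1M\sum_m\E[\Delta\boldsymbol{x}_t^{(m)}\mid m\text{ miner}]$ to $\grad{t}$. Unrolling $\Delta\boldsymbol{x}_t^{(m)}=\eta_l\sum_{k=0}^{K-1}\g{m}{t}{k+1}$, I would split each increment into $\grad{m}{t}{k}$ plus two errors: (i) the cached-gradient staleness $\g{t}-\grad{t}$, handled by writing $v_t^{(m)}$ as a stochastic gradient at the last round client $m$ was an anchor, converting the model gap into $\sum_s\norm{\x{s+1}-\x{s}}$ via Assumption \ref{ass:L-smooth} and the geometric law of the refresh time, and incurring the noise floor $\indicator\,\sigma^2/(Mb)$ from Assumption \ref{ass:noise} on the large batch (the $1/M$ from independence of the anchor batches across clients); and (ii) the accumulated estimator variance, which by Assumption \ref{ass:variance_reduction} obeys a telescoping recursion with per-step increment $\tfrac{L_\sigma^2}{b'}\norm{\x{m}{t}{k}-\x{m}{t}{k-1}}=\tfrac{L_\sigma^2}{b'}\eta_l^2\norm{\g{m}{t}{k}}$ — crucially this never sees the small-batch variance $\sigma^2/b'$, only $L_\sigma^2$, which is exactly why Assumption \ref{ass:variance_reduction} is needed — so that $K\ge 2L_\sigma^2/(b'L^2)$ and $\eta_l=\tfrac1{2\sqrt6 KL}$ keep it below a small multiple of the drift. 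A standard bound $\sum_k\E\norm{\x{m}{t}{k}-\x{t}}\lesssim\eta_l^2K^2\big(\norm{\grad{t}}+\text{errors}\big)$, from the update rule and the smallness of $\eta_l$, then closes the loop.

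Finally I would assemble everything into $\Phi_{t+1}\le\Phi_t - c\,\eta_s\eta_l K(1-p^A)\norm{\grad{t}} + c'\,\indicator\,\sigma^2/(Mb)$, with $\Phi_t=F(\x{t})+\lambda\cdot\frac1M\sum_{m}\E\norm{v_t^{(m)}-\grad{m}{t}}$ and $\lambda$ chosen so the staleness terms cancel against their own $\big(1-\Theta(\tfrac{Ap}{M})\big)$-contracting recursion; telescoping $t=0,\dots,T-1$, dividing by $c\,\eta_s\eta_l K(1-p^A)T$, inserting $\eta_l=\tfrac1{2\sqrt6 KL}$ and $\eta_s=\tfrac{2\sqrt6}{1+\frac{2M}{Ap}\sqrt{1-p^A}}$ so that $\eta_s\eta_l K=\Theta\big(L^{-1}(1+\tfrac{M}{Ap}\sqrt{1-p^A})^{-1}\big)$, and taking $b=\min\{\sigma^2/(M\epsilon),n\}$ (which makes the noise $O(\epsilon)$), forcing the optimization term below $\epsilon$ yields
\begin{equation*}
T=O\!\bracket{\tfrac1\epsilon\cdot\tfrac1{1-p^A}\bracket{1+\tfrac{M}{Ap}\sqrt{1-p^A}}}=O\!\bracket{\tfrac1\epsilon\bracket{\tfrac1{1-p^A}+\tfrac{M}{Ap\sqrt{1-p^A}}}}.
\end{equation*}
I expect the main obstacle to be item (i): simultaneously controlling the cached-gradient staleness — which couples every client's history through the single shared global trajectory and is refreshed only at rate $\tfrac{Ap}{M}$ — and folding its recursion into $\Phi_t$ without collapsing the $\tfrac1{1-p^A}$ versus $\tfrac{M}{Ap}$ separation claimed by the bound; the recursive-variance term (ii) is delicate but is essentially the SARAH/PAGE-style computation familiar from recursive-correction methods such as BVR-L-SGD.
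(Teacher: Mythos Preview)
Your overall architecture is exactly the paper's: an $L$-smooth descent step, the identification of the two error channels (cached-gradient staleness contracting at rate $\Theta(Ap/M)$, and the SARAH/PAGE-type recursive variance controlled through Assumption~\ref{ass:variance_reduction} and the choice of $\eta_l$ and $K$), a Lyapunov combination, and telescoping. The computation you sketch for the $(1-p^A)$ factor and for the miner drift matches Lemmas~\ref{lemma:avg} and~\ref{lemma:lemma6} in the paper.

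There is, however, one concrete point where your plan as written would not reproduce the theorem with $b=\min\{\sigma^2/(M\epsilon),n\}$. Your Lyapunov potential uses the per-client quantity $\frac{1}{M}\sum_m\E\norm{v_t^{(m)}-\grad{m}{t}}$. Each summand carries the batch noise $\sigma^2/b$ from the most recent anchor computation, and this noise does \emph{not} average across clients when you take $\frac{1}{M}\sum_m$ of the squared norms; its one-step recursion therefore has an additive $\Theta\!\big(\tfrac{Ap}{M}\cdot\tfrac{\sigma^2}{b}\big)$ term, which after division by the $\Theta(Ap/M)$ contraction leaves an $O(\sigma^2/b)$ floor. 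With $b=\sigma^2/(M\epsilon)$ that is $M\epsilon$, not $\epsilon$. The paper avoids this by tracking instead the \emph{aggregate bias} $\norm{\E\g{t}-\grad{t}}$ (Lemma~\ref{lemma:lemma5}): since $\E\g{t}$ is deterministic given the trajectory, its recursion has no $\sigma^2$ term at all, and the batch variance enters only once, as the separated $\E\norm{\g{t}-\E\g{t}}\le\sigma^2/(Mb)$ inside the descent inequality (Lemma~\ref{lemma:time_varying}), where the $1/M$ from independence across clients is genuinely captured. Switching your potential to $\norm{\E\g{t}-\grad{t}}$ fixes this with no change to the rest of your outline.

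A second, smaller point: the paper does not start from the plain smoothness inequality you wrote but from the rearranged form of Lemma~\ref{lemma:new_l_smooth}, which produces an explicit \emph{negative} term $-\big(\tfrac{1}{2\eta_s\eta_l K}-\tfrac{L}{2}\big)\norm{\x{t+1}-\x{t}}$ in the descent. This negative term is what absorbs the $\tfrac{M}{Ap}L^2\norm{\x{t+1}-\x{t}}$ drift coming from the staleness recursion, and is precisely the origin of the condition $\eta_s\eta_l K=\big(L(1+\tfrac{2M}{Ap}\sqrt{1-p^A})\big)^{-1}$. Your plan to treat $\tfrac{L}{2}\E\norm{\x{t+1}-\x{t}}$ as ``lower order'' and bound it separately would also work in principle, but the paper's route is cleaner and explains the stated $\eta_s$ directly.
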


With the constant probability $p$ approaching $0$ or $1$, Theorem \ref{theorem:nonconvex_constant} shows that \algo{}requires a significant number of communication rounds. Hence, there is an optimal $p$ such that \algo{}achieves convergence with the fewest communication rounds. In view that $M \geq A$, the number of communication rounds is dominated by $O\bracket{\frac{M}{Ap \sqrt{1-p^A}} \cdot \frac{1}{\epsilon}}$. Based on this observation, the following corollary provides the settings for the constant probability $p$ that leads to the optimal convergence result. Based on the value of $p$, we further refine the settings for other parameters. The following corollary takes $b'=1$ into consideration, i.e., the small batch size is 1. 

\begin{corollary}\label{corollary:nonconvex_constant}
Suppose that Assumption \ref{ass:L-smooth}, \ref{ass:noise} and \ref{ass:variance_reduction} hold. Let the constant probability $p = \frac{1}{c} \bracket{\frac{2}{A+2}}^{1/A}$, where $c$ is a constant greater than or equal to 1, the local updates $K \geq \max\bracket{1, \frac{2 L_{\sigma}^2}{L^2}}$, the minibatch size $b = \min \bracket{\frac{\sigma^2}{M \epsilon}, n}$ and $b' = 1$, the local learning rate $\eta_l = \frac{1}{2\sqrt{6}KL}$, and the global learning rate $\eta_s = \frac{2\sqrt{6} A}{A+3Mc}$. Then, after the communication rounds of $T = O\bracket{\frac{M}{A\epsilon}}$, we have $\min_{t \in [T]} \norm{\grad{t}} \leq \epsilon$. Therefore, the number of total samples called by all clients (i.e., cumulative gradient complexity) is $O\bracket{\frac{\sigma^2}{\epsilon^2} + \frac{M K}{\epsilon}}$ when it optimizes an online scenario. 
\end{corollary}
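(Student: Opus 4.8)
The plan is to derive Corollary~\ref{corollary:nonconvex_constant} as a direct specialization of Theorem~\ref{theorem:nonconvex_constant}. With $b'=1$ the hypotheses $K\geq\max\bracket{1,2L_\sigma^2/L^2}$, $b=\min\bracket{\sigma^2/(M\epsilon),n}$ and $\eta_l=\frac{1}{2\sqrt6 KL}$ coincide exactly with those required by the theorem (the side condition $b'<b$ holding whenever $n\geq 2$ and $\epsilon\leq\sigma^2/(2M)$, i.e. in the regime of interest), so the remaining work is threefold: (i) check that the stated global learning rate is an admissible choice; (ii) substitute the particular $p=\frac1c\bracket{\frac{2}{A+2}}^{1/A}$ into the round-complexity bound of Theorem~\ref{theorem:nonconvex_constant} and simplify; and (iii) count the gradient evaluations in the online regime.

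The computational heart of (ii) is a two-sided control of $p$ that is uniform in $A$. Since $c\geq 1$, $p^A=\frac{1}{c^A}\cdot\frac{2}{A+2}\leq\frac{2}{A+2}$, hence $1-p^A\geq\frac{A}{A+2}\geq\frac13$ for every $A\geq 1$; in particular $\frac{1}{1-p^A}\leq 3$ and $\sqrt{1-p^A}\geq\frac{1}{\sqrt3}$. For a matching lower bound on $p$ I would show that $A\mapsto\bracket{\frac{2}{A+2}}^{1/A}$ is nondecreasing on $A\geq 1$ — equivalently that $A\mapsto\frac1A\ln\frac{2}{A+2}$ has nonnegative derivative there — so this quantity is at least its value at $A=1$, namely $\frac23$, giving $p\geq\frac{2}{3c}$. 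Combining the two bounds, $\frac{M}{Ap\sqrt{1-p^A}}\leq\frac{3\sqrt3}{2}\cdot\frac{Mc}{A}$; and symmetrically $\bracket{\frac{A+2}{2}}^{1/A}\leq\frac32$ yields $\frac{2M}{Ap}\sqrt{1-p^A}\leq\frac{3Mc}{A}$, so that
\begin{equation*}
\eta_s=\frac{2\sqrt6 A}{A+3Mc}=\frac{2\sqrt6}{1+\tfrac{3Mc}{A}}\leq\frac{2\sqrt6}{1+\tfrac{2M}{Ap}\sqrt{1-p^A}},
\end{equation*}
i.e. the corollary's $\eta_s$ is no larger than, and within a constant factor of, the value used in Theorem~\ref{theorem:nonconvex_constant}. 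Since shrinking $\eta_s$ only inflates the descent term $\frac{F(\x{0})-F_*}{\eta_s\eta_l KT}$ in the underlying rate while keeping every step-size constraint satisfied, the $O(\cdot)$ conclusion of Theorem~\ref{theorem:nonconvex_constant} persists under this replacement.

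Feeding these estimates into the bound of Theorem~\ref{theorem:nonconvex_constant} (using $\eta_l K=\frac{1}{2\sqrt6 L}=\Theta(1)$ and noting that $b=\min\bracket{\sigma^2/(M\epsilon),n}$ pushes the noise term below $\tfrac\epsilon2$, so that $T$ is governed by forcing the descent term below $\tfrac\epsilon2$) gives
\begin{equation*}
T=O\bracket{\frac1\epsilon\bracket{\frac{1}{1-p^A}+\frac{M}{Ap\sqrt{1-p^A}}}}=O\bracket{\frac1\epsilon+\frac{Mc}{A\epsilon}}=O\bracket{\frac{M}{A\epsilon}},
\end{equation*}
where the last equality uses $M\geq A$ and $c=\Theta(1)$. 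For the cumulative gradient complexity in the online case ($n=\infty$, so $b=\sigma^2/(M\epsilon)$): in each round the number of anchors is $\mathrm{Binomial}(A,p)$ with mean $Ap$ and the number of miners has mean $A(1-p)$, so the expected per-round sample count is $Ap\cdot b+A(1-p)\cdot Kb'\leq A(b+K)$ since $b'=1$; multiplying by $T=O(M/(A\epsilon))$ and adding the one-time initialization cost $Mb=\sigma^2/\epsilon$ for $\{v_0^{(m)}\}_{m\in[M]}$ yields $O\bracket{\frac{M(b+K)}{\epsilon}}=O\bracket{\frac{\sigma^2}{\epsilon^2}+\frac{MK}{\epsilon}}$, as claimed. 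The main obstacle is step (i): making rigorous that the corollary's streamlined closed form for $\eta_s$ is legitimate — that the rate of Theorem~\ref{theorem:nonconvex_constant} is stable under replacing its (implicitly optimal) $\eta_s$ by a constant-factor-smaller expression — and then tracking the constants so the final $O(M/(A\epsilon))$ bound is not lost; the only genuinely non-mechanical inequalities are the uniform-in-$A$ estimates $\frac23\leq\bracket{\frac{2}{A+2}}^{1/A}$ and $\bracket{\frac{A+2}{2}}^{1/A}\leq\frac32$.
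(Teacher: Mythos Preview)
Your proposal is correct and follows essentially the same route as the paper: the corollary is obtained by substituting the specific $p=\tfrac1c\bracket{\frac{2}{A+2}}^{1/A}$ into the bound of Theorem~\ref{theorem:nonconvex_constant}, using that this $p$ (with $c=1$) is precisely the minimizer of the dominant factor $(p\sqrt{1-p^A})^{-1}$ and that at this minimum $1-p^A=\frac{A}{A+2}\in[\tfrac13,1)$, so both terms in the round-complexity collapse to $O(M/(A\epsilon))$. The paper leaves all of this implicit (it only remarks that the dominant term is $O\bigl(\frac{M}{Ap\sqrt{1-p^A}}\cdot\frac1\epsilon\bigr)$ and that the stated $p$ minimizes it), so your explicit uniform-in-$A$ estimates $\tfrac23\le\bracket{\frac{2}{A+2}}^{1/A}\le 1$ and your verification that the simplified $\eta_s$ is an admissible (constant-factor smaller) choice are exactly the missing details; the gradient-complexity count also matches.
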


\paragraph{Discussion on the effectiveness of $c$.}
When $c = 1$, we can obtain the minimum value for the term $\bracket{p\sqrt{1-p^A}}^{-1}$ with the constant $p$ devised in Corollary \ref{corollary:nonconvex_constant}. As the number of participants (i.e., $A$) gets larger, the optimal $p$ increases as well and tends to be 1, indicating that most participants are in the anchor group. When we optimize an online scenario, the anchors compute a gradient with massive samples. As a result, the computation overhead of a single round is not acceptable. By deducting the anchor sampling probability to its $1/c$, \algo{}consumes up to $(c-1)/c$ less computation overhead, while its convergence performance remains. 

\paragraph{Comparison with FedAvg.} As a classical algorithm, FedAvg \cite{yang2020achieving} requires $O\bracket{\frac{K}{A \epsilon^2} + \frac{1}{\epsilon}}$ communication rounds to achieve $\min_{t \in [T]} \norm{\grad{t}} \leq \epsilon$ with the total computation consumption of $O\bracket{\frac{K^2}{\epsilon^2} + \frac{AK}{\epsilon}}$. Apparently, \algo{}needs $O(\frac{1}{\epsilon})$ fewer communication rounds. As mentioned in Section \ref{sec:algo}, \algo{}consumes $(1-p)/2$ more communication overhead than FedAvg. As $\epsilon$ is close to 0, the total communication overhead for \algo{}is far less than the cost of FedAvg. As for computation overhead, \cite{yang2020achieving} implicitly assumes that $K \geq \sigma^2$, \algo{}is more communication friendly than FedAvg. 

In addition to the generalized non-convex objectives, we investigate the convergence rate of the PL condition, a special case under non-convex objectives. The following assumption describes this case: 
\begin{assumption}[PL Condition \cite{karimi2016linear}] \label{ass:pl_condition}
The objective function $F$ satisfies the PL condition when there exists a scalar $\mu > 0$ such that
\begin{equation*}
    \norm{\nabla F(v)} \geq 2\mu \bracket{F(v) - F_{*}}, \quad \forall v \in \mathbb{R}^d. 
\end{equation*}
\end{assumption}

Under PL condition, the rest of the section draws the convergence performance of \algo{}with partial client participation. 
\begin{theorem}\label{theorem:pl_constant}
Suppose that Assumption \ref{ass:L-smooth}, \ref{ass:noise}, \ref{ass:variance_reduction} and \ref{ass:pl_condition} hold. Let the local updates $K \geq \max\bracket{1, \frac{2 L_{\sigma}^2}{b' L^2}}$, the minibatch size $b = \min \bracket{\frac{\sigma^2}{M \mu \epsilon}, n}$ and $b' < b$, the local learning rate $\eta_l = \frac{1}{2\sqrt{6}KL}$, and the global learning rate $\eta_s = \min \bracket{\frac{2\sqrt{6} L A p}{M \mu (1-p^A)},  \frac{2\sqrt{6}}{1+\frac{16 M L}{\mu A p}}}$. Then, to find an $\epsilon$-approximation of PL condition, i.e., $F(\x{T}) - F(\x) \leq \epsilon$, the number of communication rounds $T$ performed by \algo{}with constant probability $p_t = p$ is
\begin{equation*}
    T = O\bracket{\frac{1}{\mu (1-p^A)} \bracket{1 + \frac{M}{\mu A p} + \frac{M \mu (1-p^A)}{A p}} \log \frac{1}{\epsilon}}
\end{equation*}
where we treat $F(\x{0}) - F_*$ and $L$ as constants. 
\end{theorem}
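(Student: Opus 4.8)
The idea is to run the non-convex argument behind Theorem~\ref{theorem:nonconvex_constant} but to carry along a Lyapunov function that also measures how stale the cached gradients are, and then to close the recursion with the PL inequality (Assumption~\ref{ass:pl_condition}) in place of the usual telescoping over $\norm{\grad{t}}$. First I would derive the one-round bound exactly as for Theorem~\ref{theorem:nonconvex_constant}: using $L$-smoothness of $F$, the global update \eqref{eq:server_global_update}, and taking expectation over the client sampling $\mathcal{A}$, the Bernoulli split, and the minibatch noise, one gets an inequality of the shape
\[
\E\!\left[F(\x{t+1})\right] \le F(\x{t}) - \Theta\!\bracket{\eta_s\eta_l K}\bracket{1-p^A}\norm{\grad{t}} + \Theta\!\bracket{\eta_s\eta_l L^2 K}\, D_t + \Theta\!\bracket{\eta_s\eta_l K}\bracket{1-p^A}\mathcal{V}_t + \Theta\!\bracket{\eta_s^2\eta_l^2K^2L}\frac{\sigma^2}{Mb},
\]
where $\bracket{1-p^A}$ is the probability that at least one sampled client is a miner (otherwise $\x{t+1}=\x{t}$ and the round is wasted), $D_t$ is the miner drift $\sum_{k}\E\norm{\x{i}{t}{k}-\x{t}}$ (averaged over the miners), and $\mathcal{V}_t:=\tfrac1M\sum_{m\in[M]}\E\norm{v_t^{(m)}-\grad{m}{t}}$ is the mean cache error. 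The drift $D_t$ is handled exactly as in the sequential analysis: unrolling $\x{i}{t}{k+1}=\x{i}{t}{k}-\eta_l\g{i}{t}{k+1}$ with \eqref{eq:miner_next_update} and Assumption~\ref{ass:variance_reduction}, and invoking $K\ge\tfrac{2L_\sigma^2}{b'L^2}$ together with $\eta_l=\tfrac1{2\sqrt6 KL}$, the variance-reduced correction keeps $D_t = O(\eta_l^2K^2)\bracket{\norm{\g{t}} + \text{lower order}}$, so this term is absorbed into the negative $\norm{\grad{t}}$ term plus a small multiple of $\mathcal{V}_t$ and of the $\sigma^2/(Mb)$ noise floor.

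The genuinely new ingredient compared with Section~\ref{subsec:sequential} is that, under a constant $p$, client $m$'s cache is refreshed at an independent geometric time with per-round success probability $\tfrac{A}{M}p$ (the client being sampled \emph{and} landing in the anchor group), rather than on a deterministic schedule. I would therefore set up a separate recursion for $\mathcal{V}_t$: a refreshed cache contributes only the large-batch sampling error $\sigma^2/b$ via Assumption~\ref{ass:noise}, while a non-refreshed cache keeps its old error, which after a Young split against the model movement reads $\norm{v_t^{(m)}-\grad{m}{t+1}}\le(1+\gamma)\norm{v_t^{(m)}-\grad{m}{t}}+(1+\gamma^{-1})L^2\norm{\x{t+1}-\x{t}}$ with $\gamma=\Theta\!\bracket{\tfrac{Ap}{M}}$. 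Combining the two cases yields
\[
\mathcal{V}_{t+1}\le\bracket{1-\tfrac{Ap}{2M}}\mathcal{V}_t + O\!\bracket{\tfrac{M}{Ap}}L^2\,\E\norm{\x{t+1}-\x{t}} + \tfrac{Ap}{M}\cdot\tfrac{\sigma^2}{b},
\]
so $\mathcal{V}_t$ contracts at rate $\Theta(Ap/M)$ up to the model movement and the sampling floor.

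Finally I would combine the two recursions. Put $\Phi_t:=\bracket{F(\x{t})-F_*}+\beta\mathcal{V}_t$ with $\beta=\Theta\!\bracket{\tfrac{M}{Ap}\,\eta_s\eta_l K\bracket{1-p^A}}$, chosen so that the $\mathcal{V}_t$ term in the descent bound is swallowed by $\beta\cdot\tfrac{Ap}{2M}\mathcal{V}_t$; and bound $\E\norm{\x{t+1}-\x{t}}\lesssim\eta_s^2\eta_l^2K^2\bracket{\norm{\grad{t}}+\mathcal{V}_t+\text{noise}}$ so the movement term fed back from the cache recursion is in turn dominated by the negative gradient term. This is precisely what the two branches of the minimum defining $\eta_s$ are for: the first branch $\eta_s\lesssim\tfrac{LAp}{M\mu(1-p^A)}$ forces the $F$-contraction rate $\mu\eta_s\eta_l K(1-p^A)$ to be no larger than the cache mixing rate $\tfrac{Ap}{M}$, and the second branch $\eta_s\eta_l KL\lesssim\bracket{1+\tfrac{16ML}{\mu Ap}}^{-1}$ is the ordinary smoothness-stability requirement. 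Applying the PL inequality $\norm{\grad{t}}\ge2\mu\bracket{F(\x{t})-F_*}$ to the negative term then gives $\E[\Phi_{t+1}]\le(1-\rho)\E[\Phi_t]+O\!\bracket{\tfrac1\mu}\tfrac{\sigma^2}{Mb}$ with $\rho=\Theta\!\bracket{\mu\eta_s\eta_l K(1-p^A)}$; substituting $\eta_l=\tfrac1{2\sqrt6 KL}$ and the stated $\eta_s$ gives $\tfrac1\rho=O\!\bracket{\tfrac{M}{Ap}+\tfrac{1}{\mu(1-p^A)}+\tfrac{M}{\mu^2 Ap(1-p^A)}}$, and taking $b=\min\bracket{\tfrac{\sigma^2}{M\mu\epsilon},n}$ makes the residual $O(\epsilon)$. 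Since $F(\x{T})-F_*\le\Phi_T\le(1-\rho)^T\Phi_0+O(\epsilon)$, the claimed $T=O\!\bracket{\tfrac1\rho\log\tfrac1\epsilon}$ follows.

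\emph{Main obstacle.} The delicate point is the coupling of the two recursions: $\mathcal{V}_t$ enters the descent of $F$, while the movement $\norm{\x{t+1}-\x{t}}$ — which itself depends on the possibly stale, noisy cached direction $\g{t}$ — feeds back into the recursion for $\mathcal{V}_t$. One must pick $\beta$ and verify that the first branch of the $\eta_s$ minimum is exactly strong enough to turn $\Phi_t$ into a genuine contraction at rate $\rho$, all while handling the randomness of $\nu$ (the number of miners, including the $\nu=0$ event) and the independence across clients that lets the averaged large-batch noise shrink as $\sigma^2/(Mb)$ rather than $\sigma^2/b$.
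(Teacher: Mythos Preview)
Your plan is structurally the same as the paper's: a one-round descent inequality (packaged there as Lemma~\ref{lemma:time_varying}), a geometric contraction for the cache error (Lemma~\ref{lemma:lemma5}), a Lyapunov combination of $F(\x{t})-F_*$ with the cache term, and then the PL inequality to turn the negative $\norm{\grad{t}}$ into a contraction. The role you assign to the two branches of $\eta_s$ is exactly right.

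The one concrete difference is the cache potential. The paper does \emph{not} track your per-client quantity $\mathcal{V}_t=\tfrac1M\sum_m\E\norm{v_t^{(m)}-\grad{m}{t}}$. It first splits $\g{t}-\grad{t}$ into a variance part $\g{t}-\E\g{t}$, bounded once and for all by $\sigma^2/(Mb)$ via the $1/M$ averaging over independent large-batch noises, and a bias part $\E\g{t}-\grad{t}$, for which Lemma~\ref{lemma:lemma5} gives
\[
\E\norm{\E\g{t+1}-\grad{t+1}}\le\Bigl(1-\tfrac{Ap}{M}\Bigr)\E\norm{\E\g{t}-\grad{t}}+\tfrac{M}{Ap}L^2\E\norm{\x{t+1}-\x{t}},
\]
with \emph{no} additive noise floor and zero initial value. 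The Lyapunov function is then simply $F(\x{t})-F_*+\tfrac{8}{\mu}\,\E\norm{\E\g{t}-\grad{t}}$ (your $\beta$ reduces to $\Theta(1/\mu)$ once the first branch of $\eta_s$ is active, so the weights agree).

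This matters for the residual. Your $\mathcal{V}_t$ recursion carries a per-round injection $\tfrac{Ap}{M}\cdot\tfrac{\sigma^2}{b}$; after multiplying by $\beta$ and dividing by $\rho$ you end up with a floor of order $\tfrac{1}{\mu}\cdot\tfrac{\sigma^2}{b}$, not $\tfrac{1}{\mu}\cdot\tfrac{\sigma^2}{Mb}$. With the theorem's $b=\sigma^2/(M\mu\epsilon)$ that gives $O(M\epsilon)$ rather than $O(\epsilon)$. You correctly flag the $1/M$ issue in your obstacle paragraph, but the potential you chose cannot deliver it: the averaging gain lives in the \emph{variance} of $\g{t}$, which your $\mathcal{V}_t$ upper-bounds away. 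Swapping $\mathcal{V}_t$ for the bias $\E\norm{\E\g{t}-\grad{t}}$ (and handling the variance separately as a constant $\sigma^2/(Mb)$ term in the descent lemma) closes the gap and recovers the paper's proof verbatim.
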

Similar to Theorem \ref{theorem:nonconvex_constant}, the number of communication rounds of \algo{}is mainly occupied by $O\bracket{\frac{M}{\mu^2 A p (1-p^A)} + \frac{M}{Ap}}$. According to such an approximation, we provide a mathematical expression for the setting of $p$ in Corollary \ref{corollary:pl_constant}. Subsequently, we adjust the value of the hyper-parameters such that we can obtain the best result for Theorem \ref{theorem:pl_constant}. 

\begin{figure*}[!t]
\centering
\subfloat[20 clients]{\label{subfig:(1a)}
\centering
\includegraphics[width=0.25\textwidth]{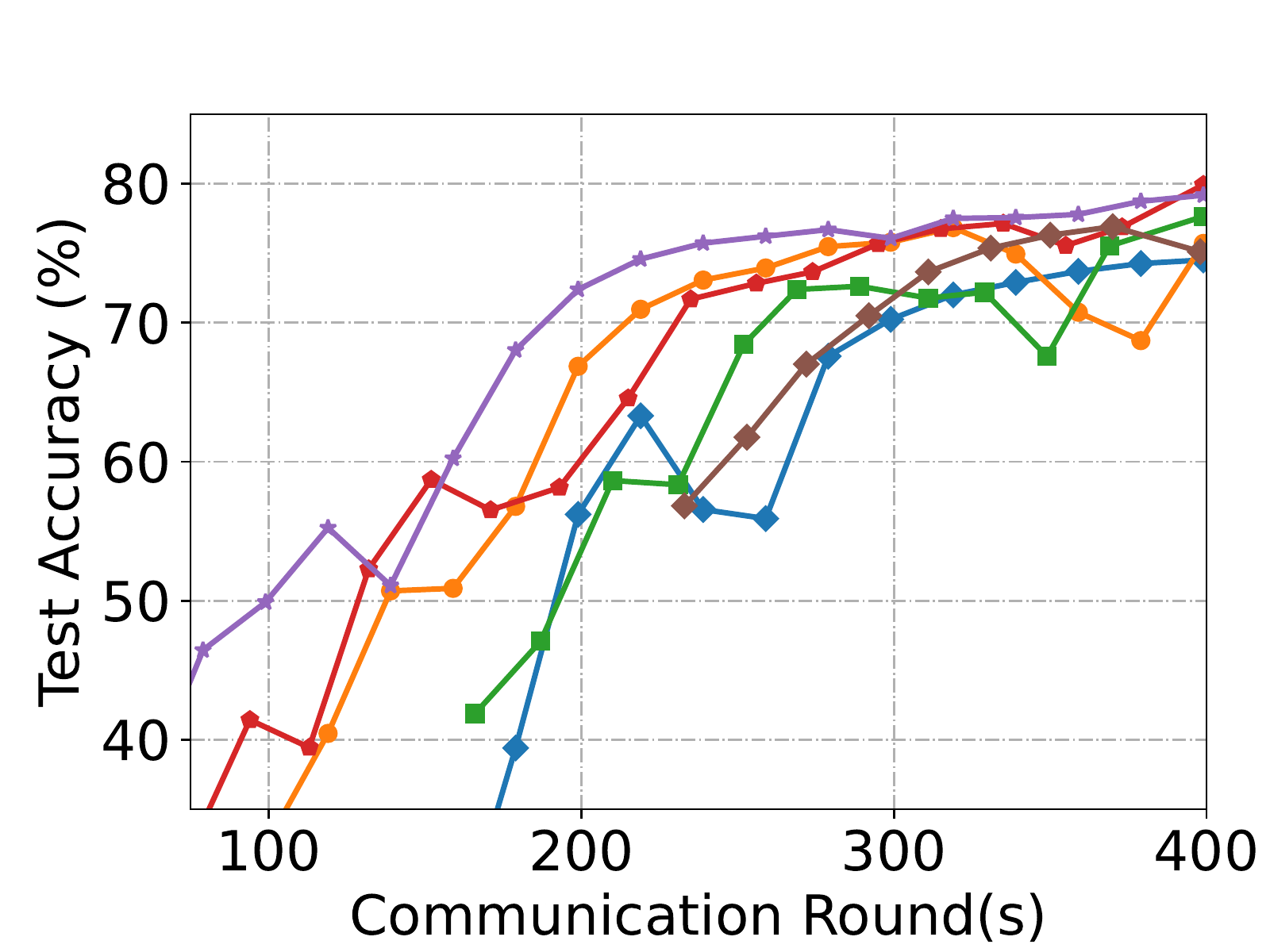}
}%
\subfloat[40 clients]{\label{subfig:(1b)}
\centering
\includegraphics[width=0.25\textwidth]{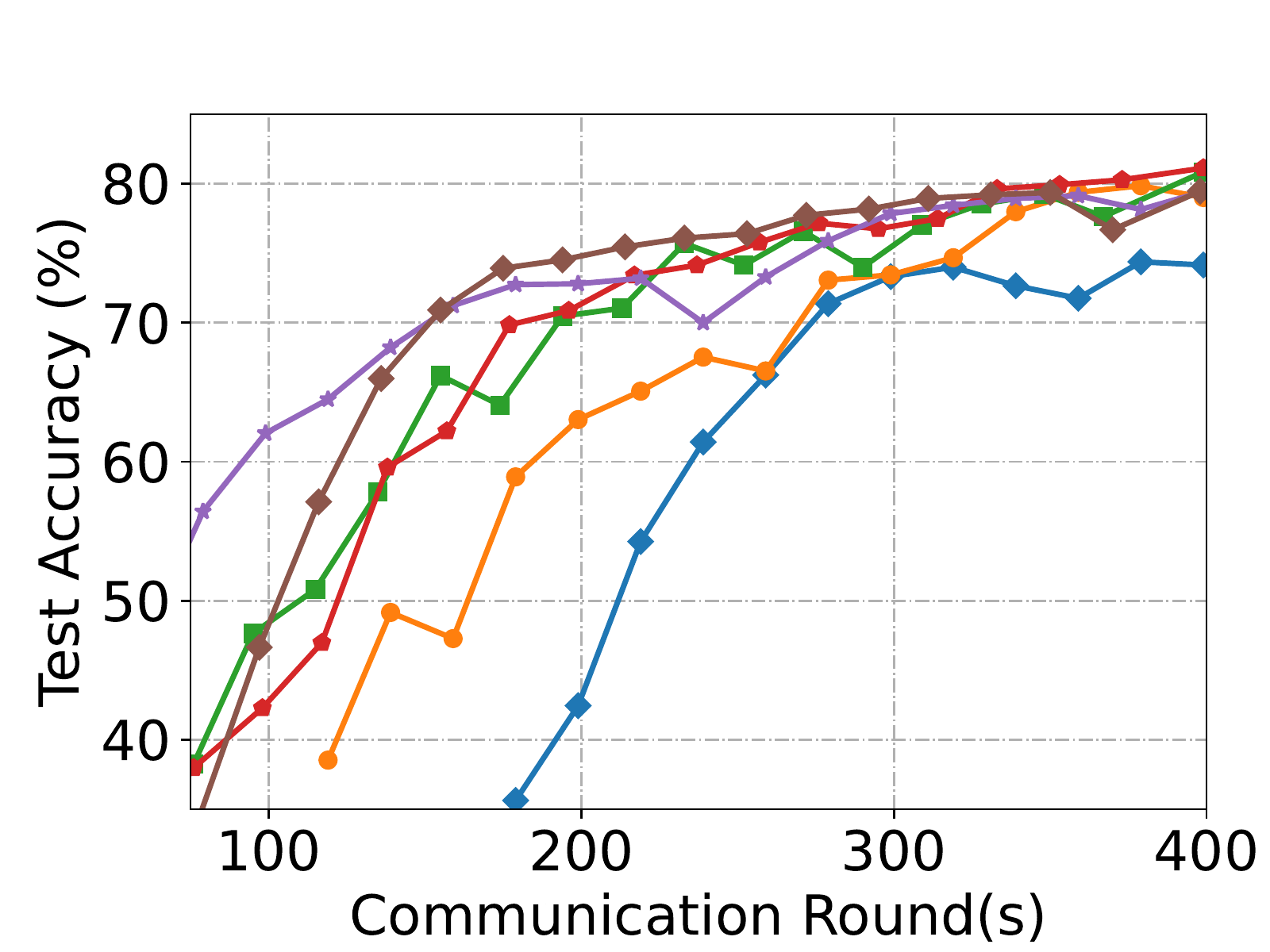}
}%
\subfloat[100 clients]{\label{subfig:(1c)}
\centering
\includegraphics[width=0.25\textwidth]{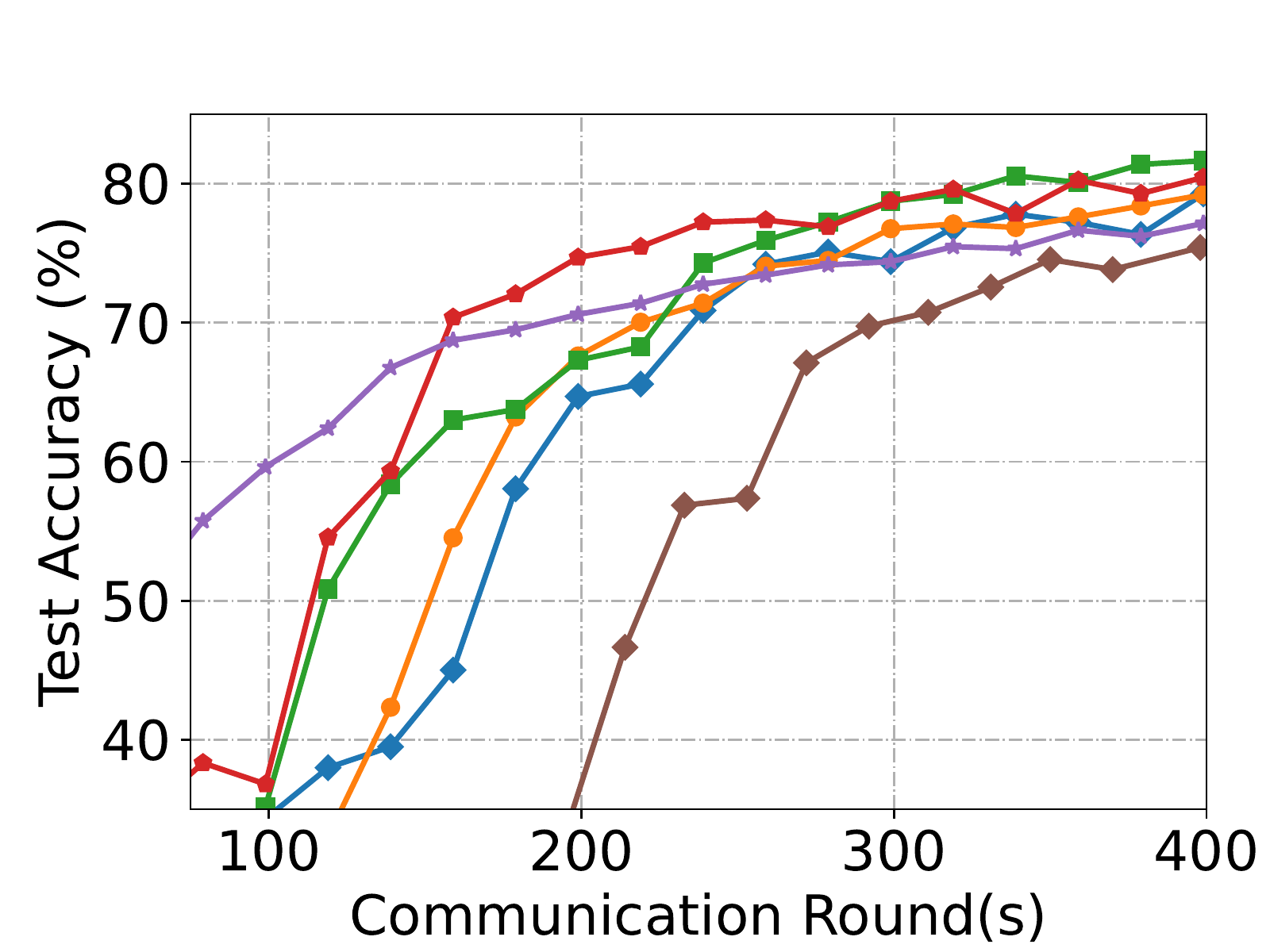}
}%
\centering
\includegraphics[width=0.15\textwidth]{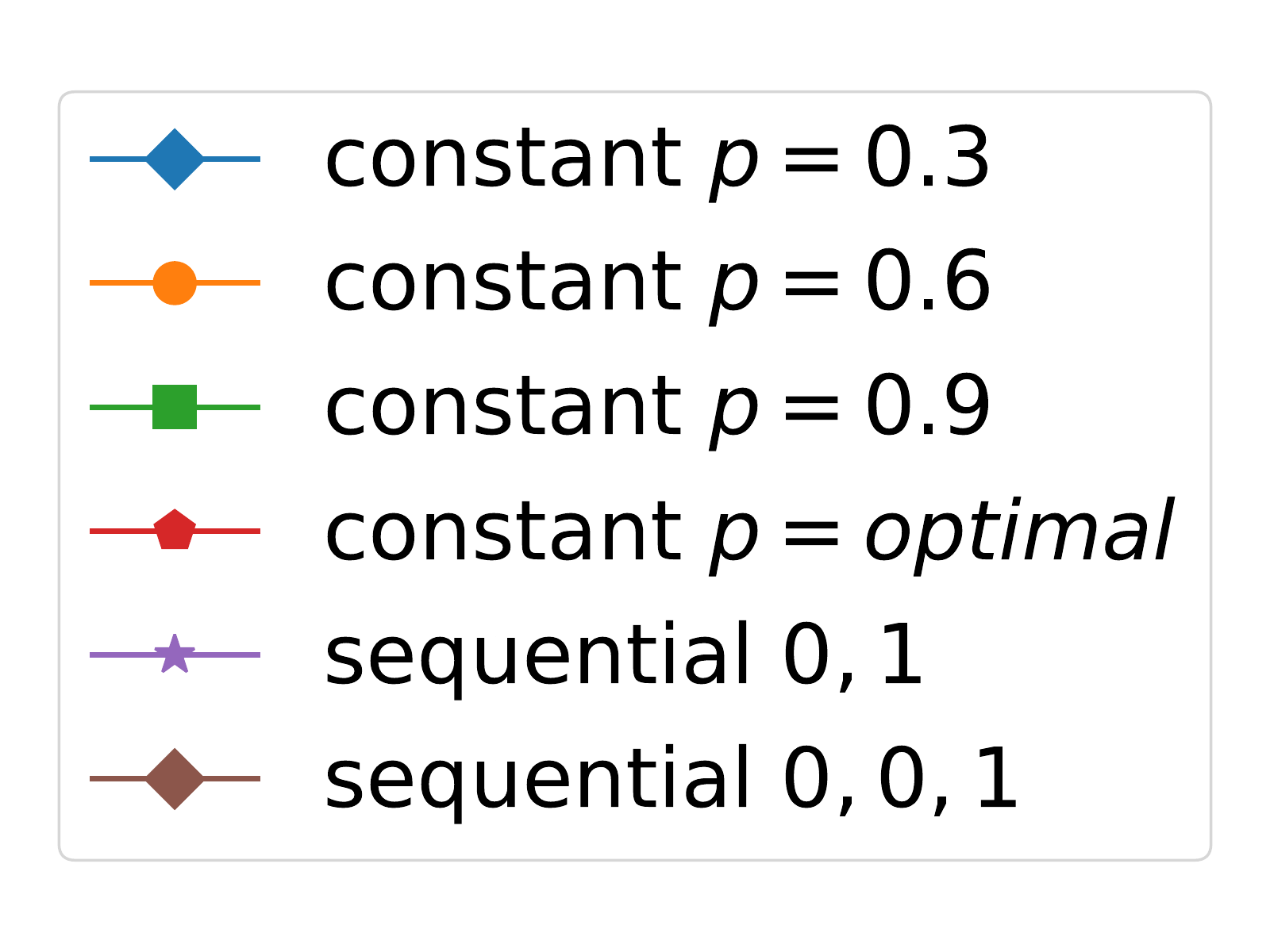}
\centering
\caption{Comparison of different probability settings using test accuracy against the communication rounds for \algo. }
\label{fig:exp_probability}
\end{figure*}

\begin{table*}
\renewcommand{\arraystretch}{1.2}
\definecolor{acc}{rgb}{0.88,1,1}
\centering
\resizebox{\textwidth}{!}{
\begin{tabular}{l p{0.1pt} cccc p{0.1pt} cccc p{0.1pt} cccc}
    \toprule
     \multirow{2}{*}{Method} & & \multicolumn{4}{c}{20 clients} && \multicolumn{4}{c}{40 clients} && \multicolumn{4}{c}{100 clients} \\ \cline{3-6}\cline{8-11}\cline{13-16}
     && Grad. & Comm. & Round & Acc. && Grad. & Comm. & Round & Acc. && Grad. & Comm. & Round & Acc. \\\hline
     BVR-L-SGD  & &  101.9 & 853.9 & 310 & 77.0 & & 177.4 & 1492.3 & 275 & 78.6 & & 463.8 & 3908.7 & \underline{291} & \underline{78.0}\\
     FedAvg & & \underline{40.8} & \underline{566.9} & 318 & 76.4 & &  \underline{78.3} & \underline{1087.5} & 305 & 76.7 & &  \underline{186.9} & \underline{2594.5} & \underline{291} & 77.0\\
    FedPAGE & & 148.6 & 1670.4 & \underline{271} & \underline{79.2} & & 164.3 & 1622.4 & \underline{203} & \underline{80.6} & & 525.5 & 4540.3 & 339 & 77.1 \\
    SCAFFOLD & & 47.5 & 1318.6 & 370 & 75.9 & & 91.4 & 2537.8 & 356 & 76.0 & & 250.9 & 6966.0 & 391 & 75.1 \\\hline
     FedAMD (constant) & & \textit{\textbf{35.5}} & \textit{489.8} & \textit{259} & \textit{\textbf{80.6}} & & \textit{\textbf{55.0}} & \textit{\textbf{776.3}} & 209 & \textit{\textbf{82.3}} & & \textit{\textbf{153.9}} & \textit{\textbf{2147.8}} & \textit{\textbf{229}} & \textit{\textbf{83.4}}\\
     FedAMD (sequential) & & \textit{40.2} & \textit{\textbf{475.3}} & \textit{\textbf{213}} & \textit{79.5} & & 80.4 & \textit{904.5} & \textit{\textbf{190}} & \textit{80.8} & & 253.8 & 2998.8 & \textit{269} & \textit{78.7}\\
     \toprule
\end{tabular}
}
\caption{Comparison among baselines in terms of cumulative gradient complexity ($\times 10^5$ samples), communication costs ($\times 32$ Mbits), and rounds reaching the accuracy of $75\%$, and the final accuracy (\%) after 400 rounds. \textbf{Bold}: The best result in each column; \underline{underline}: The best result of the baselines in each column; \textit{Italy}: The results that \algo{}outperforms all baselines in each column.}
\label{tab:exp_baseline}
\end{table*}

\begin{corollary}\label{corollary:pl_constant}
Suppose that Assumption \ref{ass:L-smooth}, \ref{ass:noise}, \ref{ass:variance_reduction} and \ref{ass:pl_condition} hold. Let the constant probability $p = \frac{1}{c} \bracket{\bracket{1 + \frac{A+1}{2\mu^2}} - \sqrt{\bracket{\frac{A+1}{2\mu^2}}^2 + \frac{A}{\mu^2}}}^{1/A}$, where $c$ is a constant greater than or equal to 1, the local updates $K \geq \max\bracket{1, \frac{2 L_{\sigma}^2}{L^2}}$, the minibatch size $b = \min \bracket{\frac{\sigma^2}{M \mu \epsilon}, n}$ and $b' = 1$, the local learning rate $\eta_l = \frac{1}{2\sqrt{6}KL}$, and the global learning rate $\eta_s = \min \bracket{\frac{\sqrt{6} A L}{M \mu c}, 2\sqrt{6} \bracket{1+\frac{32Mc}{\mu A}}^{-1}}$. Then, after the communication rounds of $T = O\bracket{\bracket{\frac{1}{\mu} + \frac{M}{\mu^2 A} + \frac{M}{A}}\log \frac{1}{\epsilon}}$, we have $F(\x{T}) - F(\x) \leq \epsilon$. Therefore, the number of total samples called by all clients (i.e., cumulative gradient complexity) is $O\bracket{\bracket{\frac{A}{\mu} + \frac{M}{\mu^2} + M} \bracket{\frac{\sigma^2}{M \mu \epsilon} + K} \log \frac{1}{\epsilon}}$ when it optimizes an online scenario. 
\end{corollary}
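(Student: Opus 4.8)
The plan is to derive Corollary~\ref{corollary:pl_constant} by specializing Theorem~\ref{theorem:pl_constant} to the value of $p$ that (up to lower‑order terms) minimizes its round complexity, and then reading off the cumulative gradient complexity. Writing out the bound of Theorem~\ref{theorem:pl_constant},
\begin{equation*}
T = O\bracket{\bracket{\frac{1}{\mu(1-p^A)} + \frac{M}{\mu^2 Ap(1-p^A)} + \frac{M}{Ap}}\log\frac{1}{\epsilon}},
\end{equation*}
I would first minimize $\frac{M}{Ap}\bracket{1+\frac{1}{\mu^2(1-p^A)}}$ over $p\in(0,1)$, the $\frac{1}{\mu(1-p^A)}$ term being absorbed into the final bound afterward. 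With $q=p^A$ this is minimizing $\varphi(q):=q^{-1/A}\bracket{1+\frac1{\mu^2(1-q)}}$; since $\varphi\to\infty$ at both ends of $(0,1)$, its minimizer solves $(\log\varphi)'(q)=0$, which rearranges to $(1-q)\bracket{1+\mu^2(1-q)}=Aq$, i.e.\ the quadratic $\mu^2 q^2-(2\mu^2+A+1)q+(\mu^2+1)=0$. Evaluating at $q=1$ gives value $-A<0$, so exactly one root lies in $(0,1)$, namely $q_*=\bracket{1+\frac{A+1}{2\mu^2}}-\sqrt{\bracket{\frac{A+1}{2\mu^2}}^2+\frac{A}{\mu^2}}$ — precisely the bracketed expression of the corollary. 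For a general constant $c\ge1$ I take $p=q_*^{1/A}/c$, which relaxes the optimum by at most a constant factor.

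The next step is to bound $\varphi$ and the residual at this $p$ from the root structure (rather than from the closed form, which suffers cancellation). Denote the roots $q_*<1<q_+$; then $q_*q_+=\tfrac{\mu^2+1}{\mu^2}$ and $q_*+q_+=\tfrac{2\mu^2+A+1}{\mu^2}$, so $q_*=\tfrac{q_*q_+}{q_+}\ge\tfrac{q_*q_+}{q_*+q_+}=\tfrac{\mu^2+1}{2\mu^2+A+1}\ge\tfrac1{4A}$, hence $q_*^{-1/A}\le(4A)^{1/A}=O(1)$; and from $\mu^2(1-q_*)(1-q_+)=-A$ together with $q_+-1\le q_*+q_+$ we get $1-q_*=\tfrac{A}{\mu^2(q_+-1)}\ge\tfrac{A}{2\mu^2+A+1}$, i.e.\ $\tfrac1{1-q_*}\le\tfrac{2\mu^2}{A}+2$. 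Since $1-p^A=1-q_*/c^A\ge1-q_*$ and $1/p=c\,q_*^{-1/A}=O(1)$, these give $\frac{M}{Ap}=O(\frac MA)$, $\frac{M}{\mu^2 Ap(1-p^A)}=O\bracket{\frac{M}{A^2}+\frac{M}{\mu^2 A}}=O\bracket{\frac MA+\frac{M}{\mu^2 A}}$, and $\frac{1}{\mu(1-p^A)}=O\bracket{\frac\mu A+\frac1\mu}=O\bracket{\frac1\mu+\frac MA}$, where the last step uses $M\ge A\ge1$ and $\mu\le L=O(1)$. Summing, $T=O\bracket{\bracket{\frac1\mu+\frac{M}{\mu^2 A}+\frac MA}\log\frac1\epsilon}$. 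It remains to check that this $p$ with $\eta_l=\frac1{2\sqrt6 KL}$ and $\eta_s=\min\bracket{\frac{\sqrt6 AL}{M\mu c},2\sqrt6(1+\frac{32Mc}{\mu A})^{-1}}$ is admissible for Theorem~\ref{theorem:pl_constant}: its learning‑rate formulas, evaluated at this $p$, agree with the corollary's up to constants once $L$ is treated as $O(1)$ — a routine verification.

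For the cumulative gradient complexity: in each round at most $A$ clients act as anchors, each drawing a batch of $b=\min\{\sigma^2/(M\mu\epsilon),n\}$ samples, and at most $A$ act as miners, each performing $K$ local steps with two gradient evaluations of batch size $b'=1$ per step (cf.\ \eqref{eq:miner_next_update}), so the per‑round sample count is $O(A(b+K))$; the $O(Mb)$ cost of initializing $v_0$ is absorbed over $T\ge1$ rounds. Multiplying by $T$ and setting $b=\sigma^2/(M\mu\epsilon)$ for the online case gives $O\bracket{\bracket{\frac A\mu+\frac{M}{\mu^2}+M}\bracket{\frac{\sigma^2}{M\mu\epsilon}+K}\log\frac1\epsilon}$, as claimed.

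I expect the optimization step to be the main obstacle: one must recognize that, after discarding the lower‑order $\frac{1}{\mu(1-p^A)}$ term, the quantity to minimize is exactly $\varphi(q)$ whose critical point is the relevant root of a quadratic, and then extract the clean bounds $q_*\ge\frac1{4A}$ and $\frac1{1-q_*}\le\frac{2\mu^2}{A}+2$ from the sum and product of the roots — a direct substitution of the closed form for $q_*$ loses precision through catastrophic cancellation when $A\gg\mu^2$. A secondary subtlety is that the $\frac{1}{\mu(1-p^A)}$ term is only absorbed after invoking $\mu\le L=O(1)$, consistent with the paper's convention of treating $L$ as a constant.
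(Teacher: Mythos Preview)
Your proposal is correct and follows essentially the same approach the paper indicates (though the paper itself provides no detailed proof of this corollary, only the remark preceding it that the round count is ``mainly occupied by $O\bracket{\frac{M}{\mu^2 Ap(1-p^A)}+\frac{M}{Ap}}$'' and that one should minimize this over $p$). Your derivation of the optimal $q_*$ via the quadratic $\mu^2 q^2-(2\mu^2+A+1)q+(\mu^2+1)=0$ is exactly what yields the corollary's closed form for $p$, and your extraction of the bounds $q_*\ge\frac{1}{4A}$ and $\frac{1}{1-q_*}\le\frac{2\mu^2}{A}+2$ from Vieta's formulas is a clean way to avoid the cancellation issue you flag; the paper does not spell out any of this, so your argument is in fact more complete than what appears in the text.
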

\paragraph{Remark.} When $c=1$, the probability $p$ for anchor sampling approaches 100\% as the number of participants is increasing. Likewise, it is necessary to use $c \geq 1$ to reduce the computation consumption of each round. Besides, \algo{}achieves a linear convergence under PL conditions. In view that strongly-convex objectives possess a looser setting than PL conditions, \algo{}can also achieve linear convergence under strongly-convex objectives.

\section{Experiments} \label{sec:experiment}
This section presents the experiments of our proposed approach and other existing baselines that are most relative to this work. We also investigate the effectiveness of probability $\{p_t\}_{t \geq 0}$. Account for the limited space, numerical analysis on other factors like the number of local updates could be found in Appendix \ref{appendix:experiments}. Our code is available at \url{https://github.com/HarliWu/FedAMD}. 
\paragraph{Experimental setup.} 
We train a convolutional neural network LeNet-5 \cite{lecun2015lenet, lecun1989backpropagation} using Fashion MNIST \cite{xiao2017fashion} and 2-layer MLP on EMNIST digits \cite{cohen2017emnist}. 
We conduct the experiments with a total of 100 clients. To simulate the non-i.i.d. features, each client holds the data from 2 classes. 
The default setting in this section is $K=10$, $b'=64$ and $b$ is the size of the local training set.
For different experiments, unless specified, we leverage the best setting (e.g., learning rate) to obtain the best results. 
All the numerical results in this section represent the average performance of three experiments using different random seeds. 

\vspace{-6px}
\paragraph{Effectiveness of probability $\{p_t\}_{t \geq 0}$.}
Figure \ref{fig:exp_probability} demonstrates the performance of various probability settings under the scenarios of different participants. In Figure \ref{subfig:(1a)} with 20 clients, both sequential probability setting and constant probability setting achieve the best performance. In Figure \ref{subfig:(1b)} and \ref{subfig:(1c)}, where 40 and 100 clients are selected in each round, constant probability setting outperforms sequential probability setting. In all three scenarios, with sequential probability settings, the pattern of $\{0, 0, 1\}$ has a much worse performance than the pattern of $\{0, 1\}$. This empirically validates Theorem \ref{theorem:nonconvex_sequential} for the best setting $\tau = 2$ in terms of communication complexity. Similarly, with constant probability settings, the best performance is achieved when $p$ approximates or equals optimal, which validates the statement in Corollary \ref{corollary:nonconvex_constant}.  

\vspace{-6px}
\paragraph{Comparison with the state-of-the-art works.}
Table \ref{tab:exp_baseline} compares \algo{}with the existing works under partial/full client participation. At first glance, FedAMD outperforms other baselines because the texts in bold are all appeared in \algo. With 20-client participation, our proposed method surpasses four baselines all aroundness. As for 40-client participation, \algo{}with constant probability saves at least 30\% computation and communication cost, and the final accuracy realizes up to 6\% improvement. In terms of the training with full client participation, FedAMD requires 10\%--20\% fewer communication rounds, and its final accuracy has significant improvement, i.e., within the range of 0.7\%--8.3\%. Also, it is well noted that BVR-L-SGD has a similar performance as FedAMD using sequential probability in terms of communication rounds and test accuracy, but the former needs more computation and communication overhead. This is because BVR-L-SGD computes the bullseye using multiple $b'$-size batches rather than a large batch.  

\section{Conclusions}

In this work, we investigate a federated learning framework \algo{}that separates the partial participants into anchor and miner groups. We provide the convergence analysis of the proposed algorithm for constant and sequential probability settings. Under the partial-client scenario, \algo{}achieves sublinear speedup under non-convex objectives and linear speedup under the PL condition. To the best of our knowledge, this is the first work to analyze the effectiveness of large batches under partial client participation. Experimental results demonstrate that \algo{}is superior to state-of-the-art works. 

\section*{Acknowledgements}
The authors would like to thank the anonymous reviewers for their constructive comments. This work is supported in part by the US National Science Foundation under grants NSF IIS-2226108 and NSF IIS-1747614, in part by the Key-Area Research and Development Program of Guangdong Province under grant No. 2021B0101400003, in part by Hong Kong RGC Research Impact Fund under grant No. R5060-19, in part by Areas of Excellence Scheme under grant AoE/E-601/22-R, in part by General Research Fund under grants No. 152203/20E,  152244/21E, 152169/22E, in part by Shenzhen Science and Technology Innovation Commission under grant JCYJ20200109142008673, in part by the National Natural Science Foundation of China under grant 62102131, and in part by Natural Science Foundation of Jiangsu Province under grant BK20210361. Any opinions, findings, and conclusions or recommendations expressed in this material are those of the author(s) and do not necessarily reflect the views of the National Science Foundation.


\bibliography{example_paper}
\bibliographystyle{icml2023}

\newpage
\appendix
\onecolumn

\allowdisplaybreaks
\newcommand{\rom}[1]{\MakeUppercase{\romannumeral #1}}
\section{Related Work} \label{appendix:related_work}
\paragraph{Mini-batch SGD vs. Local SGD. } 
Distributed optimization is required to train large-scale deep learning systems. Local SGD (also known as FedAvg) \cite{stich2018local, dieuleveut2019communication, haddadpour2019local, haddadpour2019convergence} performs multiple (i.e., $K \geq 1$) local updates with $K$ small batches, while mini-batch SGD computes the gradients averaged by $K$ small batches \cite{woodworth2020minibatch, woodworth2020local} (or a large batches \cite{shallue2019measuring, you2018imagenet, goyal2017accurate}) on a given model. 
There has been a long discussion on which one is better \cite{lin2019don, woodworth2020local, woodworth2020minibatch, yun2021minibatch}, but no existing work considers how to disjoint the nodes such that both can be trained at the same time. 


\paragraph{Federated Learning. }
FL was proposed to ensure data privacy and security \cite{kairouz2019advances}, and now it has become a hot field in the distributed system \cite{yuan2020federated, shamsian2021personalized, zhang2021parameterized, avdiukhin2021federated, yuan2021federated, diao2020heterofl, blum2021one, he2022gluefl}. The FL training methods in the past few years usually require all trainers to participate in each training session \cite{kairouz2019advances}, but this is obviously impractical when facing the increase in FL clients. To enhance the systems' feasibility, this work assumes that a fixed number of clients are sampled at each round, which is widely adopted in \cite{li2019convergence, philippenko2020bidirectional, gorbunov2021marina, karimireddy2020scaffold, yang2020achieving, li2020federated, eichner2019semi, ruan2021towards}. Therefore, the server collects the data from this participation every synchronization to update the model parameters \cite{li2019convergence, philippenko2020bidirectional, gorbunov2021marina, karimireddy2020scaffold, yang2020achieving, li2020federated, eichner2019semi, yan2020distributed, ruan2021towards, lai2021oort, gu2021fast}. 

\paragraph{Variance Reduction in Finite-sum Problems.} 
Variance reduction techniques \cite{johnson2013accelerating, defazio2014saga, nguyen2017sarah, li2021page, lan2018optimal, lan2018random, allen2016variance, reddi2016stochastic, lei2017non, zhou2018stochastic, horvath2019nonconvex, horvath2020adaptivity, fang2018spider, wang2018spiderboost, li2019ssrgd, roux2012stochastic, lian2017finite, zhang2016riemannian} was once proposed for traditional centralized machine learning to optimize finite-sum problems \cite{bietti2017stochastic, bottou2003large, robbins1951stochastic} by mitigating the estimation gap between small-batch \cite{bottou2012stochastic, ghadimi2016mini, khaled2020better} and large-batch \cite{nesterov2003introductory, ruder2016overview, mason1999boosting}. SGD randomly samples a small-batch and computes the gradient in order to approach the optimal solution. Since the data are generally noisy, an insufficiently large batch results in convergence rate degradation. By utilizing all data in every update, GD can remove the noise affecting the training process. However, it is time-consuming because the period for a single GD step can implement multiple SGD updates. Based on the trade-off, variance-reduced methods periodically perform GD steps while correcting SGD updates with reference to the most recent GD steps.

\paragraph{Variance Reduction in FL.} 
The variance reduction techniques have critically driven the advent of FL algorithms \cite{karimireddy2020scaffold,  wu2021deterioration, liang2019variance, karimireddy2020mime, murata2021bias, mitra2021linear} by correcting each local computed gradient with respect to the estimated global orientation. 
Roughly, the methods fall into two categories based on types of correction, namely, static and recursive. The former methods \cite{karimireddy2020scaffold,  wu2021deterioration, liang2019variance, karimireddy2020mime, mitra2021linear} use a consistent gradient correction within a training round. As the local updates go further, the static correction on a client is still dramatically biased to its local optimal solution, which hinders the training efficiency, especially under partial client participation. Recursive correction can avoid the challenge because it recursively corrects the local update based on the latest local model \cite{murata2021bias}. However, existing works require all workers to attain the gradient at the starting point of each round, which is infeasible for federated learning settings. To the best of our knowledge, our work is the first to achieve recursive calibration under partial-client scenarios. 

\newpage

\section{Useful Lemmas}

Prior to giving detailed proofs of the theorems, we cover some technical lemmas in this section, and all of them are valid in general cases. 

\begin{lemma} \label{lemma:avg}
Let $\varepsilon = \{\varepsilon_1, \dots, \varepsilon_a\}$ be the set of random variables in $\mathbb{R}^{a \times d}$. Every element in $\varepsilon$ is independent with others. For $i \in \{1, \dots, a\}$, the value for $\varepsilon_i$ follows the setting below: 
\begin{equation}
    \varepsilon_i = \begin{cases}
        e_i, &\text{probability}=q \\
        \boldsymbol{0}, &\text{otherwise}
    \end{cases}
\end{equation}
where $q$ is a constant real number between 0 and 1, i.e., $q \in [0, 1]$. Let $|\cdot|$ indicate the length of a set, $\varepsilon\setminus\{\boldsymbol{0}\}$ represent a set in which an element is in $\varepsilon$ but not $\boldsymbol{0}$. Then, there is a probability of $(1-q)^a$ for $|\varepsilon\setminus\{\boldsymbol{0}\}|=0$, let $avg(\varepsilon)$ be the averaged result with the exception of zero vectors, i.e., 
\begin{equation} \label{eq:lemma1_1}
    avg(\varepsilon) = \begin{cases}
        \frac{1}{|\varepsilon\setminus\{\boldsymbol{0}\}|} \sum_{i=1}^a \varepsilon_i, & |\varepsilon\setminus\{\boldsymbol{0}\}| \neq 0 \\
        0, & |\varepsilon\setminus\{\boldsymbol{0}\}| = 0
    \end{cases}
\end{equation}
Then, the following formulas hold for $\E\bracket{avg(\varepsilon)}$ and its second norm $\E\norm{avg(\varepsilon)}$: 
\begin{align} \label{eq:lemma_avg}
    \E\bracket{avg(\varepsilon)} = \bracket{1 - (1-q)^a} \cdot \frac{1}{a} \sum_{i=1}^a e_i; \quad \E\norm{avg(\varepsilon)} \leq \bracket{1 - (1-q)^a} \cdot \frac{1}{a} \sum_{i=1}^a \norm{e_i}
\end{align}
\begin{proof}
When $q = 0$, the formulas in Equation \ref{eq:lemma_avg} obviously hold because $\E\bracket{avg(\varepsilon)} = 0$ and $\E\norm{avg(\varepsilon)} = 0$. As for $q = 1$, since $avg(\varepsilon) = \frac{1}{a} \sum_{i=1}^a e_i$, we leverage Cauchy–Schwarz inequality and get $\E\norm{avg(\varepsilon)} = \norm{\frac{1}{a} \sum_{i=1}^a e_i} \leq \frac{1}{a} \sum_{i=1}^a \norm{e_i}$, which is consistent with the formulas in Equation \ref{eq:lemma_avg}. In addition to the preceding cases, we consider some general cases for the probability $q$ within 0 and 1, i.e., $q \in (0, 1)$. 

Firstly, we show the proof details for $\E\bracket{avg(\varepsilon)}$. For all $i$ in $\{1, \dots, a\}$, given that $\varepsilon_i$ is not a zero vector, the coefficient of $e_i$ is based on the binomial distribution on how many non-zero elements in the set $\{\varepsilon_1, \dots, \varepsilon_{i-1}\} \cup \{\varepsilon_{i+1}, \dots, \varepsilon_{a}\}$. Therefore, with the probability $q$ that $\varepsilon_i$ is equal to $e_i$, the coefficient of $e_i$ in the expected form is 
\begin{equation*}
    q \bracket{\frac{1}{a} \cdot \underbrace{\binom{a-1}{a-1} q^{a-1}}_{(a-1) \text{ non-zero elements}} + \cdots + \frac{1}{1} \cdot \underbrace{\binom{a-1}{0} (1-q)^{a-1}}_{0 \text{ non-zero element}}}
\end{equation*}
Then, the coefficient of $\frac{1}{a} e_i$ can be expressed and simplified for 
\begin{align}
    & \quad q \bracket{\frac{a}{a} \cdot \binom{a-1}{a-1} q^{a-1} + \cdots + \frac{a}{1} \cdot \binom{a-1}{0} (1-q)^{a-1}} \\
    &= q \bracket{\binom{a}{a} q^{a-1} + \cdots + \binom{a}{1} (1-q)^{a-1}} \\ \label{eq:lemma1_4}
    & = \binom{a}{a} q^{a} + \cdots + \binom{a}{1} q (1-q)^{a-1} \\
    & = 1 - (1 - q)^a \label{eq:lemma1_6}
\end{align}
where Equation \eqref{eq:lemma1_4} follows 
\begin{equation*}
    \binom{\alpha}{\beta} = \frac{\alpha}{\beta} \cdot \frac{(\alpha-1) \times \cdots \times (\alpha - \beta + 1)}{1 \times \cdots \times (\beta-1)} = \frac{\alpha}{\beta} \binom{\alpha-1}{\beta-1}, \quad \forall \alpha \geq \beta > 0
\end{equation*}
and Equation \eqref{eq:lemma1_6} follows 
\begin{equation*}
    (q + (1-q))^a = \binom{a}{a} q^{a} + \cdots + \binom{a}{0} (1-q)^{a}. 
\end{equation*}
Thus, the equation $\E\bracket{avg(\varepsilon)} = \bracket{1 - (1-q)^a} \cdot \frac{1}{a} \sum_{i=1}^a e_i$ holds. 

Secondly, we provide the analysis for $\E\norm{avg(\varepsilon)}$. Based on the definition for $avg(\varepsilon)$ in Equation (\ref{eq:lemma1_1}), we discuss the case $|\varepsilon\setminus\{\boldsymbol{0}\}| \neq 0$. By means of Cauchy-Schwarz inequality, we can obtain the following inequality:  
\begin{equation}
    \norm{\frac{1}{|\varepsilon\setminus\{\boldsymbol{0}\}|} \sum_{i=1}^a \varepsilon_i} = \norm{\frac{1}{|\varepsilon\setminus\{\boldsymbol{0}\}|} \sum_{i, \varepsilon_i \neq \boldsymbol{0}} \varepsilon_i} \leq \frac{1}{|\varepsilon\setminus\{\boldsymbol{0}\}|} \sum_{i, \varepsilon_i \neq \boldsymbol{0}} \norm{\varepsilon_i} = \frac{1}{|\varepsilon\setminus\{\boldsymbol{0}\}|} \sum_{i=1}^a \norm{\varepsilon_i}
\end{equation}
Therefore, 
\begin{equation} \label{eq:lemma1_10}
    \norm{avg(\varepsilon)} \leq \begin{cases}
        \frac{1}{|\varepsilon\setminus\{\boldsymbol{0}\}|} \sum_{i=1}^a \norm{\varepsilon_i}, & |\varepsilon\setminus\{\boldsymbol{0}\}| \neq 0 \\
        0, & |\varepsilon\setminus\{\boldsymbol{0}\}| = 0
    \end{cases}
\end{equation}
Apparently, Equation (\ref{eq:lemma1_10}) is very similar to Equation \eqref{eq:lemma1_1} in terms of the expression. As a result, we can adopt the same proof framework in the analysis of $\E\bracket{avg(\varepsilon)}$. Then, we can directly draw a conclusion $\E\norm{avg(\varepsilon)} \leq \bracket{1 - (1-q)^a} \cdot \frac{1}{a} \sum_{i=1}^a \norm{e_i}$. 
\end{proof}
\end{lemma}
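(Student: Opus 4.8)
The plan is to compute $\E\bracket{avg(\varepsilon)}$ by isolating the coefficient of each $e_i$, after first clearing away the two degenerate values of $q$. When $q=0$ we have $avg(\varepsilon)=\boldsymbol 0$ almost surely, and when $q=1$ we have $avg(\varepsilon)=\frac1a\sum_{i=1}^a e_i$ deterministically; in both cases the asserted identities hold immediately (the first exactly, the second via Cauchy--Schwarz), since $(1-q)^a$ equals $1$ and $0$ respectively. I would then assume $q\in(0,1)$ for the remainder.

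For the expectation, fix an index $i$. The vector $e_i$ contributes to $avg(\varepsilon)$ precisely on the event $\{\varepsilon_i\neq\boldsymbol 0\}$, which has probability $q$ and is independent of the other coordinates; on that event the weight multiplying $e_i$ is $1/(1+B)$, where $B$ counts the surviving coordinates among the other $a-1$, so $B\sim\mathrm{Binomial}(a-1,q)$. Hence the coefficient of $e_i$ in $\E\bracket{avg(\varepsilon)}$ equals $q\,\E[1/(1+B)]=q\sum_{j=0}^{a-1}\binom{a-1}{j}q^j(1-q)^{a-1-j}\frac{1}{j+1}$. I would evaluate this using the identity $\frac{1}{j+1}\binom{a-1}{j}=\frac1a\binom{a}{j+1}$, which rewrites the sum as $\frac1a\sum_{k=1}^a\binom{a}{k}q^k(1-q)^{a-k}=\frac1a\bracket{1-(1-q)^a}$ by the binomial theorem. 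Summing the contributions over all $i$ gives $\E\bracket{avg(\varepsilon)}=\bracket{1-(1-q)^a}\cdot\frac1a\sum_{i=1}^a e_i$.

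For the second-moment bound, I would observe that on the event $|\varepsilon\setminus\{\boldsymbol 0\}|=s\geq1$ the vector $avg(\varepsilon)$ is a uniform average of the $s$ surviving vectors $e_i$, so by Cauchy--Schwarz (equivalently Jensen) $\norm{avg(\varepsilon)}\leq\frac1s\sum_{i:\,\varepsilon_i\neq\boldsymbol 0}\norm{e_i}$, while on the complementary event both sides vanish. The right-hand side is exactly the $avg(\cdot)$ operation, with the same survival pattern $\varepsilon$, applied to the nonnegative scalars $\norm{e_1},\dots,\norm{e_a}$. Taking expectations and invoking the identity just proved (now for scalars, and with $\leq$ replacing $=$ because the Cauchy--Schwarz step was an inequality) yields $\E\norm{avg(\varepsilon)}\leq\bracket{1-(1-q)^a}\cdot\frac1a\sum_{i=1}^a\norm{e_i}$.

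The main obstacle is the combinatorial bookkeeping of the random normalizer $1/|\varepsilon\setminus\{\boldsymbol 0\}|$: recognizing that the coefficient of each $e_i$ collapses to $q\,\E[1/(1+B)]$ with $B$ binomial, and that $\E[1/(1+B)]=\frac{1-(1-q)^a}{aq}$. Once that identity is secured, the reduction of the norm bound to the expectation formula, as well as the treatment of the degenerate values of $q$, are routine.
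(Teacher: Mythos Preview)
Your proposal is correct and follows essentially the same approach as the paper: handle the degenerate values $q\in\{0,1\}$ separately, compute the coefficient of each $e_i$ by conditioning on $\varepsilon_i\neq\boldsymbol 0$ and averaging over a $\mathrm{Binomial}(a-1,q)$ number of other survivors via the identity $\frac{1}{j+1}\binom{a-1}{j}=\frac{1}{a}\binom{a}{j+1}$, and then reduce the squared-norm bound to the scalar case through Cauchy--Schwarz. The paper carries out exactly these steps, with the same binomial identity written in the equivalent form $\binom{\alpha}{\beta}=\frac{\alpha}{\beta}\binom{\alpha-1}{\beta-1}$.
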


\begin{lemma} \label{lemma:split_array}
Let $\varepsilon = \{\varepsilon_1, \dots, \varepsilon_a\}$ be the set of random variables in $\mathbb{R}^d$ with the number of $a$. These random variables are not necessarily independent. We can suppose that $\E\left[\varepsilon_i\right]=e_i$, and the variance is bounded as $\E\left[\norm{\varepsilon_i-e_i}\right]\leq\sigma^2$. After that we can get
\begin{equation}
    \E\left[\norm{\sum_{i=1}^a\varepsilon_i}\right]\leq\norm{\sum_{i=1}^a e_i}+a^2\sigma^2
\end{equation}
If we make another suppose that the conditional mean of these random variables is $\E\left[\varepsilon_i|\varepsilon_{i-1},\dots,\varepsilon_{1}\right]=e_i$, and the variables $\{\varepsilon_i-e_i\}$ form a martingale difference sequence, and the bound of the variance is $\E\left[\norm{\varepsilon_i-e_i}\right]\leq\sigma^2$. So we can make a much tighter bound
\begin{equation}
    \E\left[\norm{\sum_{i=1}^a\varepsilon_i}\right]\leq2\norm{\sum_{i=1}^a e_i}+2a\sigma^2
\end{equation}

\begin{proof}
For any random variable $X$, $\E\left[X^2\right]=\bracket{\E\left[X-\E\left[X\right]\right]}^2+\bracket{\E\left[X\right]}^2$ implying
\begin{equation}
    \E\left[\norm{\sum_{i=1}^a\varepsilon_i}\right]=\norm{\sum_{i=1}^a e_i}+\E\left[\norm{\sum_{i=1}^a\varepsilon_i-e_i}\right]
\end{equation}
Expanding above expression using relaxed triangle inequality: 
\begin{equation}
\E\left[\norm{\sum_{i=1}^a\varepsilon_i-e_i}\right] \leq a\sum_{i=1}^a\E\left[\norm{\varepsilon_i-e_i}\right] \leq a^2\sigma^2
\end{equation}
For the  second statement, $e_i$ depends on $\left[\varepsilon_{i-1},\dots,\varepsilon_{1}\right]$. Thus we choose to use a relaxed triangle inequality
\begin{equation}
    \E\left[\norm{\sum_{i=1}^a\varepsilon_i}\right] \leq 2\norm{\sum_{i=1}^a e_i}+2\E\left[\norm{\sum_{i=1}^a\varepsilon_i-e_i}\right]
\end{equation}
then we use a much tighter expansion and we can get:
\begin{equation}
    \E\left[\norm{\sum_{i=1}^a\varepsilon_i-e_i}\right]=\sum_{i,j}\E\left[\bracket{\varepsilon_i-e_i}^\top\bracket{\varepsilon_j-e_j}\right]=\sum_i\E\left[\norm{\sum_{i=1}^a\varepsilon_i-e_i}\right]\leq a\sigma^2
\end{equation}
When $\{\varepsilon_i-e_i\}$ form a martingale difference sequence, the cross terms will have zero means.
\end{proof}

\end{lemma}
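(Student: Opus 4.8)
\textbf{Proof proposal for Lemma~\ref{lemma:split_array}.}
The plan is to follow the standard bias--variance decomposition already hinted at in the statement, being careful to distinguish the two regimes. First I would establish the elementary identity $\E[X^2] = (\E[X - \E X])^2 + (\E X)^2$ in the vector-valued form $\E\norm{\sum_i \varepsilon_i} = \norm{\sum_i e_i} + \E\norm{\sum_i (\varepsilon_i - e_i)}$; this is just expanding the square and using linearity of expectation together with $\E[\varepsilon_i] = e_i$, so the cross term $2\langle \sum_i(\varepsilon_i - e_i), \sum_i e_i\rangle$ vanishes in expectation. After that, the whole problem reduces to bounding $\E\norm{\sum_i (\varepsilon_i - e_i)}$ in each of the two settings.

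For the first (general, possibly dependent) case, I would apply the relaxed triangle inequality $\norm{\sum_{i=1}^a y_i} \le a \sum_{i=1}^a \norm{y_i}$ with $y_i = \varepsilon_i - e_i$, take expectations, and invoke the hypothesis $\E\norm{\varepsilon_i - e_i} \le \sigma^2$ to get $\E\norm{\sum_i(\varepsilon_i - e_i)} \le a \sum_i \sigma^2 = a^2\sigma^2$. Combining with the decomposition gives the first claimed bound. For the second (martingale-difference) case, I would instead expand the square directly as $\E\norm{\sum_i(\varepsilon_i - e_i)} = \sum_{i,j} \E[(\varepsilon_i - e_i)^\top(\varepsilon_j - e_j)]$ and argue that for $i \neq j$ (say $j < i$) the conditional-mean hypothesis $\E[\varepsilon_i \mid \varepsilon_{i-1}, \dots, \varepsilon_1] = e_i$ forces the cross term to vanish: conditioning on $\varepsilon_1, \dots, \varepsilon_{i-1}$ makes $(\varepsilon_j - e_j)$ measurable and $\E[\varepsilon_i - e_i \mid \cdot] = 0$. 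Hence only the diagonal survives, $\sum_i \E\norm{\varepsilon_i - e_i} \le a\sigma^2$. To absorb the fact that $e_i$ itself depends on the history (so the clean bias--variance split of step one is not literally available with a zero cross term in the same way), I would instead start from the relaxed triangle inequality $\norm{u + v} \le 2\norm{u} + 2\norm{v}$ applied to $u = \sum_i e_i$ and $v = \sum_i(\varepsilon_i - e_i)$, yielding $\E\norm{\sum_i \varepsilon_i} \le 2\norm{\sum_i e_i} + 2\E\norm{\sum_i(\varepsilon_i - e_i)} \le 2\norm{\sum_i e_i} + 2a\sigma^2$.

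The only genuinely delicate point — and the one I would be most careful about — is the martingale cancellation of the cross terms: one must condition on the correct $\sigma$-algebra (the one generated by $\varepsilon_1, \dots, \varepsilon_{i-1}$ for the larger index $i$) and use the tower property, noting that $e_j = \E[\varepsilon_j \mid \varepsilon_{j-1},\dots,\varepsilon_1]$ and $e_i$ are both functions of $\varepsilon_1,\dots,\varepsilon_{i-1}$ when $j < i$, so that $\E[(\varepsilon_i - e_i)^\top(\varepsilon_j - e_j)] = \E\big[(\varepsilon_j - e_j)^\top \E[\varepsilon_i - e_i \mid \varepsilon_{i-1},\dots,\varepsilon_1]\big] = 0$. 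Everything else is routine application of the relaxed triangle inequality and the variance hypothesis; no new structural idea is needed beyond what the statement already telegraphs.
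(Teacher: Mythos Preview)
Your proposal is correct and follows essentially the same route as the paper: bias--variance decomposition plus relaxed triangle inequality for the first bound, and the $\norm{u+v}\le 2\norm{u}+2\norm{v}$ split followed by expansion of the square with martingale cross-term cancellation for the second. If anything, your treatment of the tower-property step is more explicit than the paper's, which simply asserts that the cross terms have zero mean.
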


\begin{lemma} \label{lemma:new_l_smooth}
Suppose there is a sequence $\{y_t \in \mathbb{R}^d\}_{t \geq 0}$ satisfying a recursive function $y_{t+1} = y_t - \eta \Delta y_t$, where $\eta > 0$ is a constant and $\Delta y_t \in \mathbb{R}^d$ is a vector. Given a $L$-smooth function $G$, the following inequality holds for any $\eta$ and $\Delta y_t$: 
\begin{equation} \label{eq:lemma3_conclusion}
    G(y_{t+1}) \leq G(y_t) - \frac{\eta \eta'}{2} \norm{\nabla G(y_t)} - \bracket{\frac{1}{2 \eta \eta'} - \frac{L}{2}} \norm{y_{t+1} - y_t} + \frac{\eta}{2 \eta'} \norm{\Delta y_t - \eta' \nabla G(y_t)}
\end{equation}
where $\eta' > 0$ can be any constant. 

\begin{proof}
Since $G$ is a L-smooth function, for any $v, \bar{v} \in \mathbb{R}^d$, the following inequality holds: 
\begin{align}
    G(\bar{v}) &= G(v) + \int_0^1 \frac{\partial G(v + t(\bar{v}-v))}{\partial t} dt \\
    &= G(v) + \int_0^1 \nabla G(v + t(\bar{v}-v)) \cdot (\bar{v}-v) dt\\
    &= G(v) + \nabla G(v) (\bar{v} - v) + \int_0^1 \left(\nabla G(v + t(\bar{v}-v)) - G(v)\right) \cdot (\bar{v}-v) dt \\
    &\leq G(v) + \nabla G(v) (\bar{v} - v) + \int_0^1 L \|t(\bar{v}-v)\|_2 \|\bar{v} - v\|_2 dt \\
    &\leq G(v) + \nabla G(v) (\bar{v} - v) + \frac{L}{2} \|\bar{v}-v\|_2^2. \label{eq:Lemma3_18}
\end{align}

Based on the conclusion on L-smooth drawn from Equation \eqref{eq:Lemma3_18}, we derive Equation \eqref{eq:lemma3_conclusion} step by step: 
\begin{align}
    G\bracket{y_{t+1}} &\leq G\bracket{y_t} + \innerproduct{\nabla G\bracket{y_t}}{y_{t+1}-y_t}+\frac{L}{2}\norm{y_{t+1}-y_t} \\ 
    &= G\bracket{y_t}+\innerproduct{\nabla G\bracket{y_t}}{-\eta \Delta y_t}+\frac{L}{2}\norm{y_{t+1}-y_t}\\ 
    &= G\bracket{y_t} - \frac{\eta}{\eta'}\innerproduct{\eta'\nabla G\bracket{y_t}}{\Delta y_t}+\frac{L}{2}\norm{y_{t+1}-y_t}\\ 
    &= G\bracket{y_t}-\frac{\eta}{2\eta'}\bracket{\eta'^2\norm{\nabla G\bracket{y_t}}+\norm{\Delta y_t}-\norm{\Delta y_t-\eta'\nabla G\bracket{y_t}}}+\frac{L}{2}\norm{y_{t+1}-y_t} \label{eq:lemma3_22}\\ 
    &= G\bracket{y_t}-\frac{\eta\eta'}{2}\norm{\nabla G\bracket{y_t}}-\bracket{\frac{1}{2\eta\eta'}-\frac{L}{2}}\norm{y_{t+1}-y_t}+\frac{\eta}{2\eta'}\norm{\Delta y_t-\eta'\nabla G\bracket{y_t}} \label{25}
\end{align}
where Equation \eqref{eq:lemma3_22} is in accordance with $\innerproduct{\alpha}{\beta} = \frac{1}{2} \bracket{\alpha^2 +  \beta^2 - \bracket{\alpha-\beta}^2}$, and Equation \eqref{25} follows $\norm{\Delta y_t}=\frac{1}{\eta^2}\norm{y_{t+1}-y_t}$.
\end{proof}
\end{lemma}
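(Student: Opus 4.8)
The plan is to obtain \eqref{eq:lemma3_conclusion} directly from the descent inequality implied by $L$-smoothness, with no probabilistic input: the statement is a purely algebraic inequality valid for every $\eta,\eta'>0$ and every vector $\Delta y_t$. First I would record the standard quadratic upper bound $G(\bar v)\le G(v)+\innerproduct{\nabla G(v)}{\bar v-v}+\tfrac{L}{2}\norm{\bar v-v}$, which follows by integrating $\nabla G$ along the segment $[v,\bar v]$ and applying Cauchy--Schwarz to the remainder term. Instantiating it at $v=y_t$ and $\bar v=y_{t+1}$ and substituting the recursion $y_{t+1}-y_t=-\eta\,\Delta y_t$ turns it into $G(y_{t+1})\le G(y_t)-\eta\innerproduct{\nabla G(y_t)}{\Delta y_t}+\tfrac{L}{2}\norm{y_{t+1}-y_t}$.

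The second step is to expand the cross term with the polarization identity $\innerproduct{\alpha}{\beta}=\tfrac12\bracket{\norm{\alpha}+\norm{\beta}-\norm{\alpha-\beta}}$. The choice that makes the lemma reusable downstream is to first write $\innerproduct{\nabla G(y_t)}{\Delta y_t}=\tfrac{1}{\eta'}\innerproduct{\eta'\nabla G(y_t)}{\Delta y_t}$ and then polarize with $\alpha=\eta'\nabla G(y_t)$, $\beta=\Delta y_t$; this yields the gradient term $-\tfrac{\eta\eta'}{2}\norm{\nabla G(y_t)}$, a term $-\tfrac{\eta}{2\eta'}\norm{\Delta y_t}$, and the residual $+\tfrac{\eta}{2\eta'}\norm{\Delta y_t-\eta'\nabla G(y_t)}$ — exactly the form in which the variance of the local update around a scaled true gradient will later be controlled. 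Finally I would use the recursion once more as $\norm{\Delta y_t}=\eta^{-2}\norm{y_{t+1}-y_t}$, so that $-\tfrac{\eta}{2\eta'}\norm{\Delta y_t}=-\tfrac{1}{2\eta\eta'}\norm{y_{t+1}-y_t}$, and merge this with the $\tfrac{L}{2}\norm{y_{t+1}-y_t}$ term from smoothness to produce the coefficient $-\bracket{\tfrac{1}{2\eta\eta'}-\tfrac{L}{2}}$. Collecting the four pieces gives \eqref{eq:lemma3_conclusion}.

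There is no genuine obstacle here: the argument is a deterministic rearrangement that never uses any special structure of $\Delta y_t$, so in particular it will apply verbatim to the miner update of \eqref{eq:miner_next_update} and to the server step of \eqref{eq:server_global_update}. The only point needing care is the bookkeeping of the free constant $\eta'$ — one must split $\nabla G$ as $\tfrac{1}{\eta'}\bracket{\eta'\nabla G}$ \emph{before} polarizing, rather than polarize $\innerproduct{\nabla G}{\Delta y}$ directly, so that $\eta'$ survives inside the residual norm; keeping $\eta'$ as a tunable knob is what later lets one match $\Delta y_t$ against $\eta'$ times a true gradient once $\eta'$ is set in terms of the local step size and number of local steps.
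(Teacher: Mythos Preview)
Your proposal is correct and follows essentially the same approach as the paper's proof: apply the $L$-smooth descent inequality, substitute $y_{t+1}-y_t=-\eta\Delta y_t$, rescale the inner product by $\eta'$, expand via the polarization identity $\innerproduct{\alpha}{\beta}=\tfrac12(\norm{\alpha}+\norm{\beta}-\norm{\alpha-\beta})$ with $\alpha=\eta'\nabla G(y_t)$ and $\beta=\Delta y_t$, and finally convert $\norm{\Delta y_t}$ back to $\eta^{-2}\norm{y_{t+1}-y_t}$. Your emphasis on inserting the $\eta'$ scaling \emph{before} polarizing is exactly the bookkeeping point the paper relies on as well.
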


\newpage

\section{Preliminary for \algo} \label{sec:preliminary}

Algorithm \ref{algo} describes \algo{}in details. The objective in this part is to find the recursive function for the sequence of models, i.e., $\{\x{t}\}_{t \geq 0}$. As mentioned in Line 26 in Algorithm \ref{algo}, let $\Delta \boldsymbol{x}_t$ aggregate $\Delta\x{i}{t}$ where client $i$ updates model, then the difference between $\x{t+1}$ and $\x{t}$ follows the recursive function written as 
\begin{equation}
    \x{t+1} = \x{t} - \eta_s \cdot avg(\Delta \boldsymbol{x}_t)
\end{equation}
where $avg()$ is same as defined in Lemma \ref{lemma:avg}. As we know, the length of $\Delta \boldsymbol{x}_t$ changes over rounds but does not exceed the number of participants, i.e., $|\Delta \boldsymbol{x}_t| \leq A$. Then, suppose that $\Delta\x{m}{t}$ is in $\Delta \boldsymbol{x}_t$, $\Delta\x{m}{t}$ can be expressed as 
\begin{equation} \label{eq:25}
    \Delta\x{m}{t} = -\bracket{\x{m}{t}{K} - \x{t}} = -\sum_{k=0}^{K-1} \bracket{\x{m}{t}{k+1} - \x{m}{t}{k+1}} = \sum_{k=0}^{K-1} \eta_l \g{m}{t}{k+1}
\end{equation}
where the last equal sign is according to Line 20 in Algorithm \ref{algo}. Next, with the recursive formula in Line 19, we have 
\begin{align}
    \g{m}{t}{k+1} &= \g{m}{t}{k} - \minigrad{m}{t}{k-1}{k} + \minigrad{m}{t}{k}{k} \\
    &= \g{t} - \sum_{\kappa=0}^k \minigrad{m}{t}{\kappa-1}{\kappa} + \sum_{\kappa=0}^k \minigrad{m}{t}{\kappa}{\kappa}. \label{eq:prel_algo_27}
\end{align}

Then, Equation \eqref{eq:25} can be rewritten as 
\begin{equation}
    \Delta\x{m}{t} = \eta_l K \g{t} - \eta_l \sum_{k=0}^{K-1} \sum_{\kappa=0}^k \minigrad{m}{t}{\kappa-1}{\kappa} + \eta_l \sum_{k=0}^{K-1} \sum_{\kappa=0}^k \minigrad{m}{t}{\kappa}{\kappa}
\end{equation}

\section{Proofs under Sequential Probabilistic Settings} \label{appendix:proofseq}

\subsection{Preliminary}

\begin{lemma} \label{lemma:sequential_pre}
Suppose that Assumption \ref{ass:L-smooth}, \ref{ass:noise} and \ref{ass:variance_reduction} hold. Let the local learning rate satisfy $\eta_l \leq \min\bracket{\frac{1}{2\sqrt{3}KL}, \frac{1}{2\sqrt{3} L_{\sigma}} \sqrt{\frac{b'}{K}}}$. With \algo, $\sum_{k=0}^{K-1} \norm{\x{m}{t}{k} - \x{m}{t}{k-1}}$ represents the sum of the second norm of every iteration's difference. Therefore, the bound for such a summation in the expected form should be 
\begin{equation} \label{eq:lemma4_1}
    \sum_{k=0}^{K-1} \E\norm{\x{m}{t}{k} - \x{m}{t}{k-1}} \leq 6 \eta_l^2 K \norm{\g{t} - \grad{t}} + 6 \eta_l^2 K \norm{\grad{t}}
\end{equation}
\begin{proof}
According to Equation \eqref{eq:prel_algo_27}, the update at $(k-1)$-th iteration is 
\begin{equation}
    \x{m}{t}{k} - \x{m}{t}{k-1} = -\eta_l \g{m}{t}{k} = -\eta_l \bracket{\g{t} - \sum_{\kappa=0}^{k-1} \minigrad{m}{t}{\kappa-1}{\kappa} + \sum_{\kappa=0}^{k-1} \minigrad{m}{t}{\kappa}{\kappa}}. 
\end{equation}
To find the bound for the expected value of its second norm, the analysis is presented as follows: 
\begin{align}
    &\quad\E\norm{\x{m}{t}{k} - \x{m}{t}{k-1}}\\
    &= \eta_l^2 \E\norm{\g{t} - \sum_{\kappa=0}^{k-1} \minigrad{m}{t}{\kappa-1}{\kappa} + \sum_{\kappa=0}^{k-1} \minigrad{m}{t}{\kappa}{\kappa}} \\
    &\leq 3\eta_l^2 \E \norm{\g{t} - \grad{t}} + 3\eta_l^2 \norm{\grad{t}} \nonumber \\
    &\quad + 3\eta_l^2 \E\norm{\sum_{\kappa=0}^{k-1} \minigrad{m}{t}{\kappa-1}{\kappa} - \sum_{\kappa=0}^{k-1} \minigrad{m}{t}{\kappa}{\kappa}} \label{eq:lemma4_34} \\
    &= 3\eta_l^2 \E\norm{\g{t} - \grad{t}} + 3\eta_l^2 \norm{\grad{t}} + 3\eta_l^2 \E\norm{\sum_{\kappa=0}^{k-1} \bracket{\grad{m}{t}{\kappa-1} - \grad{m}{t}{\kappa}}} \nonumber \\
    &\quad + 3\eta_l^2 \E\norm{\sum_{\kappa=0}^{k-1} \bracket{\minigrad{m}{t}{\kappa-1}{\kappa} - \minigrad{m}{t}{\kappa}{\kappa} - \grad{m}{t}{\kappa-1} +  \grad{m}{t}{\kappa}}} \label{eq:lemma4_35} \\
    &\leq 3\eta_l^2 \E \norm{\g{t} - \grad{t}} + 3\eta_l^2 \norm{\grad{t}} + 3\eta_l^2 K L^2 \sum_{\kappa=0}^{k-1} \E\norm{\x{m}{t}{\kappa-1} - \x{m}{t}{\kappa}} \nonumber \\
    &\quad + 3\eta_l^2 \E\norm{\sum_{\kappa=0}^{k-1} \bracket{\minigrad{m}{t}{\kappa-1}{\kappa} - \minigrad{m}{t}{\kappa}{\kappa} - \grad{m}{t}{\kappa-1} +  \grad{m}{t}{\kappa}}} \label{eq:lemma4_36} \\
    &\leq 3\eta_l^2 \E \norm{\g{t} - \grad{t}} + 3\eta_l^2 \norm{\grad{t}} + 3\eta_l^2 K L^2 \sum_{\kappa=0}^{k-1} \E\norm{\x{m}{t}{\kappa-1} - \x{m}{t}{\kappa}} \nonumber \\
    &\quad + 3\eta_l^2 \frac{L_\sigma^2}{b'} \sum_{\kappa=0}^{k-1} \E\norm{\x{m}{t}{\kappa-1} - \x{m}{t}{\kappa}} \label{eq:lemma4_37} 
\end{align}
where Equation \eqref{eq:lemma4_34} is based on Cauchy-Schwarz inequality; Equation \eqref{eq:lemma4_35} is based on the variance expansion on the third term of Equation \eqref{eq:lemma4_34}; Equation \eqref{eq:lemma4_36} is based on Cauchy-Schwarz inequality and Assumption \ref{ass:L-smooth} on the third term of Equation \eqref{eq:lemma4_35}; Equation \eqref{eq:lemma4_37} is based on Lemma \ref{lemma:split_array} and Assumption \ref{ass:variance_reduction} on the fourth term of Equation \eqref{eq:lemma4_36}. 

Therefore, by summing Equation \eqref{eq:lemma4_37} for $k = 1, \dots, K$, we have 
\begin{align}
    &\quad \sum_{k=0}^{K-1} \norm{\x{m}{t}{k} - \x{m}{t}{k-1}} \leq \sum_{k=0}^{K} \norm{\x{m}{t}{k} - \x{m}{t}{k-1}} \\
    &\leq 3\eta_l^2 K \E\norm{\g{t} - \grad{t}} + 3\eta_l^2 K \norm{\grad{t}} + 3\eta_l^2 K \bracket{K L^2 + \frac{L_\sigma^2}{b'}} \sum_{k=0}^{K-1} \E\norm{\x{m}{t}{\kappa-1} - \x{m}{t}{\kappa}}
\end{align}
Obviously, according to the setting of the local learning rate in the description above, the inequality $3\eta_l^2 K \bracket{K L^2 + \frac{L_\sigma^2}{b'}} \leq \frac{1}{2}$ holds. Therefore, we can easily obtain the bound for the sum of the second norm of every iteration's difference, which is consistent with Equation \eqref{eq:lemma4_1}. 
\end{proof}
\end{lemma}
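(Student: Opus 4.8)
The plan is to unroll the miner's SARAH-type recursion for one increment, bound its squared norm by the two target quantities plus a self-referential tail, and then kill the tail with the small learning rate.

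First I would recall from \eqref{eq:prel_algo_27} that the $(k-1)$-st increment is $\x{m}{t}{k} - \x{m}{t}{k-1} = -\eta_l\,\g{m}{t}{k}$ with $\g{m}{t}{k} = \g{t} - \sum_{\kappa=0}^{k-1}\minigrad{m}{t}{\kappa-1}{\kappa} + \sum_{\kappa=0}^{k-1}\minigrad{m}{t}{\kappa}{\kappa}$. Taking $\E\norm{\cdot}$ and applying the relaxed triangle inequality with three groups — $\g{t}-\grad{t}$, $\grad{t}$, and the stochastic-gradient-difference sum — produces the terms $3\eta_l^2\E\norm{\g{t}-\grad{t}}$, $3\eta_l^2\norm{\grad{t}}$, and a residual $3\eta_l^2\E\norm{\sum_{\kappa=0}^{k-1}\bigl(\minigrad{m}{t}{\kappa-1}{\kappa}-\minigrad{m}{t}{\kappa}{\kappa}\bigr)}$. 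I would then split this residual once more into its ``true'' part $\sum_\kappa\bigl(\grad{m}{t}{\kappa-1}-\grad{m}{t}{\kappa}\bigr)$ and its ``noise'' part $\sum_\kappa\bigl(\minigrad{m}{t}{\kappa-1}{\kappa}-\minigrad{m}{t}{\kappa}{\kappa}-\grad{m}{t}{\kappa-1}+\grad{m}{t}{\kappa}\bigr)$.

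Next I would bound the two residual sums separately. For the true part, Cauchy--Schwarz gives a factor $K$ and Assumption~\ref{ass:L-smooth} turns each summand into $L^2\norm{\x{m}{t}{\kappa-1}-\x{m}{t}{\kappa}}$. For the noise part, the key point is that, conditioned on all prior randomness, each summand has mean zero because $\minibatch{m}{\kappa}$ is drawn fresh; hence these summands form a martingale difference sequence, and Lemma~\ref{lemma:split_array} lets me drop the extra factor $K$, while Assumption~\ref{ass:variance_reduction} bounds each term's second moment by $\tfrac{L_\sigma^2}{b'}\norm{\x{m}{t}{\kappa-1}-\x{m}{t}{\kappa}}$. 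Assembling these pieces gives, for each $k$, a bound of the form $3\eta_l^2\E\norm{\g{t}-\grad{t}}+3\eta_l^2\norm{\grad{t}}+3\eta_l^2\bigl(KL^2+\tfrac{L_\sigma^2}{b'}\bigr)\sum_{\kappa=0}^{k-1}\E\norm{\x{m}{t}{\kappa-1}-\x{m}{t}{\kappa}}$. Summing over $k$ (the $k=0$ term vanishes since $\x{m}{t}{-1}=\x{m}{t}{0}=\x{t}$, and one may enlarge $\sum_{k=0}^{K-1}$ to $\sum_{k=0}^{K}$ for a clean index alignment) collects the tails into $3\eta_l^2 K\bigl(KL^2+\tfrac{L_\sigma^2}{b'}\bigr)\sum_k\E\norm{\x{m}{t}{k}-\x{m}{t}{k-1}}$ on the right.

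Finally, the stated condition $\eta_l\le\min\bracket{\tfrac{1}{2\sqrt{3}KL},\ \tfrac{1}{2\sqrt{3}L_\sigma}\sqrt{b'/K}}$ makes $3\eta_l^2 K(KL^2)\le\tfrac14$ and $3\eta_l^2 K\tfrac{L_\sigma^2}{b'}\le\tfrac14$, so that coefficient is at most $\tfrac12$; moving the self-referential term to the left and multiplying by $2$ yields \eqref{eq:lemma4_1}. I expect the only delicate step to be the martingale-difference argument for the noise sum: a naive Cauchy--Schwarz there would cost an extra factor $K$, forcing the learning-rate bound to shrink by $\sqrt{K}$ and weakening every downstream estimate; the rest is routine bookkeeping with the relaxed triangle inequality and the two smoothness assumptions.
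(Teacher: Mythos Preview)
Your proposal is correct and follows essentially the same route as the paper: the three-term relaxed triangle split of $\g{m}{t}{k}$, the variance decomposition of the gradient-difference sum into its mean (handled by Cauchy--Schwarz plus Assumption~\ref{ass:L-smooth}) and its martingale-difference noise (handled by Lemma~\ref{lemma:split_array} plus Assumption~\ref{ass:variance_reduction}), the summation over $k=1,\dots,K$, and the learning-rate bound to force the self-referential coefficient below $\tfrac12$ all match the paper step for step. Your closing remark about the martingale structure being the crucial saving of a factor~$K$ is exactly the point.
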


\subsection{Full Client Participation} \label{appendix:sequential_full}

\begin{theorem}
Suppose that Assumption \ref{ass:L-smooth}, \ref{ass:noise} and \ref{ass:variance_reduction} hold. Let the local updates $K \geq 1$, the setting for the local learning rate $\eta_l$ and the global learning rate $\eta_s$ satisfy the following two constraints: (1) $\eta_s \eta_l \leq \frac{1}{6\sqrt{6} K L \tau}$; and (2) $\eta_l \leq \min\left(\frac{1}{6\sqrt{2} KL}, \frac{\sqrt{b'/K}}{2\sqrt{3} L_{\sigma}}\right)$. Then, the convergence rate of \algo{}for non-convex objectives should be 
\begin{align}
    \min_{t \in [T]} \norm{\grad{t}} \leq \frac{4 \bracket{F(\x{0}) - F_*}}{\eta_s \eta_l K T} + 16 \eta_l L K (2\eta_s + 3\eta_l K L) \cdot \indicator \frac{\sigma^2}{Mb}
\end{align}
where we treat $F(\x{0}) - F_*$ and $L$ as constants. 
\begin{proof}
When $p_t = 1$, according to Algorithm \ref{algo}, there is no model update between two consecutive rounds, i.e., $\x{t+1} = \x{t}$. 

Next, we consider the case when $p_t = 0$. According to Assumption \ref{ass:L-smooth}, we have
\begin{align}
    \E F(\x{t+1}) - F(\x{t}) \leq \E \innerproduct{\grad{t}}{\x{t+1} - \x{t}} + \frac{L}{2} \E \norm{\x{t+1} - \x{t}} \label{eq:theorem41_48}
\end{align}
Knowing that when $p_t = 0$ and all clients involve in the training, 
\begin{align}
    avg(\Delta \boldsymbol{x}_t) = &\eta_l K \g{t} - \frac{\eta_l}{M} \sum_{m \in [M]} \sum_{k=0}^{K-1} \sum_{\kappa=0}^k \minigrad{m}{t}{\kappa-1}{\kappa} + \frac{\eta_l}{M} \sum_{m \in [M]} \sum_{k=0}^{K-1} \sum_{\kappa=0}^k \minigrad{m}{t}{\kappa}{\kappa}, 
\end{align}
we have the bound for the first term of Equation \eqref{eq:theorem41_48}: 
\begin{align}
    &\quad \E \innerproduct{\grad{t}}{\x{t+1} - \x{t}} \\
    & = -\eta_s \eta_l K \E\innerproduct{\grad{t}}{\E \g{t} - \frac{1}{M K} \sum_{m \in [M]} \sum_{k=0}^{K-1} \bracket{\grad{m}{t}{k} - \grad{m}{t}}} \\
    &\leq -\frac{\eta_s \eta_l K}{2} \norm{\grad{t}} + \frac{\eta_s \eta_l K}{2} \norm{\bracket{\grad{t} - \E \g{t}} - \frac{1}{M K} \sum_{m \in [M]} \sum_{k=0}^{K-1} \bracket{\grad{m}{t}{k} - \grad{m}{t}}} \label{eq:theorem41_52} \\
    &\leq -\frac{\eta_s \eta_l K}{2} \norm{\grad{t}} + \eta_s \eta_l K \norm{\grad{t} - \E \g{t}} + \eta_s \eta_l K \cdot \frac{1}{M K} \sum_{m \in [M]} \sum_{k=0}^{K-1} \norm{\grad{m}{t}{k} - \grad{m}{t}} \label{eq:theorem41_53} \\
    &\leq -\frac{\eta_s \eta_l K}{2} \norm{\grad{t}} + \eta_s \eta_l K \norm{\grad{t} - \E \g{t}} + \frac{\eta_s \eta_l}{M} \sum_{m \in [M]} \sum_{k=0}^{K-1} L^2 \norm{\x{m}{t}{k} - \x{m}{t}} \\
    &\leq -\frac{\eta_s \eta_l K}{2} \norm{\grad{t}} + \eta_s \eta_l K \norm{\grad{t} - \E \g{t}} + \frac{\eta_s \eta_l L^2}{M} \sum_{m \in [M]} \sum_{k=0}^{K-1} \sum_{\kappa=0}^{k-1} \norm{\x{m}{t}{\kappa + 1} - \x{m}{t}{\kappa}} \\
    &\leq -\frac{\eta_s \eta_l K}{2} \norm{\grad{t}} + \eta_s \eta_l K \norm{\grad{t} - \E \g{t}} + \eta_s \eta_l L^2 K \bracket{6 \eta_l^2 K \norm{\g{t} - \grad{t}} + 6 \eta_l^2 K \norm{\grad{t}}}\label{eq:theorem41_56}
\end{align}
where Equation \eqref{eq:theorem41_52} follows $-\innerproduct{\alpha}{\beta} = - \frac{1}{2} \bracket{\alpha^2 + \beta^2 - (\alpha - \beta)^2} \leq -\frac{1}{2} \alpha^2 + \frac{1}{2} (\alpha - \beta)^2$, Equation \eqref{eq:theorem41_53} applies Cauchy-Schwarz Inequality, and \eqref{eq:theorem41_56} uses Lemma \ref{lemma:sequential_pre}. 

To bound the second term of Equation \eqref{eq:theorem41_48}, we start with
\begin{align}
    \E \norm{\x{t+1} - \x{t}} &= \E \norm{\eta_s \cdot avg(\Delta \boldsymbol{x}_t)} \\
    &= \E \norm{\eta_s \cdot avg(\Delta \boldsymbol{x}_t) - \eta_s \eta_l K \grad{t} + \eta_s \eta_l K \grad{t}} \\
    &\leq 2\eta_s^2 \norm{avg(\Delta \boldsymbol{x}_t) - \eta_l K \grad{t}} + 2 \eta_s^2 \eta_l^2 K^2 \norm{\grad{t}} \label{eq:theorem41_59}
\end{align}
Then, we have the bound for $\E\norm{avg(\Delta \boldsymbol{x}_t) - \eta_l K \grad{t}}$ according to the following derivation: 
\begin{align}
    & \quad \E\norm{avg(\Delta \boldsymbol{x}_t )- \eta_l K \grad{t}} \\
    &= \E\norm{\eta_l K \bracket{\g{t} - \grad{t}} + \frac{\eta_l}{M} \sum_{m \in [M]} \sum_{k=0}^{K-1} \sum_{\kappa=0}^k \bracket{\minigrad{m}{t}{\kappa}{\kappa} - \minigrad{m}{t}{\kappa-1}{\kappa}}} \\
    &\leq 2 \eta_l^2 K^2  \norm{\g{t} - \grad{t}} \nonumber \\
    &\quad + 2 \E\norm{\frac{\eta_l}{M} \sum_{m \in [M]} \sum_{k=0}^{K-1} \sum_{\kappa=0}^k \bracket{\minigrad{m}{t}{\kappa}{\kappa} - \minigrad{m}{t}{\kappa-1}{\kappa}}} \label{eq:theorem4_47}\\
    &= 2 \eta_l^2 K^2  \norm{\g{t} - \grad{t}} + 2 \E\norm{\frac{\eta_l}{M} \sum_{m \in [M]} \sum_{k=0}^{K-1} \sum_{\kappa=0}^k \bracket{\grad{m}{t}{\kappa} - \grad{m}{t}{\kappa-1}}} \nonumber \\
    &\quad + 2 \E\norm{\frac{\eta_l}{M} \sum_{m \in [M]} \sum_{k=0}^{K-1} \sum_{\kappa=0}^k \bracket{\minigrad{m}{t}{\kappa}{\kappa} - \minigrad{m}{t}{\kappa-1}{\kappa} - \grad{m}{t}{\kappa} + \grad{m}{t}{\kappa-1}}} \label{eq:theorem4_48}\\
    &= 2 \eta_l^2 K^2 \norm{\g{t} - \grad{t}} + \frac{2 \eta_l^2 K L^2}{M} \sum_{m \in [M]} \sum_{k=0}^{K-1} \sum_{\kappa=0}^k k \cdot \E\norm{\x{m}{t}{\kappa} - \x{m}{t}{\kappa-1}} \nonumber \\
    &\quad + 2 \E\norm{\frac{\eta_l}{M} \sum_{m \in [M]} \sum_{k=0}^{K-1} \sum_{\kappa=0}^k \bracket{\minigrad{m}{t}{\kappa}{\kappa} - \minigrad{m}{t}{\kappa-1}{\kappa} - \grad{m}{t}{\kappa} + \grad{m}{t}{\kappa-1}}} \label{eq:theorem4_49} \\
    &\leq 2 \eta_l^2 K^2  \norm{\g{t} - \grad{t}} +  \frac{2 \eta_l^2 K L^2}{M} \sum_{m \in [M]} \sum_{k=0}^{K-1} \sum_{\kappa=0}^k k \cdot \E\norm{\x{m}{t}{\kappa} - \x{m}{t}{\kappa-1}} \nonumber \\
    &\quad + \frac{2 \eta_l^2}{M^2} \sum_{m \in [M]} \sum_{k=0}^{K-1} \sum_{\kappa=0}^k \frac{L_\sigma^2}{b'} \E\norm{\x{m}{t}{\kappa} - \x{m}{t}{\kappa-1}} \label{eq:theorem4_50} \\
    &\leq 2 \eta_l^2 K^2  \norm{\g{t} - \grad{t}} + \frac{2 \eta_l^2 K^3 L^2}{M} \sum_{m \in [M]} \sum_{k=0}^{K-1} \E\norm{\x{m}{t}{k} - \x{m}{t}{k-1}} \nonumber \\
    &\quad + \frac{2 \eta_l^2 K L_\sigma^2}{M^2 b'} \sum_{m \in [M]} \sum_{k=0}^{K-1} \E\norm{\x{m}{t}{k} - \x{m}{t}{k-1}} \\
    &\leq 2 \eta_l^2 K^2  \norm{\g{t} - \grad{t}} + 12 \eta_l^4 K^2 \bracket{\frac{L_\sigma^2}{M b'} + K^2 L^2} \bracket{\norm{\g{t} - \grad{t}} + \norm{\grad{t}}} \label{eq:theorem4_52}
\end{align}
where Equation \eqref{eq:theorem4_47} follows $(\alpha + \beta)^2 \leq 2\alpha^2 + 2 \beta^2$; Equation \eqref{eq:theorem4_48} is based on variance expansion; Equation \eqref{eq:theorem4_49} is based on Cauchy-Schwarz inequality and Assumption \ref{ass:L-smooth}; Equation \eqref{eq:theorem4_50} is based on Lemma \ref{lemma:split_array} and Assumption \ref{ass:variance_reduction}; Equation \eqref{eq:theorem4_52} is based on Lemma \ref{lemma:sequential_pre}. According to the constraints on the local learning rate, we can further simplify Equation \eqref{eq:theorem4_52} as
\begin{align} \label{eq:theorem4_53}
    \E\norm{avg(\Delta \boldsymbol{x}_t )- \eta_l K \grad{t}} \leq 4 \eta_l^2 K^2 \norm{\g{t} - \grad{t}} + \frac{\eta_l^2 K^2}{2} \norm{\grad{t}}.
\end{align}

Plugging Equation \eqref{eq:theorem41_56}, Equation \eqref{eq:theorem41_59}, and Equation \eqref{eq:theorem4_53} into Equation \eqref{eq:theorem41_48}, we reorganize the formula and obtain  
\begin{align}
    \E F(\x{t+1}) - F(\x{t}) &\leq -\frac{\eta_s \eta_l K}{4} \norm{\grad{t}} + \eta_s \eta_l K \norm{\grad{t} - \E \g{t}} + \eta_s \eta_l^2 K^2 L \bracket{6 \eta_l L + 4\eta_s} \E\norm{\g{t} - \grad{t}} \label{eq:theorem4_54}
\end{align}

Let $\Lambda(t)$ indicate the most recent round where $p_{\Lambda(t)} = 1$ and $\Lambda(t) \neq t$. It is noted that recursively using $\Lambda(\cdot)$ can achieve the value of 0, i.e., $\underbrace{\Lambda(\Lambda(...\Lambda}_{\text{multiple } \Lambda}(t))) = 0$. As we know, 
\begin{align}
    \E\norm{\g{\theta} - \grad{\theta}} &= \E\norm{\g{\Lambda(\theta)} - \grad{\theta}} \\
    &= \E\norm{\g{\Lambda(\theta)} - \grad{\Lambda(\theta)}} + \E\norm{\grad{\Lambda(\theta)} - \grad{\theta}} \label{eq:theorem4_58} \\
    &\leq \E\norm{\g{\Lambda(\theta)} - \grad{\Lambda(\theta)}} + L^2 \E\norm{\x{\theta} - \x{\Lambda(\theta)}} \label{eq:theorem4_59} \\
    &\leq \E\norm{\g{\Lambda(\theta)} - \grad{\Lambda(\theta)}} + L^2 \tau \sum_{\Xi = \Lambda(\theta)}^{\theta-1} \E\norm{\x{\Xi+1} - \x{\Xi}} \label{eq:theorem4_60} \\
    &\leq \indicator\frac{\sigma^2}{M b} + L^2 \tau \sum_{\Xi = \Lambda(\theta)}^{\theta-1} \E\norm{\x{\Xi+1} - \x{\Xi}} \label{eq:theorem4_61}
\end{align}
where Equation \eqref{eq:theorem4_58} is based on the variance expansion; Equation \eqref{eq:theorem4_59} is based on Assumption \ref{ass:L-smooth}; Equation \eqref{eq:theorem4_60} is according to Cauchy-Schwarz inequality and $\theta - \Lambda(\theta) \leq \tau$; Equation \eqref{eq:theorem4_61} follows Assumption \ref{ass:noise}. Therefore, Equation \eqref{eq:theorem4_54} can be further simplified as 
\begin{align}
    \E F(\x{t+1}) - F(\x{t}) &\leq -\frac{\eta_s \eta_l K}{3} \norm{\grad{t}} + 3 \eta_s \eta_l K L^2 \tau \sum_{\Xi = \Lambda(\theta)}^{\theta-1} \E\norm{\x{\Xi+1} - \x{\Xi}} + \eta_s \eta_l^2 K^2 L \bracket{6 \eta_l L + 4\eta_s} \cdot \frac{\sigma^2}{Mb} \label{eq:theorem4_75}
\end{align}
By summing $\E \norm{\x{\theta+1} - \x{\theta}}$ from $\theta = \Lambda(t)$  to $t-1$, we can easily obtain the following bound for all $t \in \{\Lambda(t), ..., \Lambda(t) + \tau - 1\}$
\begin{align}
    \sum_{\theta = \Lambda(t)}^{t-1} \E \norm{\x{\theta+1} - \x{\theta}} \leq 16 \eta_s^2 \eta_l^2 K^2 \tau  \indicator \frac{\sigma^2}{Mb} + 6 \eta_s^2 \eta_l^2 K^2 \sum_{\theta = \Lambda(t)}^{t-1} \norm{\grad{\theta}} \label{eq:theorem4_76}
\end{align}
By summing Equation \eqref{eq:theorem4_75} from $\Lambda(t)$ to $t-1$ and applying the bound in Equation \eqref{eq:theorem4_76}, we have 
\begin{align}
    &\quad \E F(\x{t}) - F\bracket{\x{\Lambda(t)}} = \sum_{\theta = \Lambda(t)}^{t-1} \bracket{\E F(\x{\theta+1}) - F(\x{\theta})} \\
    & \leq -\frac{\eta_s \eta_l K}{4} \sum_{\theta = \Lambda(t)}^{t-1} \norm{\grad{\theta}} +  \eta_s \eta_l^2 L K^2 \bracket{(4 \eta_s + 6\eta_l K L)(t - \Lambda(t)) + 48 \eta_s^2 \eta_l^2 K \tau^2 L} \cdot \indicator \frac{\sigma^2}{Mb} \label{eq:theorem4_56}. 
\end{align}
Therefore, based on the equation above, by summing up all $t \in \{T, \Lambda(T), \dots, 0\}$, we can obtain the following inequality: 
\begin{align}
    &\quad F_* - F(\x{0}) \leq \E F(\x{T}) - F(\x{0}) \leq -\frac{\eta_s \eta_l K}{4} \sum_{t = 0}^{T-1} \norm{\grad{t}} + 4 \eta_s \eta_l^2 L K^2 T (2\eta_s + 3\eta_l K L) \cdot \indicator \frac{\sigma^2}{Mb}
\end{align}
Thus, we have 
\begin{align}
    \frac{1}{T} \sum_{t = 0}^{T-1} \norm{\grad{t}} \leq \frac{4 \bracket{F(\x{0}) - F_*}}{\eta_s \eta_l K T} + 16 \eta_l L K (2\eta_s + 3\eta_l K L) \cdot \indicator \frac{\sigma^2}{Mb}
\end{align}
By using the settings of the local learning rate and the global learning rate in the description, we can obtain the desired result. 
\end{proof}
\end{theorem}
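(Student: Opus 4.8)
The plan is to prove a one-step descent inequality for the rounds with $p_t=0$ and then telescope it over blocks of $\tau$ rounds. First I would dispose of the $p_t=1$ rounds: the server performs no aggregation, so $\x{t+1}=\x{t}$ and the objective is unchanged. For a round with $p_t=0$, every active client is a miner, so $\x{t+1}-\x{t} = -\eta_s\cdot avg(\Delta \boldsymbol{x}_t)$, and unrolling the variance-reduced recursion \eqref{eq:miner_next_update} (as in Section \ref{sec:preliminary}) expresses $avg(\Delta \boldsymbol{x}_t)$ as $\eta_l K \g{t}$ plus a double sum of stochastic corrections. Starting from $L$-smoothness (Assumption \ref{ass:L-smooth}) I would write $\E F(\x{t+1}) - F(\x{t}) \leq \E\innerproduct{\grad{t}}{\x{t+1}-\x{t}} + \frac{L}{2}\E\norm{\x{t+1}-\x{t}}$ and bound each term.

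For the inner product I would use $-\innerproduct{\alpha}{\beta}\leq -\frac12\norm{\alpha}+\frac12\norm{\alpha-\beta}$ to peel off $-\frac{\eta_s\eta_l K}{2}\norm{\grad{t}}$; the residual is controlled by the caching-gradient error $\norm{\grad{t}-\E\g{t}}$ and by the client drift $\frac1{M}\sum_m\sum_k\norm{\x{m}{t}{k}-\x{t}}$, which via $L$-smoothness reduces to the per-iteration movement bounded by Lemma \ref{lemma:sequential_pre} as $O(\eta_l^2 K)\bracket{\norm{\g{t}-\grad{t}}+\norm{\grad{t}}}$. For $\E\norm{\x{t+1}-\x{t}}$ I would center at $\eta_l K\grad{t}$, so it remains to bound $\E\norm{avg(\Delta \boldsymbol{x}_t)-\eta_l K\grad{t}}$; expanding the double sum of corrections, I would split it into its conditional-mean part, handled by $L$-smoothness, and a martingale-difference fluctuation part, handled by Lemma \ref{lemma:split_array} together with Assumption \ref{ass:variance_reduction}, and then apply Lemma \ref{lemma:sequential_pre} once more. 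Using $\eta_l\leq\min\bracket{\frac{1}{6\sqrt{2} KL},\frac{\sqrt{b'/K}}{2\sqrt{3} L_\sigma}}$ to absorb higher-order terms yields a per-round inequality $\E F(\x{t+1})-F(\x{t})\leq -\frac{\eta_s\eta_l K}{4}\norm{\grad{t}} + c_1\eta_s\eta_l K\norm{\grad{t}-\E\g{t}} + c_2\eta_s\eta_l^2 K^2 L(\eta_l L+\eta_s)\indicator\frac{\sigma^2}{Mb}$.

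The heart of the argument is bounding the stale error $\norm{\g{t}-\grad{t}}$. Writing $\Lambda(t)$ for the most recent round whose caches were refreshed, I would split $\g{t}-\grad{t}=\bracket{\g{\Lambda(t)}-\grad{\Lambda(t)}}+\bracket{\grad{\Lambda(t)}-\grad{t}}$; the first piece is at most $\indicator\frac{\sigma^2}{Mb}$ by Assumption \ref{ass:noise} (and is exactly $0$ when $b=n$), while the second is, by $L$-smoothness, Cauchy--Schwarz, and $t-\Lambda(t)\leq\tau$, at most $L^2\tau\sum_{\Xi=\Lambda(t)}^{t-1}\norm{\x{\Xi+1}-\x{\Xi}}$. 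This couples the gradient-norm terms to the iterate movement within a block, so the closing move is: sum the per-round inequality over a block $\{\Lambda(t),\dots,\Lambda(t)+\tau-1\}$, sum the relation $\norm{\x{\theta+1}-\x{\theta}}\leq O(\eta_s^2\eta_l^2 K^2)(\norm{\grad{\theta}}+\textnormal{noise})$ over the same block, and substitute one into the other. The constraint $\eta_s\eta_l\leq\frac{1}{6\sqrt{6} KL\tau}$ is precisely what makes the coefficient of the fed-back $\sum_\theta\norm{\grad{\theta}}$ strictly smaller than $\frac{\eta_s\eta_l K}{4}$, so a net negative drift survives and the block telescope closes.

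Finally I would telescope $\E F(\x{t})$ across all blocks from $t=0$ to $T$: the accumulated descent bounds $\sum_{t=0}^{T-1}\norm{\grad{t}}$ by $O\bracket{\frac{F(\x{0})-F_*}{\eta_s\eta_l K}}$ plus the accumulated noise $O\bracket{\eta_s\eta_l^2 K^2 L(\eta_l L+\eta_s)T\indicator\frac{\sigma^2}{Mb}}$, and dividing by $T$ and inserting the prescribed learning rates gives the stated rate. I expect the main obstacle to be exactly the recursive block estimate in the previous paragraph --- arranging the within-block substitution so that the fed-back gradient-norm coefficient is strictly dominated --- since this is what forces the $\tau$-dependence of the step-size constraint; the rest is careful bookkeeping with Cauchy--Schwarz, Assumptions \ref{ass:L-smooth}--\ref{ass:variance_reduction}, and Lemmas \ref{lemma:split_array} and \ref{lemma:sequential_pre}.
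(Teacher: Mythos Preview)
Your proposal is correct and follows essentially the same route as the paper: the $L$-smoothness descent inequality, the polarization identity for the inner product, the centering of $avg(\Delta\boldsymbol{x}_t)$ at $\eta_l K\grad{t}$ with a mean/martingale split handled by Lemma \ref{lemma:split_array} and Assumption \ref{ass:variance_reduction}, repeated use of Lemma \ref{lemma:sequential_pre} for client drift, the $\Lambda(t)$ decomposition of the stale cache error, and the within-block recursive substitution closed by the constraint $\eta_s\eta_l\leq\frac{1}{6\sqrt{6}KL\tau}$. The only (cosmetic) discrepancy is that your stated per-round inequality already carries an explicit $\indicator\frac{\sigma^2}{Mb}$ term, whereas in the paper that noise enters only after bounding $\E\norm{\g{t}-\grad{t}}$ via $\Lambda(t)$; keeping both $\norm{\grad{t}-\E\g{t}}$ and $\E\norm{\g{t}-\grad{t}}$ separate until that step (as the paper does in its Equation \eqref{eq:theorem4_54}) will make the bookkeeping cleaner.
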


\subsection{Partial Client Participation} \label{appendix:sequential_partial}

\begin{theorem}
Suppose that Assumption \ref{ass:L-smooth}, \ref{ass:noise} and \ref{ass:variance_reduction} hold. Let the local updates $K \geq 1$, and the local learning rate $\eta_l$ and the global learning rate $\eta_s$ be $\eta_s \eta_l = \frac{1}{K L} \bracket{1 + \frac{2M\tau}{A}}^{-1}$, where $\eta_l \leq \min \bracket{\frac{1}{2\sqrt{6}KL}, \frac{\sqrt{b'/K}}{4\sqrt{3}L_{\sigma}}}$. Therefore, the convergence rate of \algo{}for non-convex objectives should be
\begin{align}
    \min_{t \in [T]} \norm{\grad{t}} \leq O\bracket{\frac{1}{T - \lfloor T/\tau \rfloor} \bracket{1 + \frac{2M\tau}{A}}} + O\bracket{\indicator \frac{\sigma^2}{Mb}}
\end{align}
where we treat $F(\x{0}) - F_*$ and $L$ as constants. 
\begin{proof}
When $p_t = 1$, according to Algorithm \ref{algo}, there is no model update between two consecutive rounds, i.e., $\x{t+1} = \x{t}$. 

Next, we consider the case when $p_t = 0$. Based on Lemma \ref{lemma:new_l_smooth}, we have
\begin{align}
    \E F(\x{t+1}) - F(\x{t}) \leq &-\frac{\eta_s \eta_l K}{2} \norm{\grad{t}} - \bracket{\frac{1}{2 \eta_s \eta_l K} - \frac{L}{2}} \E\norm{\x{t+1} - \x{t}} \nonumber\\
    & + \frac{\eta_s}{2 \eta_l K} \E\norm{avg(\Delta \boldsymbol{x}_t) - \eta_l K \grad{t}} \label{eq:theorem5_43}
\end{align}
Knowing that when $p_t = 0$ and a set of clients $\mathcal{A}$ involve in the training, 
\begin{align}
    avg(\Delta \boldsymbol{x}_t) = &\eta_l K \g{t} - \frac{\eta_l}{A} \sum_{i \in \mathcal{A}} \sum_{k=0}^{K-1} \sum_{\kappa=0}^k \minigrad{m}{t}{\kappa-1}{\kappa} + \frac{\eta_l}{A} \sum_{i \in \mathcal{A}} \sum_{k=0}^{K-1} \sum_{\kappa=0}^k \minigrad{m}{t}{\kappa}{\kappa}, 
\end{align}
we have the bound for $\E\norm{avg(\Delta \boldsymbol{x}_t) - \eta_l K \grad{t}}$ according to the following derivation: 
\begin{align}
    & \quad \E\norm{avg(\Delta \boldsymbol{x}_t )- \eta_l K \grad{t}} \\
    &= \E\norm{\eta_l K \bracket{\g{t} - \grad{t}} + \frac{\eta_l}{A} \sum_{i \in \mathcal{A}} \sum_{k=0}^{K-1} \sum_{\kappa=0}^k \bracket{\minigrad{i}{t}{\kappa}{\kappa} - \minigrad{i}{t}{\kappa-1}{\kappa}}} \\
    &\leq 2 \eta_l^2 K^2 \E \norm{\g{t} - \grad{t}}  + 2 \E\norm{\frac{\eta_l}{A} \sum_{i \in \mathcal{A}} \sum_{k=0}^{K-1} \sum_{\kappa=0}^k \bracket{\minigrad{i}{t}{\kappa}{\kappa} - \minigrad{i}{t}{\kappa-1}{\kappa}}} \label{eq:theorem5_47}\\
    &= 2 \eta_l^2 K^2 \E \norm{\g{t} - \grad{t}} + 2 \E\norm{\frac{\eta_l}{A} \sum_{i \in \mathcal{A}} \sum_{k=0}^{K-1} \sum_{\kappa=0}^k \bracket{\grad{i}{t}{\kappa} - \grad{i}{t}{\kappa-1}}} \nonumber \\
    &\quad + 2 \E\norm{\frac{\eta_l}{A} \sum_{i \in \mathcal{A}} \sum_{k=0}^{K-1} \sum_{\kappa=0}^k \bracket{\minigrad{i}{t}{\kappa}{\kappa} - \minigrad{i}{t}{\kappa-1}{\kappa} - \grad{i}{t}{\kappa} + \grad{i}{t}{\kappa-1}}} \label{eq:theorem5_48}\\
    &= 2 \eta_l^2 K^2 \E \norm{\g{t} - \grad{t}} + \frac{2 \eta_l^2 K L^2}{A} \sum_{i \in \mathcal{A}} \sum_{k=0}^{K-1} \sum_{\kappa=0}^k k \cdot \E\norm{\x{i}{t}{\kappa} - \x{i}{t}{\kappa-1}} \nonumber \\
    &\quad + 2 \E\norm{\frac{\eta_l}{A} \sum_{i \in \mathcal{A}} \sum_{k=0}^{K-1} \sum_{\kappa=0}^k \bracket{\minigrad{i}{t}{\kappa}{\kappa} - \minigrad{i}{t}{\kappa-1}{\kappa} - \grad{i}{t}{\kappa} + \grad{i}{t}{\kappa-1}}} \label{eq:theorem5_49} \\
    &\leq 2 \eta_l^2 K^2 \E \norm{\g{t} - \grad{t}} +  \frac{2 \eta_l^2 K L^2}{A} \sum_{i \in \mathcal{A}} \sum_{k=0}^{K-1} \sum_{\kappa=0}^k k \cdot \E\norm{\x{i}{t}{\kappa} - \x{i}{t}{\kappa-1}} \nonumber \\
    &\quad + \frac{2 \eta_l^2}{A^2} \sum_{i \in \mathcal{A}} \sum_{k=0}^{K-1} \sum_{\kappa=0}^k \frac{L_\sigma^2}{b'} \E\norm{\x{i}{t}{\kappa} - \x{i}{t}{\kappa-1}} \label{eq:theorem5_50} \\
    &= 2 \eta_l^2 K^2 \E \norm{\g{t} - \grad{t}} +  \frac{2 \eta_l^2 K L^2}{M} \sum_{m \in [M]} \sum_{k=0}^{K-1} \sum_{\kappa=0}^k k \cdot \E\norm{\x{i}{t}{\kappa} - \x{i}{t}{\kappa-1}} \nonumber \\
    &\quad + \frac{2 \eta_l^2}{A M} \sum_{m \in [M]} \sum_{k=0}^{K-1} \sum_{\kappa=0}^k \frac{L_\sigma^2}{b'} \E\norm{\x{i}{t}{\kappa} - \x{i}{t}{\kappa-1}} \label{eq:theorem5_51} \\
    &\leq 2 \eta_l^2 K^2 \E \norm{\g{t} - \grad{t}} + \frac{2 \eta_l^2 K^3 L^2}{M} \sum_{m \in [M]} \sum_{k=0}^{K-1} \E\norm{\x{m}{t}{k} - \x{m}{t}{k-1}} \nonumber \\
    &\quad + \frac{2 \eta_l^2 K L_\sigma^2}{A M b'} \sum_{m \in [M]} \sum_{k=0}^{K-1} \E\norm{\x{m}{t}{k} - \x{m}{t}{k-1}} \\
    &\leq 2 \eta_l^2 K^2 \E \norm{\g{t} - \grad{t}} + 12 \eta_l^4 K^2 \bracket{\frac{L_\sigma^2}{A b'} + K^2 L^2} \bracket{\norm{\g{t} - \grad{t}} + \norm{\grad{t}}} \label{eq:theorem5_52}
\end{align}
where Equation \eqref{eq:theorem5_47} follows $(\alpha + \beta)^2 \leq 2\alpha^2 + 2 \beta^2$; Equation \eqref{eq:theorem5_48} is based on variance expansion; Equation \eqref{eq:theorem5_49} is based on Cauchy-Schwarz inequality and Assumption \ref{ass:L-smooth}; Equation \eqref{eq:theorem5_50} is based on Lemma \ref{lemma:split_array} and Assumption \ref{ass:variance_reduction}; Equation \eqref{eq:theorem5_51} is based on the setting of client selection, where each client is selected with a probability of $A/M$; Equation \eqref{eq:theorem5_52} is based on Lemma \ref{lemma:sequential_pre}. According to the constraints on the local learning rate, we can further simplify Equation \eqref{eq:theorem5_52} as
\begin{align} \label{eq:theorem5_53}
    \E\norm{avg(\Delta \boldsymbol{x}_t )- \eta_l K \grad{t}} \leq 4 \eta_l^2 K^2 \E \norm{\g{t} - \grad{t}} + \frac{\eta_l^2 K^2}{2} \norm{\grad{t}}.
\end{align}

Plugging Equation \eqref{eq:theorem5_53} into Equation \eqref{eq:theorem5_43}, we have 
\begin{align}
    &\quad \E F(\x{t+1}) - F(\x{t})\\ 
    &\leq -\frac{\eta_s \eta_l K}{4} \norm{\grad{t}} - \bracket{\frac{1}{2 \eta_s \eta_l K} - \frac{L}{2}} \E\norm{\x{t+1} - \x{t}} + 2 \eta_s \eta_l K \E\norm{\g{t} - \grad{t}} \\
    &= -\frac{\eta_s \eta_l K}{4} \norm{\grad{t}} - \bracket{\frac{1}{2 \eta_s \eta_l K} - \frac{L}{2}} \E\norm{\x{t+1} - \x{t}} \nonumber \\
    &\quad + 2 \eta_s \eta_l K \bracket{\E\norm{\g{t} - \E\g{t}} + \E\norm{\E\g{t} - \grad{t}}} \label{eq:theorem5_85} \\
    &\leq -\frac{\eta_s \eta_l K}{4} \norm{\grad{t}} - \bracket{\frac{1}{2 \eta_s \eta_l K} - \frac{L}{2}} \E\norm{\x{t+1} - \x{t}} + 2 \eta_s \eta_l K \cdot \indicator \frac{\sigma^2}{M b} \nonumber \\
    &\quad + 2 \eta_s \eta_l K \E\norm{\E\g{t} - \grad{t}} \label{eq:theorem5_86}
\end{align}
where Equation \eqref{eq:theorem5_85} is based on variance expansion, and Equation \eqref{eq:theorem5_86} is based on Assumption \ref{ass:noise}. 

By summing Equation \eqref{eq:theorem5_86} for all $t \in \{0, \dots, T\}$, we have 
\begin{align}
    &\quad F_* - F(\x{0}) \leq \E F(\x{T+1}) - F(\x{0}) = \sum_{t=0}^T \bracket{\E F(\x{t+1}) - F(\x{t})} \\
    &\leq -\frac{\eta_s \eta_l K}{4} \sum_{t = 0; t\text{ mod }\tau = 0}^{T} \norm{\grad{t}} - \bracket{\frac{1}{2 \eta_s \eta_l K} - \frac{L}{2}} \sum_{t = 0; t\text{ mod }\tau = 0}^{T} \E\norm{\x{t+1} - \x{t}} \nonumber \\ 
    &\quad + 2 \eta_s \eta_l K \sum_{t = 0; t\text{ mod }\tau = 0}^{T} \E\norm{\E\g{t} - \grad{t}} + 2 \eta_s \eta_l K \bracket{T - \lfloor T/\tau \rfloor} \cdot \indicator \frac{\sigma^2}{M b} \label{eq:theorem5_88}
\end{align}

Let $\Lambda(t)$ indicate the most recent round where $p_{\Lambda(t)} = 1$ and $\Lambda(t) \neq t$. It is noted that recursively using $\Lambda(\cdot)$ can achieve the value of 0, i.e., $\underbrace{\Lambda(\Lambda(...\Lambda}_{\text{multiple } \Lambda}(t))) = 0$. 

To find the bound for $\sum_{t = 0; t\text{ mod }\tau = 0}^{T} \E\norm{\E\g{t} - \grad{t}}$, the first step is to provide the bound for $\E\norm{\E\g{t} - \grad{t}}$. When $p_t = 1$, a client updates the caching gradient with a probability of $A/M$, and therefore, $\E\g{t} = \bracket{1 - \frac{A}{M}} \E\g{\Lambda(t)} + \frac{A}{M} \grad{t}$. Based on this fact, the bound for $\E\norm{\E\g{t} - \grad{t}}$ can be derived as follows: 
\begin{align}
    &\quad \E\norm{\E\g{t} - \grad{t}} = \E\norm{\E\g{\Lambda(t)} - \grad{t}}\\
    &= \E\norm{\bracket{1-\frac{A}{M}} \E\bracket{\g{\Lambda(\Lambda(t))} - \grad{\Lambda(t)}} + \bracket{\grad{\Lambda(t)} - \grad{t}}} \label{eq:theorem5_90}\\
    &\leq \bracket{1 - \frac{A}{M}} \E\norm{\E\g{\Lambda(\Lambda(t))} - \grad{\Lambda(t)}} + \frac{M}{A} \E\norm{\grad{\Lambda(t)} - \grad{t}} \label{eq:theorem5_91} \\
    &\leq \sum_{\theta=0}^{\lfloor t/\tau \rfloor - 1} \bracket{1-\frac{A}{M}}^{\lfloor t/\tau \rfloor - \theta} \cdot \frac{M}{A} L^2 \E\norm{\x{\theta\tau} - \x{(\theta+1) \tau}} + \frac{M}{A} L^2 \E\norm{\x{\Lambda(t)} - \x{t}} \label{eq:theorem5_92}
\end{align}
where Equation \eqref{eq:theorem5_91} follows $(\alpha+\beta)^2\leq\bracket{1+\frac{1}{\gamma}}\alpha^2+\bracket{1+\gamma}\beta^2-\bracket{\frac{1}{\sqrt{\gamma}}\alpha+\sqrt{\gamma}\beta}^2\leq\bracket{1+\frac{1}{\gamma}}\alpha^2+\bracket{1+\gamma}\beta^2$ and $\gamma = \frac{M-A}{A}$. With Equation \eqref{eq:theorem5_92}, we sum up all $t \in \{1, \dots, T\}$ and obtain the following result: 
\begin{align}
    &\quad \sum_{t = 0}^{T} \E\norm{\E\g{t} - \grad{t}} \\
    &\leq \sum_{t = 0}^{T} \bracket{\sum_{\theta=0}^{\lfloor t/\tau \rfloor - 1} \bracket{1-\frac{A}{M}}^{\lfloor t/\tau \rfloor - \theta} \cdot \frac{M}{A} L^2 \E\norm{\x{\theta\tau} - \x{(\theta+1) \tau}} + \frac{M}{A} L^2 \E\norm{\x{\Lambda(t)} - \x{t}}} \\
    &\leq \sum_{\theta=0}^{\lfloor T/\tau \rfloor - 1} \frac{M(M-A)}{A^2} L^2 \tau \E\norm{\x{\theta\tau} - \x{(\theta+1) \tau}} + \frac{M}{A} L^2 \sum_{t=0}^{T} \E \norm{\x{t} - \x{\Lambda(t)}}\label{eq:theorem5_95} \\
    &\leq \sum_{\theta=0}^{\lfloor T/\tau \rfloor - 1} \frac{M(M-A)}{A^2} L^2 \tau^2 \sum_{\Xi = \theta \tau + 1}^{(\theta+1) \tau - 1} \E\norm{\x{\Xi+1} - \x{\Xi}} \nonumber\\
    &\quad + \frac{M}{A} L^2 \sum_{t=0}^{T} (t - \Lambda(t) - 1) \sum_{\Xi=\Lambda(t) + 1}^{t-1} \E \norm{\x{\Xi+1} - \x{\Xi}} \label{eq:theorem5_96} \\
    &\leq \frac{M(M-A)}{A^2} L^2 \tau^2 \sum_{t = 0; t\text{ mod }\tau = 0}^{T-1} \E \norm{\x{t+1} - \x{t}} + \frac{M}{A} L^2 \tau^2 \sum_{t = 0; t\text{ mod }\tau = 0}^{T-1} \E \norm{\x{t+1} - \x{t}} \label{eq:theorem5_97}
\end{align}
where Equation \eqref{eq:theorem5_95} follows that, for all $\theta \in \{0, \dots, \lfloor T/\tau \rfloor - 1\}$, the coefficient for $\frac{M}{A} L^2$ includes $\bracket{1 - \frac{A}{M}}, \dots, \bracket{1 - \frac{A}{M}}^{\lfloor T/\tau \rfloor - \theta}$, and each of them has a maximum of $\tau$ $t$s, meaning that the upper bound of the coefficient should be 
\begin{align}
    \tau \bracket{\bracket{1 - \frac{A}{M}} + \dots + \bracket{1 - \frac{A}{M}}^{\lfloor T/\tau \rfloor - \theta}} \leq \tau \cdot \frac{M}{2A} \bracket{1 - \frac{A}{M}};
\end{align}
Equation \eqref{eq:theorem5_96} follows Cauchy-Schwarz inequality. 

Plugging Equation \eqref{eq:theorem5_97} back to Equation \eqref{eq:theorem5_88}, we have: 
\begin{align}
    F_* - F(\x{0}) &\leq -\frac{\eta_s \eta_l K}{4} \sum_{t = 0; t\text{ mod }\tau = 0}^{T} \norm{\grad{t}} \nonumber\\
    &\quad - \bracket{\frac{1}{2 \eta_s \eta_l K} - \frac{L}{2} - \eta_s \eta_l K L^2 \tau^2 \frac{M^2}{A^2}} \sum_{t = 0; t\text{ mod }\tau = 0}^{T} \E\norm{\x{t+1} - \x{t}} \nonumber \\ 
    &\quad + 2 \eta_s \eta_l K \bracket{T - \lfloor T/\tau \rfloor} \cdot \indicator \frac{\sigma^2}{M b} \label{eq:theorem5_99}
\end{align}
Since $\eta_s \eta_l = \frac{1}{K L} \bracket{1 + \frac{2M\tau}{A}}^{-1}$, $\frac{1}{2 \eta_s \eta_l K} - \frac{L}{2} - \eta_s \eta_l K L^2 \tau^2 \frac{M^2}{A^2} \geq 0$ such that the term $\sum_{t = 0; t\text{ mod }\tau = 0}^{T} \E\norm{\x{t+1} - \x{t}}$ can be omitted in Equation \eqref{eq:theorem5_99}. Hence, we can easily obtain the following inequality:
\begin{align}
    \frac{1}{T - \lfloor T/\tau \rfloor} \sum_{t = 0; t\text{ mod }\tau = 0}^{T} \norm{\grad{t}} \leq \frac{4 \bracket{F(\x{0}) - F_*}}{\eta_s \eta_l K \bracket{T - \lfloor T/\tau \rfloor}} + 8 \cdot \indicator \frac{\sigma^2}{Mb}
\end{align}
By using the settings of the local learning rate and the global learning rate in the description, we can obtain the desired result. 
\end{proof}
\end{theorem}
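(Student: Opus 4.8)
The plan is to run a descent-lemma argument round by round, exploiting the two-phase structure of the schedule \eqref{eq:sequential_prob}: when $p_t=1$ the global iterate is frozen, $\x{t+1}=\x{t}$, so only the $T-\lfloor T/\tau\rfloor$ ``miner'' rounds ($p_t=0$) contribute progress. On such a round I would invoke Lemma~\ref{lemma:new_l_smooth} with $G=F$, the recursion $\x{t+1}=\x{t}-\eta_s\,avg(\Delta\boldsymbol{x}_t)$, and the auxiliary scalar $\eta'=\eta_l K$ (the natural normalization, since $\E[avg(\Delta\boldsymbol{x}_t)]\approx\eta_l K\,\grad{t}$), obtaining
\begin{equation*}
\E F(\x{t+1})-F(\x{t}) \leq -\frac{\eta_s\eta_l K}{2}\norm{\grad{t}} - \bracket{\frac{1}{2\eta_s\eta_l K}-\frac{L}{2}}\E\norm{\x{t+1}-\x{t}} + \frac{\eta_s}{2\eta_l K}\,\E\norm{avg(\Delta\boldsymbol{x}_t)-\eta_l K\,\grad{t}}.
\end{equation*}
The negative ``movement'' term is the budget I intend to spend later on the accumulated errors, and the last term is the client-drift quantity I must bound.

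For the drift term I would substitute the expansion of $avg(\Delta\boldsymbol{x}_t)$ from Section~\ref{sec:preliminary}, which writes each miner's contribution as $\eta_l K\,\g{t}$ plus a telescoped correction built from the stochastic gradients $\minigrad{i}{t}{\kappa}{\kappa}$ and $\minigrad{i}{t}{\kappa-1}{\kappa}$. I would split the correction into its conditional mean $\grad{i}{t}{\kappa}-\grad{i}{t}{\kappa-1}$ (controlled by Assumption~\ref{ass:L-smooth}) and a martingale-difference remainder (controlled by Lemma~\ref{lemma:split_array} with Assumption~\ref{ass:variance_reduction}), and use that each client is sampled with probability $A/M$ to replace $\frac{1}{A}\sum_{i\in\mathcal{A}}$ by $\frac{1}{M}\sum_{m\in[M]}$ in expectation. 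This produces a bound in terms of $\sum_k\E\norm{\x{i}{t}{k}-\x{i}{t}{k-1}}$; feeding in Lemma~\ref{lemma:sequential_pre} and the constraint $\eta_l\leq\min\bracket{\frac{1}{2\sqrt{6}KL},\frac{\sqrt{b'/K}}{4\sqrt{3}L_{\sigma}}}$ collapses it to $\E\norm{avg(\Delta\boldsymbol{x}_t)-\eta_l K\,\grad{t}}\leq 4\eta_l^2K^2\,\E\norm{\g{t}-\grad{t}}+\frac{1}{2}\eta_l^2K^2\norm{\grad{t}}$. Plugging this back and absorbing the $\frac{1}{2}\eta_l^2K^2\norm{\grad{t}}$ piece into the leading $-\frac{\eta_s\eta_l K}{2}\norm{\grad{t}}$ leaves a descent inequality whose only residual error is $2\eta_s\eta_l K\,\E\norm{\g{t}-\grad{t}}$; I then split $\g{t}-\grad{t}$ into $\g{t}-\E\g{t}$ (bounded by $\indicator\frac{\sigma^2}{Mb}$ via Assumption~\ref{ass:noise}) and the bias $\E\g{t}-\grad{t}$.

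The crux, and the step I expect to be the main obstacle, is controlling the accumulated bias $\E\norm{\E\g{t}-\grad{t}}$ under partial participation: unlike full participation, at an anchor round only an $A/M$ fraction of the caches get refreshed, so $\E\g{t}=\bracket{1-\frac{A}{M}}\E\g{\Lambda(t)}+\frac{A}{M}\grad{t}$, where $\Lambda(t)$ denotes the previous anchor round. Unrolling this geometric recursion, estimating each successive smoothness gap by $L^2$ times the squared model movement over the (at most $\tau$) intervening rounds via Assumption~\ref{ass:L-smooth} and Cauchy--Schwarz, and summing over all miner rounds --- where the geometric weights contribute only an $O(M/A)$ factor --- I would reach $\sum_{t:\,p_t=0}\E\norm{\E\g{t}-\grad{t}}\leq O\bracket{\frac{M^2}{A^2}L^2\tau^2}\sum_{t:\,p_t=0}\E\norm{\x{t+1}-\x{t}}$. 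Telescoping the per-round inequality over $t=0,\dots,T$ (the frozen rounds drop out), the aggregate coefficient of $\E\norm{\x{t+1}-\x{t}}$ becomes $\frac{1}{2\eta_s\eta_l K}-\frac{L}{2}-O\bracket{\eta_s\eta_l K L^2\tau^2 M^2/A^2}$, which the choice $\eta_s\eta_l=\frac{1}{KL}\bracket{1+\frac{2M\tau}{A}}^{-1}$ makes nonnegative, so those terms can be discarded. What survives is $F_*-F(\x{0})\leq -\frac{\eta_s\eta_l K}{4}\sum_{t:\,p_t=0}\norm{\grad{t}}+2\eta_s\eta_l K\,(T-\lfloor T/\tau\rfloor)\,\indicator\frac{\sigma^2}{Mb}$; dividing by $\frac{\eta_s\eta_l K}{4}(T-\lfloor T/\tau\rfloor)$, bounding the average below by $\min_{t\in[T]}\norm{\grad{t}}$, and inserting $\frac{1}{\eta_s\eta_l K}=L\bracket{1+\frac{2M\tau}{A}}$ yields $\min_{t\in[T]}\norm{\grad{t}}\leq O\bracket{\frac{1}{T-\lfloor T/\tau\rfloor}\bracket{1+\frac{2M\tau}{A}}}+O\bracket{\indicator\frac{\sigma^2}{Mb}}$, as claimed.
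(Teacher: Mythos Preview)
Your proposal is correct and mirrors the paper's proof essentially step for step: the same descent inequality via Lemma~\ref{lemma:new_l_smooth} with $\eta'=\eta_l K$, the same drift bound combining the expansion in Section~\ref{sec:preliminary} with Lemmas~\ref{lemma:split_array} and~\ref{lemma:sequential_pre}, the same variance/bias split of $\g{t}-\grad{t}$, the same geometric unrolling of the caching-gradient bias under $A/M$-partial refresh, and the same absorption of the accumulated movement terms into the negative $\norm{\x{t+1}-\x{t}}$ budget via the stated choice of $\eta_s\eta_l$. The only nuance you leave implicit is that the paper uses the weighted Young inequality with $\gamma=\frac{M-A}{A}$ (rather than ordinary Cauchy--Schwarz) to obtain the exact $(1-A/M)$ contraction factor in the bias recursion, but your stated $O(M^2 L^2\tau^2/A^2)$ endpoint is the same.
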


\newpage

\section{Proofs under Constant Probabilistic Settings} \label{appendix:proofconstant}

\subsection{Preliminary} 
\begin{lemma} \label{lemma:lemma5}
Suppose that Assumption \ref{ass:L-smooth} holds, and $p_t \in (0, 1)$. Let $\g{t}$ be the definition of Line 9 of Algorithm \ref{algo}, i.e., the average of the caching gradients. Therefore, the recursive expression for $\{\stogradone{t}\}_{t \geq 0}$ in the expected form is 
\begin{equation}
    \E \stogradone{t} = \begin{cases}
        \bracket{1-\frac{A}{M}p_{t-1}}\cdot\E\stogradone{t-1}+\frac{A}{M}p_{t-1}\cdot\grad{t-1}, &t>0 \\
        \grad{0}, &t=0
    \end{cases}
\end{equation}

Furthermore, when $t > 0$ we can obtain the following inequality: 
\begin{equation}
    \E\norm{\E\stogradone{t}-\grad{t}} \leq \bracket{1-\frac{A}{M}p_{t-1}}\cdot\E\norm{\E\stogradone{t-1}-\grad{t-1}}+\frac{M}{Ap_{t-1}}\cdot L^2 \cdot \E\norm{\gx{t}-\gx{t-1}}. 
\end{equation}
As for $t=0$, we have $\E\norm{\E\stogradone{t}-\grad{t}} = 0$. 
\begin{proof}
According to the definition of Line 9 of Algorithm \ref{algo}, $\stogradone{t+1}=avg\bracket{v_{t+1}} = \frac{1}{M} \sum_{m \in [M]} v_{t+1}^{(m)}$. Hence, for each element in $v_{t+1}$, i.e., $v_{t+1}^{(m)}$, where $m \in [M]$, they have a probability of $\bracket{1-\frac{A}{M}p_t}$ to retain the previous value, or otherwise update as anchor clients using large batches. Thus, the expected value for $\E v_{t+1}^{(m)}$ is: 
\begin{align}
    \E v_{t+1}^{(m)}&=\frac{A}{M}p_t\cdot\E\fullgrad{m}{t}]+\bracket{1-\frac{A}{M}p_t}\cdot\E v_{t}^{(m)}\\
    &=\frac{A}{M}p_t \cdot \gradtwo{m}{t} + \bracket{1-\frac{A}{M}p_t}\cdot\E v_{t}^{(m)}
\end{align}
Therefore, we have
\begin{equation}
    \E\stogradone{t+1} = \frac{1}{M}\sum_{m=1}^M\E v_{t+1}^{(m)} = \frac{A}{M}p_t\cdot\grad{t}+\bracket{1-\frac{A}{M}p_t}\cdot\E \stogradone{t}
\end{equation}
It is worth noting that $\E \stogradone{0}=\grad{0}$ as it is initialized at the beginning of the training, i.e., Line 2 -- 4 in Algorithm \ref{algo}. Therefore, $\E\norm{\E\stogradone{t}-\grad{t}} = 0$. 

Next, we find the recursive bound for $\E\norm{\E\stogradone{t+1}-\grad{t+1}}$:
\begin{align}
    &\quad \E\norm{\E\stogradone{t+1}-\grad{t+1}} \\
    &=\E\norm{\bracket{1-\frac{A}{M}p_t}\cdot\bracket{\E\stogradone{t}-\grad{t}}+\grad{t}-\grad{t+1}} \\
    &\leq \bracket{1+\frac{Ap_t}{M-Ap_t}}\bracket{1-\frac{A}{M}p_t}^2\E\norm{\E\stogradone{t}-\grad{t}} \nonumber \\
    &\quad+\bracket{1+\frac{M-Ap_t}{Ap_t}}\E\norm{\grad{t}-\grad{t+1}} \label{I}\\ 
    &\leq \bracket{1-\frac{A}{M}p_t}\E\norm{\E\stogradone{t}-\grad{t}}+\frac{M}{A p_t}L^2\E\norm{\gx{t+1}-\gx{t}} \label{II}
\end{align}
where Equation \eqref{I} follows $(\alpha+\beta)^2\leq\bracket{1+\frac{1}{\gamma}}\alpha^2+\bracket{1+\gamma}\beta^2-\bracket{\frac{1}{\sqrt{\gamma}}\alpha+\sqrt{\gamma}\beta}^2\leq\bracket{1+\frac{1}{\gamma}}\alpha^2+\bracket{1+\gamma}\beta^2$, and Equation \eqref{II} follows Assumption \ref{ass:L-smooth}.
\end{proof}
\end{lemma}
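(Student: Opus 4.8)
The plan is to unwind the per-client update of the caching gradients in Algorithm~\ref{algo} and read off a one-step contraction. Fix a client $m\in[M]$ and a round $t$. Client $m$ acts as an anchor at round $t$ exactly when it is drawn into the participating set $\mathcal{A}$ --- which, since the $A$ clients are sampled without replacement and each client is equally likely, has marginal probability $A/M$ --- \emph{and} its independent $\mathrm{Bernoulli}(p_t)$ coin returns $1$; so with probability $\tfrac{A}{M}p_t$ we get $v_{t+1}^{(m)}=\fullgrad{m}{t}$, and in every other case (not sampled, or sampled as a miner) Line~4 leaves $v_{t+1}^{(m)}=v_t^{(m)}$. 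Conditioning on $\gx{t}$ and taking expectation over the round-$t$ sampling and the large batch, unbiasedness (Assumption~\ref{ass:noise}, hence $\E\fullgrad{m}{t}=\gradtwo{m}{t}$) yields $\E v_{t+1}^{(m)}=\tfrac{A}{M}p_t\,\gradtwo{m}{t}+\bracket{1-\tfrac{A}{M}p_t}\E v_t^{(m)}$. Averaging over $m$, using $\g{t}=\tfrac1M\sum_{m\in[M]}v_t^{(m)}$ and $\grad{t}=\tfrac1M\sum_{m\in[M]}\gradtwo{m}{t}$, gives the stated recursion for $\E\g{t+1}$ (the asserted identity after shifting $t\mapsto t-1$); the base case is $\E\g{0}=\grad{0}$ because $v_0^{(m)}=\fullgrad{m}{0}$ and $\gx{0}$ is deterministic, which also makes the $t=0$ instance of the second claim vanish.

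For the bias estimate I would subtract $\grad{t}$ from the recursion and regroup as
\[
\E\g{t}-\grad{t}=\bracket{1-\tfrac{A}{M}p_{t-1}}\bracket{\E\g{t-1}-\grad{t-1}}+\bracket{\grad{t-1}-\grad{t}},
\]
so the bias at step $t$ is a damped copy of the previous bias plus a drift term that only records how far the global model moved in one round. Taking $\norm{\cdot}$ and applying the relaxed triangle inequality $\norm{\alpha+\beta}\le\bracket{1+\tfrac1\gamma}\norm{\alpha}+\bracket{1+\gamma}\norm{\beta}$ with $\gamma=\tfrac{M-Ap_{t-1}}{Ap_{t-1}}$ --- chosen precisely so that $\bracket{1+\tfrac1\gamma}\bracket{1-\tfrac{A}{M}p_{t-1}}^2=1-\tfrac{A}{M}p_{t-1}$ --- collapses the first coefficient to $1-\tfrac{A}{M}p_{t-1}$ and leaves the second as $1+\gamma=\tfrac{M}{Ap_{t-1}}$. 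Finally $F=\tfrac1M\sum_{m\in[M]}F_m$ inherits $L$-smoothness from Assumption~\ref{ass:L-smooth}, so $\norm{\grad{t-1}-\grad{t}}\le L^2\norm{\gx{t}-\gx{t-1}}$; taking the outer expectation over the models delivers the asserted inequality.

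I do not expect a genuine obstacle: once the caching-gradient update is unpacked, the lemma is essentially a single Young-type inequality followed by smoothness. The points that require care are the bookkeeping of nested conditional expectations (which source of randomness is integrated at each stage) and the fact that only the \emph{marginal} participation probability $A/M$ is ever invoked --- the without-replacement dependence between distinct clients never enters because every manipulation is linear and term-by-term in $m$. The one ``designed'' step is the choice of $\gamma$, and it is forced by insisting that the damping factor reproduce itself.
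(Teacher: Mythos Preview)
Your proposal is correct and follows essentially the same route as the paper: per-client expectation of the caching gradient, average over $m$, then the Young-type split with $\gamma=\tfrac{M-Ap_{t-1}}{Ap_{t-1}}$ followed by $L$-smoothness. Your explicit remark that only the marginal probability $A/M$ matters (so without-replacement dependence is irrelevant by linearity) is a clarification the paper leaves implicit.
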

\begin{lemma} \label{lemma:lemma6}
Suppose that Assumption \ref{ass:L-smooth}, \ref{ass:noise} and \ref{ass:variance_reduction} hold. Let the local learning rate satisfy $\eta_l \leq \min\bracket{\frac{1}{2\sqrt{3}KL}, \frac{1}{2\sqrt{3} L_{\sigma}^2} \sqrt{\frac{b'}{K}}}$. With \algo, $\sum_{k=0}^{K-1} \norm{\x{m}{t}{k} - \x{m}{t}{k-1}}$ represents the sum of the second norm of every iteration's difference. Therefore, the bound for such a summation in the expected form should be 
\begin{equation} \label{eq:lemma6_1}
    \sum_{k=0}^{K-1} \E\norm{\x{m}{t}{k} - \x{m}{t}{k-1}} \leq 2\eta_l^2(K+1)\frac{\sigma^2}{Mb} + 6 \eta_l^2 (K+1) \E \norm{\E\g{t} - \grad{t}} + 6 \eta_l^2 (K+1) \norm{\grad{t}}
\end{equation}

\begin{proof}
According to Equation \eqref{eq:prel_algo_27}, the update at $(k-1)$-th iteration is 
\begin{equation}
    \x{m}{t}{k} - \x{m}{t}{k-1} = -\eta_l \g{m}{t}{k} = -\eta_l \bracket{\g{m}{t}{0} - \sum_{\kappa=0}^{k-1} \minigrad{m}{t}{\kappa-1}{\kappa} + \sum_{\kappa=0}^{k-1} \minigrad{m}{t}{\kappa}{\kappa}}. 
\end{equation}
To find the bound for the expected value of its second norm, the analysis is presented as follows: 
\begin{align}
    &\quad\E\norm{\x{m}{t}{k} - \x{m}{t}{k-1}}\\
    &= \eta_l^2 \E\norm{\g{t} - \sum_{\kappa=0}^{k-1} \minigrad{m}{t}{\kappa-1}{\kappa} + \sum_{\kappa=0}^{k-1} \minigrad{m}{t}{\kappa}{\kappa}} \label{eq:lemma6_113}\\
    &= \eta_l^2\E\norm{\g{t} - \E\g{t} - \sum_{\kappa=0}^{k-1} \bracket{\minigrad{m}{t}{\kappa-1}{\kappa} - \minigrad{m}{t}{\kappa}{\kappa} - \grad{m}{t}{\kappa-1} + \grad{m}{t}{\kappa}}} \nonumber \\
    &\quad + \eta_l^2 \E\norm{\E\g{t} - \sum_{\kappa=0}^{k-1} \grad{m}{t}{\kappa-1} + \sum_{\kappa=0}^{k-1} \grad{m}{t}{\kappa}} \label{eq:lemma6_114}\\
    &= \eta_l^2 \bracket{\indicator\frac{\sigma^2}{Mb} + \sum_{\kappa=0}^{k-1}\frac{L^2_\sigma}{b'}\E\norm{\x{m}{t}{\kappa}-\x{m}{t}{\kappa-1}}}  \nonumber \\
    &\quad + \eta_l^2 \E\norm{\E\g{t}-\sum_{\kappa=0}^{k-1}\bracket{\grad{m}{t}{\kappa-1} - \grad{m}{t}{\kappa}}}\\
    &= \eta_l^2 \bracket{\indicator\frac{\sigma^2}{Mb} + \sum_{\kappa=0}^{k-1}\frac{L^2_\sigma}{b'}\norm{\x{m}{t}{\kappa}-\x{m}{t}{\kappa-1}}} \nonumber \\
    &\quad +\eta_l^2 \E\norm{\E\g{t}-\grad{t}+\grad{t}+\sum_{\kappa=0}^{k-1}\bracket{\grad{m}{t}{\kappa} + \grad{m}{t}{\kappa-1}}} \\
    &\leq \eta_l^2 \bracket{\indicator\frac{\sigma^2}{Mb} + \sum_{\kappa=0}^{k-1}\frac{L^2_\sigma}{b'}\E\norm{\x{m}{t}{\kappa}-\x{m}{t}{\kappa-1}}} \nonumber \\
    &\quad + 3\eta_l^2 \cdot \E \norm{\E\g{t}-\grad{t}} + 3\eta_l^2\norm{\grad{t}} + 3\eta_l^2K\sum_{\kappa=0}^{k-1}L^2\norm{\x{m}{t}{\kappa}-\x{m}{t}{\kappa-1}} \label{eq:lemma6_117}\\
    &= \eta_l^2\indicator\frac{\sigma^2}{Mb} + 3\eta_l^2\cdot\E\norm{\E\g{t}-\grad{t}} + 3\eta_l^2\norm{\grad{t}} \nonumber \\
    &\quad + 3\eta_l^2\bracket{KL^2 + \frac{L^2_\sigma}{b'}}\sum_{\kappa=0}^{k-1}\E\norm{\x{m}{t}{\kappa}-\x{m}{t}{\kappa-1}} \label{eq:lemma6_118}
\end{align}
where Equation \eqref{eq:lemma6_114} is based on the variance expansion on the first term of Equation \eqref{eq:lemma6_113}; Equation \eqref{eq:lemma6_117} is based on Cauchy-Schwarz inequality. 

Therefore, by summing Equation \eqref{eq:lemma6_118} for $k = 1, \dots, K$, we have 
\begin{align}
    &\quad \E\sum_{k=0}^{K-1} \norm{\x{m}{t}{k} - \x{m}{t}{k-1}} \leq \sum_{k=0}^{K} \E\norm{\x{m}{t}{k} - \x{m}{t}{k-1}} \\
    &\leq \eta_l^2 (K+1) \indicator\frac{\sigma^2}{Mb} + 3\eta_l^2 (K+1) \E \norm{\E\g{t} - \grad{t}} + 3\eta_l^2 (K+1) \norm{\grad{t}} \nonumber \\
    &\quad + 3\eta_l^2 K \bracket{K L^2 + \frac{L_\sigma^2}{b'}} \sum_{k=0}^{K-1} \E\norm{\x{m}{t}{k} - \x{m}{t}{k-1}}
\end{align}
Obviously, according to the setting of the local learning rate in the description above, the inequality $3\eta_l^2 K \bracket{K L^2 + \frac{L_\sigma^2}{b'}} \leq \frac{1}{2}$ holds. Therefore, we can easily obtain the bound for the sum of the second norm of every iteration's difference, which is consistent with Equation \eqref{eq:lemma6_1}. 
\end{proof}
\end{lemma}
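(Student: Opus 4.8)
The plan is to unroll a single miner step, bound its expected squared length by a bias--variance decomposition, then sum over the $K$ steps and close the resulting self-referential recursion using the hypothesis on $\eta_l$. First I would invoke the Preliminary: by the unrolled recursion \eqref{eq:prel_algo_27} together with the miner update $\x{m}{t}{k+1}=\x{m}{t}{k}-\eta_l\,\g{m}{t}{k+1}$, the $(k-1)$-st increment of client $m$ is
\begin{equation*}
\x{m}{t}{k}-\x{m}{t}{k-1} = -\eta_l\,\g{m}{t}{k} = -\eta_l\bracket{\g{t}-\sum_{\kappa=0}^{k-1}\minigrad{m}{t}{\kappa-1}{\kappa}+\sum_{\kappa=0}^{k-1}\minigrad{m}{t}{\kappa}{\kappa}}.
\end{equation*}
I would then write $\g{t}=\E\g{t}+\bracket{\g{t}-\E\g{t}}$ and add and subtract the population differences $\grad{m}{t}{\kappa}-\grad{m}{t}{\kappa-1}$ inside the two sums, so that $\E\norm{\x{m}{t}{k}-\x{m}{t}{k-1}}$ splits, via a variance expansion, into a mean-zero ``noise'' piece --- the cache fluctuation $\g{t}-\E\g{t}$ together with the variance-reduction increments $\minigrad{m}{t}{\kappa}{\kappa}-\minigrad{m}{t}{\kappa-1}{\kappa}-\grad{m}{t}{\kappa}+\grad{m}{t}{\kappa-1}$ --- and a deterministic piece $\E\g{t}-\sum_{\kappa=0}^{k-1}\bracket{\grad{m}{t}{\kappa-1}-\grad{m}{t}{\kappa}}$.

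For the noise piece, the key observation is that, filtered step by step, the listed terms form a martingale difference sequence, so Lemma \ref{lemma:split_array} yields a bound that is only \emph{linear} in the number of terms; Assumption \ref{ass:noise} then controls the cache fluctuation by $\indicator\sigma^2/(Mb)$ (which is why the indicator $\boldsymbol{1}_{\{b<n\}}$ appears --- the cache is exact when $b=n$), and Assumption \ref{ass:variance_reduction} controls each variance-reduction increment by $\frac{L_\sigma^2}{b'}\E\norm{\x{m}{t}{\kappa}-\x{m}{t}{\kappa-1}}$. For the deterministic piece I would insert $\pm\grad{t}$: the term $\E\g{t}-\grad{t}$ contributes the recursive quantity $\E\norm{\E\g{t}-\grad{t}}$, the term $\grad{t}$ contributes $\norm{\grad{t}}$, and the remaining telescoping-type sum $\sum_{\kappa}\bracket{\grad{m}{t}{\kappa-1}-\grad{m}{t}{\kappa}}$ is bounded via $L$-smoothness (Assumption \ref{ass:L-smooth}) and Cauchy--Schwarz by $KL^2\sum_{\kappa=0}^{k-1}\E\norm{\x{m}{t}{\kappa}-\x{m}{t}{\kappa-1}}$. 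Collecting everything, each increment obeys $\E\norm{\x{m}{t}{k}-\x{m}{t}{k-1}}\le\eta_l^2\indicator\frac{\sigma^2}{Mb}+3\eta_l^2\E\norm{\E\g{t}-\grad{t}}+3\eta_l^2\norm{\grad{t}}+3\eta_l^2\bracket{KL^2+\frac{L_\sigma^2}{b'}}\sum_{\kappa=0}^{k-1}\E\norm{\x{m}{t}{\kappa}-\x{m}{t}{\kappa-1}}$.

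Finally I would sum this inequality over $k$ (using $\sum_{k=0}^{K-1}\le\sum_{k=0}^{K}$), move the self-referential double sum to the left, and note that the learning-rate hypothesis is precisely what forces $3\eta_l^2K\bracket{KL^2+\frac{L_\sigma^2}{b'}}\le\frac12$, so the coefficient of $\sum_{k=0}^{K-1}\E\norm{\x{m}{t}{k}-\x{m}{t}{k-1}}$ on the left is at least $\tfrac12$; dividing through yields \eqref{eq:lemma6_1} with the factors $2\eta_l^2(K+1)$ and $6\eta_l^2(K+1)$. The main obstacle is the bookkeeping in the variance decomposition: one must recognise the martingale structure so that Lemma \ref{lemma:split_array} is available (a naive triangle inequality would cost an extra factor of $K$ and break the step-size budget), and one must express the feedback in terms of the consecutive increments $\norm{\x{m}{t}{\kappa}-\x{m}{t}{\kappa-1}}$ rather than $\norm{\x{m}{t}{\kappa}-\x{t}}$, since only then does summing and absorbing via the $\eta_l$ condition actually close the recursion.
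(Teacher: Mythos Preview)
Your proposal is correct and follows essentially the same approach as the paper: the same unrolled expression for the increment, the same bias--variance split (cache fluctuation plus variance-reduction increments as the martingale noise, deterministic part handled by inserting $\pm\grad{t}$ and Cauchy--Schwarz/$L$-smoothness), the same per-step bound, and the same summation-and-absorption via the step-size condition $3\eta_l^2K\bracket{KL^2+L_\sigma^2/b'}\le\tfrac12$. Your remarks about why the martingale structure (via Lemma~\ref{lemma:split_array}) is essential, and about the role of the indicator $\boldsymbol{1}_{\{b<n\}}$, simply make explicit what the paper leaves implicit.
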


\subsection{Proofs for Non-convex Objectives} \label{appendix:nonconvex_constant}

The following lemma provides a recursive expression on $\E F\bracket{\x{t+1}} - F\bracket{\x{t}}$ for time-varying probability settings. 

\begin{lemma} \label{lemma:time_varying}
Suppose that Assumption \ref{ass:L-smooth}, \ref{ass:noise} and \ref{ass:variance_reduction} hold, and the time-varying probability sequence $\{p_t \in (0, 1)\}_{t \geq 0}$. Let the local updates $K \geq 1$, and the local learning rate $\eta_l \leq \min \bracket{\frac{1}{2\sqrt{6}KL}, \frac{\sqrt{b'/K}}{2\sqrt{3}L_{\sigma}}}$. With the model training using \algo, the recursive function between $F\bracket{\x{t+1}}$ and $F\bracket{\x{t}}$ in expected form is
\begin{align}
    \E F\bracket{\x{t+1}} - F\bracket{\x{t}}&\leq -\frac{\eta_s\eta_lK}{4}\bracket{1-\bracket{p_t}^A}\norm{\grad{t}}-\bracket{\frac{1}{2\eta_s\eta_lK}-\frac{L}{2}}\norm{\gx{t+1}-\gx{t}} \nonumber \\
    &\quad +4\eta_s\eta_lK\bracket{1-\bracket{p_t}^A}\E\norm{\E\stogradone{t}-\grad{t}}+3\eta_s\eta_lK\bracket{1-\bracket{p_t}^A}\indicator\frac{\sigma^2}{Mb}
\end{align}

\begin{proof}
According to Lemma \ref{lemma:sequential_pre}, we have:
\begin{align}
    &\quad \E F\bracket{\gx{t+1}} -F\bracket{\gx{t}}\\
    &\leq -\frac{\eta_s\eta_lK}{2}\norm{\grad{t}}-\bracket{\frac{1}{2\eta_s\eta_lK}-\frac{L}{2}}\norm{\x{t+1}-\x{t}}+\frac{\eta_s}{2\eta_lK}\E\norm{\Delta \gxs{t}-\eta_lK\grad{t}} \label{74}
\end{align}
When $|\Delta \boldsymbol{x}_t|=0$, the probability will be $\bracket{1-q}^a$, and when $|\Delta \boldsymbol{x}_t|\neq0$, the probability will be $1-\bracket{1-q}^a$. Next, we find the bound for the third term of Equation \eqref{74}, i.e., $\E\norm{\Delta \gxs{t}-\eta_lK\grad{t}}$. By Lemma \ref{lemma:avg}, we have the following derivation: 
\begin{align}
    &\quad \E\norm{\Delta \gxs{t}-\eta_lK\grad{t}}\\ 
    &\leq \bracket{1-(p_t)^A}\frac{1}{M}\sum_{m=1}^M\E\norm{\Delta\lxx{m}{t}-\eta_lK\grad{t}} + \bracket{p_t}^A\norm{\eta_lK\grad{t}} \label{75}\\
    &= \bracket{1-(p_t)^A}\frac{1}{M}\sum_{m=1}^M\bracket{\E\norm{\Delta\lxx{m}{t}-\E\Delta\lxx{m}{t}}+\E\norm{\E\Delta\lxx{m}{t}-\eta_lK\grad{t}}} \nonumber\\
    &\quad + \bracket{p_t}^A\eta_l^2K^2\norm{\grad{t}} \label{76}
\end{align} 
where Equation \eqref{76} follows variance equation. To find the bound for Equation \eqref{76}, we first analyze its first term, i.e., $\E\norm{\Delta\lxx{m}{t}-\E\Delta\lxx{m}{t}}$. According to Section \ref{sec:preliminary}, we have:
\begin{align}
    &\quad \E\norm{\Delta\lxx{m}{t}-\E\Delta\lxx{m}{t}} \\
    &\leq 2\eta_l^2K^2\E\norm{\stogradone{t}-\E\stogradone{t}}+2\eta_l^2\sum_{k=0}^{K-1}\bracket{K-1}\frac{L^2_\sigma}{b'}\norm{\lx{m}{t}{k}-\lx{m}{t}{k-1}} \label{93}\\
    &\leq 2\eta_l^2K^2\bracket{1+2\eta_l^2\frac{L^2_\sigma}{b'}}\indicator\frac{\sigma^2}{Mb}+12\eta_l^4K^2\frac{L^2_\sigma}{b'}\bracket{\E\norm{\E\stogradone{t}-\grad{t}}+\norm{\grad{t}}} \label{94}
\end{align}
where Equation \eqref{94} follows Lemma \ref{lemma:lemma6}. According to the local learning rate setting in the description, we have 
\begin{align}
    \E\norm{\Delta\lxx{m}{t}-\E\Delta\lxx{m}{t}} &\leq 4\eta_l^2K^2 \cdot \indicator\frac{\sigma^2}{Mb} \\ 
    &\quad +12\eta_l^4K^2\frac{L^2_\sigma}{b'}\bracket{\E \norm{\E\stogradone{t}-\grad{t}}+\norm{\grad{t}}} \label{131}
\end{align}

After finding the bound for the first term of Equation \eqref{76}, we now give the bound for its second term, i.e., $\E\norm{\E\Delta\lxx{m}{t}-\eta_l K\grad{t}}$. 
\begin{align}
    &\quad \E\norm{\E\Delta\lxx{m}{t}-\eta_lK\grad{t}} \\
    &=\E\norm{\eta_lK\bracket{\E\stogradone{t}-\grad{t}}+\eta_l\sum_{k=0}^{K-1}\sum_{\kappa=0}^k\bracket{\gradthree{m}{t}{\kappa}-\gradthree{m}{t}{\kappa-1}}} \\
    &\leq 2\eta_l^2K^2\E\norm{\E\stogradone{t}-\grad{t}} + 2 \eta_l^2 \norm{\sum_{k=0}^{K-1}\sum_{\kappa=0}^k\bracket{\gradthree{m}{t}{\kappa}-\gradthree{m}{t}{\kappa-1}}} \label{eq:lemma7_134}\\
    &\leq 2\eta_l^2K^2\E\norm{\E\stogradone{t}-\grad{t}}+2\eta_l^2KL^2\sum_{k=0}^{K-1}\sum_{\kappa=0}^k k\norm{\lx{m}{t}{\kappa}-\lx{m}{t}{\kappa-1}} \label{eq:lemma7_135} \\
    &\leq 2\eta_l^2K^2\E\norm{\E\stogradone{t}-\grad{t}}+2\eta_l^2K\frac{K(K-1)}{2}L^2\sum_{k=0}^{K-1}\norm{\lx{m}{t}{k}-\lx{m}{t}{k-1}} \\
    &\leq 2\eta_l^4K^4L^2\cdot\indicator\frac{\sigma^2}{Mb}+2\eta_l^2K^2\bracket{1+3\eta_l^2K^2L^2}\E\norm{\E\stogradone{t}-\grad{t}}+6\eta_l^4K^4L^2\norm{\grad{t}} \label{eq:lemma7_137}
\end{align}
where Equation \eqref{eq:lemma7_134} follows $(\alpha + \beta)^2 \leq 2\alpha^2 + 2\beta^2$; Equation \eqref{eq:lemma7_135} follows Cauchy–Schwarz inequality and Assumption \ref{ass:L-smooth}; Equation \eqref{eq:lemma7_137} is based on Lemma \ref{lemma:lemma6}. 
Then, according to the setting for the local learning rate in the description above, we can further simplify Equation \eqref{eq:lemma7_137}: 
\begin{align}
    \E\norm{\E\Delta\lxx{m}{t}-\eta_lK\grad{t}}     &\leq 2\eta_l^4K^4L^2\cdot\indicator\frac{\sigma^2}{Mb}+4\eta_l^2K^2\E\norm{\E\stogradone{t}-\grad{t}} \nonumber \\
    &\quad +6\eta_l^4K^4L^2\norm{\grad{t}} \label{138}
\end{align}

Plugging Equation \eqref{131} and Equation \eqref{138} back to Equation \eqref{76}, we can primarily obtain the inequality below:
\begin{align}
    \E\norm{\Delta \gxs{t}-\eta_lK\grad{t}} 
    &\leq 2\eta_l^2K^2\bracket{1-\bracket{p_t}^A}\bracket{2+\eta_l^2K^2L^2}\indicator\frac{\sigma^2}{Mb} \nonumber \\ 
    &\quad +4\eta_l^2K^2 \bracket{1-\bracket{p_t}^A}  \bracket{1+3\eta_l^2\frac{L^2_\sigma}{b'}}\E\norm{\E\stogradone{t}-\grad{t}} \nonumber\\
    &\quad+6\eta_l^4K^2 \bracket{1-\bracket{p_t}^A} \bracket{\frac{2L^2_\sigma}{b'}+K^2L^2}\norm{\grad{t}} \\
    &\quad + \bracket{p_t}^A\eta_l^2K^2\norm{\grad{t}} 
\end{align}
With the setting described in the Lemma, we have:
\begin{align}
    \E\norm{\Delta \gxs{t}-\eta_lK\grad{t}} &\leq 6\eta_l^2K^2\bracket{1-\bracket{p_t}^A}\indicator\frac{\sigma^2}{Mb} \nonumber \\ 
    &\quad +8\eta_l^2K^2 \bracket{1-\bracket{p_t}^A} \E \norm{\E\stogradone{t}-\grad{t}} \nonumber\\
    &\quad+6\eta_l^4K^2 \bracket{1-\bracket{p_t}^A} \bracket{\frac{2L^2_\sigma}{b'}+K^2L^2}\norm{\grad{t}} \\
    &\quad + \bracket{p_t}^A\eta_l^2K^2\norm{\grad{t}} 
\end{align}

Therefore, according to the upper bound analyzed in the previous inequalities, Equation \eqref{74} can be reformulated as 
\begin{align}
    &\quad \E F\bracket{\gx{t+1}}-F\bracket{\gx{t}} \\
    &\leq -\frac{\eta_s\eta_lK}{2}\bracket{1-\bracket{p_t}^A}\bracket{1-6\eta_l^2\bracket{\frac{2L_\sigma^2}{b'}+K^2L^2}}\norm{\grad{t}}-\bracket{\frac{1}{2\eta_s\eta_lK}-\frac{L}{2}}\norm{\gx{t+1}-\gx{t}} \nonumber \\
    &\quad +4\eta_s\eta_lK\bracket{1-\bracket{p_t}^A}\E\norm{\E\stogradone{t}-\grad{t}}+3\eta_s\eta_lK\bracket{1-\bracket{p_t}^A}\indicator\frac{\sigma^2}{Mb} \label{151}
\end{align}
We can obtain the desired conclusion through the setting in the description above. 
\end{proof}
\end{lemma}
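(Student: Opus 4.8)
The plan is to reduce one descent step of the global iterate to a ``direction error'' and then control that error by peeling off the randomness of the anchor/miner split. Since $\x{t+1} = \x{t} - \eta_s\,avg(\Delta\boldsymbol{x}_t)$, I apply Lemma~\ref{lemma:new_l_smooth} to $G = F$ with step $\eta = \eta_s$, increment $\Delta y_t = avg(\Delta\boldsymbol{x}_t)$, and auxiliary constant $\eta' = \eta_l K$, which gives
\begin{align*}
    \E F(\x{t+1}) - F(\x{t}) &\leq -\frac{\eta_s\eta_l K}{2}\norm{\grad{t}} - \left(\frac{1}{2\eta_s\eta_l K} - \frac{L}{2}\right)\norm{\x{t+1}-\x{t}} \\
    &\quad + \frac{\eta_s}{2\eta_l K}\,\E\norm{avg(\Delta\boldsymbol{x}_t) - \eta_l K\,\grad{t}}.
\end{align*}
The middle term already matches the statement, so the whole lemma reduces to bounding $\E\norm{avg(\Delta\boldsymbol{x}_t) - \eta_l K\,\grad{t}}$.

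For that, I first invoke Lemma~\ref{lemma:avg}: among the $A$ sampled clients each is a miner independently with probability $1-p_t$, so with probability $(p_t)^A$ there is no miner, $avg(\Delta\boldsymbol{x}_t)=0$, and the contribution is $(p_t)^A\,\eta_l^2K^2\norm{\grad{t}}$; otherwise the averaged increment behaves like a per-client average, producing the factor $1-(p_t)^A$ in front of $\frac1M\sum_{m\in[M]}\E\norm{\Delta\x{m}{t}-\eta_l K\,\grad{t}}$ after also averaging over the uniform sampling. Each per-client deviation is then split via $\E\norm{Z}=\norm{\E Z}+\E\norm{Z-\E Z}$ with $Z=\Delta\x{m}{t}$. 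The \emph{bias} part $\norm{\E\Delta\x{m}{t}-\eta_l K\,\grad{t}}$ uses only that the stochastic-gradient noise vanishes in expectation, so $\E\Delta\x{m}{t} = \eta_l K\,\E\g{t} + \eta_l\sum_{k}\sum_{\kappa}(\grad{m}{t}{\kappa}-\grad{m}{t}{\kappa-1})$; then $(\alpha+\beta)^2\le 2\alpha^2+2\beta^2$, Cauchy--Schwarz, and Assumption~\ref{ass:L-smooth} turn it into $O(\eta_l^2K^2)\norm{\E\g{t}-\grad{t}}$ plus a multiple of $\sum_k\E\norm{\x{m}{t}{k}-\x{m}{t}{k-1}}$. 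The \emph{variance} part $\E\norm{\Delta\x{m}{t}-\E\Delta\x{m}{t}}$ is where the recursive correction pays off: I isolate the $k=0$ noise of $\g{t}$ (bounded by Assumption~\ref{ass:noise} and the large batch $b$, giving the $\indicator\frac{\sigma^2}{Mb}$ term) from the correction differences $\minigrad{m}{t}{\kappa}{\kappa}-\minigrad{m}{t}{\kappa-1}{\kappa}$, which form a martingale-difference sequence, so Lemma~\ref{lemma:split_array} together with the average-smoothness Assumption~\ref{ass:variance_reduction} bounds them by $O\big(\eta_l^2\tfrac{L_\sigma^2}{b'}\big)\sum_k\E\norm{\x{m}{t}{k}-\x{m}{t}{k-1}}$.

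Both pieces are then closed by Lemma~\ref{lemma:lemma6}, which bounds $\sum_k\E\norm{\x{m}{t}{k}-\x{m}{t}{k-1}}$ by a multiple of $\eta_l^2\big(\indicator\frac{\sigma^2}{Mb} + \E\norm{\E\g{t}-\grad{t}} + \norm{\grad{t}}\big)$ under the same learning-rate constraint. Collecting everything, $\E\norm{avg(\Delta\boldsymbol{x}_t)-\eta_l K\,\grad{t}}$ is bounded by $(1-(p_t)^A)\cdot O(\eta_l^2K^2)\big(\indicator\frac{\sigma^2}{Mb} + \E\norm{\E\g{t}-\grad{t}}\big)$ plus $\eta_l^2K^2\big[(1-(p_t)^A)\cdot 6\eta_l^2(\tfrac{2L_\sigma^2}{b'}+K^2L^2) + (p_t)^A\big]\norm{\grad{t}}$; substituting back into the descent inequality, the coefficient of $\norm{\grad{t}}$ collapses to $-\frac{\eta_s\eta_l K}{2}(1-(p_t)^A)\big(1-6\eta_l^2(\tfrac{2L_\sigma^2}{b'}+K^2L^2)\big)$, and the constraint $\eta_l\le\min(\tfrac{1}{2\sqrt6 KL},\tfrac{\sqrt{b'/K}}{2\sqrt3 L_\sigma})$ is exactly what keeps the parenthesis at least $\tfrac12$, leaving $-\frac{\eta_s\eta_l K}{4}(1-(p_t)^A)\norm{\grad{t}}$ as claimed; the remaining pieces already carry the factors $4\eta_s\eta_l K(1-(p_t)^A)$ on $\E\norm{\E\g{t}-\grad{t}}$ and $3\eta_s\eta_l K(1-(p_t)^A)$ on $\indicator\frac{\sigma^2}{Mb}$.

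I expect the main obstacle to be the variance term $\E\norm{\Delta\x{m}{t}-\E\Delta\x{m}{t}}$: it is self-referential, since its bound again contains $\sum_k\E\norm{\x{m}{t}{k}-\x{m}{t}{k-1}}$, so Lemma~\ref{lemma:lemma6} must be established first as a standalone estimate and only then substituted; and one must be careful to apply the martingale/variance-reduction machinery to the correction \emph{differences} $\minigrad{m}{t}{\kappa}{\kappa}-\minigrad{m}{t}{\kappa-1}{\kappa}$ rather than to the raw stochastic gradients, because Assumption~\ref{ass:noise} alone there would leave an $O(\sigma^2/b')$ term that does not shrink with the local drift and would destroy the improved convergence rate.
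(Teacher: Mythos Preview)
Your proposal is correct and follows essentially the same route as the paper's own proof: the descent inequality from Lemma~\ref{lemma:new_l_smooth} with $\eta'=\eta_lK$, the anchor/miner split via Lemma~\ref{lemma:avg} producing the $(p_t)^A$ and $1-(p_t)^A$ factors, the bias--variance decomposition of $\Delta\x{m}{t}$, and then closing both pieces with Lemma~\ref{lemma:lemma6} before absorbing the higher-order $\eta_l^2$ terms via the stated learning-rate constraint. Your caution about applying Assumption~\ref{ass:variance_reduction} to the correction \emph{differences} (and not Assumption~\ref{ass:noise} alone) is exactly the point that makes the variance term collapse to a drift-proportional bound, and the paper handles it the same way.
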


\begin{theorem}
Suppose that Assumption \ref{ass:L-smooth}, \ref{ass:noise} and \ref{ass:variance_reduction} hold. Let the local updates $K \geq 1$, and the local learning rate $\eta_l$ and the global learning rate $\eta_s$ be $\eta_s \eta_l = \frac{1}{KL}\bracket{1+\frac{2M}{Ap}\sqrt{1-p^A}}^{-1}$, where $\eta_l \leq \min \bracket{\frac{1}{2\sqrt{6}KL}, \frac{\sqrt{b'/K}}{4\sqrt{3}L_{\sigma}}}$. Therefore, the convergence rate of \algo{}for non-convex objectives should be
\begin{equation}
    \min_{t \in [T]} \norm{\grad{t}} \leq O\bracket{\frac{1}{T} \bracket{\frac{1}{1-p^A}+\frac{M}{Ap\sqrt{1-p^A}}}} + O\bracket{\indicator\frac{\sigma^2}{Mb}}
\end{equation}
where we treat $F(\x{0}) - F_*$ and $L$ as constants. 
\begin{proof}
With Lemma \ref{lemma:lemma5} and Lemma \ref{lemma:time_varying}, we can find the following recursive function under the constant probability settings:
\begin{align}
    &\quad \E F\bracket{\gx{t+1}}+\frac{4\eta_s\eta_lK\bracket{1-p^A}M}{Ap}\E\norm{\E\stogradone{t+1}-\grad{t+1}} \\
    &\leq \E F\bracket{\gx{t}}+\frac{4\eta_s\eta_lK\bracket{1-p^A}M}{Ap}\E\norm{\E\stogradone{t}-\grad{t}} -  \frac{\eta_s\eta_lK}{4}\bracket{1-p^A}\norm{\grad{t}} \nonumber \\
    &\quad -\bracket{\frac{1}{2\eta_s\eta_lK}-\frac{L}{2}-\frac{4\eta_s\eta_lK\bracket{1-p^A}M}{Ap}\frac{M}{Ap}L^2}\E\norm{\gx{t+1}-\gx{t}} \nonumber \\
    &\quad +3\eta_s\eta_lK\bracket{1-p^A}\indicator\frac{\sigma^2}{Mb}
\end{align}
Since $\eta_s \eta_l = \frac{1}{KL}\bracket{1+\frac{2M}{Ap}\sqrt{1-p^A}}^{-1}$, we have:
\begin{align}
    F_* \leq \E F\bracket{\gx{T}} &\leq \E F\bracket{\gx{T}}+\frac{4\eta_s\eta_lK\bracket{1-p^A}M}{Ap}\E\norm{\E\stogradone{T}-\grad{T}} \\
    &\leq \E F\bracket{\gx{T-1}}+\frac{4\eta_s\eta_lK\bracket{1-p^A}M}{Ap}\E\norm{\E\stogradone{T-1}-\grad{T-1}} \nonumber \\
    &\quad-\frac{\eta_s\eta_lK}{4}\bracket{1-p^A}\norm{\grad{T-1}}+3\eta_s\eta_lK\bracket{1-p^A}\indicator\frac{\sigma^2}{Mb} \\
    &\leq  F\bracket{\gx{0}}+\frac{4\eta_s\eta_lK\bracket{1-p^A}M}{Ap}\norm{\E\stogradone{0}-\grad{0}} \nonumber \\
    &\quad-\frac{\eta_s\eta_lK}{4}\bracket{1-p^A} \sum_{t=0}^{T-1} \norm{\grad{t}}+3\eta_s\eta_l K T \bracket{1-p^A}\indicator\frac{\sigma^2}{Mb}
\end{align}

According to Lemma \ref{lemma:lemma5}, $\norm{\E\stogradone{0}-\grad{0}} = 0$. Therefore, based on the derivation above, we can attain the following inequality: 
\begin{align}
    \frac{1}{T} \sum_{t=0}^{T-1} \norm{\grad{t}} \leq \frac{4 \bracket{F(\x{0}) - F_*}}{\eta_s\eta_lKT\bracket{1-p^A}} + 3\indicator\frac{\sigma^2}{Mb}
\end{align}

By using the settings of the local learning rate and the global learning rate in the description, we can obtain the desired result. 
\end{proof}
\end{theorem}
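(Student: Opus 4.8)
The plan is a Lyapunov-function argument: specialize the one-round descent inequality of Lemma~\ref{lemma:time_varying} to the constant schedule $p_t=p$, couple it with the one-round contraction of the gradient-tracking error $\E\norm{\E\g{t}-\grad{t}}$ from Lemma~\ref{lemma:lemma5}, and telescope.

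First, setting $p_t=p$ in Lemma~\ref{lemma:time_varying} and taking total expectation gives, for every $t$,
\begin{align*}
    \E F(\x{t+1}) - \E F(\x{t}) &\leq -\tfrac{\eta_s\eta_l K}{4}\bracket{1-p^A}\E\norm{\grad{t}} - \bracket{\tfrac{1}{2\eta_s\eta_l K}-\tfrac{L}{2}}\E\norm{\x{t+1}-\x{t}} \\
    &\quad + 4\eta_s\eta_l K\bracket{1-p^A}\E\norm{\E\g{t}-\grad{t}} + 3\eta_s\eta_l K\bracket{1-p^A}\indicator\tfrac{\sigma^2}{Mb},
\end{align*}
while setting $p_{t-1}=p$ in Lemma~\ref{lemma:lemma5} and shifting the index gives $\E\norm{\E\g{t+1}-\grad{t+1}} \leq \bracket{1-\tfrac{A}{M}p}\E\norm{\E\g{t}-\grad{t}} + \tfrac{M}{Ap}L^2\,\E\norm{\x{t+1}-\x{t}}$. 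The tracking error is the only uncontrolled term in the descent bound, so I would absorb it into a potential.

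Define $\Phi_t := \E F(\x{t}) + w\,\E\norm{\E\g{t}-\grad{t}}$ with $w := \tfrac{4\eta_s\eta_l K(1-p^A)M}{Ap}$ — the weight chosen so the tracking-error contributions cancel. Adding $w$ times the contraction inequality to the descent inequality, the coefficient of $\E\norm{\E\g{t}-\grad{t}}$ collapses to $4\eta_s\eta_l K(1-p^A) - w\cdot\tfrac{Ap}{M}=0$, leaving
\begin{align*}
    \Phi_{t+1} \leq \Phi_t - \tfrac{\eta_s\eta_l K(1-p^A)}{4}\E\norm{\grad{t}} - \bracket{\tfrac{1}{2\eta_s\eta_l K}-\tfrac{L}{2}-w\tfrac{M}{Ap}L^2}\E\norm{\x{t+1}-\x{t}} + 3\eta_s\eta_l K(1-p^A)\indicator\tfrac{\sigma^2}{Mb}.
\end{align*}
At this point I would use the prescribed product $\eta_s\eta_l = \tfrac{1}{KL}\bracket{1+\tfrac{2M}{Ap}\sqrt{1-p^A}}^{-1}$: writing $\beta := \tfrac{2M}{Ap}\sqrt{1-p^A}$, so that $\eta_s\eta_l K = \tfrac{1}{L(1+\beta)}$ and $\tfrac{4(1-p^A)M^2}{A^2p^2}=\beta^2$, the residual drift coefficient simplifies to $\tfrac{L\beta(1-\beta)}{2(1+\beta)}$, which is nonnegative in the relevant parameter regime, so that term is simply discarded. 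The extra constraint $\eta_l \leq \min\bracket{\tfrac{1}{2\sqrt{6}KL}, \tfrac{\sqrt{b'/K}}{4\sqrt{3}L_\sigma}}$ demands nothing new here — it is exactly what was already used inside Lemma~\ref{lemma:time_varying} to guarantee $6\eta_l^2(\tfrac{2L_\sigma^2}{b'}+K^2L^2)\leq\tfrac12$.

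Finally, telescoping $\Phi_{t+1} \leq \Phi_t - \tfrac{\eta_s\eta_l K(1-p^A)}{4}\E\norm{\grad{t}} + 3\eta_s\eta_l K(1-p^A)\indicator\tfrac{\sigma^2}{Mb}$ over $t=0,\dots,T-1$, and using $\E\norm{\E\g{0}-\grad{0}}=0$ from Lemma~\ref{lemma:lemma5} (so $\Phi_0=F(\x{0})$) together with $\Phi_T\geq\E F(\x{T})\geq F_*$, yields $\tfrac1T\sum_{t=0}^{T-1}\E\norm{\grad{t}} \leq \tfrac{4(F(\x{0})-F_*)}{\eta_s\eta_l KT(1-p^A)} + O\bracket{\indicator\tfrac{\sigma^2}{Mb}}$. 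Substituting $\eta_s\eta_l K = \tfrac1L\bracket{1+\tfrac{2M}{Ap}\sqrt{1-p^A}}^{-1}$, treating $L$ and $F(\x{0})-F_*$ as constants, and using $\tfrac{\sqrt{1-p^A}}{1-p^A}=\tfrac1{\sqrt{1-p^A}}$ turns the leading term into $O\bracket{\tfrac1T\bracket{\tfrac1{1-p^A}+\tfrac{M}{Ap\sqrt{1-p^A}}}}$, and $\min_{t}$ is bounded by the average. I expect the main obstacle to be the bookkeeping around this potential — fixing $w$ so the tracking error telescopes with no leftover while keeping the iterate-drift coefficient nonnegative is precisely what forces the stated form of $\eta_s\eta_l$ — together with carefully propagating the factor $1-p^A$ (the probability that the sampled set contains at least one miner, via Lemma~\ref{lemma:avg}; when all $A$ sampled clients are anchors the global iterate does not move) consistently through both recursions.
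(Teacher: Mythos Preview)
Your proposal is correct and follows essentially the same Lyapunov argument as the paper: the paper combines Lemma~\ref{lemma:time_varying} and Lemma~\ref{lemma:lemma5} with the identical weight $w=\tfrac{4\eta_s\eta_lK(1-p^A)M}{Ap}$, uses the prescribed $\eta_s\eta_l$ to drop the iterate-drift term, telescopes, and invokes $\norm{\E\g{0}-\grad{0}}=0$. Your explicit computation of the residual drift coefficient as $\tfrac{L\beta(1-\beta)}{2(1+\beta)}$ is in fact more detailed than what the paper shows (it simply asserts the term can be discarded under the stated step-size).
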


\subsection{Proofs for PL Condition} \label{appendix:pl_constant}

\begin{theorem}
Suppose that Assumption \ref{ass:L-smooth}, \ref{ass:noise}, \ref{ass:variance_reduction} and \ref{ass:pl_condition} hold. Let the local updates $K \geq 1$, and the local learning rate $\eta_l$ and the global learning rate $\eta_s$ be $\eta_s \eta_l = \min \bracket{\frac{Ap}{MK\mu(1-p^A)}, \frac{1}{KL\bracket{1+\frac{16M}{\mu Ap}L}}}$, where $\eta_l \leq \min \bracket{\frac{1}{2\sqrt{6}KL}, \frac{\sqrt{b'/K}}{4\sqrt{3}L_{\sigma}}}$. Therefore, the convergence rate of \algo{}for PL condition should be 
\begin{align}
    \E F\bracket{\gx{T}} -F_* \leq &\bracket{1-\frac{1}{2}\mu K\bracket{1-p^A} \min \bracket{\frac{Ap}{MK\mu(1-p^A)}, \frac{1}{KL\bracket{1+\frac{16M}{\mu Ap}L}}} }^{T}\bracket{F\bracket{\gx{0}}-F_*} \nonumber \\
    & + O\bracket{\frac{1}{\mu} \cdot \indicator\frac{\sigma^2}{Mb}}
\end{align}
\end{theorem}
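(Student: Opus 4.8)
The plan is to run the same Lyapunov argument that underlies the non-convex constant-probability theorem (Theorem~\ref{theorem:nonconvex_constant}), but now to invoke Assumption~\ref{ass:pl_condition} to convert the per-round gradient-norm decrease into a multiplicative decrease of the suboptimality gap, which upgrades the sublinear rate to a linear one. Note first that the learning-rate constraint on $\eta_l$ here is at least as strong as the one required by Lemma~\ref{lemma:time_varying}, so that lemma applies verbatim with $p_t\equiv p$. Combining it with the caching-gradient recursion of Lemma~\ref{lemma:lemma5} (shifted to index $t+1$) and a constant $D>0$ to be chosen gives
\begin{align*}
    &\E F(\x{t+1}) - F(\x{t}) + D\,\E\norm{\E\g{t+1} - \grad{t+1}} \\
    &\leq -\tfrac{\eta_s\eta_l K}{4}\bracket{1-p^A}\norm{\grad{t}} - \bracket{\tfrac{1}{2\eta_s\eta_l K} - \tfrac{L}{2} - \tfrac{DM}{Ap}L^2}\norm{\x{t+1}-\x{t}} \\
    &\quad + \bracket{4\eta_s\eta_l K(1-p^A) + D\bracket{1-\tfrac{Ap}{M}}}\E\norm{\E\g{t} - \grad{t}} + 3\eta_s\eta_l K(1-p^A)\,\indicator\tfrac{\sigma^2}{Mb}.
\end{align*}
Define the potential $\Phi_t := \E F(\x{t}) - F_* + D\,\E\norm{\E\g{t} - \grad{t}}$ and apply Assumption~\ref{ass:pl_condition} in the form $\norm{\grad{t}} \geq 2\mu\bracket{F(\x{t}) - F_*}$, which turns the first term into $-\tfrac{\mu\eta_s\eta_l K}{2}(1-p^A)\bracket{F(\x{t})-F_*}$. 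The target is the one-step contraction $\Phi_{t+1}\leq \rho\,\Phi_t + 3\eta_s\eta_l K(1-p^A)\,\indicator\tfrac{\sigma^2}{Mb}$ with $\rho := 1-\tfrac12\mu\eta_s\eta_l K(1-p^A)$, which requires (i) the displacement coefficient $\tfrac{1}{2\eta_s\eta_l K}-\tfrac{L}{2}-\tfrac{DM}{Ap}L^2$ to be nonnegative and (ii) the error-term coefficient $4\eta_s\eta_l K(1-p^A)+D(1-\tfrac{Ap}{M})$ to be at most $\rho D$.

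Condition (ii) is an equation for $D$: solving it exactly yields $D = 4\eta_s\eta_l K(1-p^A)\big/\bracket{\tfrac{Ap}{M}-\tfrac12\mu\eta_s\eta_l K(1-p^A)}$, and any larger $D$ also works since $\tfrac12\mu\eta_s\eta_l K(1-p^A)\leq \tfrac{Ap}{M}$. That inequality — hence positivity of the denominator and the bound $D\leq \tfrac{8\eta_s\eta_l K(1-p^A)M}{Ap}$ — is exactly what the \emph{first} branch $\eta_s\eta_l\leq\tfrac{Ap}{MK\mu(1-p^A)}$ of the prescribed $\min$ guarantees. Then condition (i): plugging the bound on $D$ into it, it suffices that $\tfrac{1}{2\eta_s\eta_l K}-\tfrac{L}{2}\geq \tfrac{8ML^2}{\mu Ap}$, which is precisely what the \emph{second} branch $\eta_s\eta_l\leq\tfrac{1}{KL(1+16ML/(\mu Ap))}$ delivers. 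This simultaneous balancing of the two branches of the $\min$ — one damping the caching/martingale error contraction in the potential, the other killing the $L$-smoothness displacement term — is the delicate step and the place where the exact constants matter; it is the main obstacle.

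Finally, unrolling $\Phi_{t+1}\leq\rho\Phi_t + 3\eta_s\eta_l K(1-p^A)\,\indicator\tfrac{\sigma^2}{Mb}$ over $t=0,\dots,T-1$ and summing the geometric series gives
\[
\Phi_T \leq \rho^T\Phi_0 + \frac{3\eta_s\eta_l K(1-p^A)}{1-\rho}\,\indicator\tfrac{\sigma^2}{Mb} = \rho^T\Phi_0 + \frac{6}{\mu}\,\indicator\tfrac{\sigma^2}{Mb},
\]
using $1-\rho = \tfrac12\mu\eta_s\eta_l K(1-p^A)$. Since Lemma~\ref{lemma:lemma5} gives $\E\norm{\E\g{0}-\grad{0}}=0$ we have $\Phi_0 = F(\x{0})-F_*$, and since $D\,\E\norm{\E\g{T}-\grad{T}}\geq 0$ we have $\E F(\x{T})-F_*\leq\Phi_T$. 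Substituting $\eta_s\eta_l = \min\bracket{\tfrac{Ap}{MK\mu(1-p^A)},\,\tfrac{1}{KL(1+16ML/(\mu Ap))}}$ into $\rho$ then produces exactly the stated bound, with the additive floor of order $O\bracket{\tfrac1\mu\,\indicator\tfrac{\sigma^2}{Mb}}$.
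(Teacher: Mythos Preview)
Your proposal is correct and follows essentially the same route as the paper's proof: combine Lemma~\ref{lemma:time_varying} (at $p_t\equiv p$) with the caching-error recursion of Lemma~\ref{lemma:lemma5}, form the Lyapunov function $\Phi_t=\E F(\x{t})-F_*+D\,\E\norm{\E\g{t}-\grad{t}}$, apply the PL inequality, and use the two branches of the $\min$ on $\eta_s\eta_l$ to secure the contraction and to make the displacement coefficient nonnegative. The only cosmetic difference is that the paper fixes $D=\tfrac{8}{\mu}$ directly and verifies the two conditions, whereas you solve condition~(ii) for the minimal $D$ and then bound it by $\tfrac{8}{\mu}$ via the first branch before checking condition~(i); the resulting constants and final bound are identical.
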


\begin{proof}
With Lemma \ref{lemma:time_varying}, we have the recursive function on the time-varying probability settings under PL condition:
\begin{align}
    &\quad \E F\bracket{\gx{t+1}} -F\bracket{\gx{t}} \\
    &\leq -\frac{\eta_s\eta_lK}{4}\bracket{1-\bracket{p_t}^A}\norm{\grad{t}}-\bracket{\frac{1}{2\eta_s\eta_lK}-\frac{L}{2}}\norm{\gx{t+1}-\gx{t}} \nonumber \\
    &\quad +4\eta_s\eta_lK\bracket{1-\bracket{p_t}^A} \E\norm{\E \stogradone{t} -\grad{t}}+3\eta_s\eta_lK\bracket{1-\bracket{p_t}^A}\indicator\frac{\sigma^2}{Mb} \\
    &\leq -\frac{\mu\eta_s\eta_lK}{2}\bracket{1-\bracket{p_t}^A} \bracket{F\bracket{\gx{t}}-F_*}-\bracket{\frac{1}{2\eta_s\eta_lK}-\frac{L}{2}}\norm{\gx{t+1}-\gx{t}} \nonumber \\
    &\quad +4\eta_s\eta_lK\bracket{1-\bracket{p_t}^A} \E\norm{\E \stogradone{t} -\grad{t}}+3\eta_s\eta_lK\bracket{1-\bracket{p_t}^A}\indicator\frac{\sigma^2}{Mb}
\end{align}
According to the description, we consider the probability $p_t=p$ and have:
\begin{align}
   &\quad \E F\bracket{\gx{t+1}} -F_* \\
   &\leq \bracket{1-\frac{\mu\eta_s\eta_lK}{2}\bracket{1-p^A}}\bracket{F\bracket{\gx{t}}-F_*}-\bracket{\frac{1}{2\eta_s\eta_lK}-\frac{L}{2}}\norm{\gx{t+1}-\gx{t}} \nonumber \\
   &\quad +4\eta_s\eta_lK\bracket{1-p^A}\E\norm{\E \stogradone{t} -\grad{t}}+3\eta_s\eta_lK\bracket{1-p^A}\indicator\frac{\sigma^2}{Mb}
\end{align}
Since $\eta_s \eta_l \leq \frac{Ap}{MK\mu\bracket{1-p^A}}$, we have:
\begin{align}
    &\quad \E F\bracket{\gx{t+1}} -F_*+\frac{8}{\mu}\E\norm{\E \stogradone{t+1} -\grad{t+1}} \\
    &\leq \bracket{1-\frac{\mu\eta_s\eta_lK}{2}\bracket{1-p^A}}\bracket{F\bracket{\gx{t}}-F_*+\frac{8}{\mu}\E\norm{\E \stogradone{t} -\grad{t}}} \nonumber \\
    &\quad -\bracket{\frac{1}{2\eta_s\eta_lK}-\frac{L}{2}-\frac{8}{\mu}\frac{M}{Ap}L^2}\norm{\gx{t+1}-\gx{t}}+3\eta_s\eta_lK\bracket{1-p^A}\indicator\frac{\sigma^2}{Mb} 
\end{align}
According to the description $\eta_s\eta_l\leq\frac{1}{KL\bracket{1+\frac{16M}{\mu Ap}L}}$, we have:
\begin{align}
    &\quad \E F\bracket{\gx{t+1}} -F_* \leq \E F\bracket{\gx{t+1}} -F_*+\frac{8}{\mu}\E\norm{\E \stogradone{t+1} -\grad{t+1}} \\
    &\leq \bracket{1-\frac{\mu\eta_s\eta_lK}{2}\bracket{1-p^A}}\bracket{F\bracket{\gx{t}}-F_*+\frac{8}{\mu}\E\norm{\E \stogradone{t} -\grad{t}}} +3\eta_s\eta_lK\bracket{1-p^A}\indicator\frac{\sigma^2}{Mb} \\
    &\leq\bracket{1-\frac{\mu\eta_s\eta_lK}{2}\bracket{1-p^A}}^{t+1}\bracket{F\bracket{\gx{0}}-F_*} \nonumber\\
    &\quad +\bracket{1+\dots+\bracket{1-\frac{\mu\eta_s\eta_lK}{2}\bracket{1-p^A}}^t}3\eta_s\eta_lK\bracket{1-p^A}\indicator\frac{\sigma^2}{Mb}\\
    &=\bracket{1-\frac{\mu\eta_s\eta_lK}{2}\bracket{1-p^A}}^{t+1}\bracket{F\bracket{\gx{0}}-F_*} +\frac{6}{\mu}\indicator\frac{\sigma^2}{Mb}
\end{align}

By using the settings of the local learning rate and the global learning rate in the description, we can obtain the desired result. 
\end{proof}

\newpage
\section{Additional Experiments} \label{appendix:experiments}

In the main text, we have analyzed some experimental results in Section \ref{sec:experiment}. In this part, we conduct more thorough experiments by setting different numbers of local updates and different secondary mini-batch sizes. 

\subsection{Detailed Experimental Setup}

\paragraph{Training on Fashion MNIST.}
In Section \ref{sec:experiment}, the experiment conducts on Fashion MNIST \cite{xiao2017fashion}, an image classification task to categorize a 28$\times$28 greyscale image into 10 labels (including T-shirt/top, Trouser, Pullover, Dress, Coat, Sandal, Shirt, Sneaker, Bag, Ankle boot). In the training dataset, each class owns 6K samples. Then, we follow the setting of \cite{konevcny2016federated, li2019convergence} and partition the dataset into 100 clients ($M = 100$) such that each client holds two classes with a total of 600 samples. By this means, we simulate the heterogeneous data setting. To obtain a recognizable model on the images in the test dataset, we utilize a convolutional neural network structure LeNet-5 \cite{lecun1989backpropagation, lecun2015lenet}. Below comprehensively presents the structure of LeNet-5 on Fashion MNIST: 

\begin{table}[!h]
    \centering
    \caption{Details for LeNet-5 on Fashion-MNIST. }
    \begin{tabular}{ccccc}
    \hline
    Layer & Output Shape & Trainable Parameters & Activation & Hyperparameters \\ \hline
    
    Input & (1, 28, 28) & 0 & & \\ 
    Conv2d & (6, 24, 24) & 156 & ReLU & kernel size=5\\
    MaxPool2d & (6, 12, 12) & 0 & & kernel size=2\\
    Conv2d & (16, 8, 8) & 2416 & ReLU & kernel size=5\\
    MaxPool2d & (16, 4, 4) & 0 & & kernel size=2\\
    Flatten & 256 & 0 & & \\
    Dense & 120 & 30840 & ReLU & \\
    Dense & 84 & 10164 & ReLU &  \\
    Dense & 10 & 850 & softmax & \\ \hline
    \end{tabular}
    
\end{table}

\paragraph{Training on EMNIST digits.}
In addition to Fashion MNIST, we utilize one more dataset EMNIST \cite{cohen2017emnist} digits to further assess our approach efficiency. This task is to recognize 10 handwritten digits with a total of 240K training samples and 40K test samples. Similar to Fashion MNIST, we equally disjoint the dataset into 100 clients ($M = 100$), and each client possesses two classes. The model is trained with a 2-layer MLP \cite{yue2022neural}, i.e., 
\color{blue}
\begin{table}[!h]
    \centering
    \caption{Details for 2-layer MLP on EMNIST digits. }
    \begin{tabular}{ccccc}
    \hline
    Layer & Output Shape & Trainable Parameters & Activation & Hyperparameters \\ \hline
    Input & (1, 28, 28) & 0 & & \\ 
    Flatten & 784 & 0 \\
    Dense & 100 & 78500 & ReLU & \\
    Dense & 10 & 1010 & softmax & \\ \hline
    \end{tabular}
    
\end{table}
\color{black}

\paragraph{Validation metrics.} 
The training loss is calculated by the clients who perform local SGD on the average loss of all iterations. As for the test accuracy, the server utilizes the entire test dataset after the global model updates. The gradient complexity is the sum of all samples used for gradient calculation by all clients throughout the training. The communication overhead is measured by the transmission between the server and the clients.

\paragraph{Miscellaneous.}
Our simulation experiment runs on Ubuntu 18.04 with Intel(R) Xeon(R) Gold 6254 CPU, NVIDIA RTX A6000 GPU, and CUDA 11.2. Our code is implemented using Python and PyTorch v.1.12.1. Clients are picked randomly and uniformly, without replacement in one round but with replacement in subsequent rounds. For each baseline, the local learning ($\eta_l$) rate picks the best one from the set $\{0.1, 0.03, 0.02, 0.01, 0.008, 0.005\}$, while the global learning rate ($\eta_s$) is selected from the set $\{1.0, 0.8, 0.1\}$. Without the annotation, we implicitly assume Fashion MNIST follows these settings: small minibatch size $b'=64$, large minibatch size $b=full$, and the number of local updates $K=10$. As for EMNIST, we suppose the anchor nodes utilize the entire dataset for the caching gradient, i.e., $b=full$, and the number of participants in each round is 20, i.e., $A=20$. Besides, to make BVR-L-SGD \cite{murata2021bias} compatible with partial participation in FL training, we only use sampled clients to compute the full gradients of local objectives instead of using all clients. 

\subsection{More Numerical Results on Fashion MNIST}

In addition to the empirical results in Section \ref{sec:experiment}, we evaluate the performance of \algo{}by using different large minibatch $b$ settings. Then, considering the number of local updates $K$, we assess the performance of the algorithm under various probability settings and compare it with other baselines. 

\subsubsection{Comparison among Various Hyper-parameter Settings} \label{appendix:subsec:fmnist_hyper}

\paragraph{The setting of large mini-batch $b$.}
Figure \ref{fig:fedamd_constant_p_9} -- \ref{fig:fedamd_sequential_p} depict the performance of \algo{}under constant probability $p=0.9$, optimal constant probability, and optimal sequential probability, respectively, when the algorithm uses different $b$s. Overall, $b=full$ always outperform $b=256$ and $b=64$. Although there is no distinct difference between $b=256$ and $b=64$ in terms of final test accuracy and training loss, $b=256$ is easier to attain a lower training loss during the training. 
\begin{figure*}[h]
\vspace{-10px}
\centering
\subfloat[20 clients]{
\centering
\includegraphics[width=0.28\textwidth]{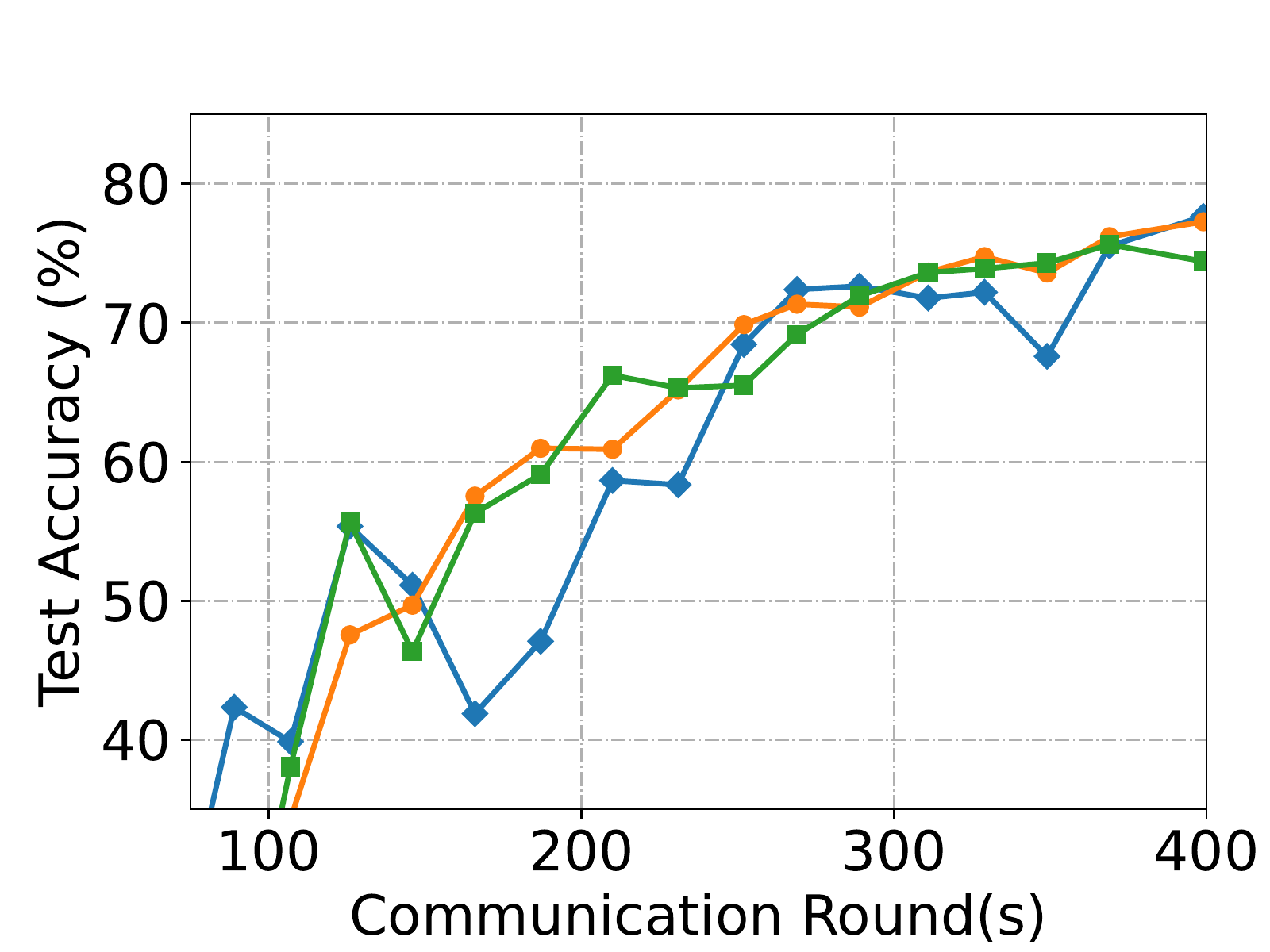}
}%
\hspace{12px}
\subfloat[40 clients]{
\centering
\includegraphics[width=0.28\textwidth]{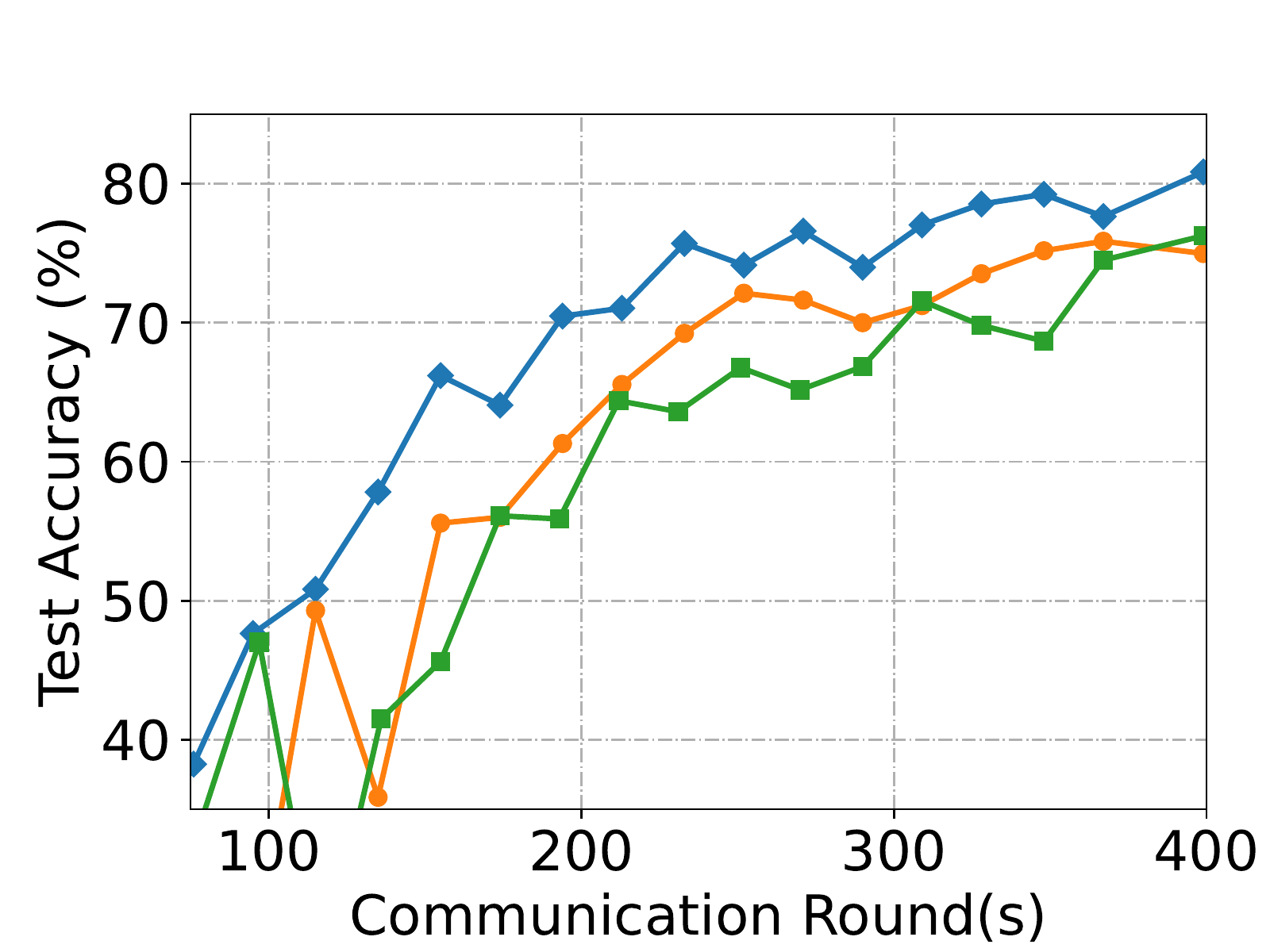}
}%
\hspace{12px}
\subfloat[100 clients]{
\centering
\includegraphics[width=0.28\textwidth]{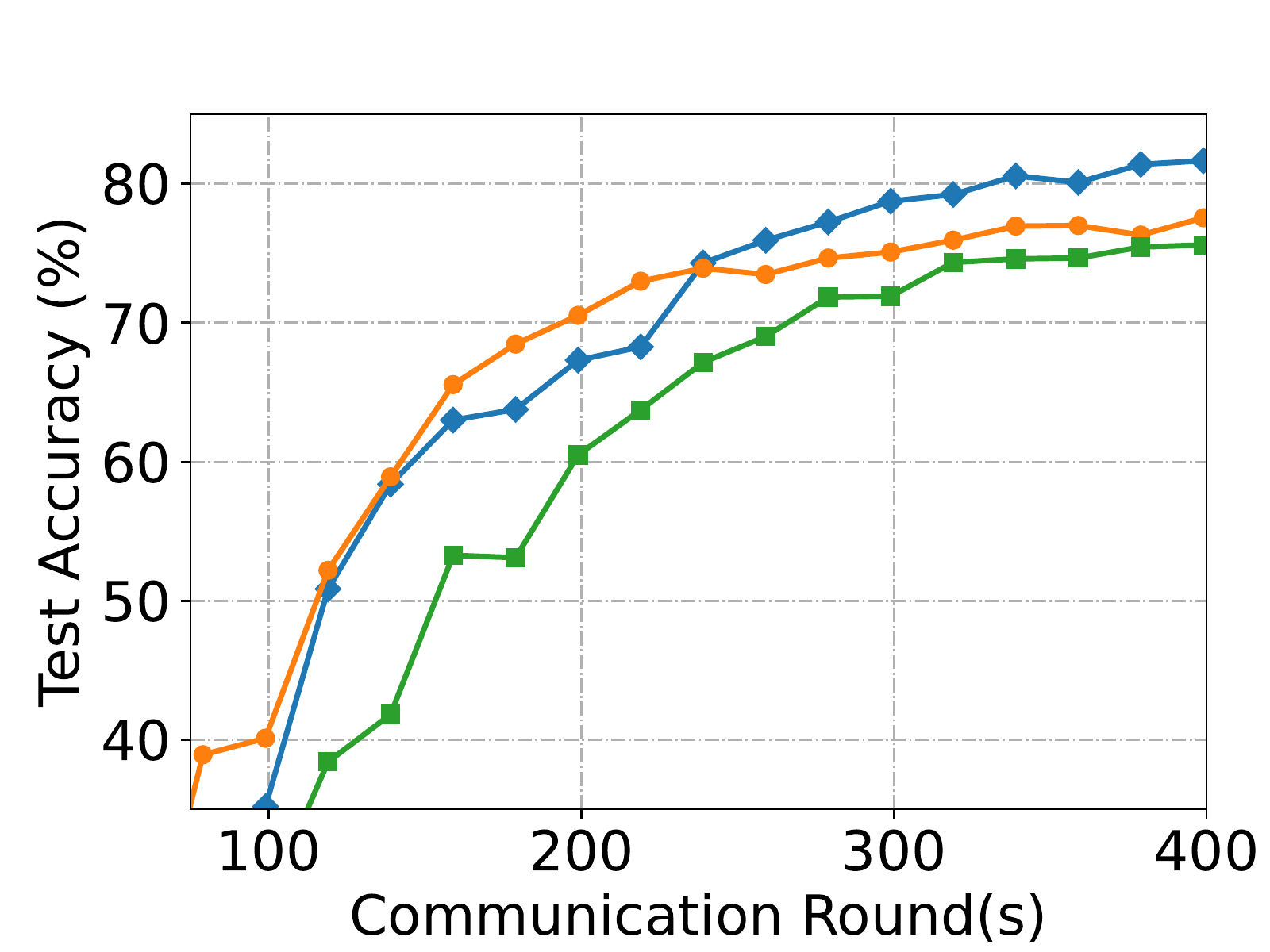}
}%
\\ \vspace{-12px}
\subfloat[20 clients]{
\centering
\includegraphics[width=0.28\textwidth]{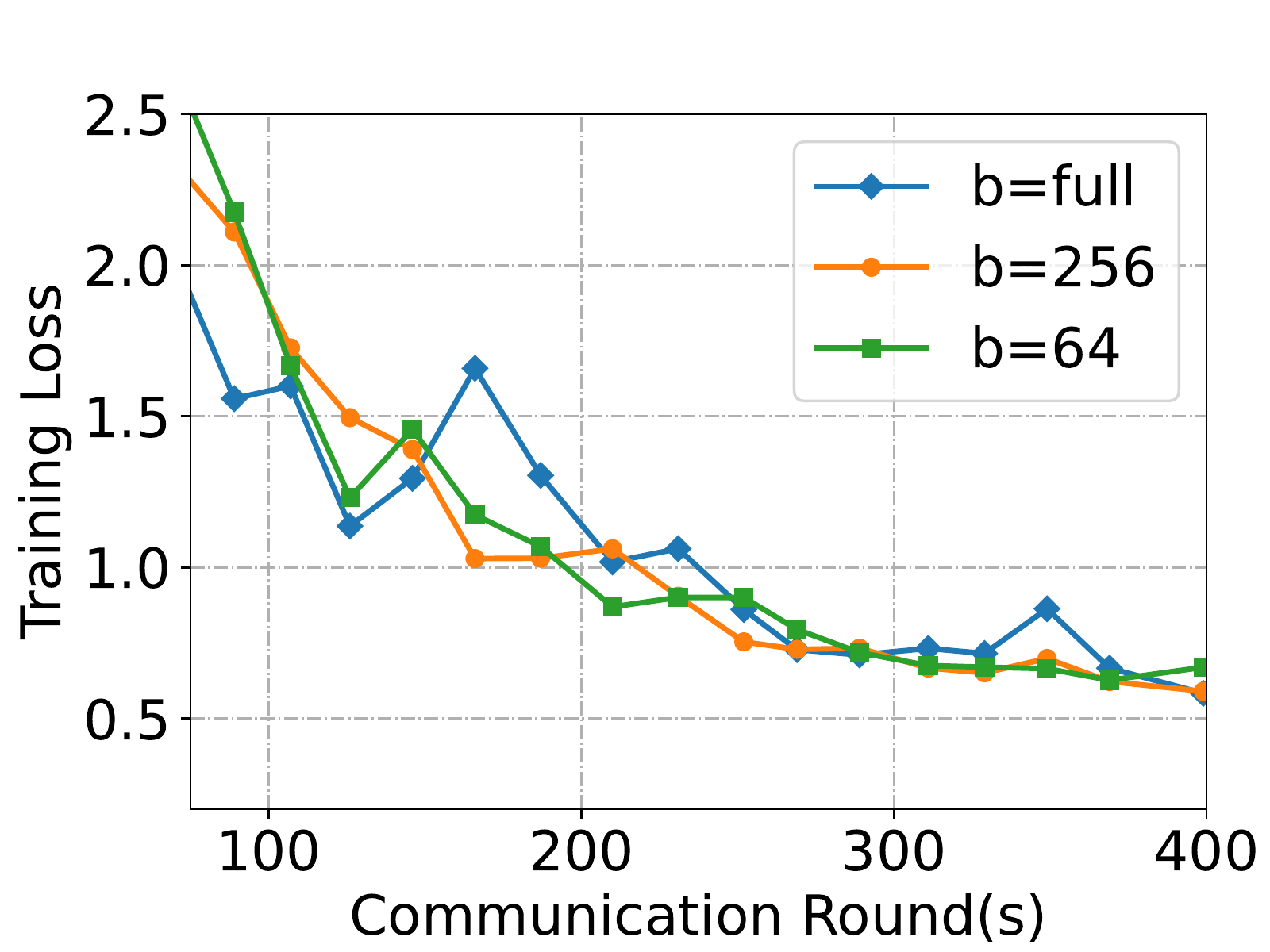}
}%
\hspace{12px}
\subfloat[40 clients]{
\centering
\includegraphics[width=0.28\textwidth]{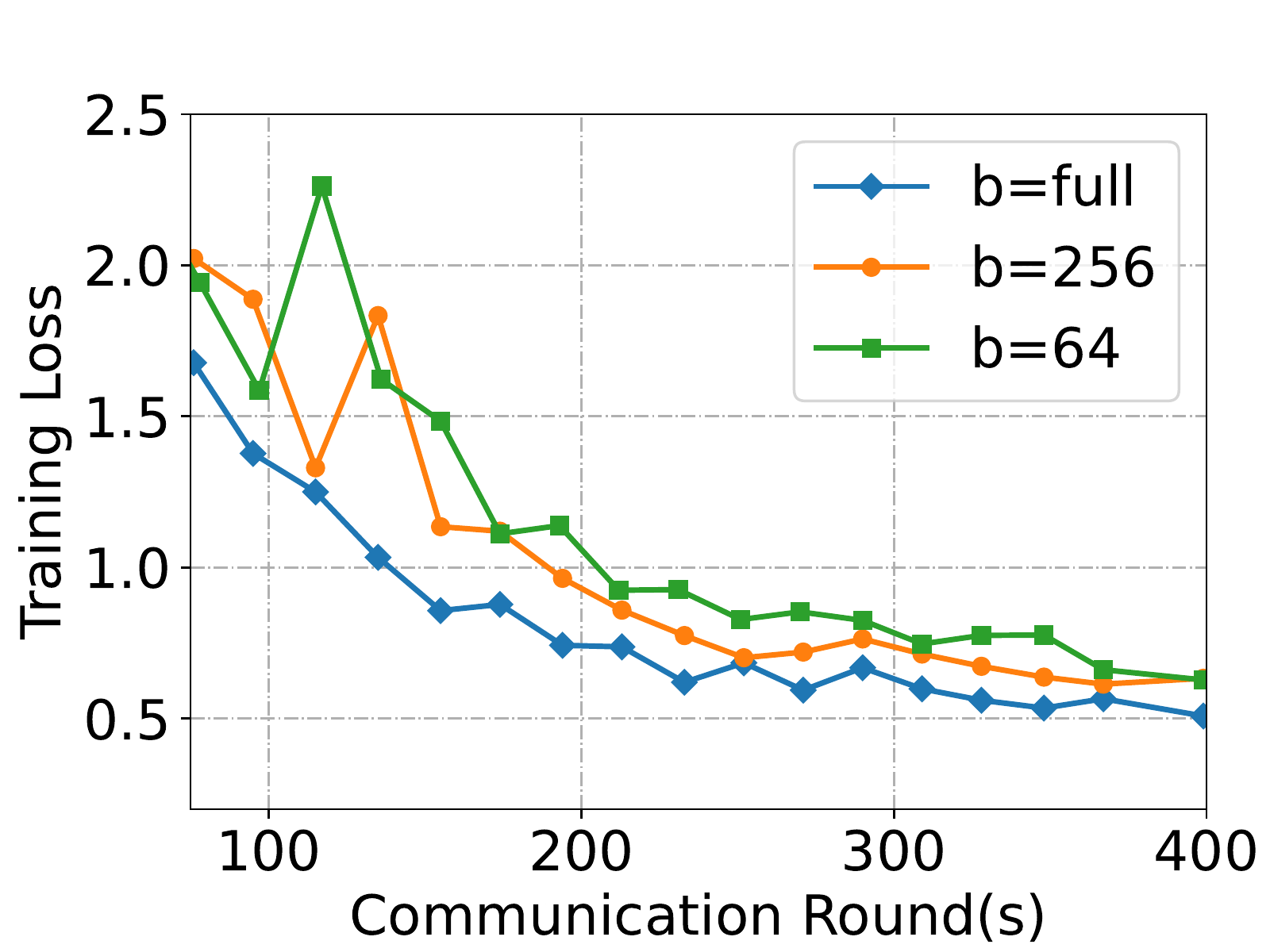}
}%
\hspace{12px}
\subfloat[100 clients]{
\centering
\includegraphics[width=0.28\textwidth]{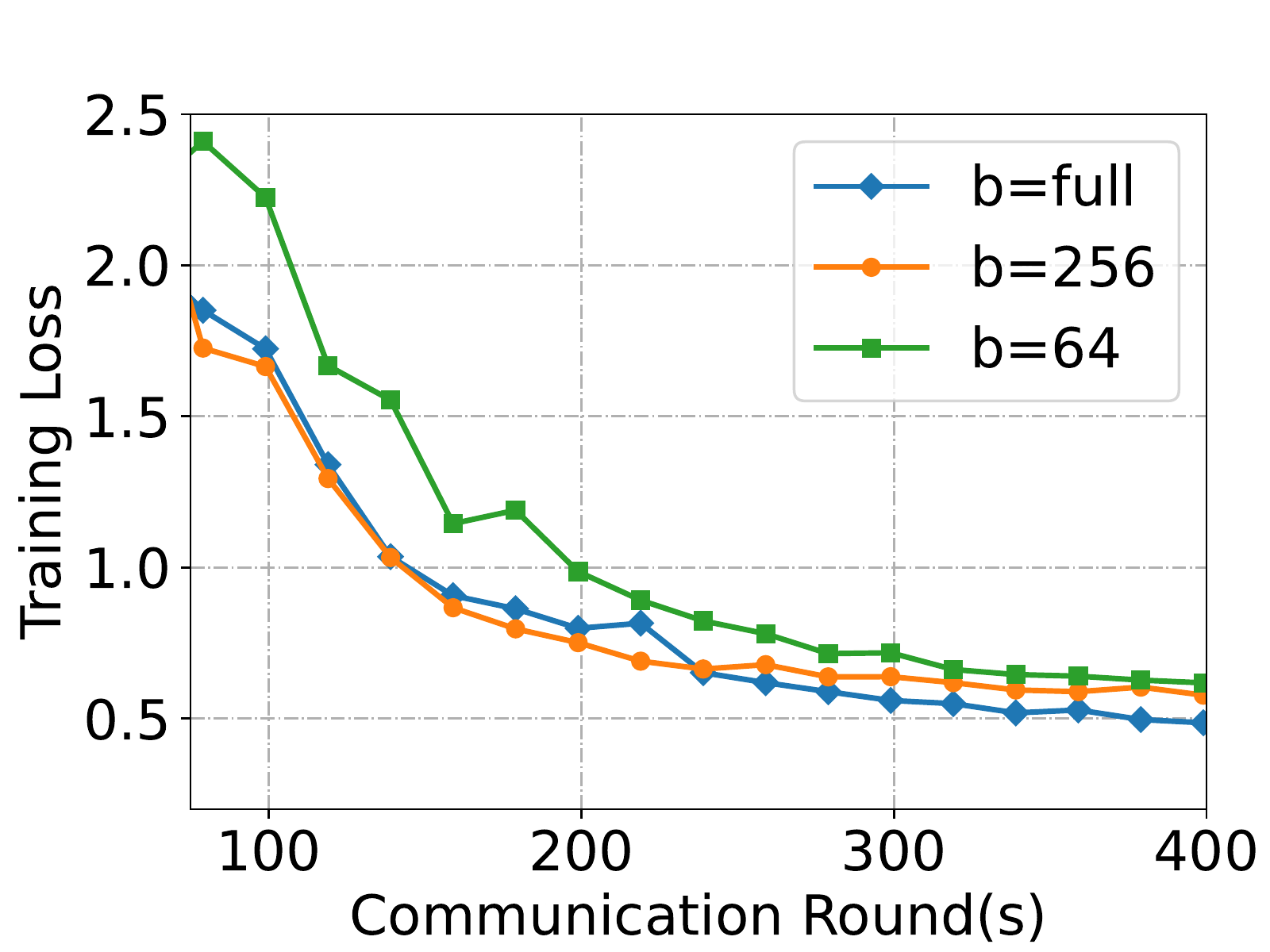}
}%
\vspace{-6px}
\caption{Comparison of test accuracy and training loss against the communication rounds for \algo{}with constant $p = 0.9$.} \label{fig:fedamd_constant_p_9}
\vspace{-10px}
\end{figure*}
\begin{figure*}[h]
\centering
\subfloat[20 clients]{
\centering
\includegraphics[width=0.28\textwidth]{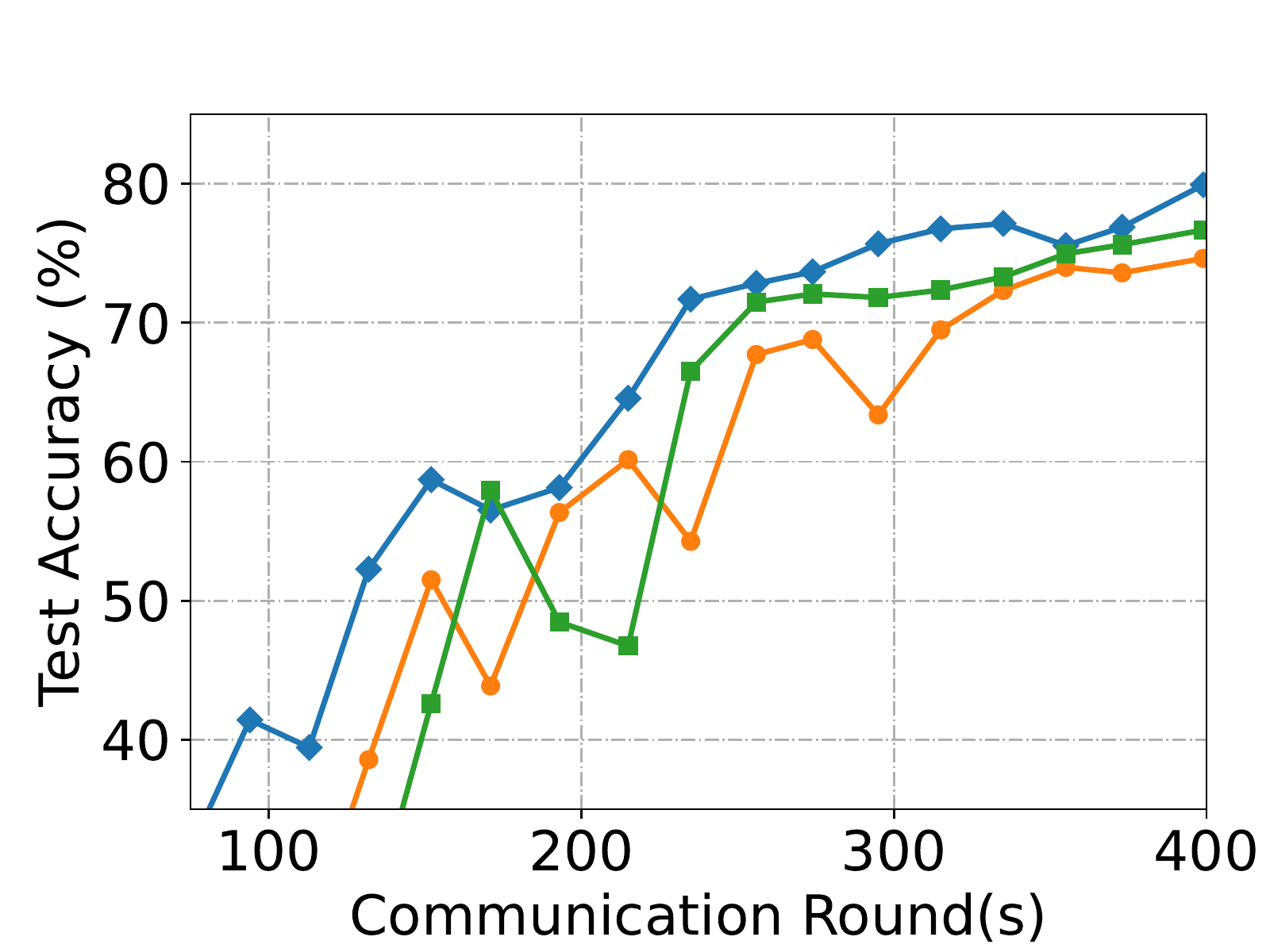}
}%
\hspace{12px}
\subfloat[40 clients]{
\centering
\includegraphics[width=0.28\textwidth]{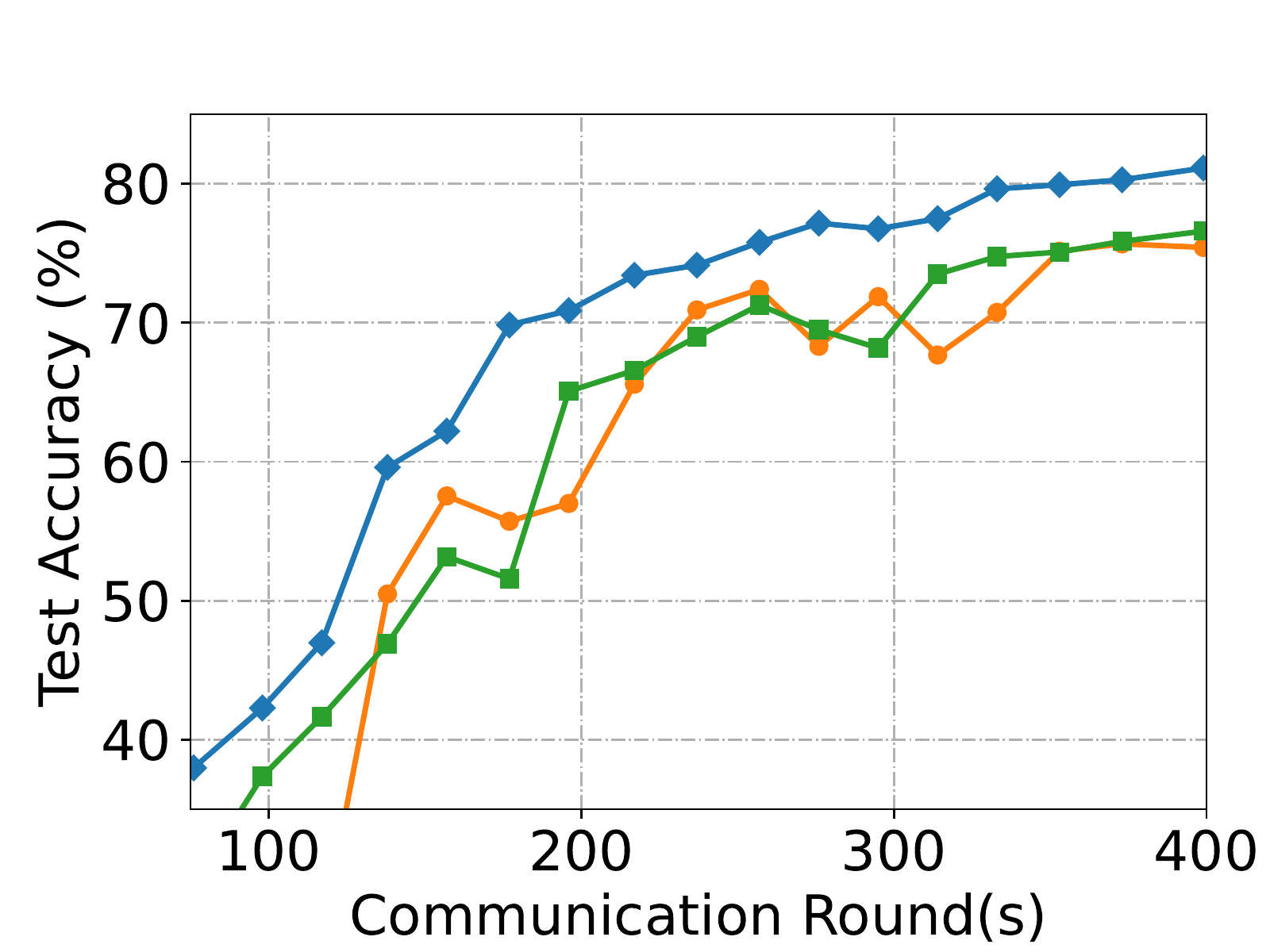}
}%
\hspace{12px}
\subfloat[100 clients]{
\centering
\includegraphics[width=0.28\textwidth]{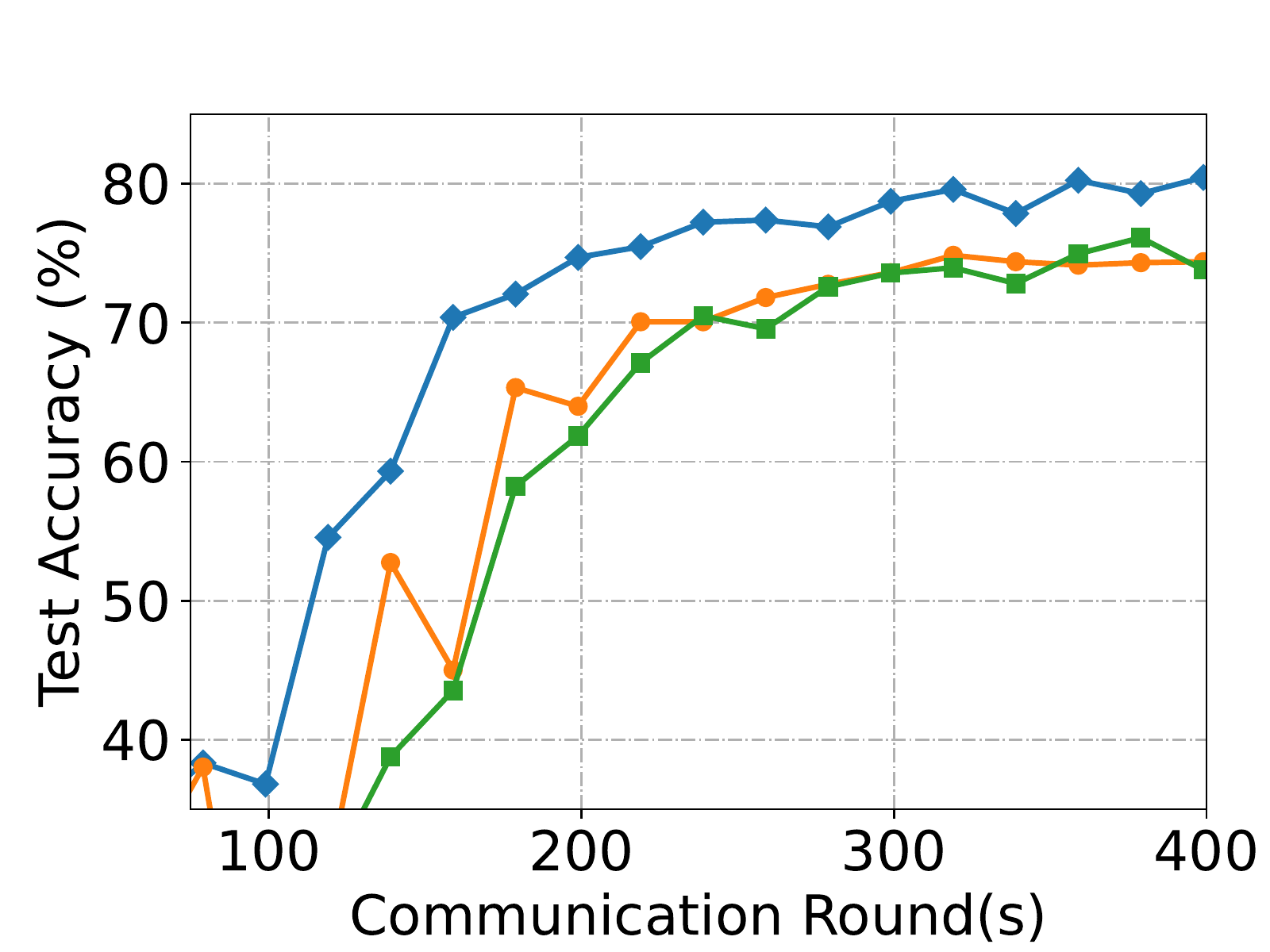}
}%
\\ \vspace{-12px}
\subfloat[20 clients]{
\centering
\includegraphics[width=0.28\textwidth]{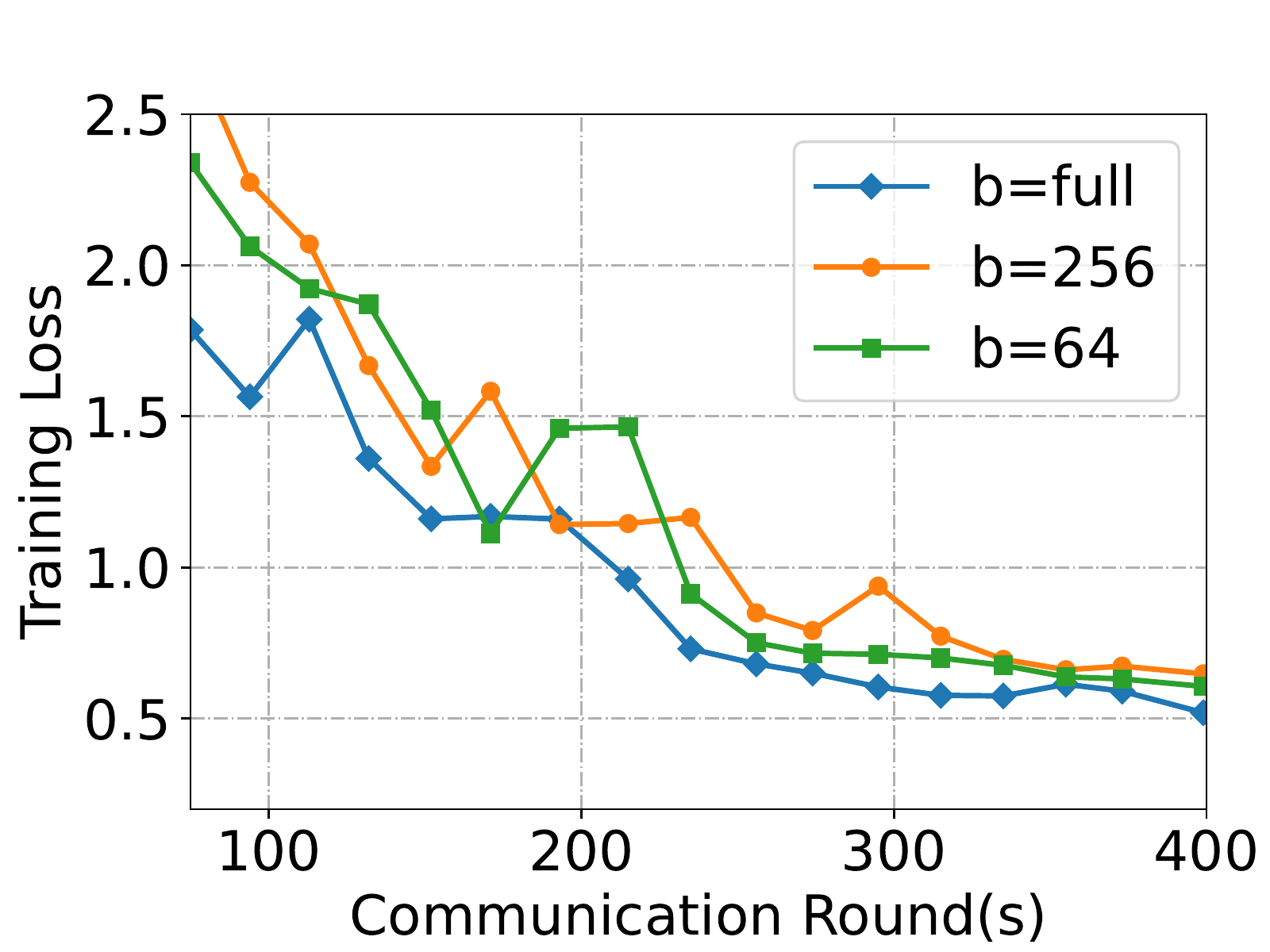}
}%
\hspace{12px}
\subfloat[40 clients]{
\centering
\includegraphics[width=0.28\textwidth]{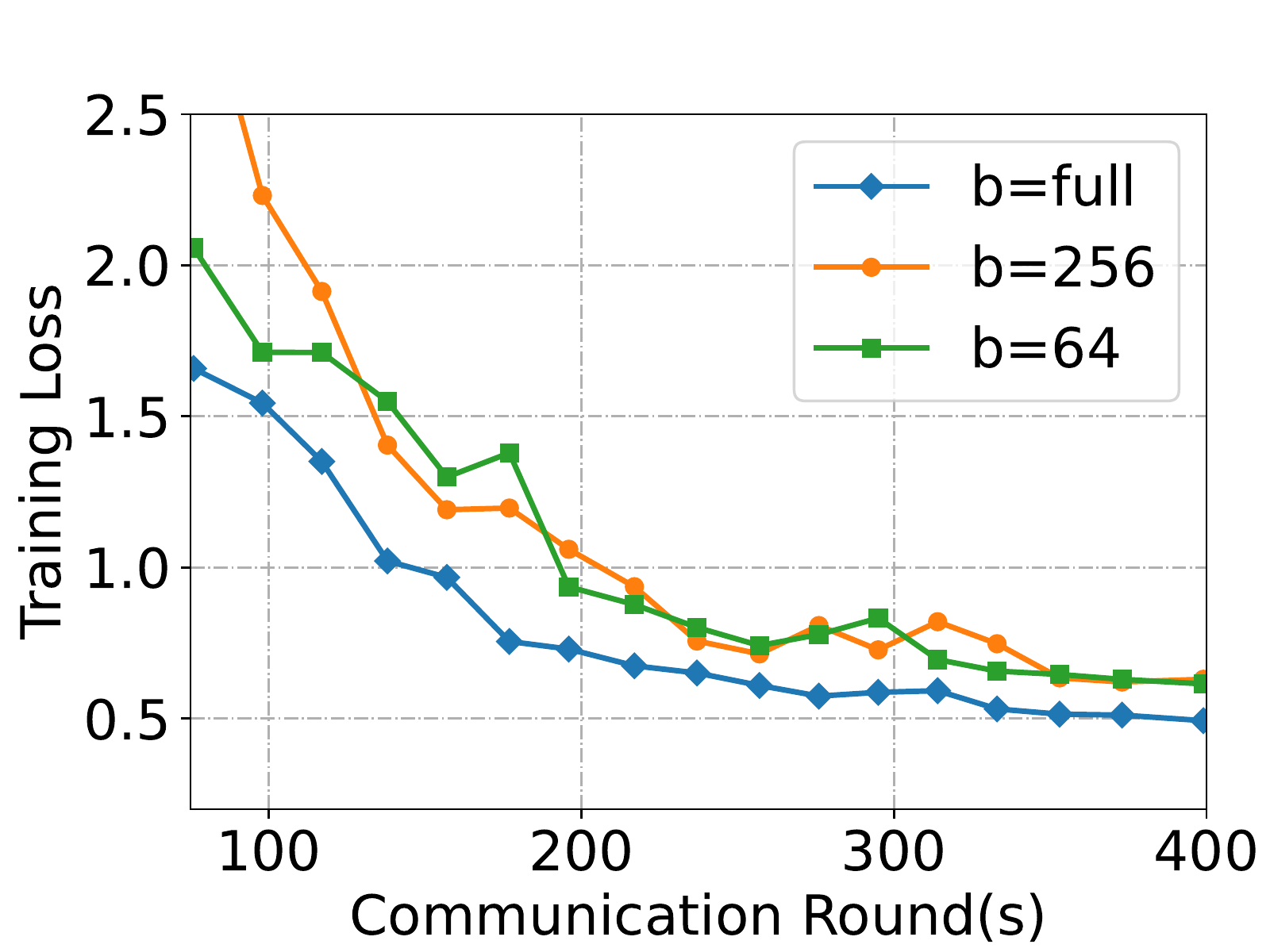}
}%
\hspace{12px}
\subfloat[100 clients]{
\centering
\includegraphics[width=0.28\textwidth]{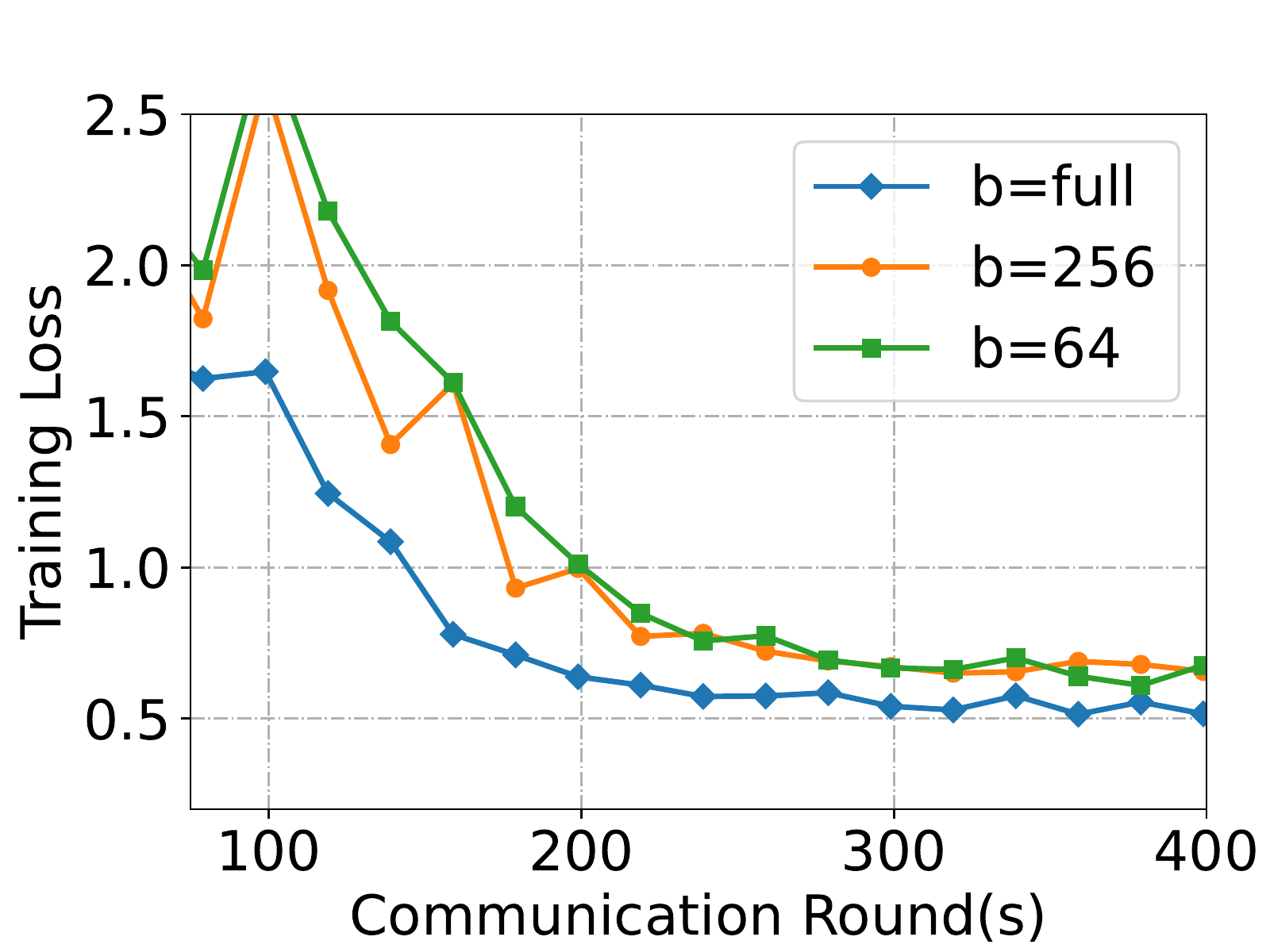}
}%
\vspace{-6px}
\caption{Comparison of test accuracy and training loss against the communication rounds for \algo{}with optimal constant probability $p$.} \label{fig:fedamd_constant_p_opt}
\end{figure*}
\begin{figure*}[h]
\centering
\subfloat[20 clients]{
\centering
\includegraphics[width=0.28\textwidth]{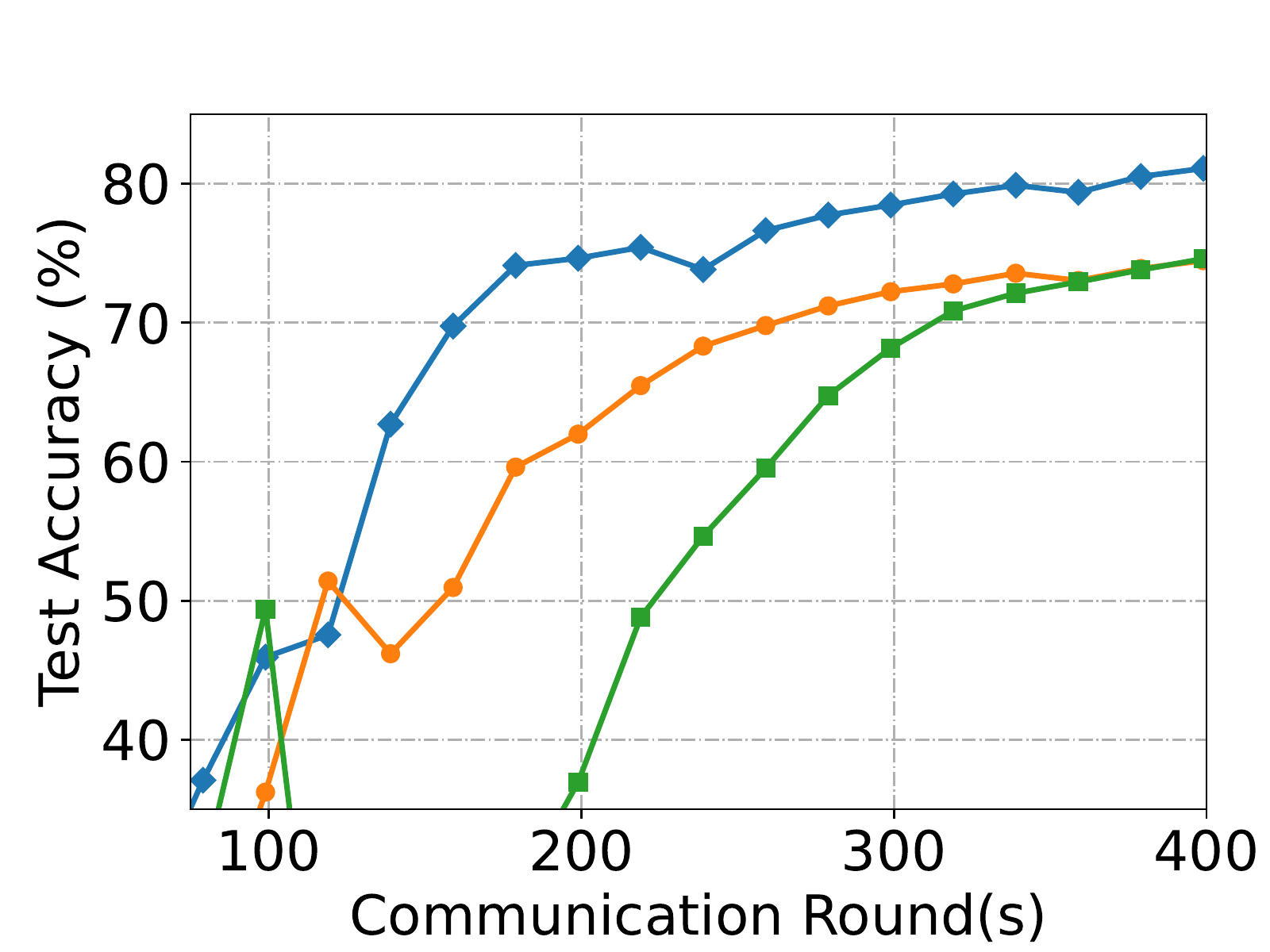}
}%
\hspace{12px}
\subfloat[40 clients]{
\centering
\includegraphics[width=0.28\textwidth]{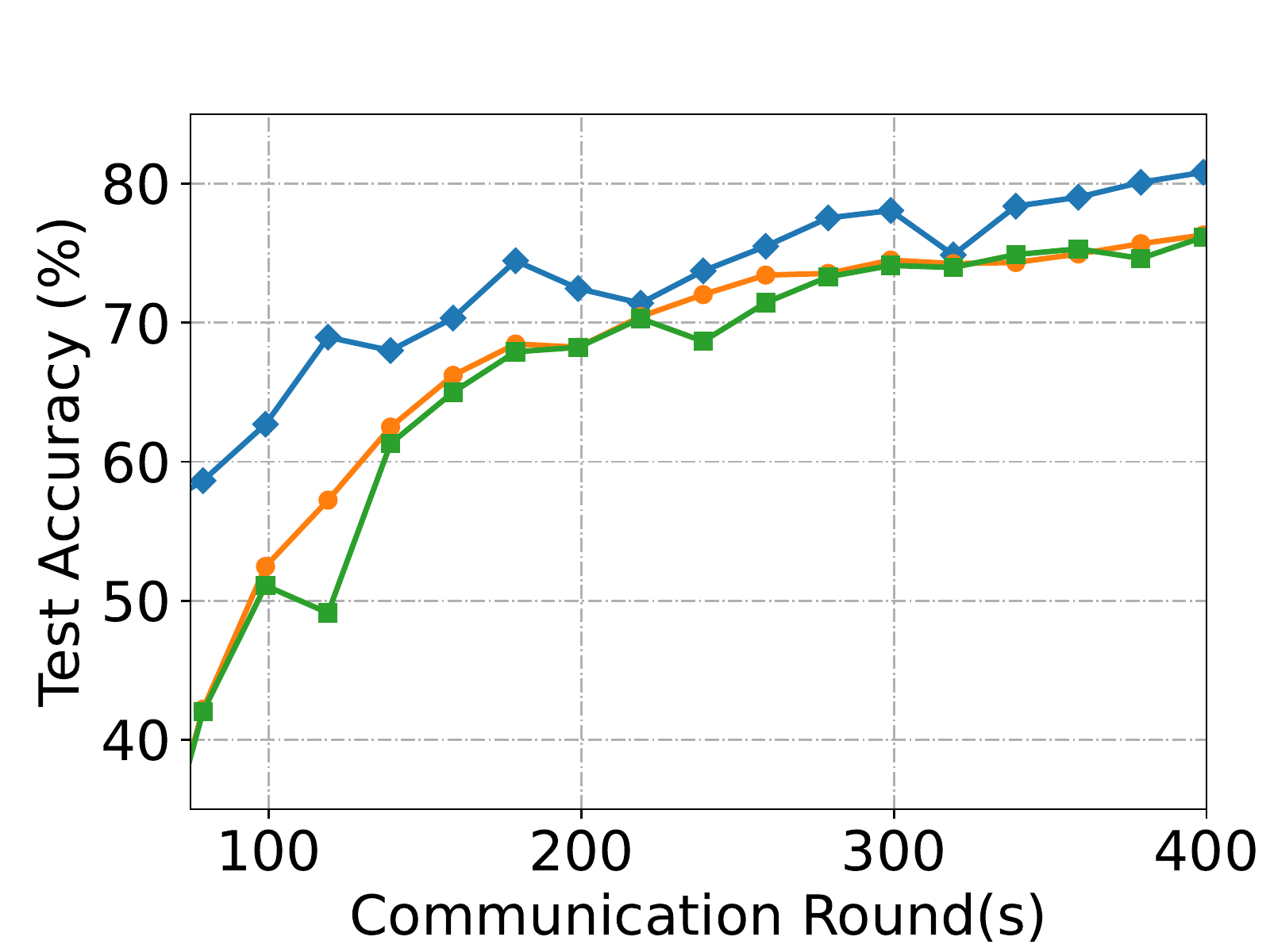}
}%
\hspace{12px}
\subfloat[100 clients]{
\centering
\includegraphics[width=0.28\textwidth]{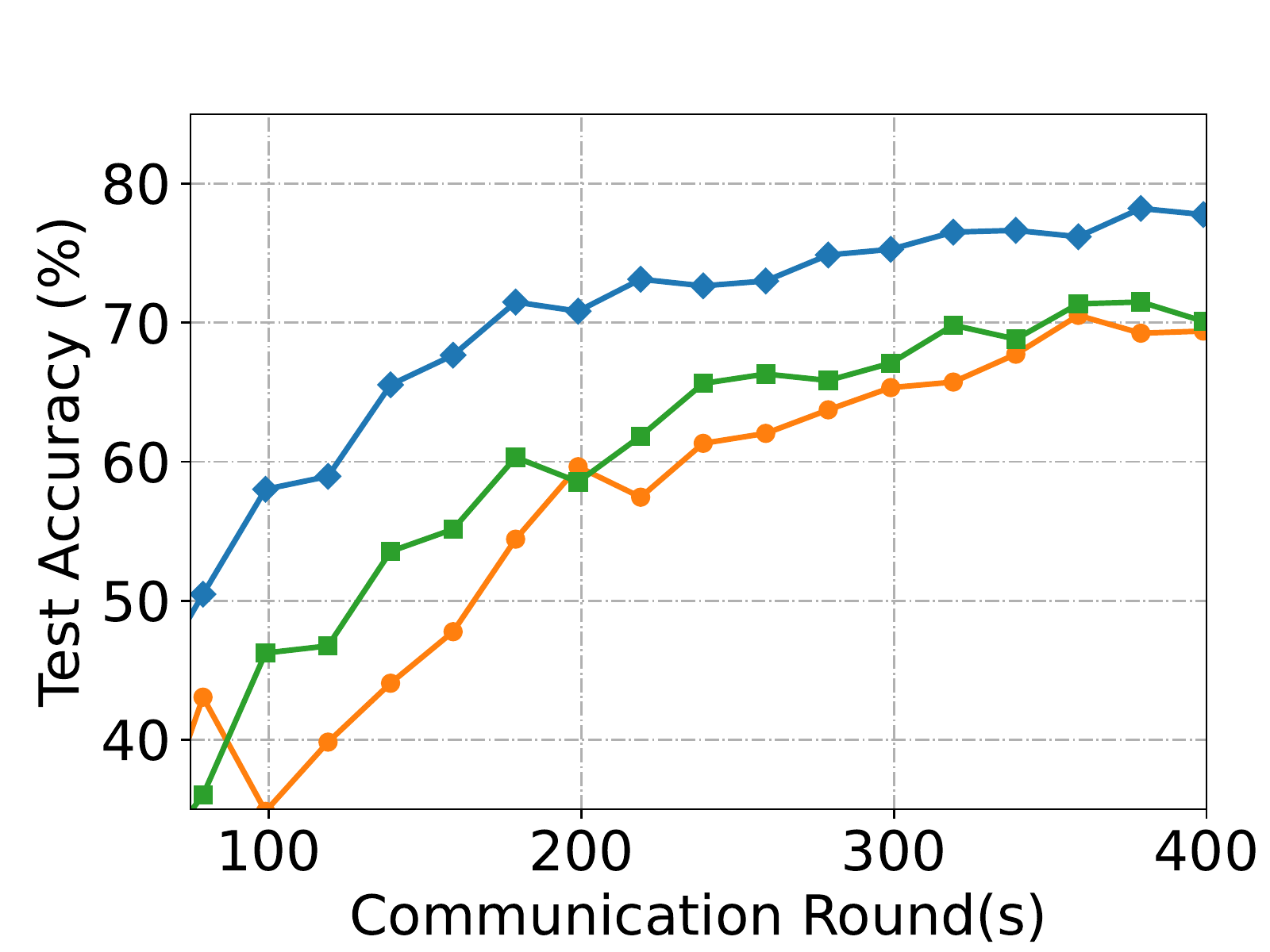}
}%
\\ 
\subfloat[20 clients]{
\centering
\includegraphics[width=0.28\textwidth]{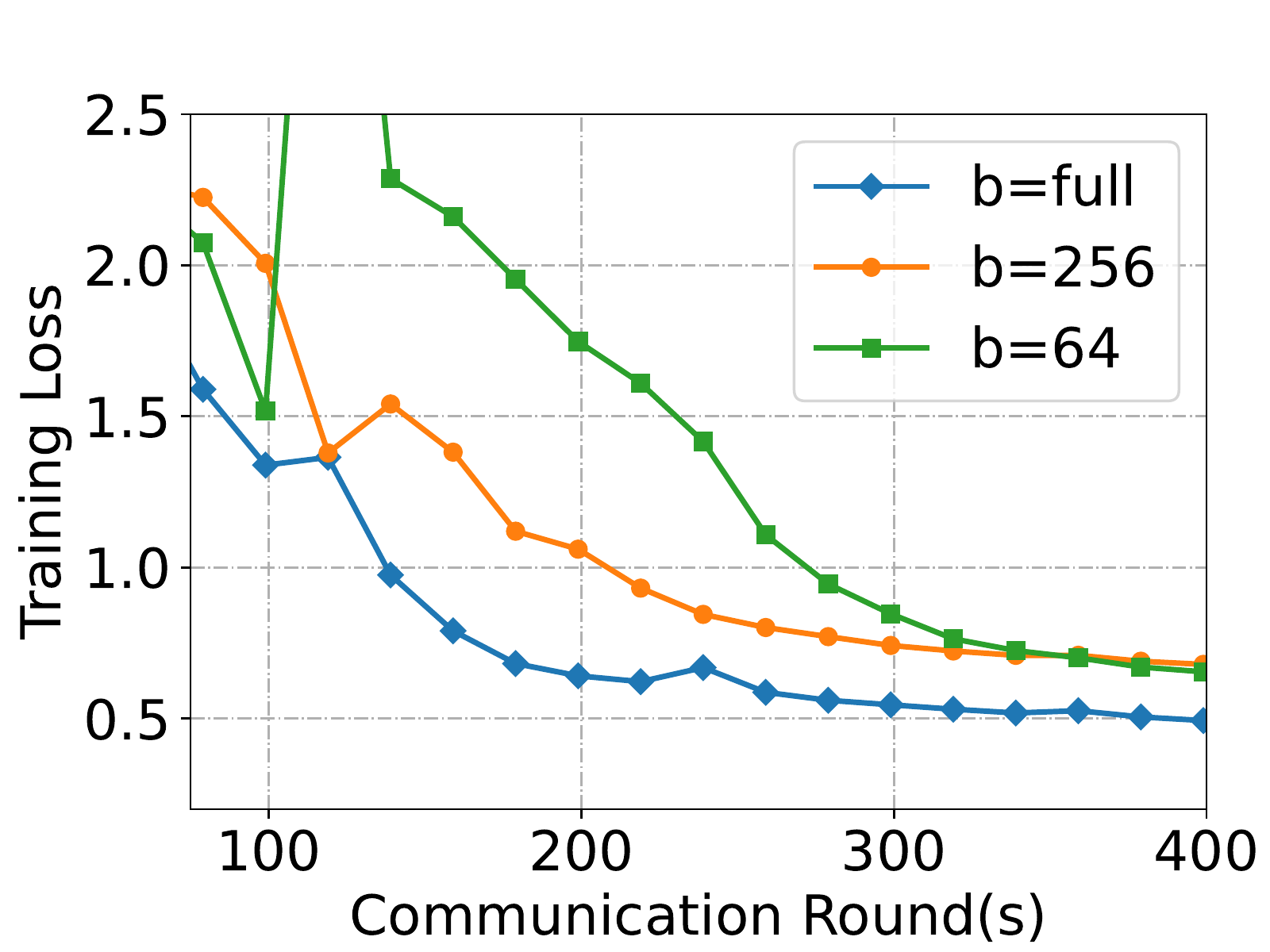}
}%
\hspace{12px}
\subfloat[40 clients]{
\centering
\includegraphics[width=0.28\textwidth]{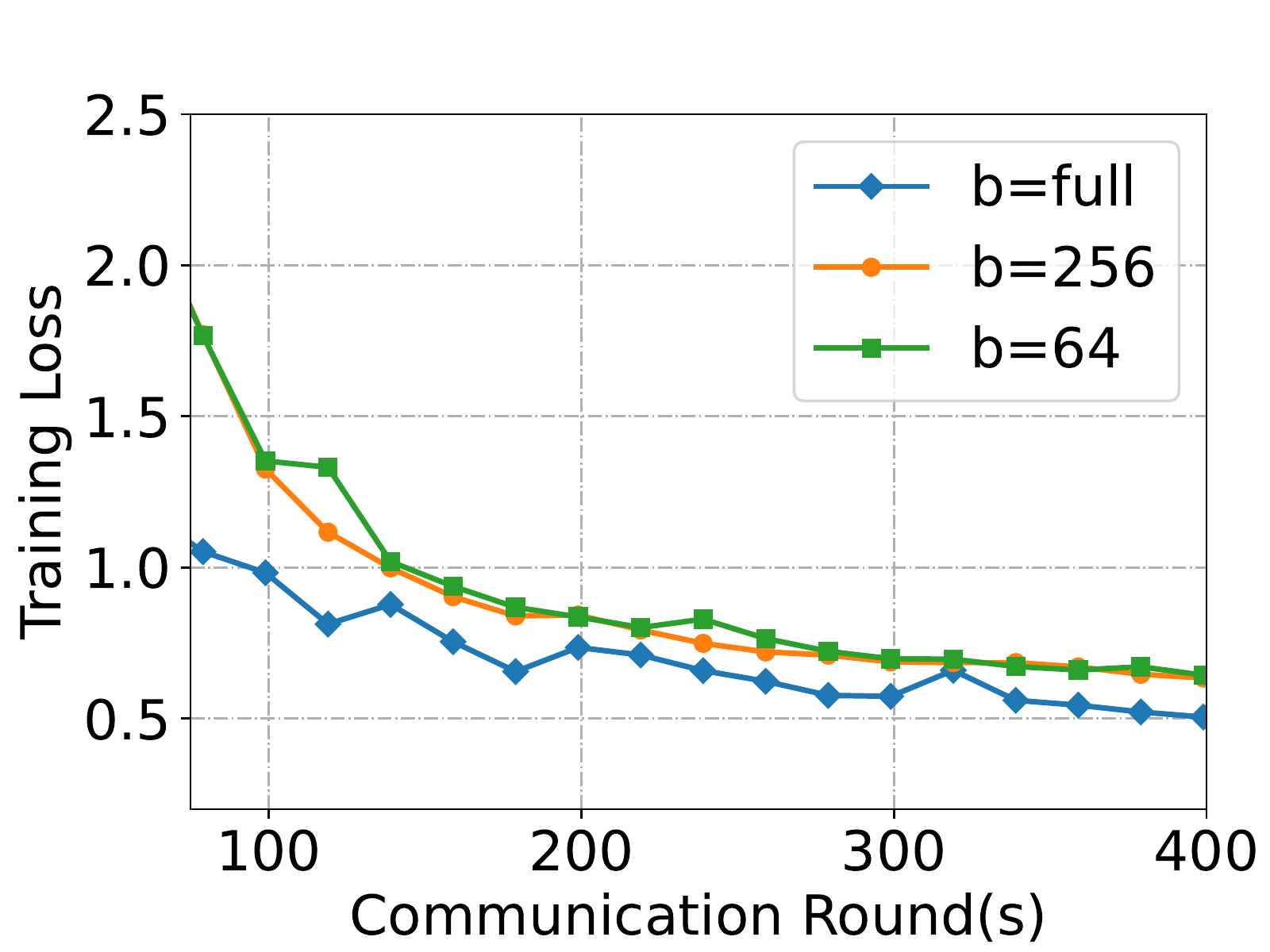}
}%
\hspace{12px}
\subfloat[100 clients]{
\centering
\includegraphics[width=0.28\textwidth]{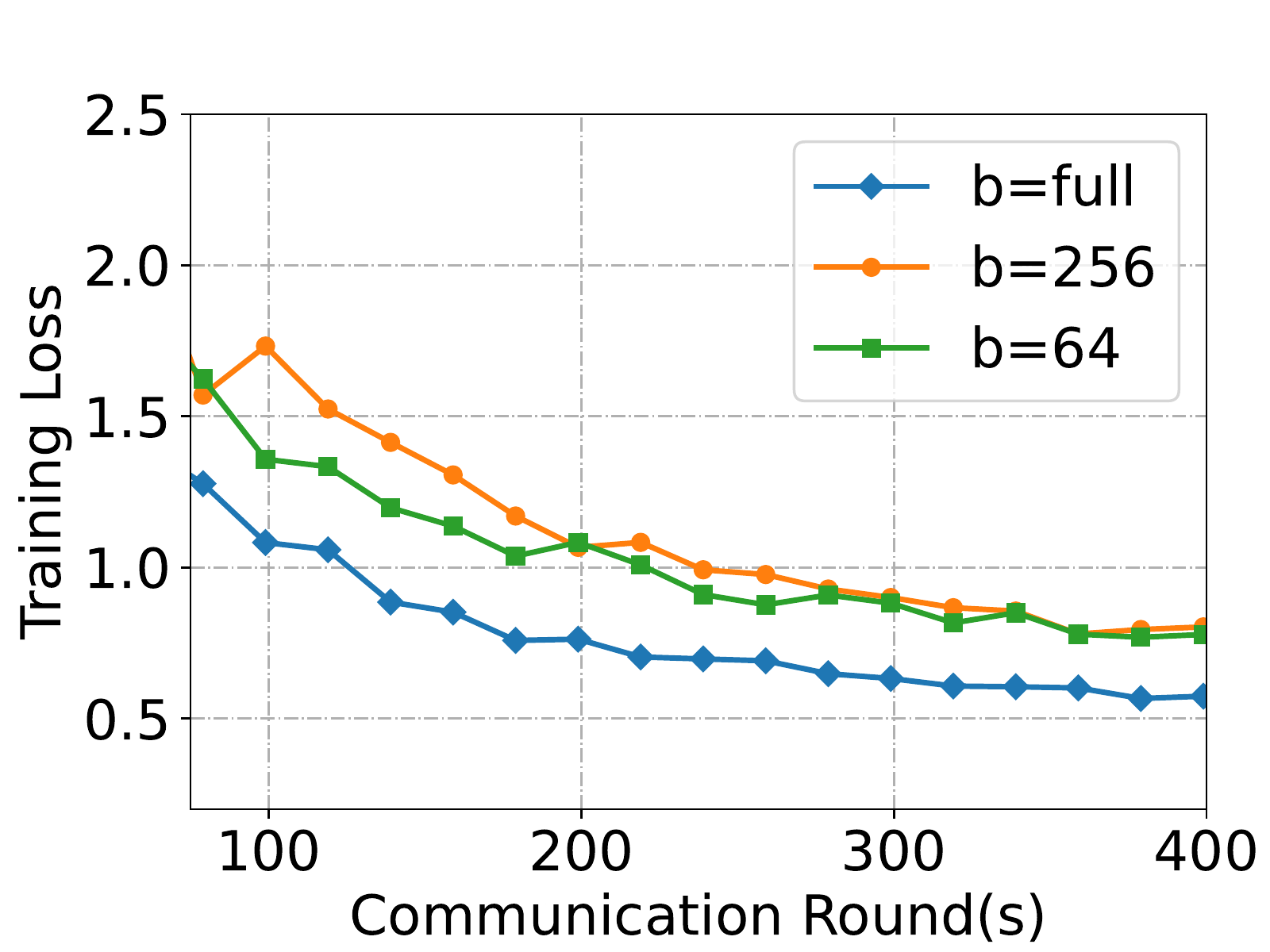}
}%
\caption{Comparison of test accuracy and training loss against the communication rounds for \algo{}with sequential $\{0, 1\}$. } \label{fig:fedamd_sequential_p}
\end{figure*}

\paragraph{Number of local updates $K$.}
Figure \ref{fig:fmnist_k_10} -- \ref{fig:fmnist_k_5} present the performance of FedAMD under the setting of $K=10$, $K=20$, and $K=5$, respectively. In Section \ref{sec:experiment}, we present the results under $K=10$ (Figure \ref{fig:fmnist_k_10}), which manifests that: (I) The setting $\{0, 1\}$ is the most efficient performance under the sequential probability setting; (II) The setting near the optimal probability can attain the best result under the constant probability settings. In this part, we verify whether these two statements still hold in two more examples. As for $K=20$ and $K=5$, it can provide the best performance when the constant probability is set to be near the theoretical optimal one. However, statement (I) does not always hold in both settings. Specifically, when all clients participate in the training, $\{0, 0, 1\}$ even outperforms $\{0, 1\}$. A possible reason is that $\{0, 0, 1\}$ has more rounds to update the global model while the caching gradient does not significantly change compared to the situation running for one more round.

\begin{figure*}[h]
\centering
\subfloat[20 clients]{
\centering
\includegraphics[width=0.28\textwidth]{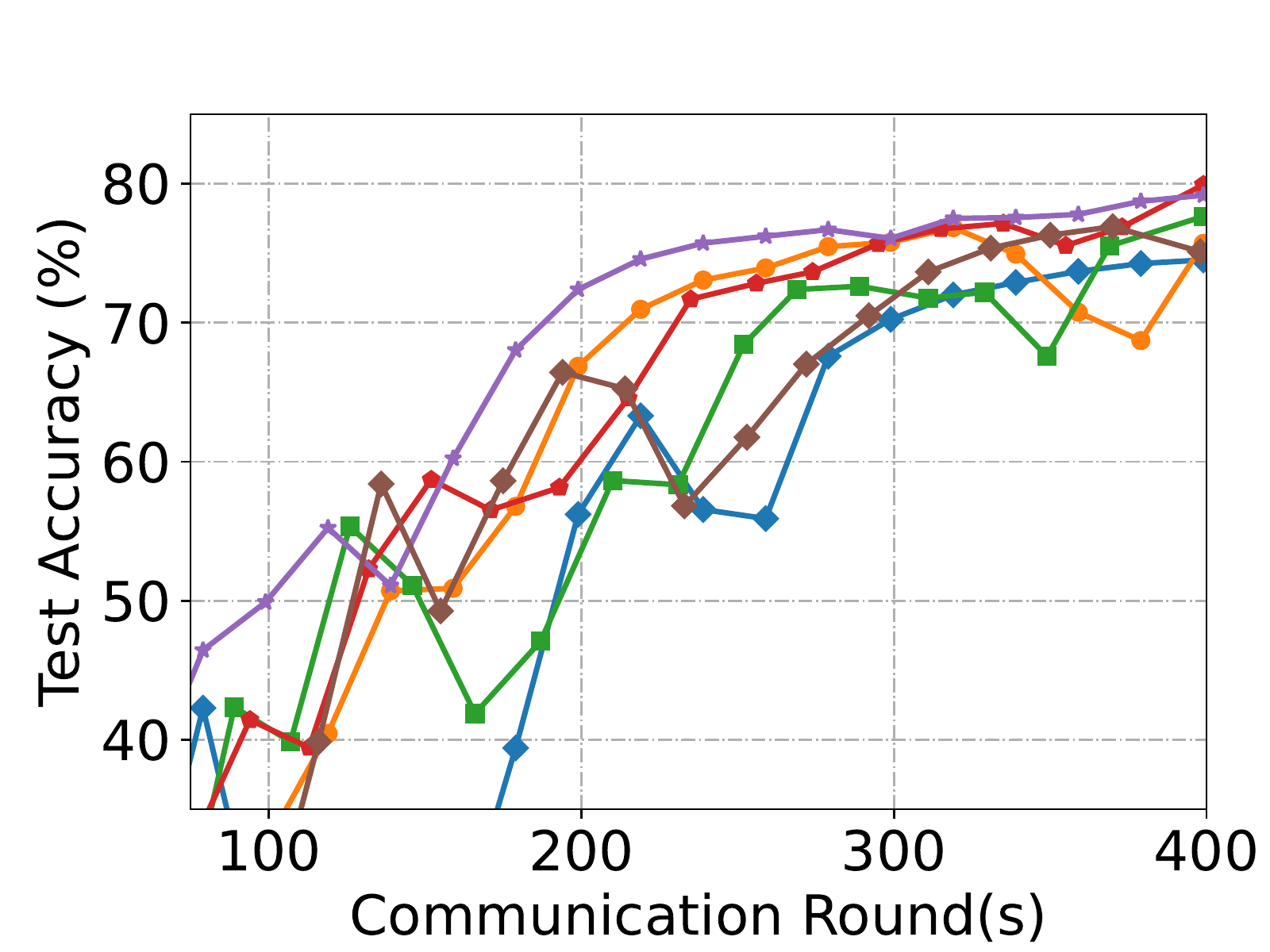}
}%
\hspace{12px}
\subfloat[40 clients]{
\centering
\includegraphics[width=0.28\textwidth]{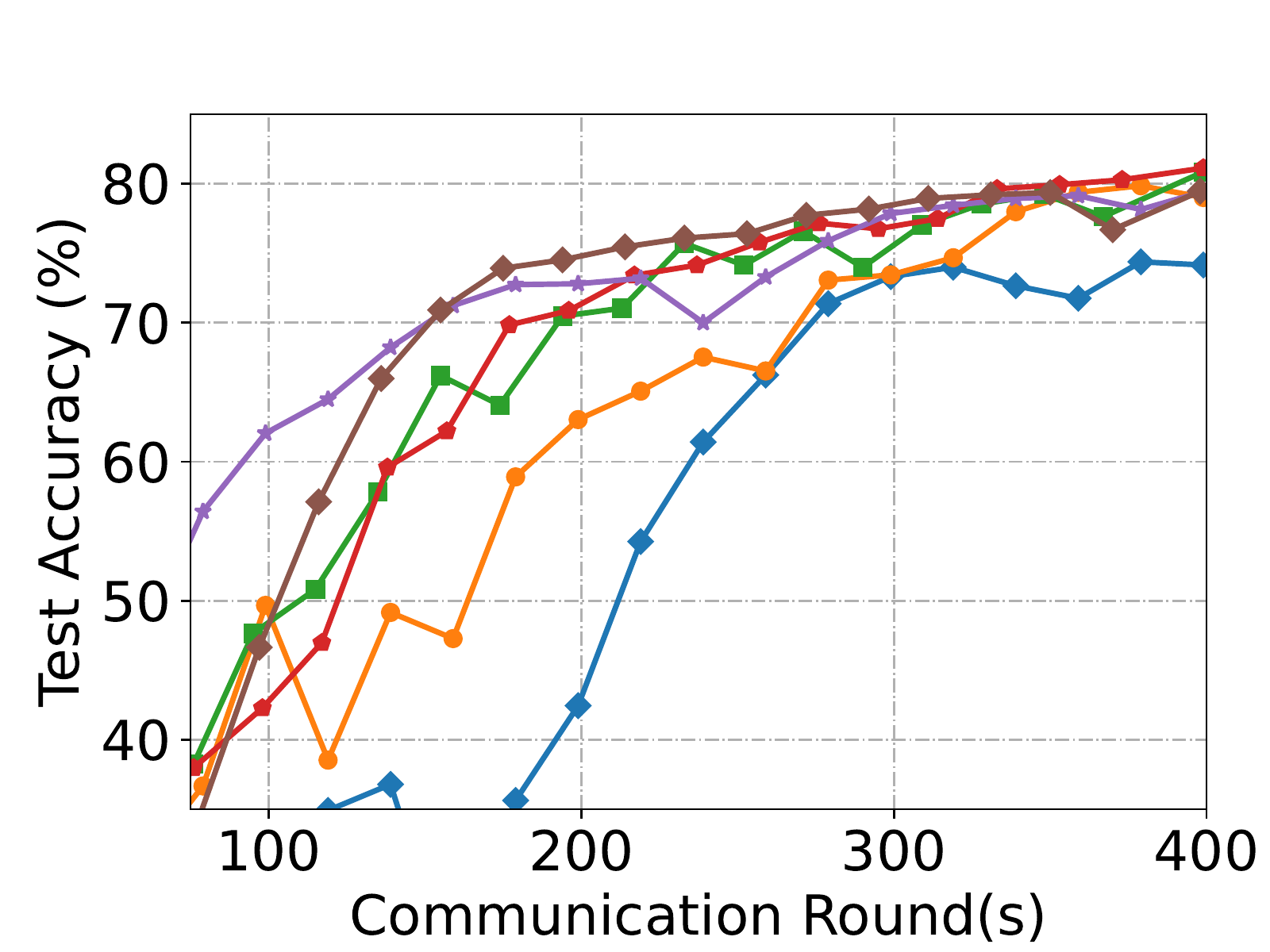}
}%
\hspace{12px}
\subfloat[100 clients]{
\centering
\includegraphics[width=0.28\textwidth]{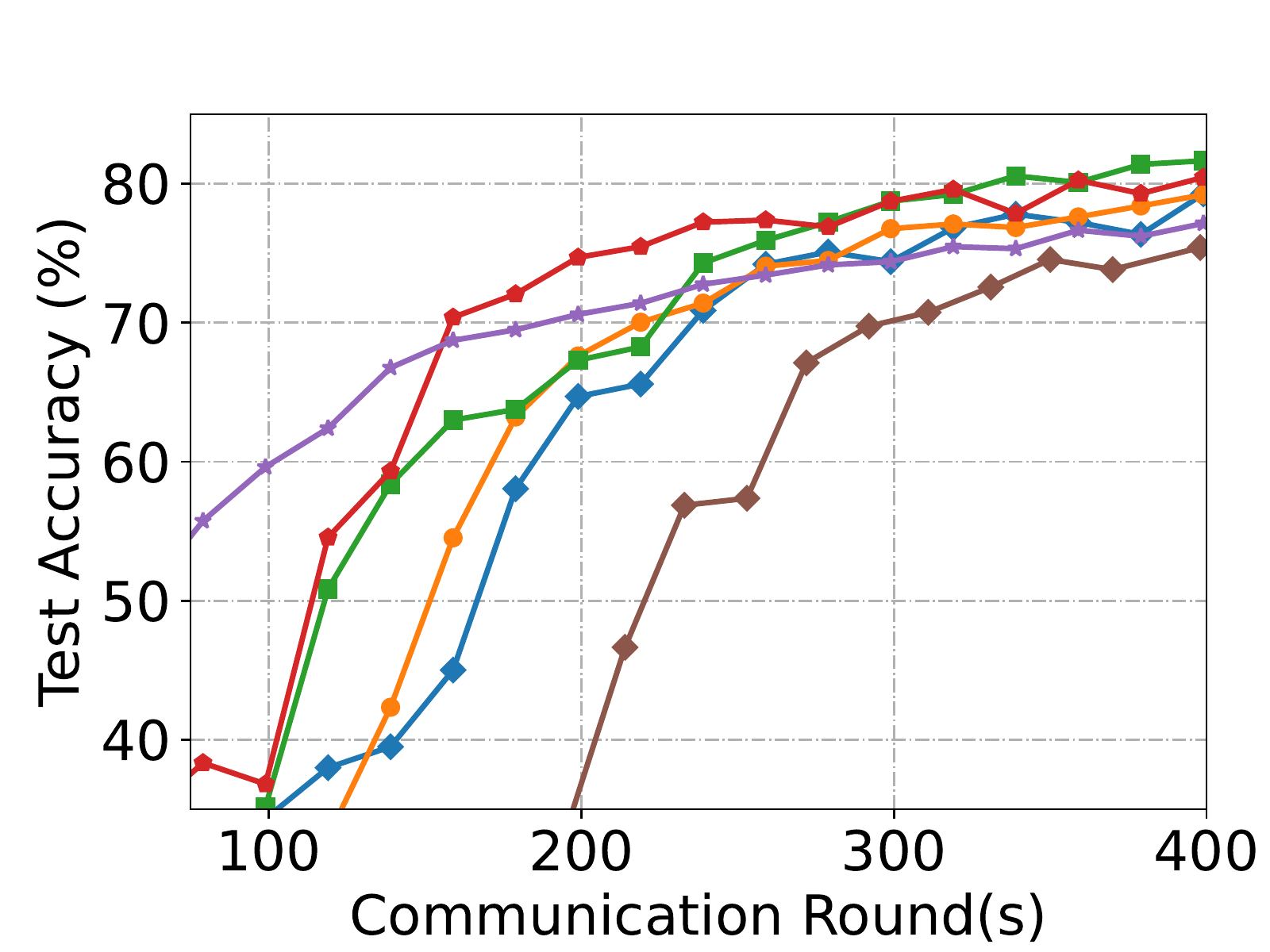}
}%
\\ 
\subfloat[20 clients]{
\centering
\includegraphics[width=0.28\textwidth]{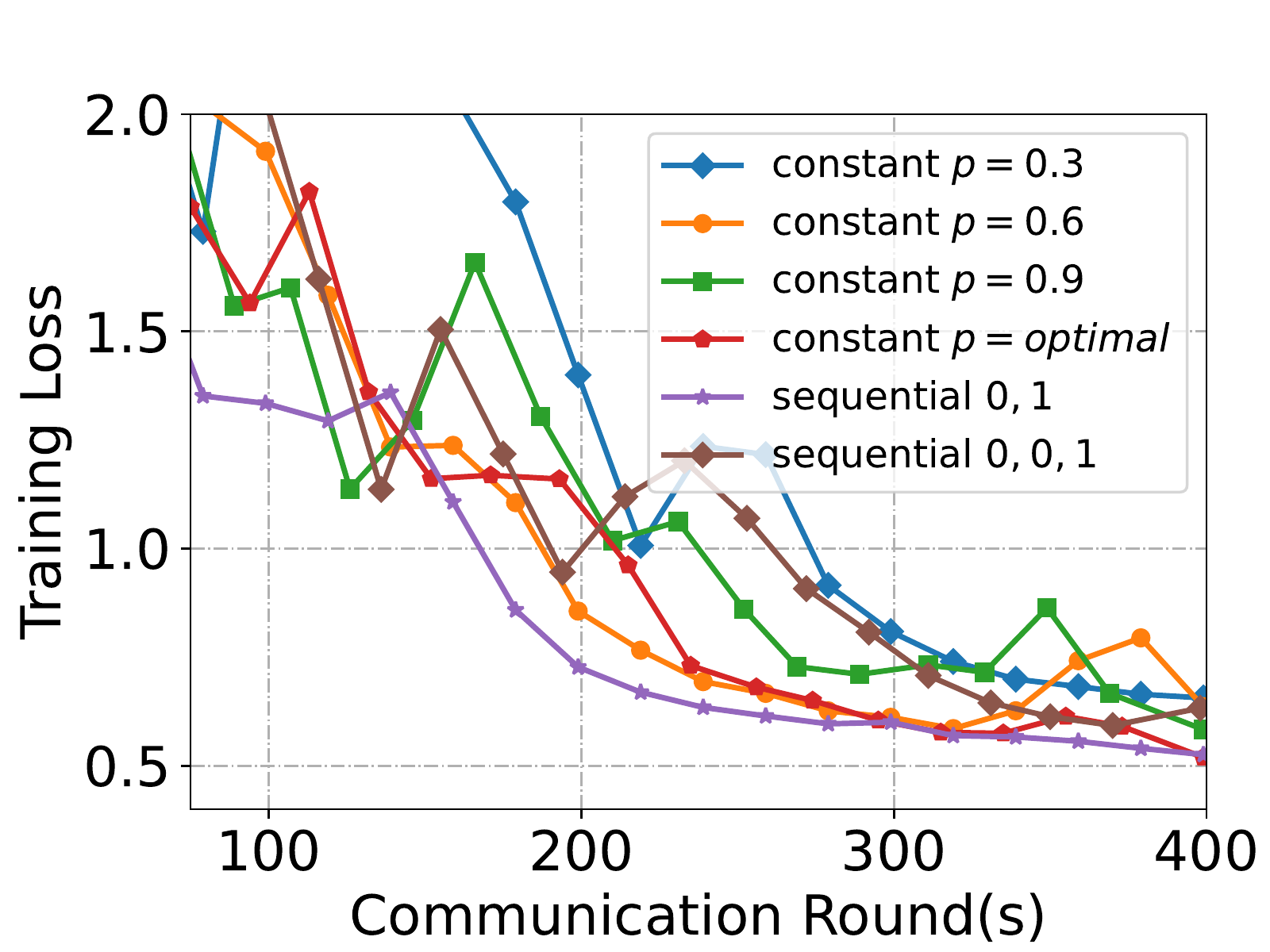}
}%
\hspace{12px}
\subfloat[40 clients]{
\centering
\includegraphics[width=0.28\textwidth]{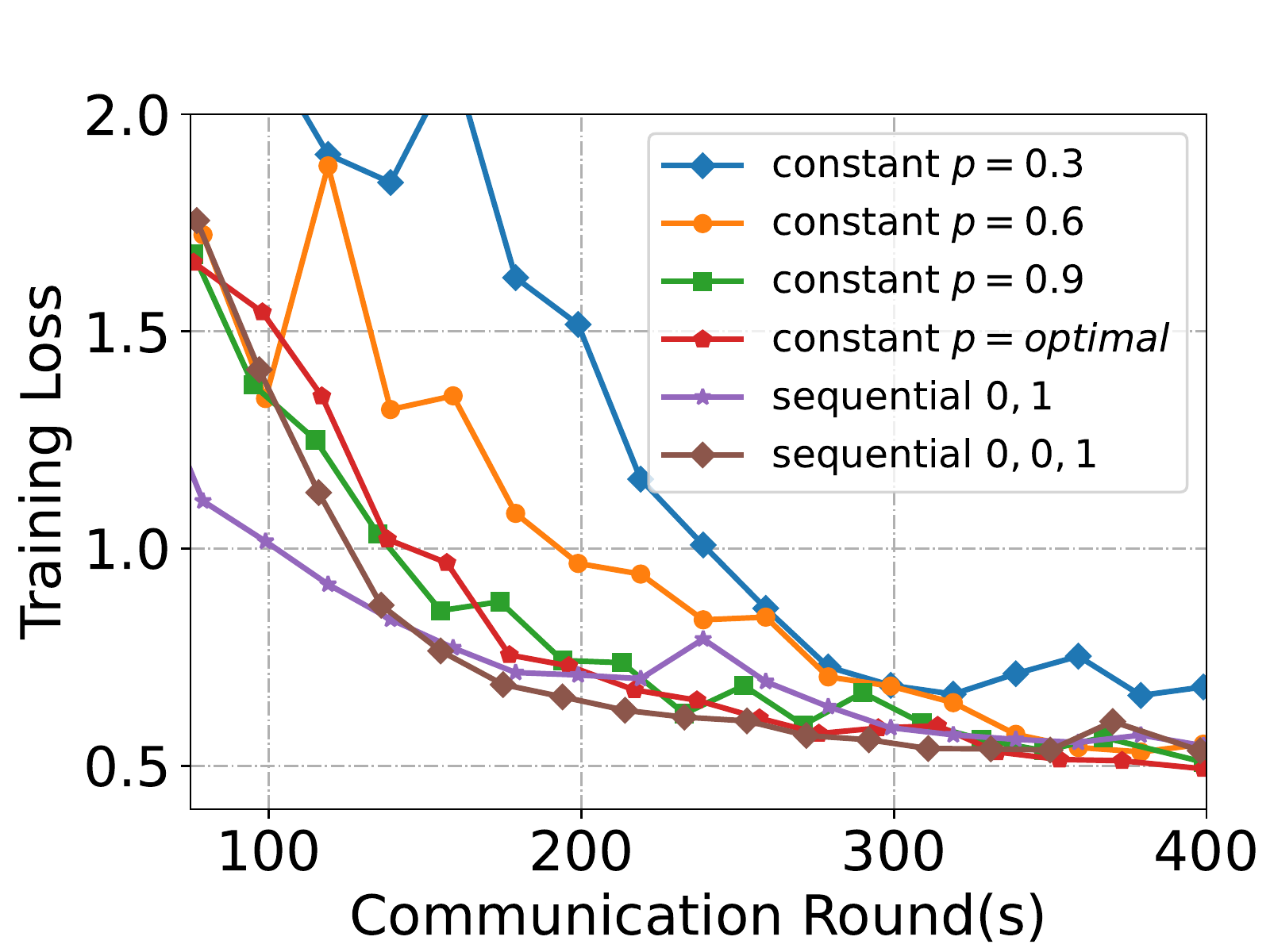}
}%
\hspace{12px}
\subfloat[100 clients]{
\centering
\includegraphics[width=0.28\textwidth]{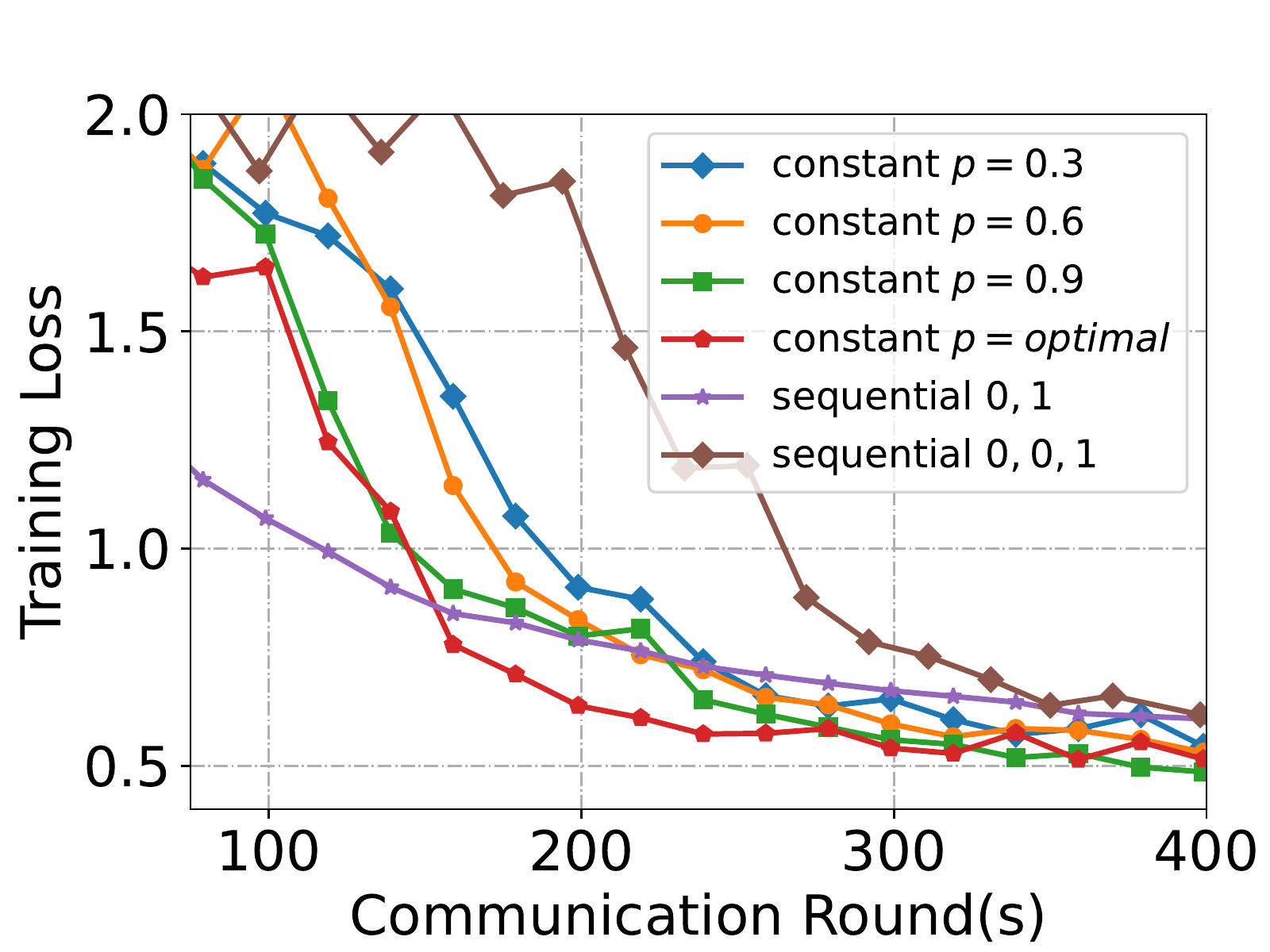}
}%
\caption{Comparison of different probability settings using training loss and test accuracy against the communication rounds for \algo{}by setting $K=10$.} \label{fig:fmnist_k_10}
\end{figure*}
\begin{figure*}[h]
\vspace{-20px}
\centering
\subfloat[20 clients]{
\centering
\includegraphics[width=0.28\textwidth]{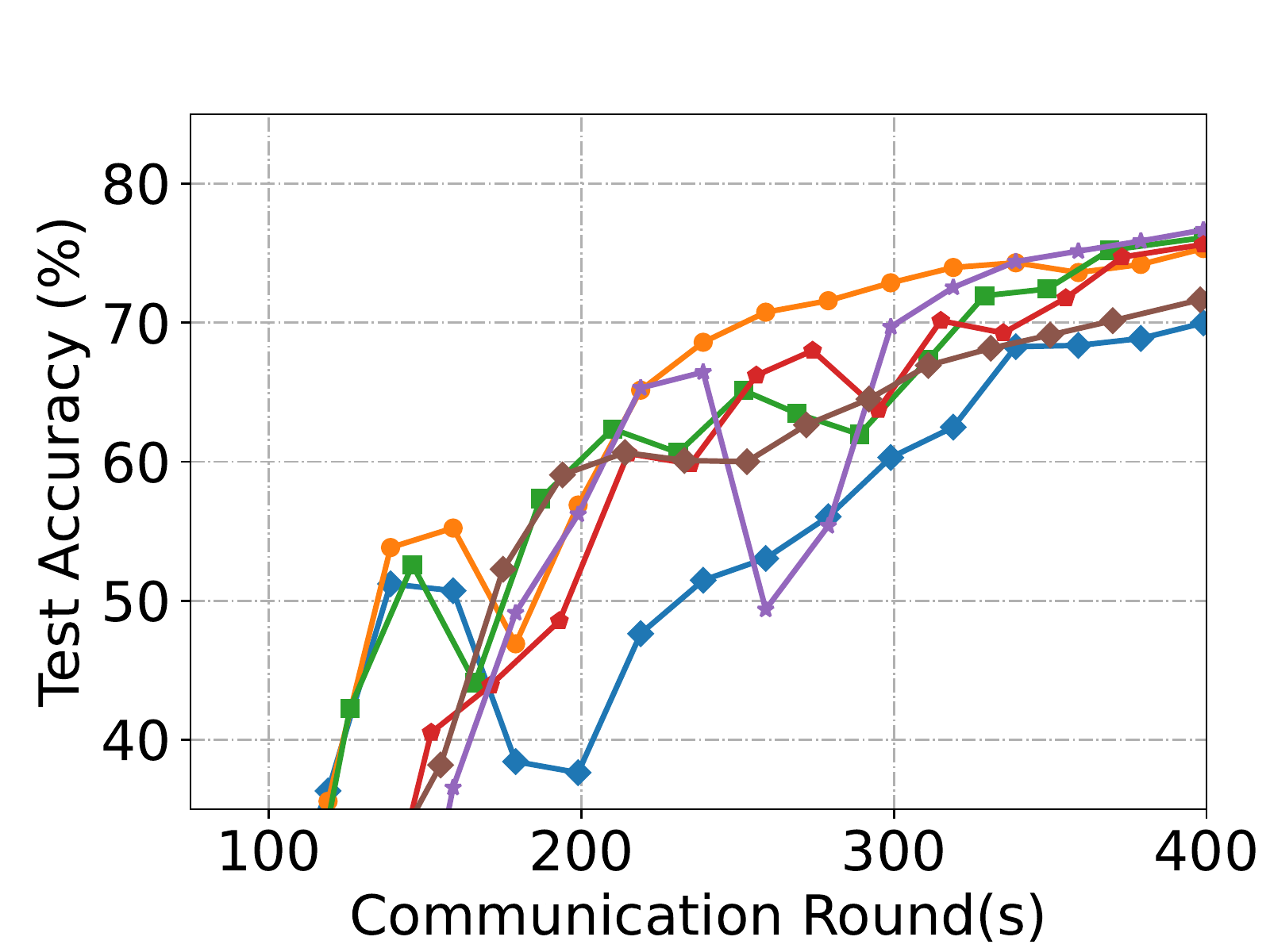}
}%
\hspace{12px}
\subfloat[40 clients]{
\centering
\includegraphics[width=0.28\textwidth]{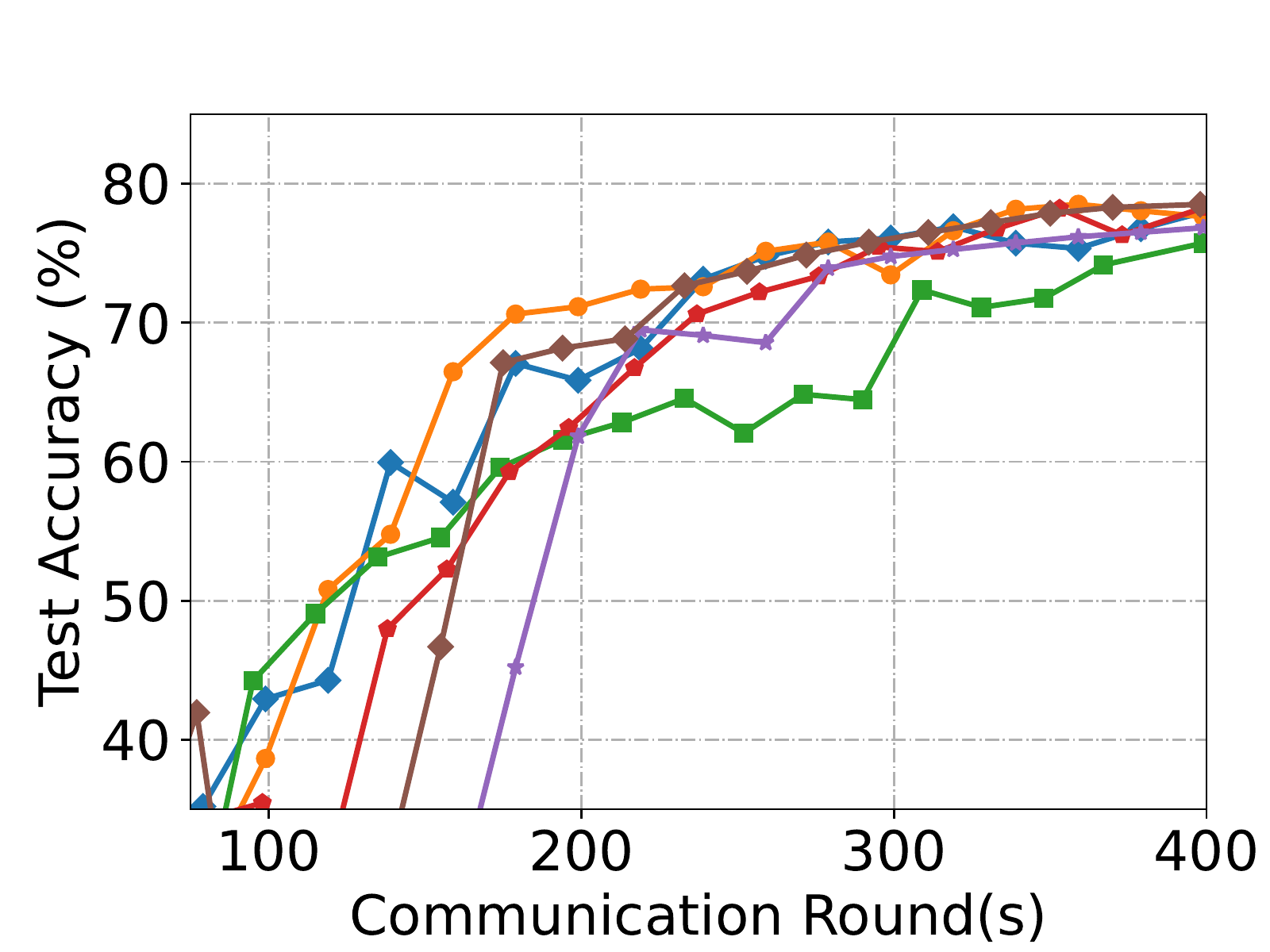}
}%
\hspace{12px}
\subfloat[100 clients]{
\centering
\includegraphics[width=0.28\textwidth]{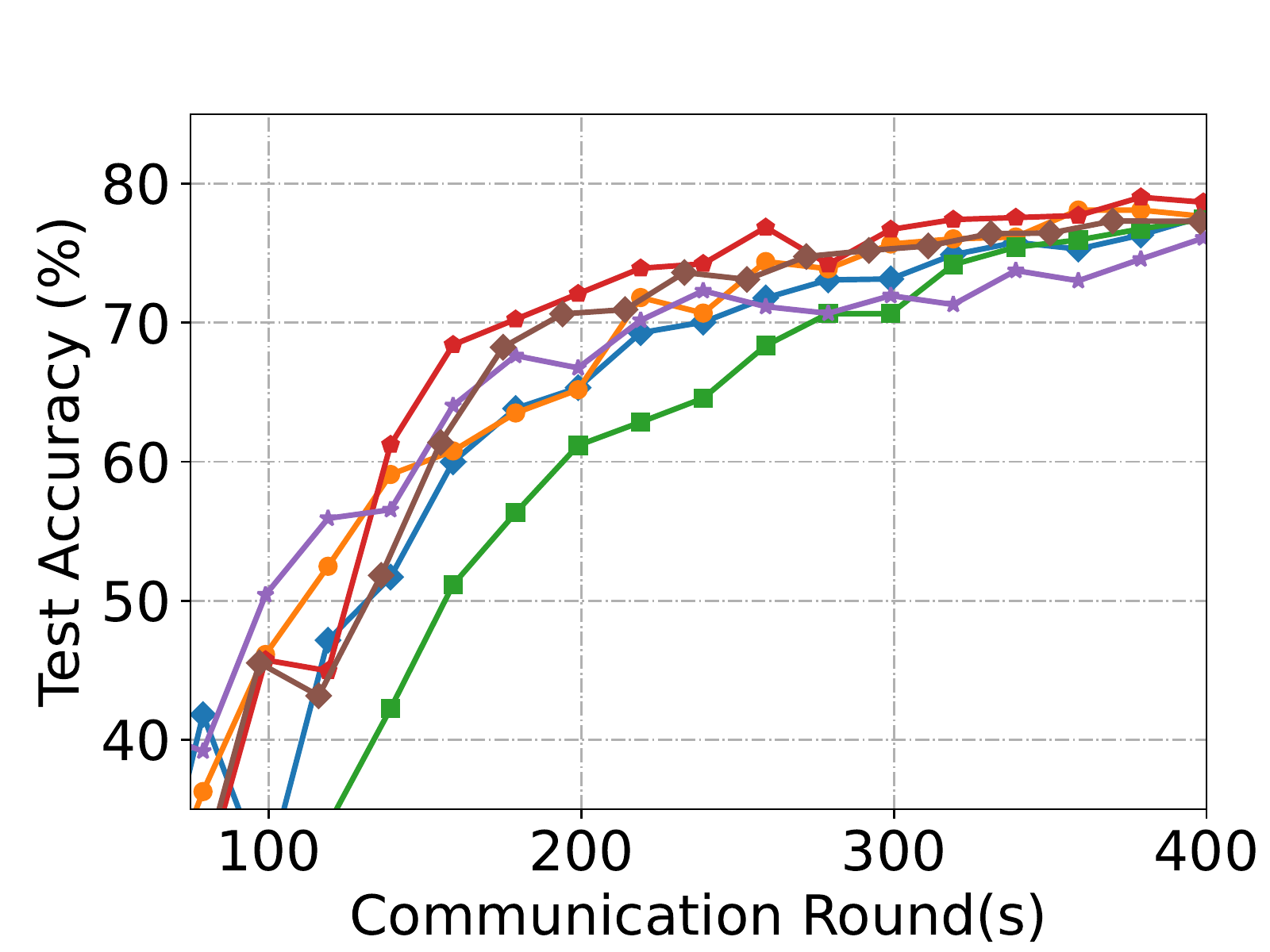}
}%
\\ \vspace{-12px}
\subfloat[20 clients]{
\centering
\includegraphics[width=0.28\textwidth]{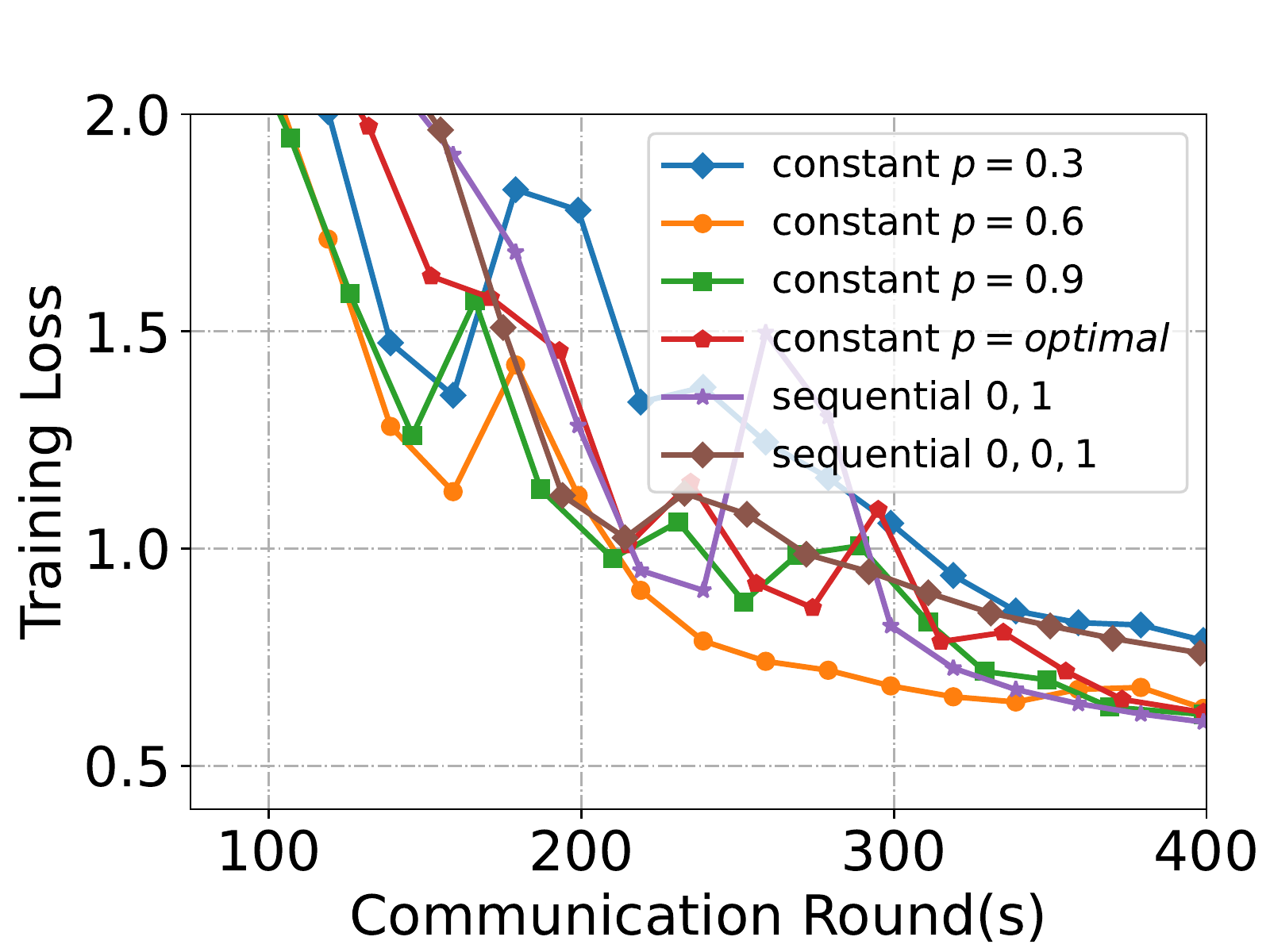}
}%
\hspace{12px}
\subfloat[40 clients]{
\centering
\includegraphics[width=0.28\textwidth]{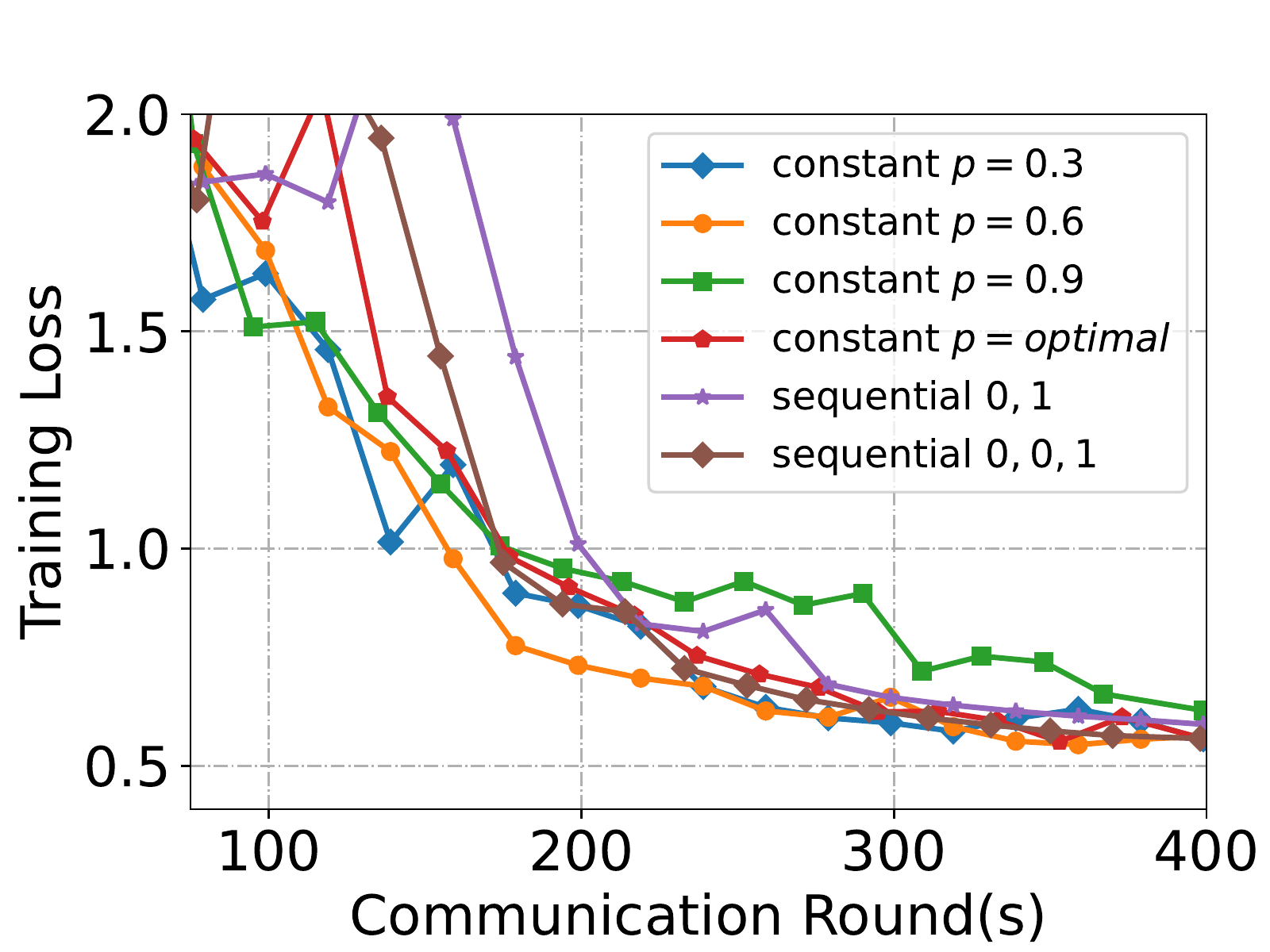}
}%
\hspace{12px}
\subfloat[100 clients]{
\centering
\includegraphics[width=0.28\textwidth]{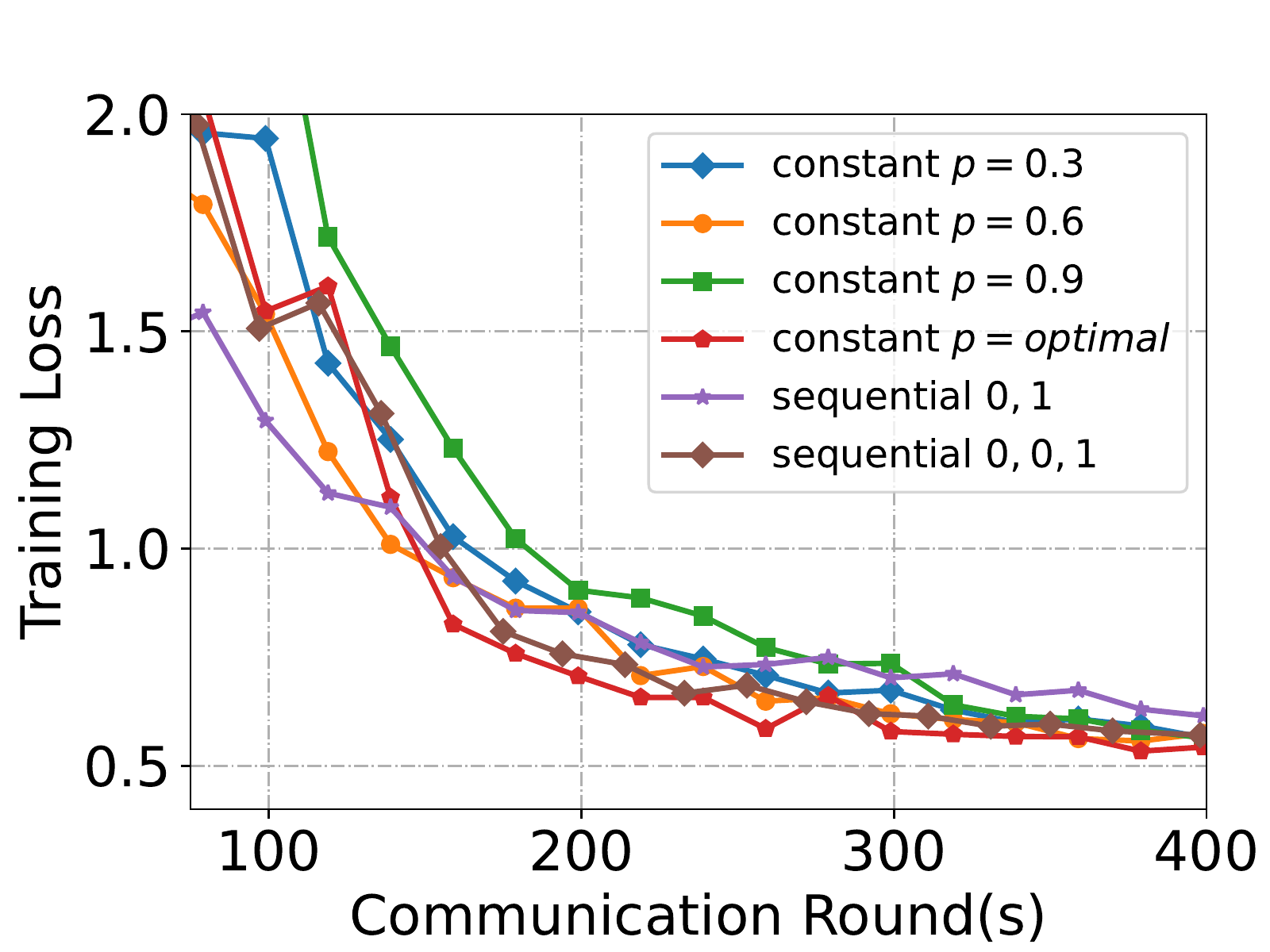}
}%
\caption{Comparison of different probability settings using training loss and test accuracy against the communication rounds for \algo{}by setting $K=20$.} \label{fig:fmnist_k_20}
\end{figure*}
\begin{figure*}[h]
\vspace{-10px}
\subfloat[20 clients]{
\centering
\includegraphics[width=0.28\textwidth]{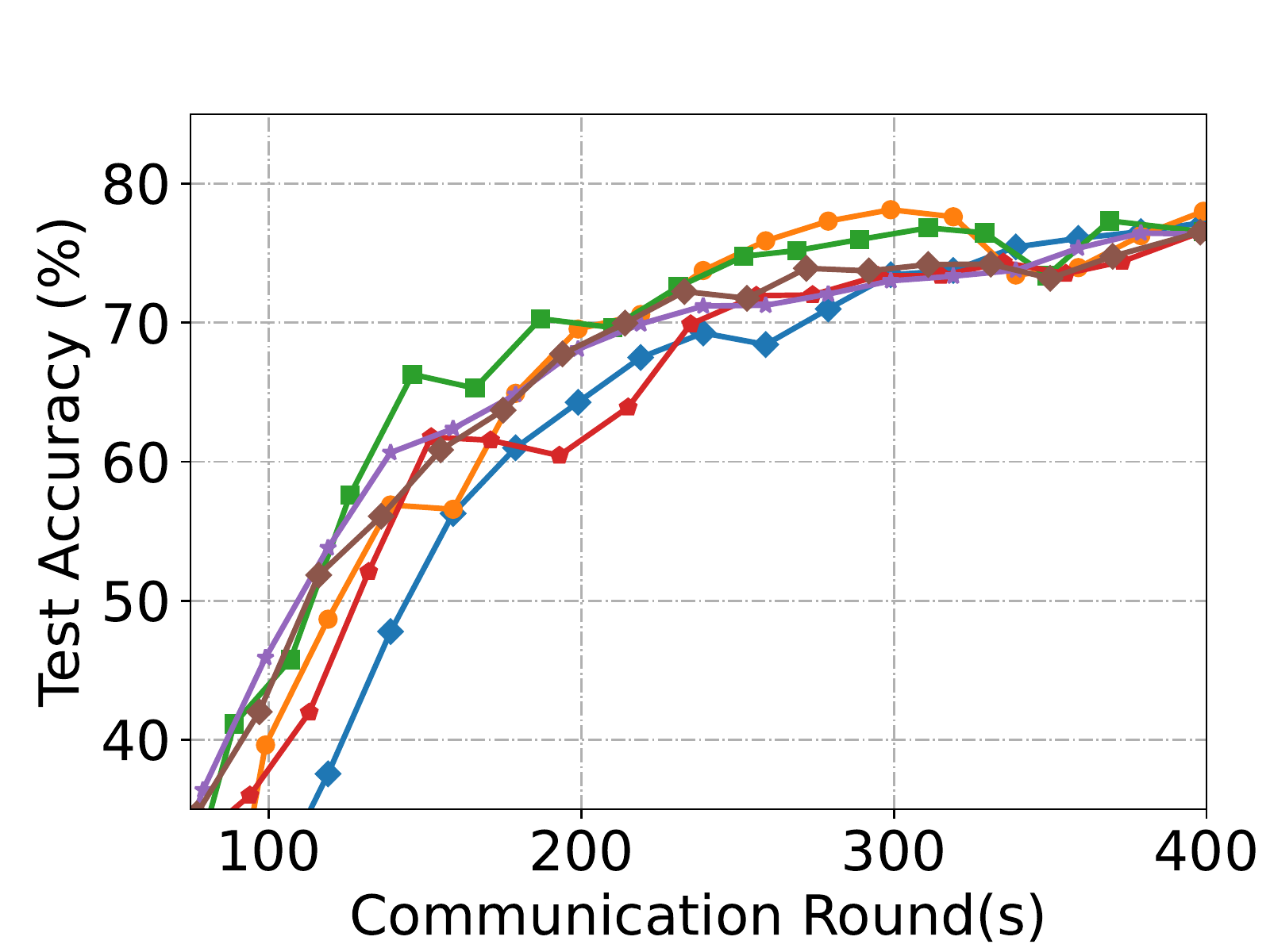}
}%
\hspace{12px}
\subfloat[40 clients]{
\centering
\includegraphics[width=0.28\textwidth]{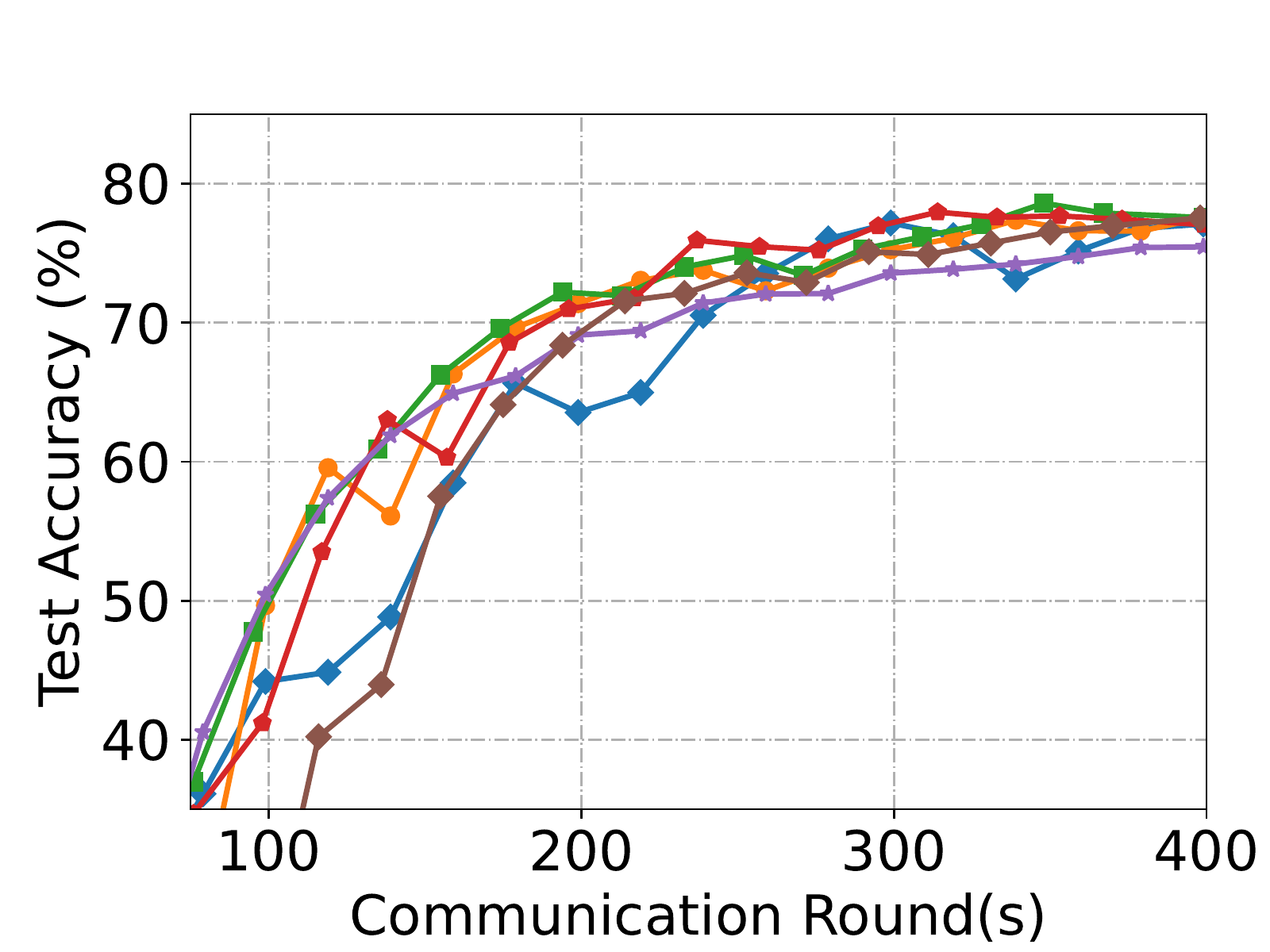}
}%
\hspace{12px}
\subfloat[100 clients]{
\centering
\includegraphics[width=0.28\textwidth]{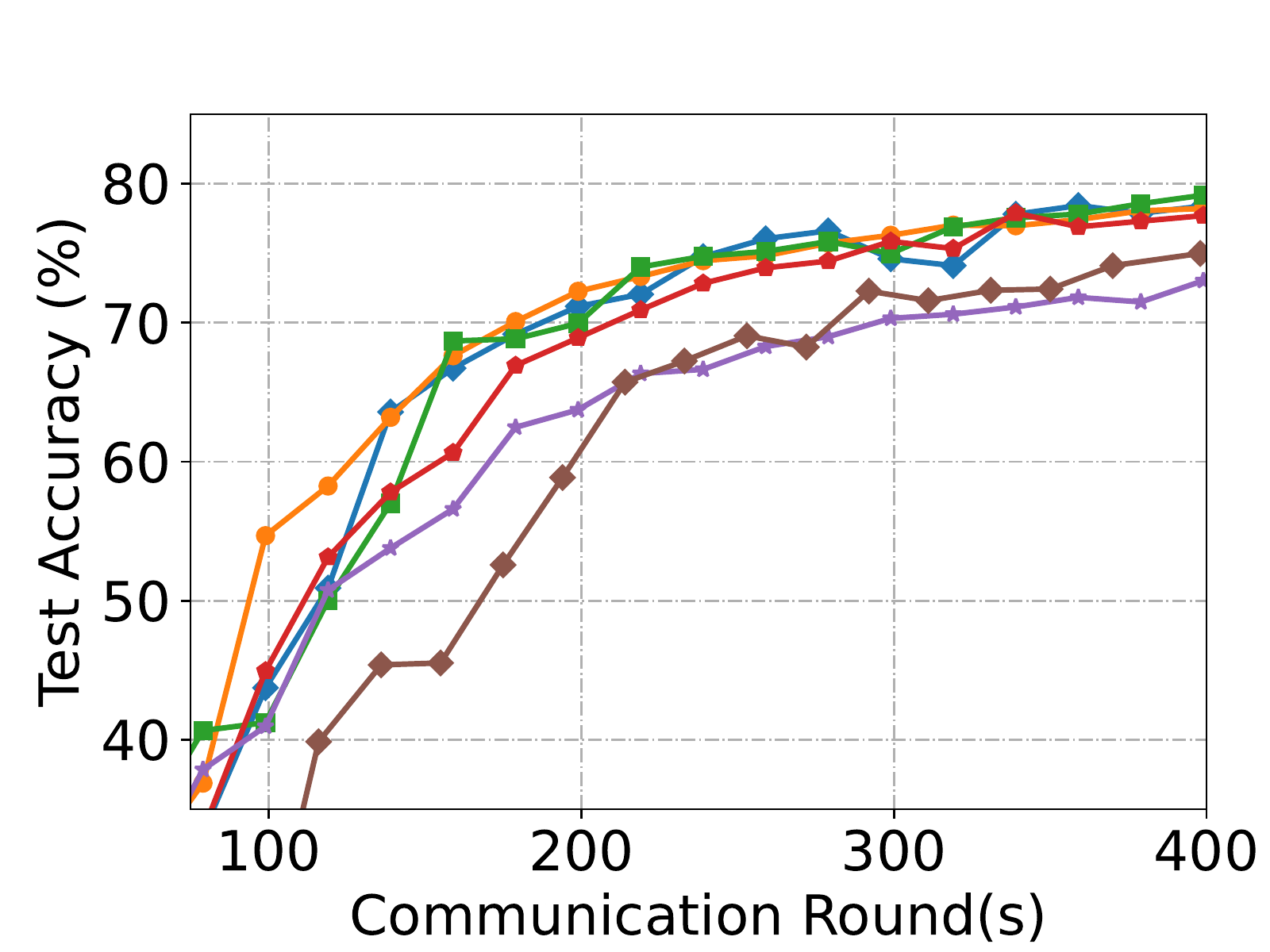}
}%
\\
\subfloat[20 clients]{
\centering
\includegraphics[width=0.28\textwidth]{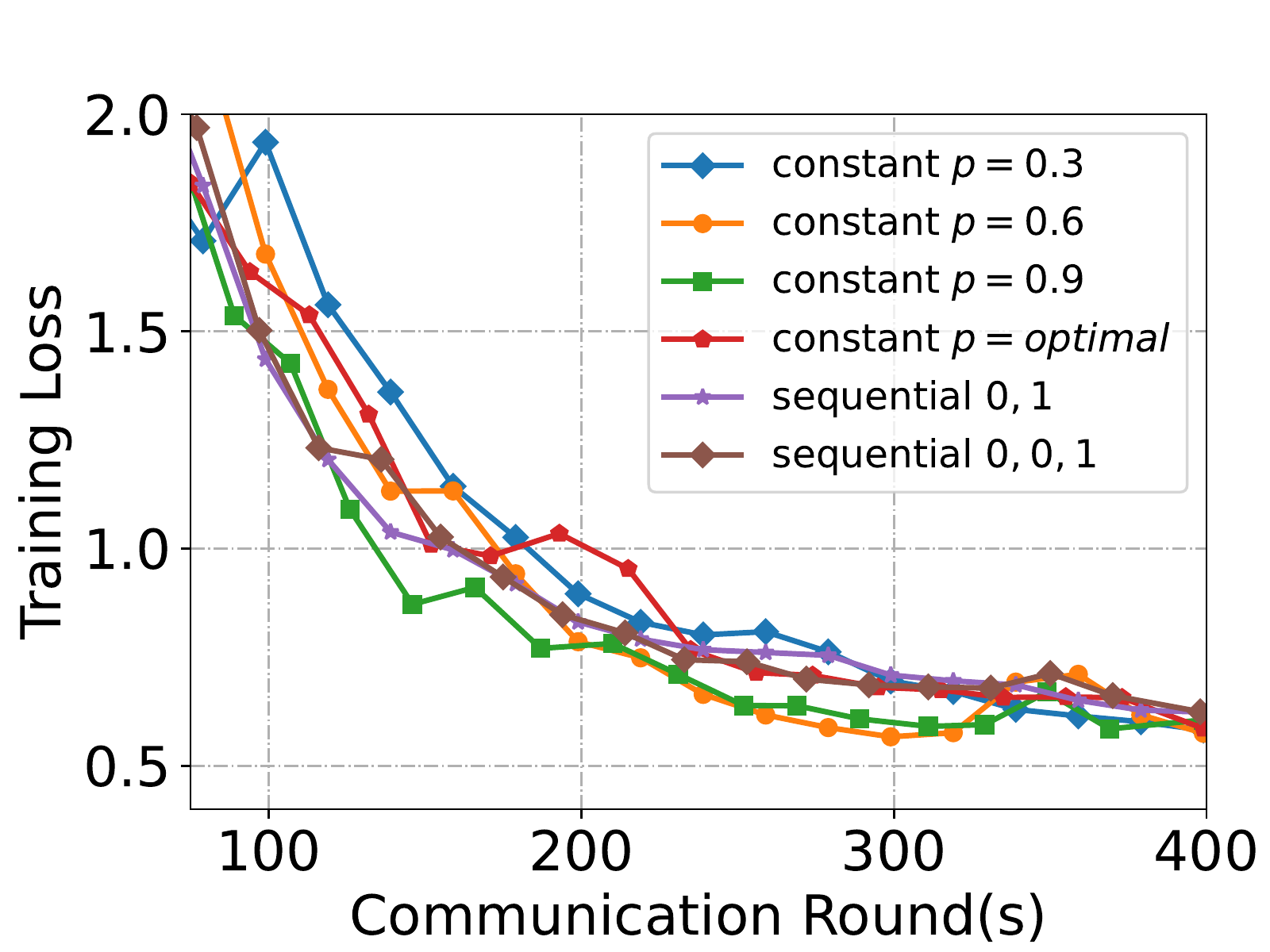}
}%
\hspace{12px}
\subfloat[40 clients]{
\centering
\includegraphics[width=0.28\textwidth]{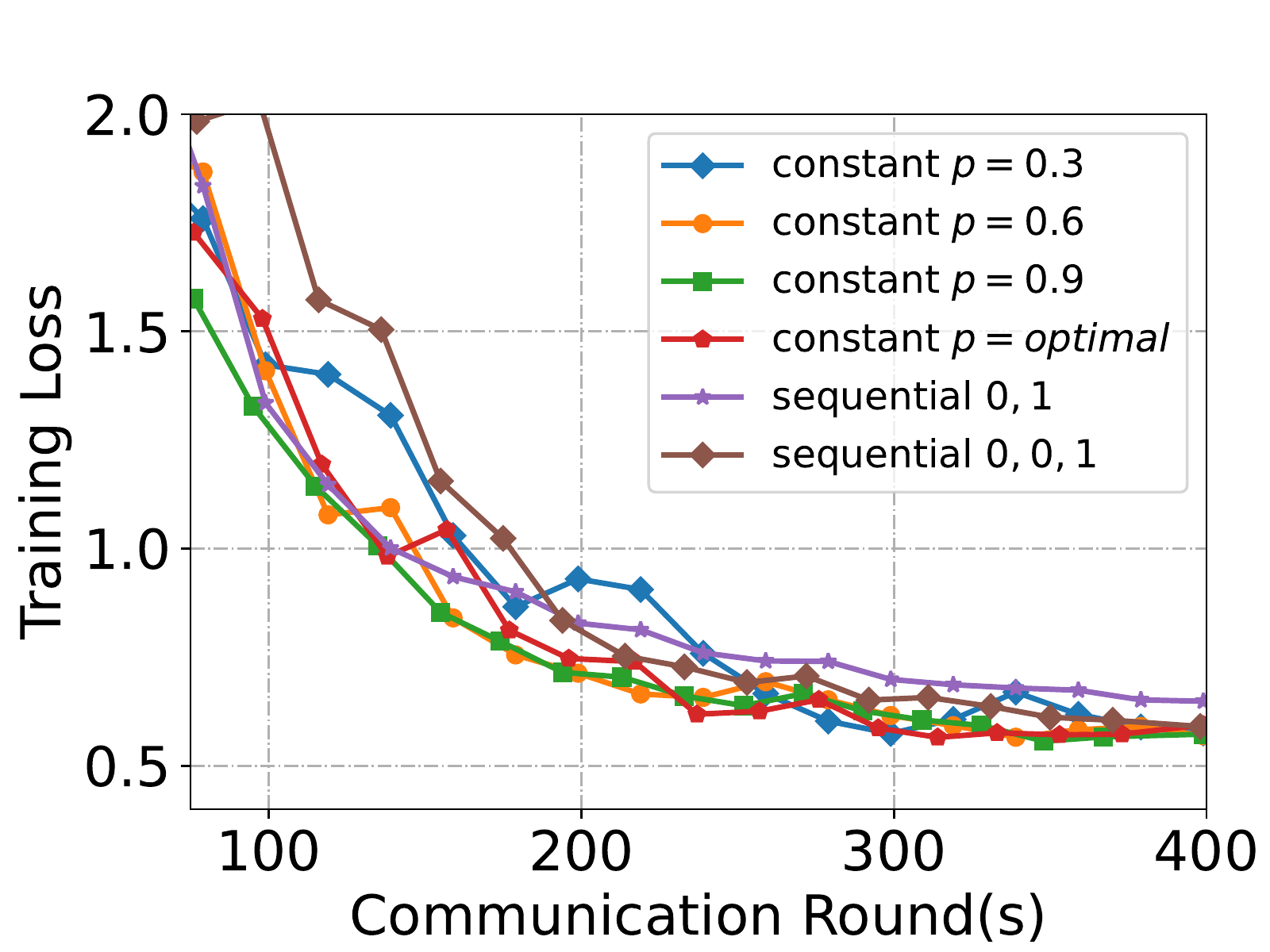}
}%
\hspace{12px}
\subfloat[100 clients]{
\centering
\includegraphics[width=0.28\textwidth]{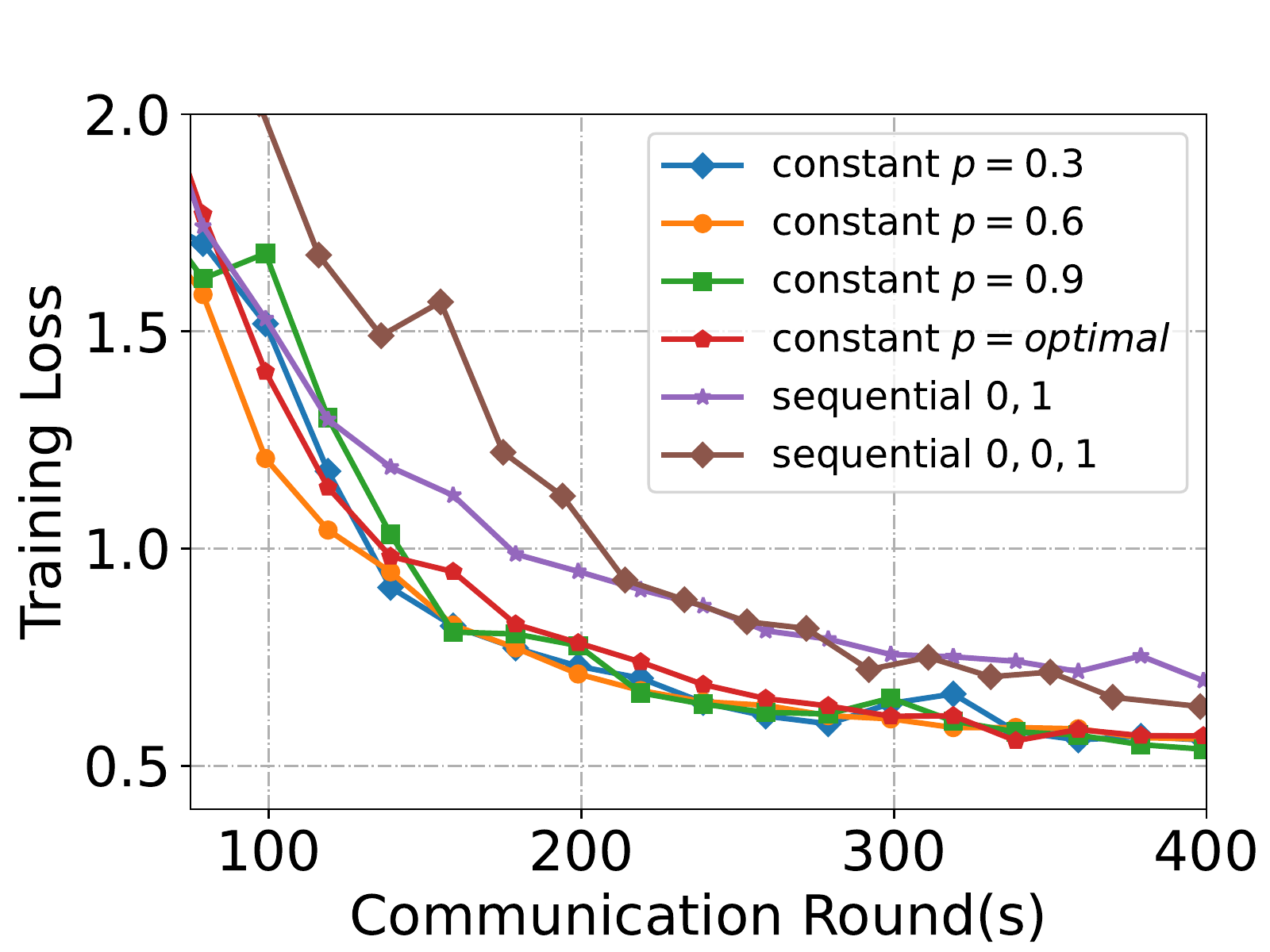}
}%
\centering
\caption{Comparison of different probability settings using training loss and test accuracy against the communication rounds for \algo{}by setting $K=5$.} \label{fig:fmnist_k_5}
\end{figure*}

\subsubsection{Comparison among Various Baselines} \label{appendix:subsec:fmnist_baselines}

In Section \ref{sec:experiment}, we present the comparison in a tabular format. Then, in this part, we visualize the training progress as well as introduce more results under different $K$s with the help of Figures \ref{fig:baselines1}, \ref{fig:baselines2}, and \ref{fig:baselines3}. In specific, Figure \ref{subfig:w20_k10_baselines} -- \ref{subfig:w100_k10_baselines} are summarized into Table \ref{tab:exp_baseline}, while the rest  explore the efficiency of \algo{}under more scenarios. As described in Table \ref{tab:exp_baseline} and the first six figures, when $K = 10$, the conclusions we can draw include: (I) the final test accuracy of \algo{}exceeds that of the baselines; (II) \algo{}is able to attain an accurate model with less communication and computation consumption. Next, we evaluate the performance of \algo{}when $K=20$ and $K=5$. 

\begin{itemize}[leftmargin=3mm]
    \item \textbf{$K=20$ (Figure \ref{subfig:w20_k20_baselines} -- \ref{subfig:w100_k20_baselines}):} In this case, the gradient computation of a miner is around twice as that of an anchor. Therefore, \algo{}may consume less computation overhead than FedAvg and SCAFFOLD. In terms of final test accuracy, these baselines achieve similar results in all cases, while \algo{} achieves the performance with less computation overhead.  
    \item \textbf{$K=5$ (Figure \ref{subfig:w20_k5_baselines} -- \ref{subfig:w100_k5_baselines}):} \algo{}eventually achieves the best accuracy compared to the existing works. Additionally, we can obtain a well-performed model with less computational consumption. These two phenomena are in support of the statements mentioned above. 
\end{itemize}

\begin{figure*}[h]
\centering
\subfloat[20 clients ($K = 5$)]{
\label{subfig:w20_k5_baselines}
\centering
\includegraphics[width=0.28\textwidth]{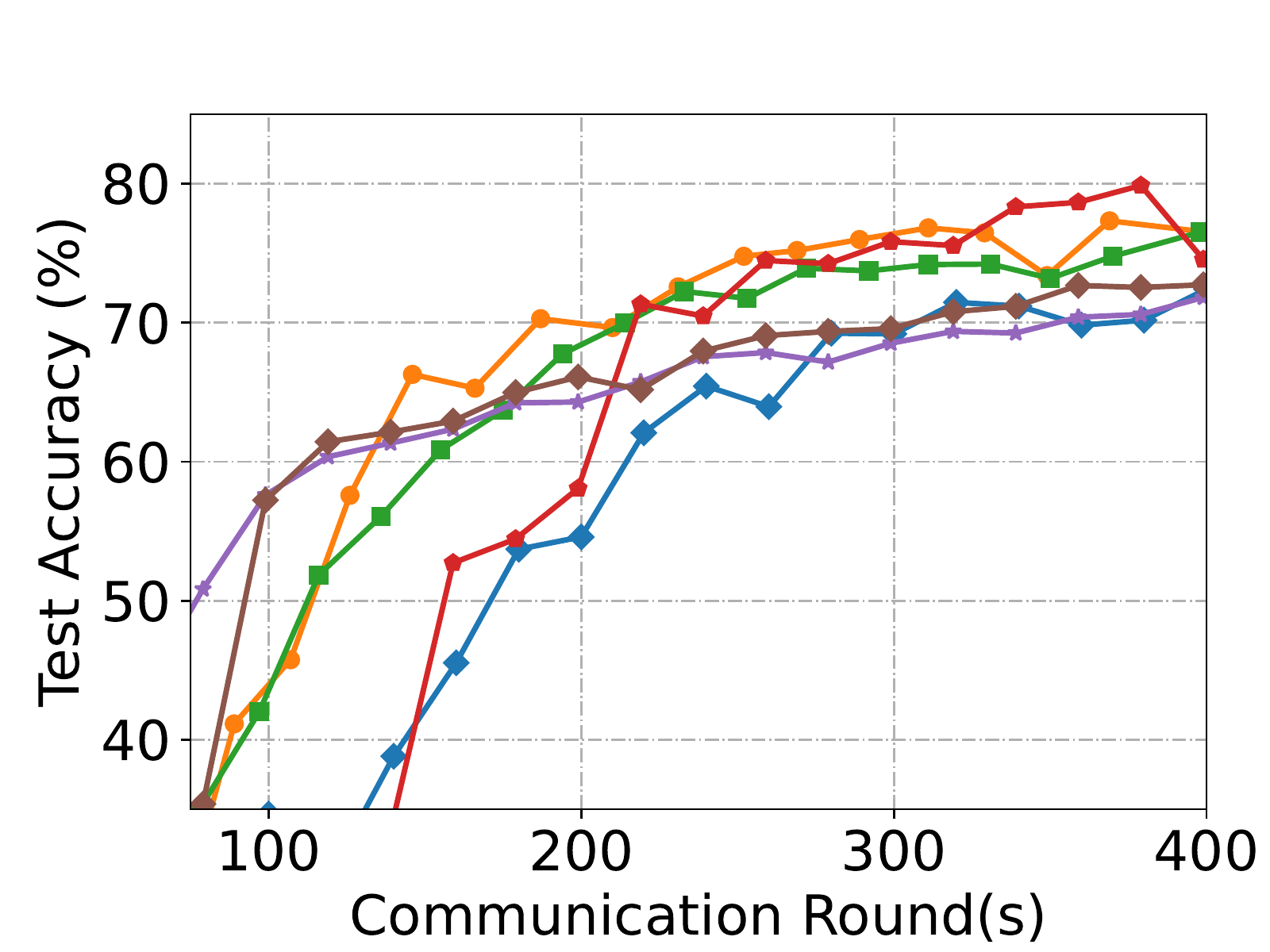}
}%
\hspace{12px}
\subfloat[40 clients ($K = 5$)]{
\centering
\includegraphics[width=0.28\textwidth]{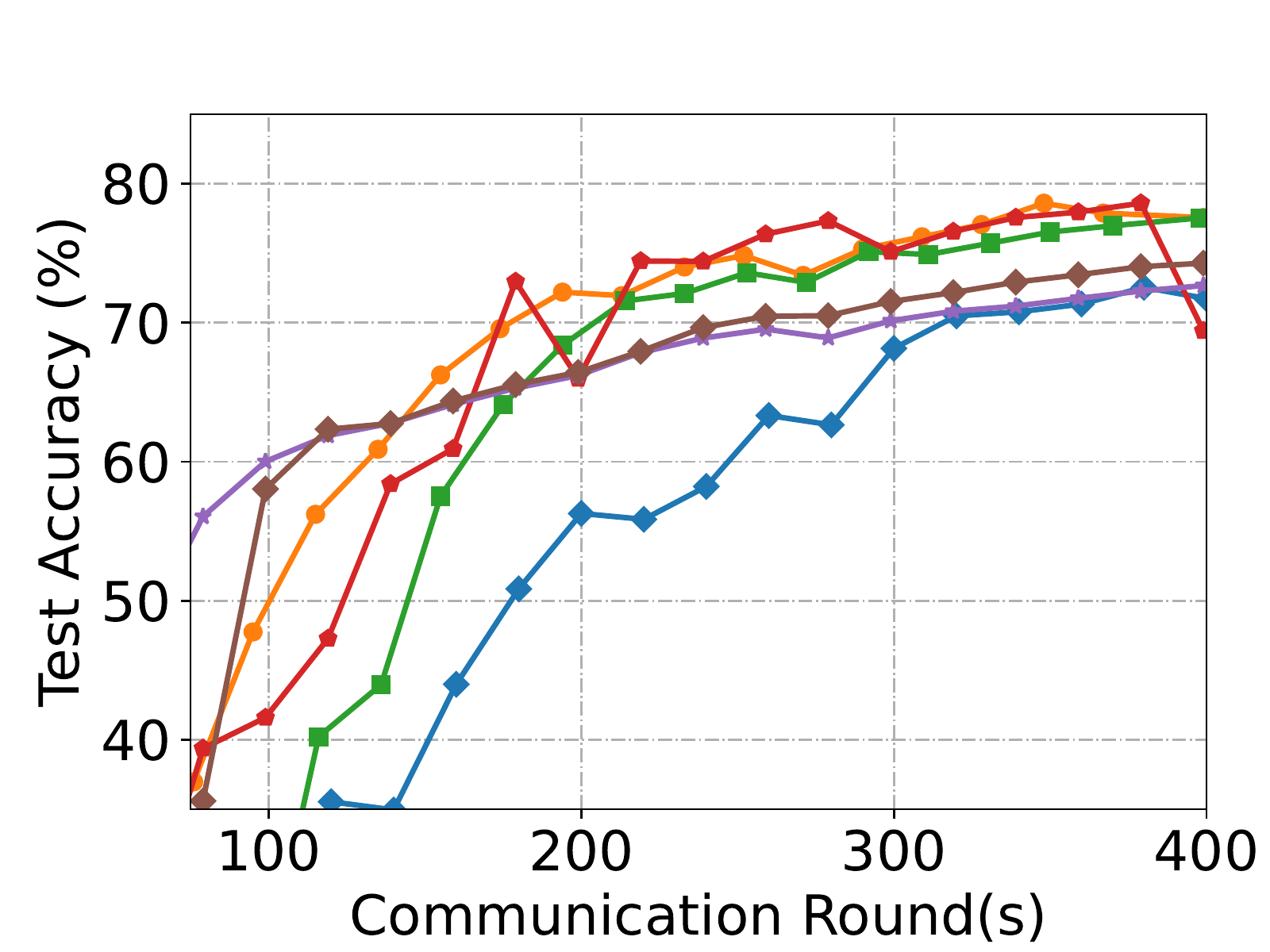}
}%
\hspace{12px}
\subfloat[100 clients ($K = 5$)]{
\centering
\includegraphics[width=0.28\textwidth]{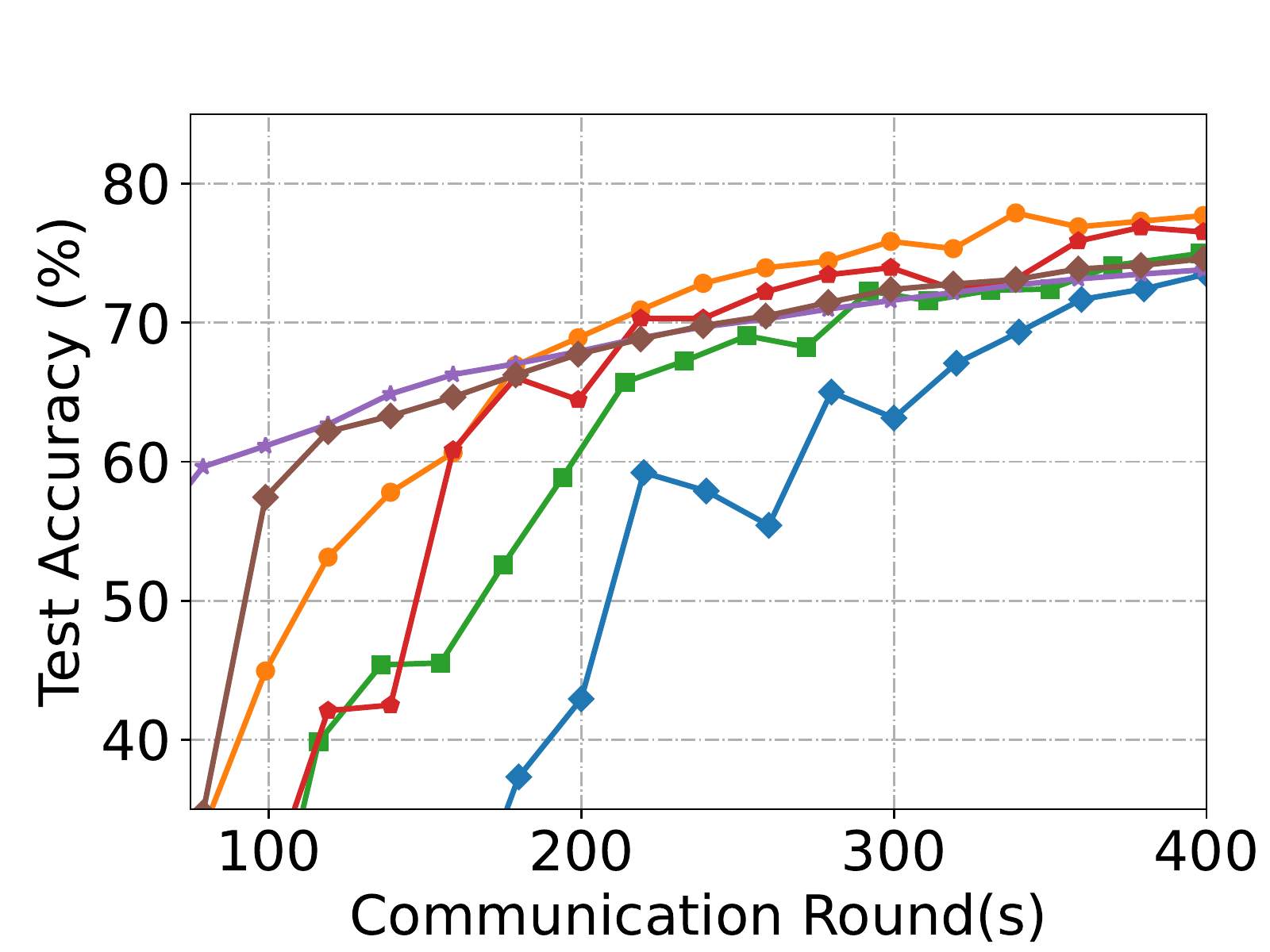}
}%
\\ \vspace{-12px}
\subfloat[20 clients ($K = 5$)]{
\centering
\includegraphics[width=0.28\textwidth]{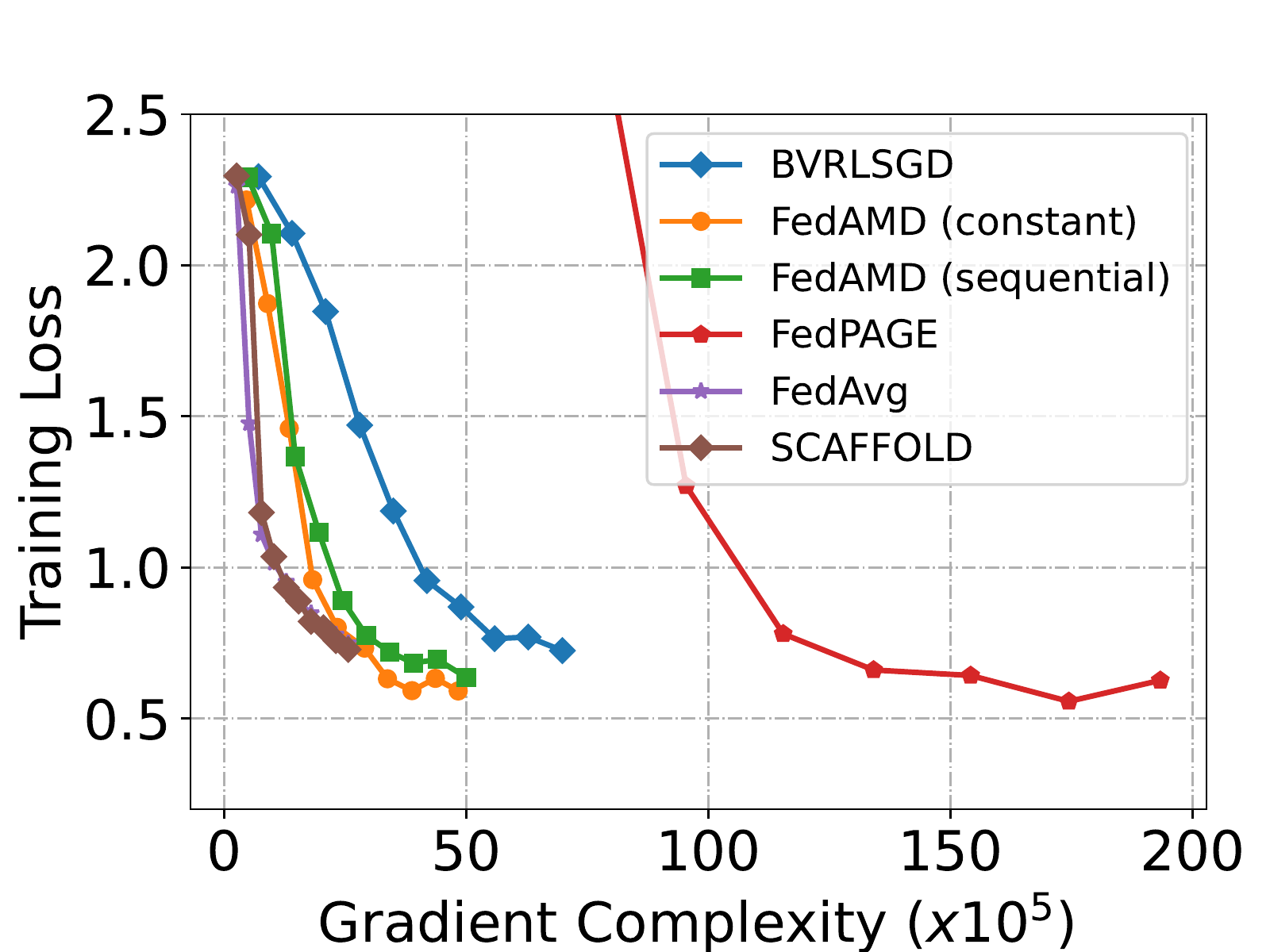}
}%
\hspace{12px}
\subfloat[40 clients ($K = 5$)]{
\centering
\includegraphics[width=0.28\textwidth]{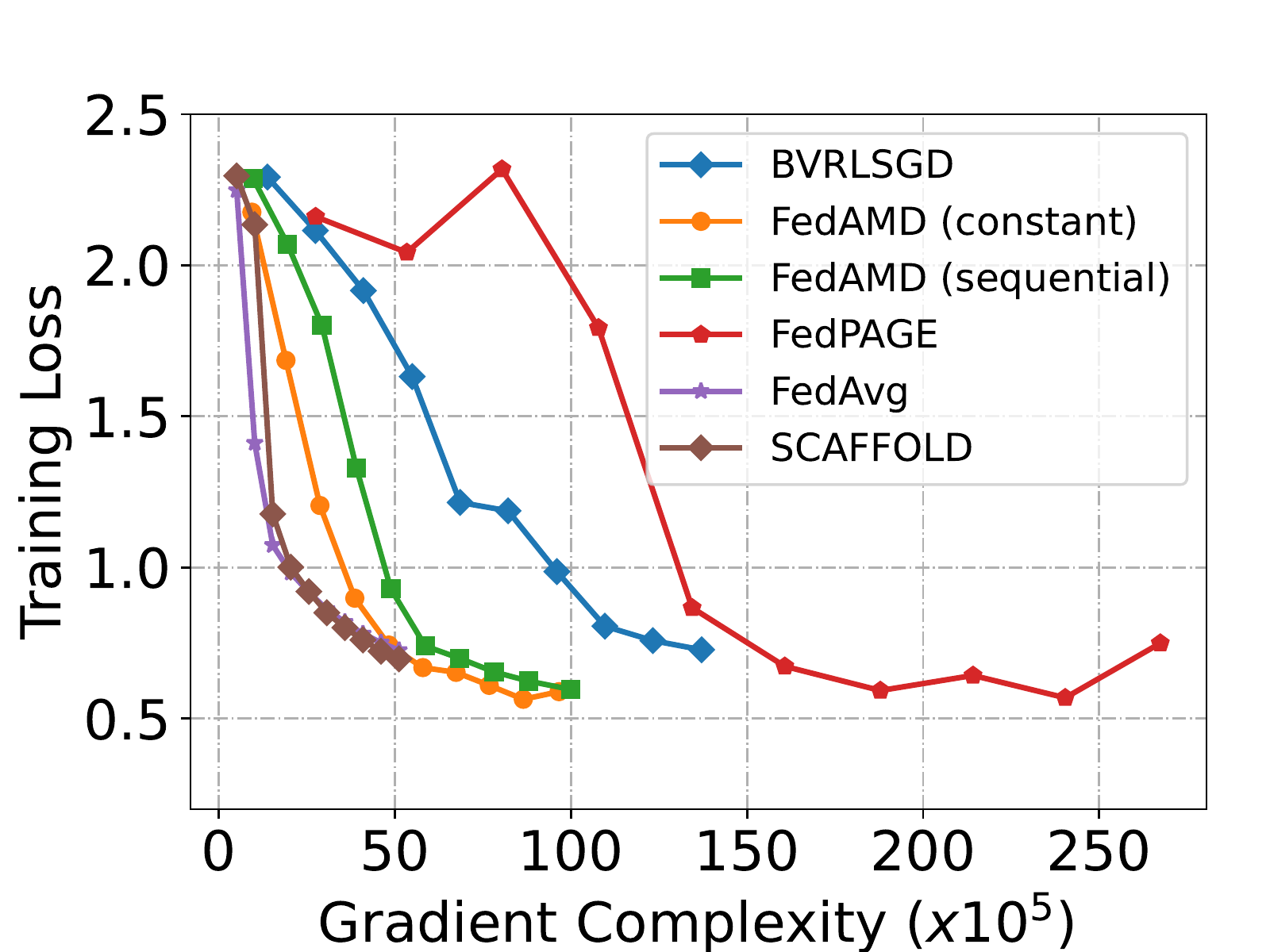}
}%
\hspace{12px}
\subfloat[100 clients ($K = 5$)]{
\label{subfig:w100_k5_baselines}
\centering
\includegraphics[width=0.28\textwidth]{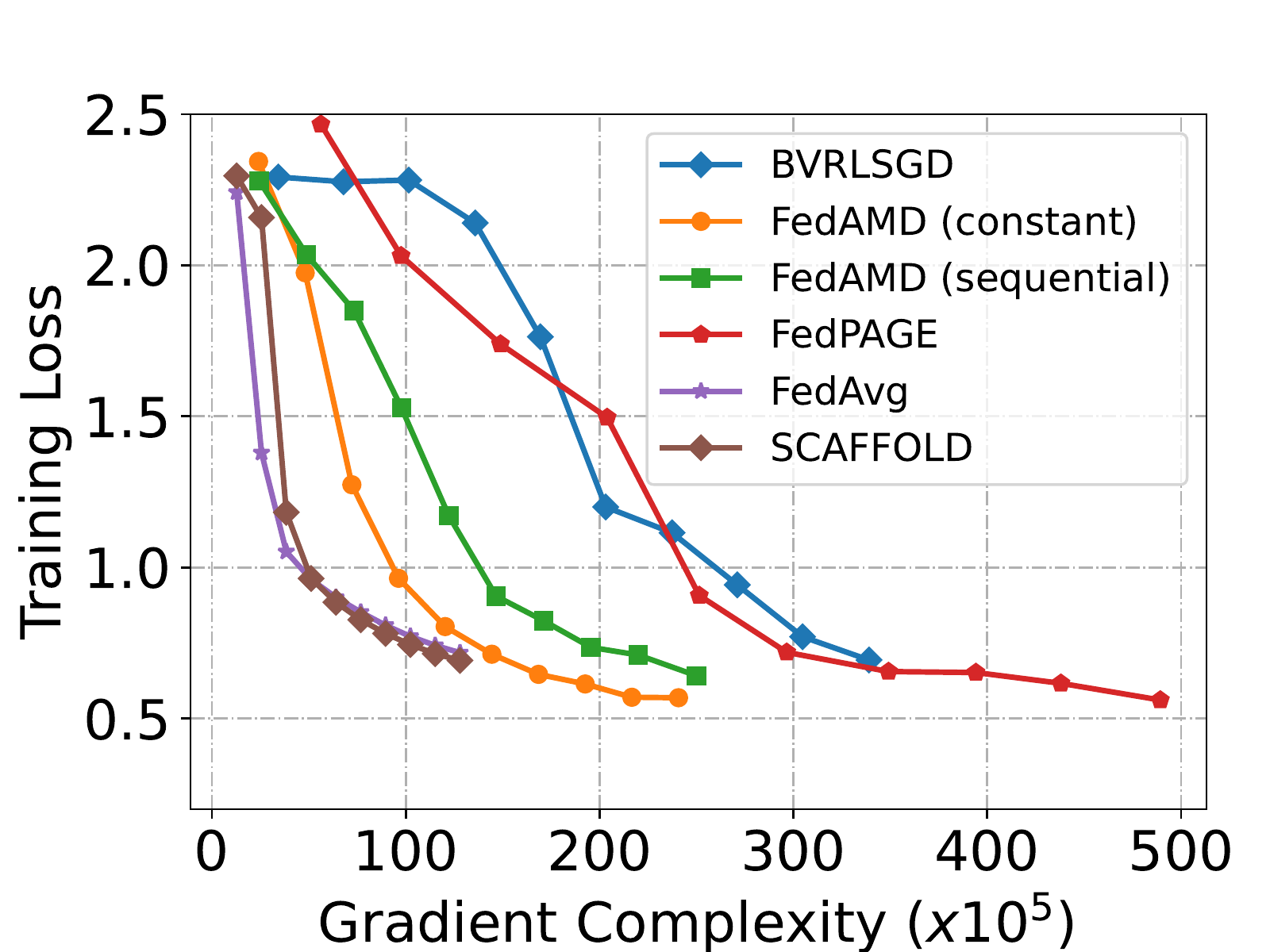}
}%
\\ 
\centering
\caption{Comparison of different algorithms using test accuracy against the communication rounds and training loss against gradient complexity when the number of local iterations is 5 ($K = 5$). } \label{fig:baselines1}
\end{figure*}

\begin{figure*}[h]
\vspace{-10px}
\centering
\subfloat[20 clients ($K = 10$)]{
\label{subfig:w20_k10_baselines}
\centering
\includegraphics[width=0.28\textwidth]{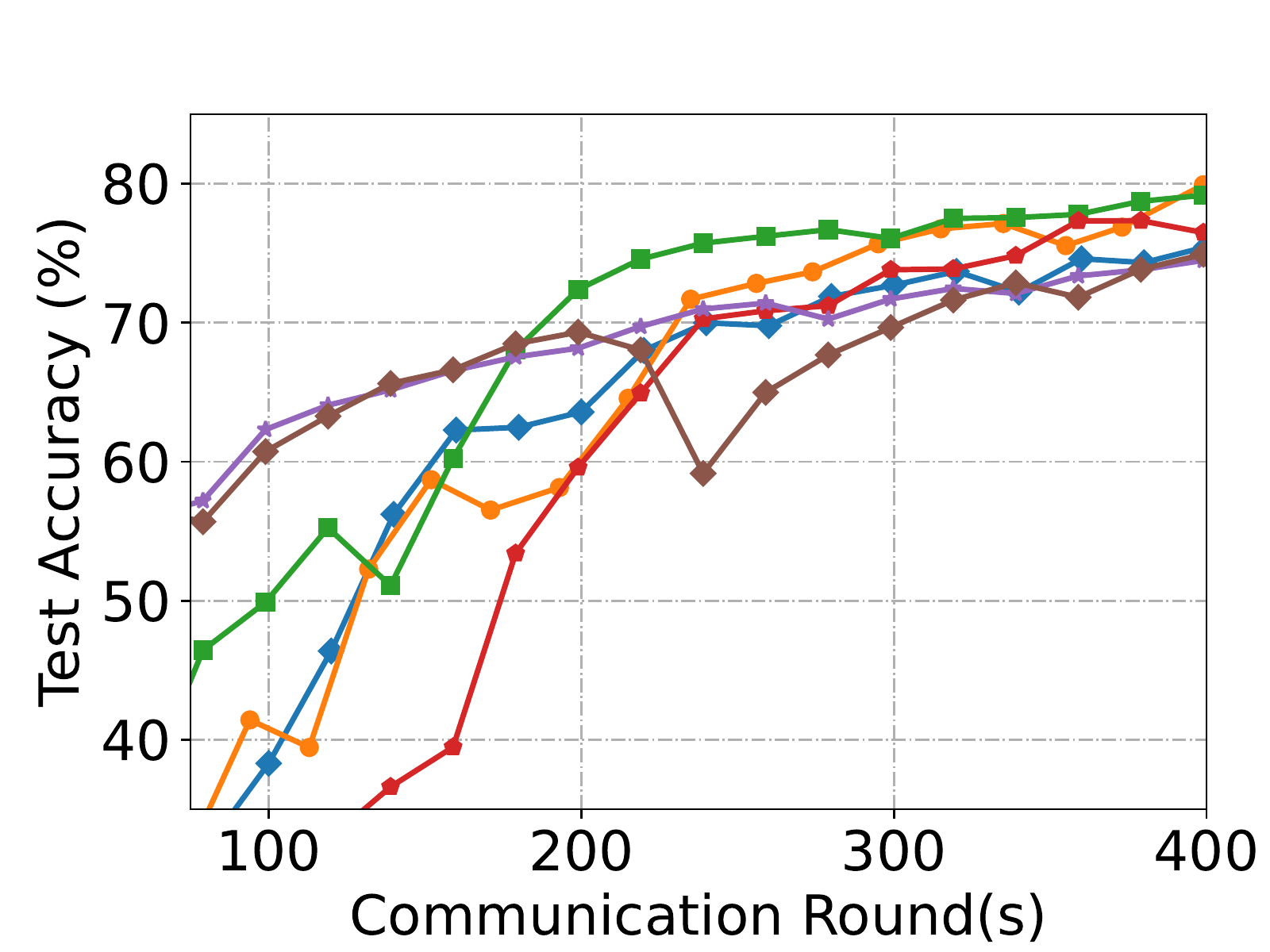}
}%
\hspace{12px}
\subfloat[40 clients ($K = 10$)]{
\centering
\includegraphics[width=0.28\textwidth]{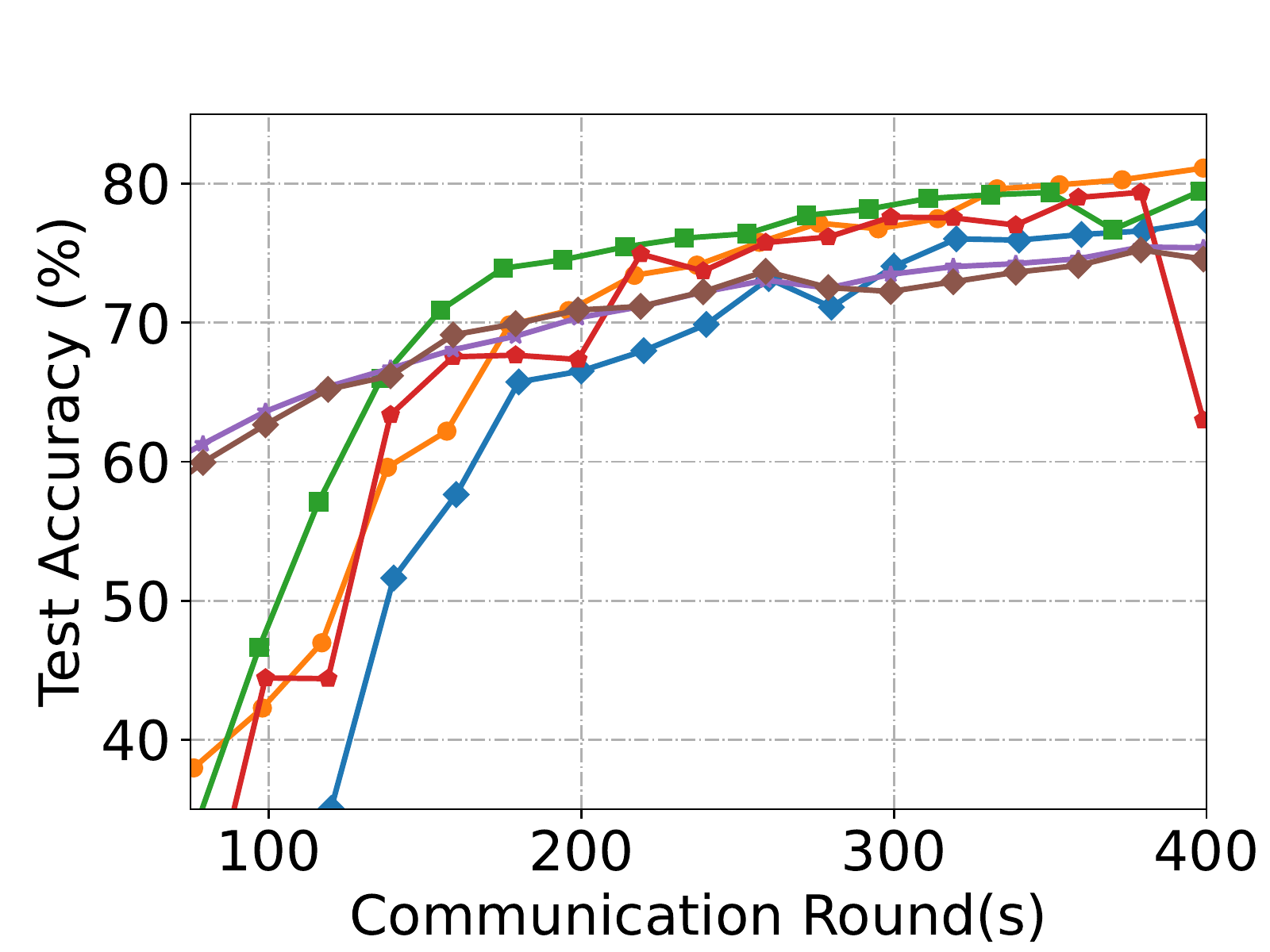}
}%
\hspace{12px}
\subfloat[100 clients ($K = 10$)]{
\centering
\includegraphics[width=0.28\textwidth]{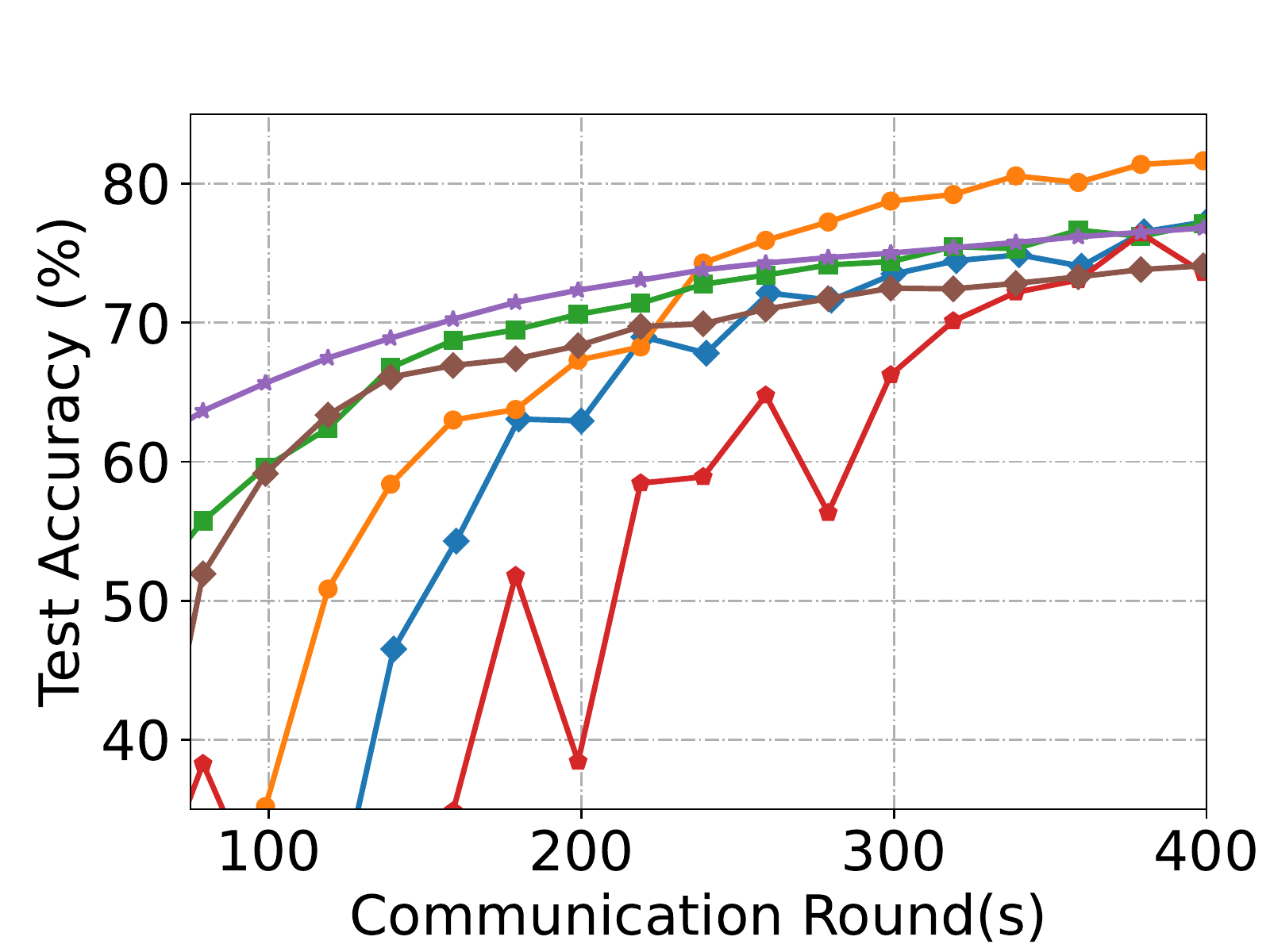}
}%
\\ \vspace{-12px}
\subfloat[20 clients ($K = 10$)]{
\centering
\includegraphics[width=0.28\textwidth]{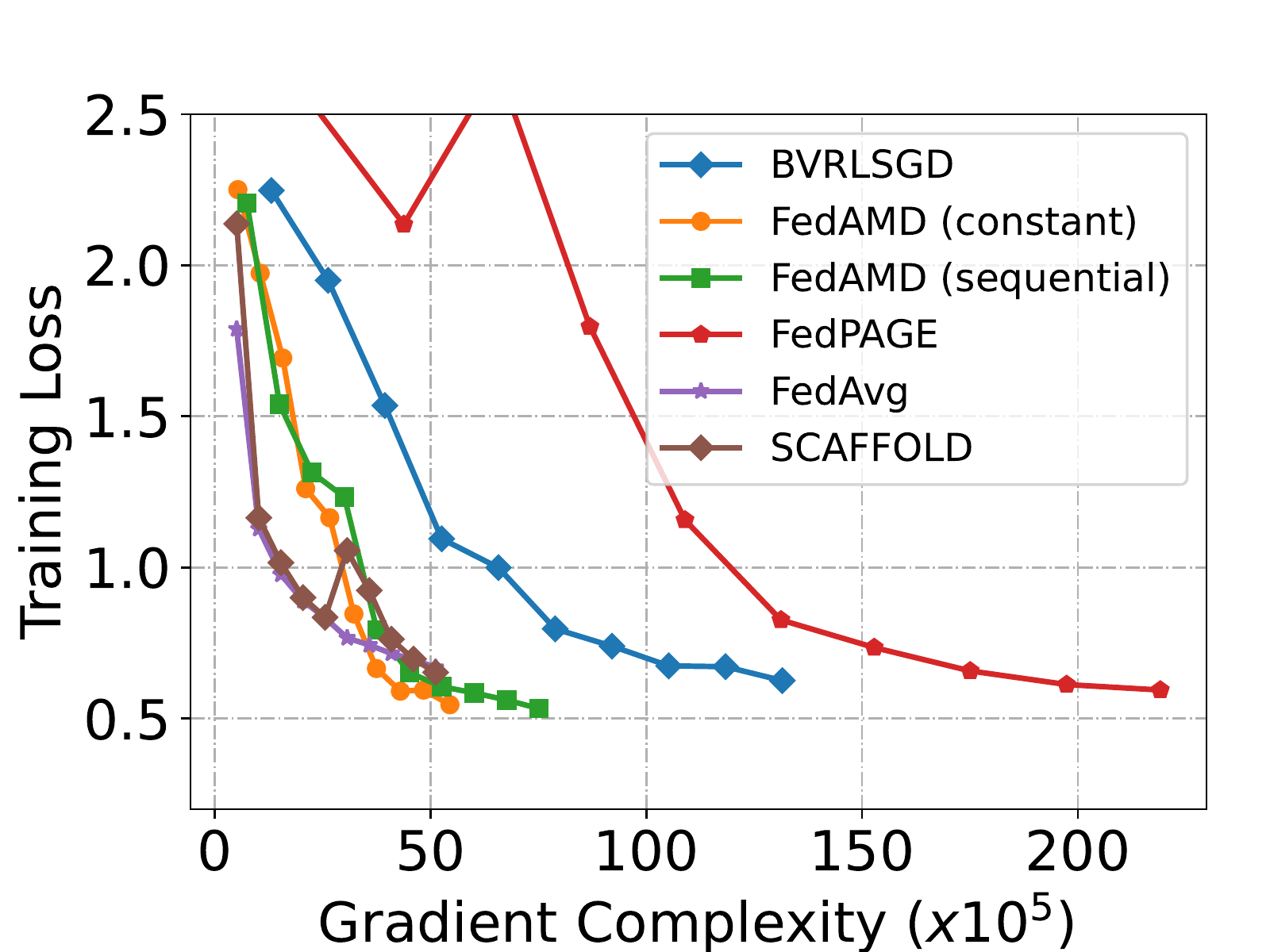}
}%
\hspace{12px}
\subfloat[40 clients ($K = 10$)]{
\centering
\includegraphics[width=0.28\textwidth]{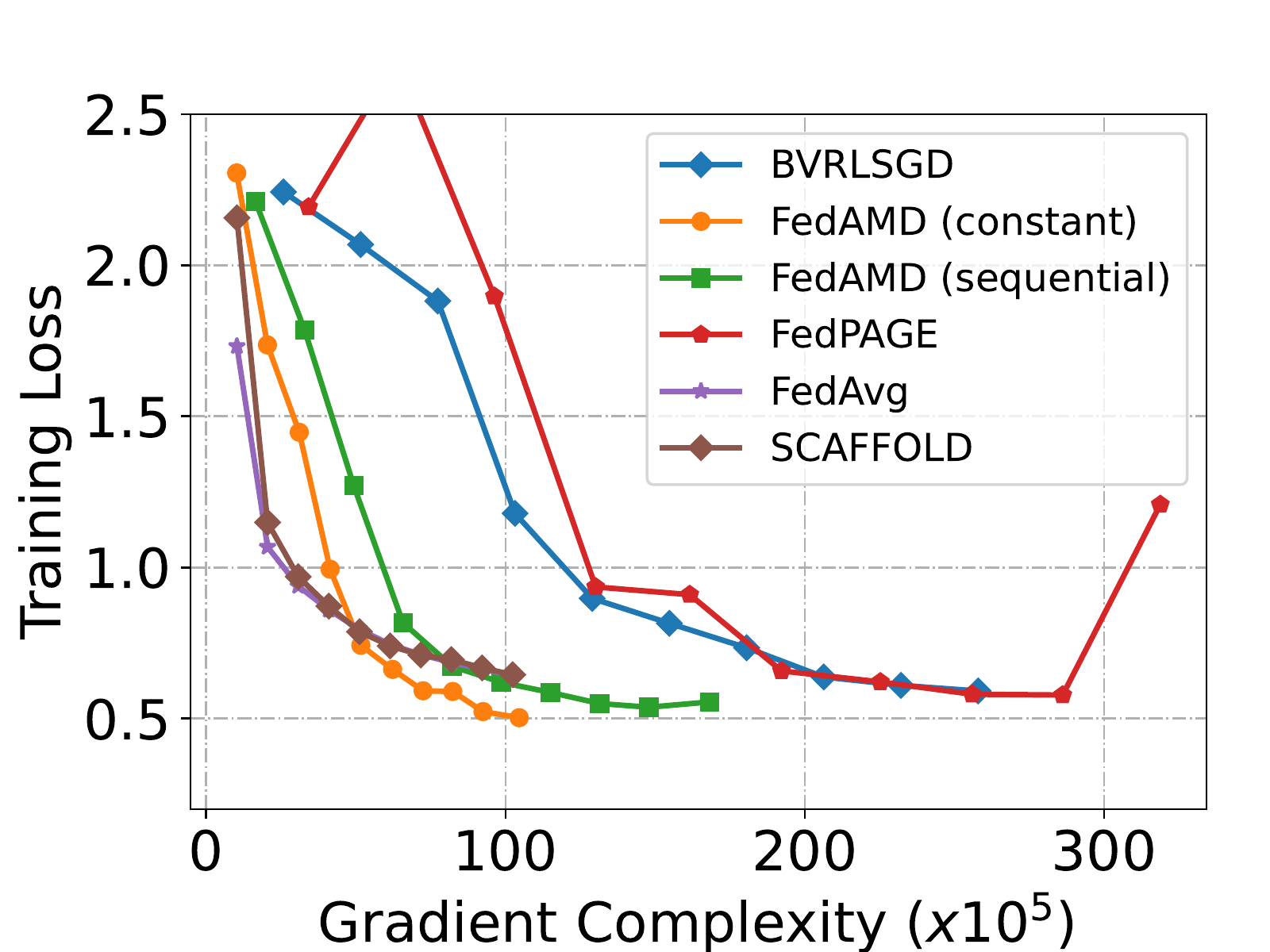}
}%
\hspace{12px}
\subfloat[100 clients ($K = 10$)]{
\label{subfig:w100_k10_baselines}
\centering
\includegraphics[width=0.28\textwidth]{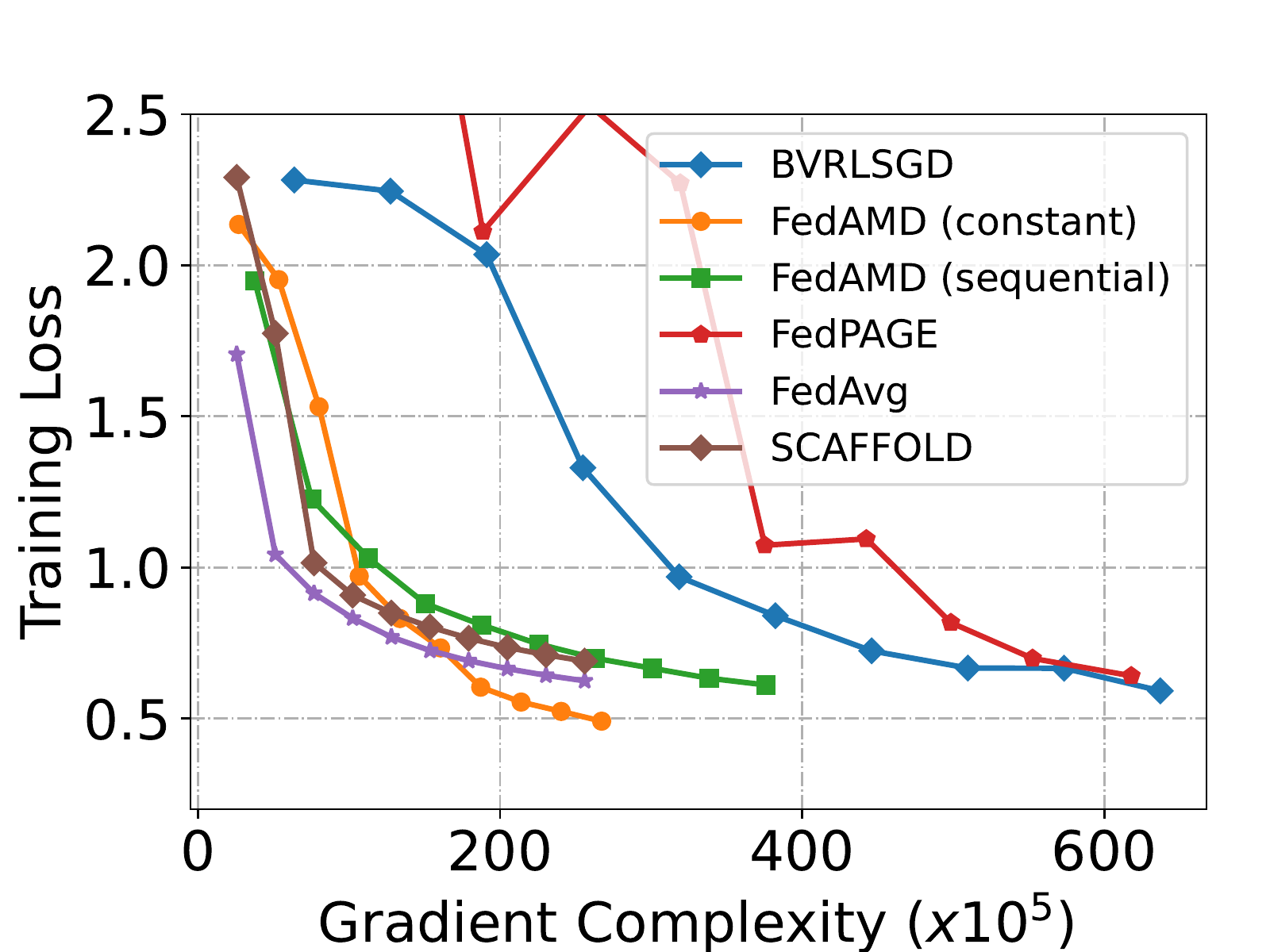}
}%
\\ 
\centering
\caption{Comparison of different algorithms using test accuracy against the communication rounds and training loss against gradient complexity when the number of local iterations is 10 ($K = 10$). } \label{fig:baselines2}
\end{figure*}

\begin{figure*}[h]
\vspace{-10px}
\centering

\subfloat[20 clients ($K = 20$)]{
\label{subfig:w20_k20_baselines}
\centering
\includegraphics[width=0.28\textwidth]{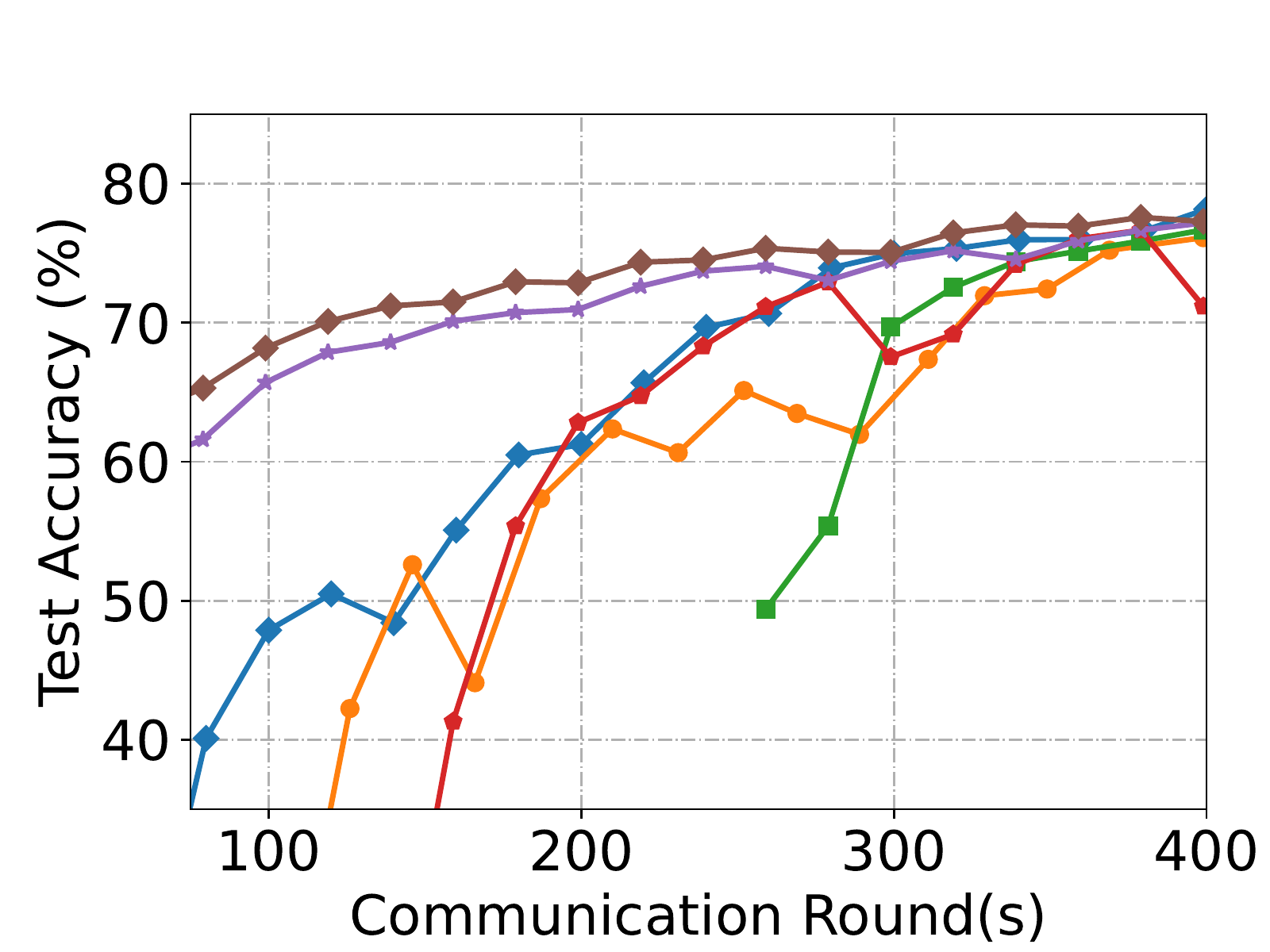}
}%
\hspace{12px}
\subfloat[40 clients ($K = 20$)]{
\centering
\includegraphics[width=0.28\textwidth]{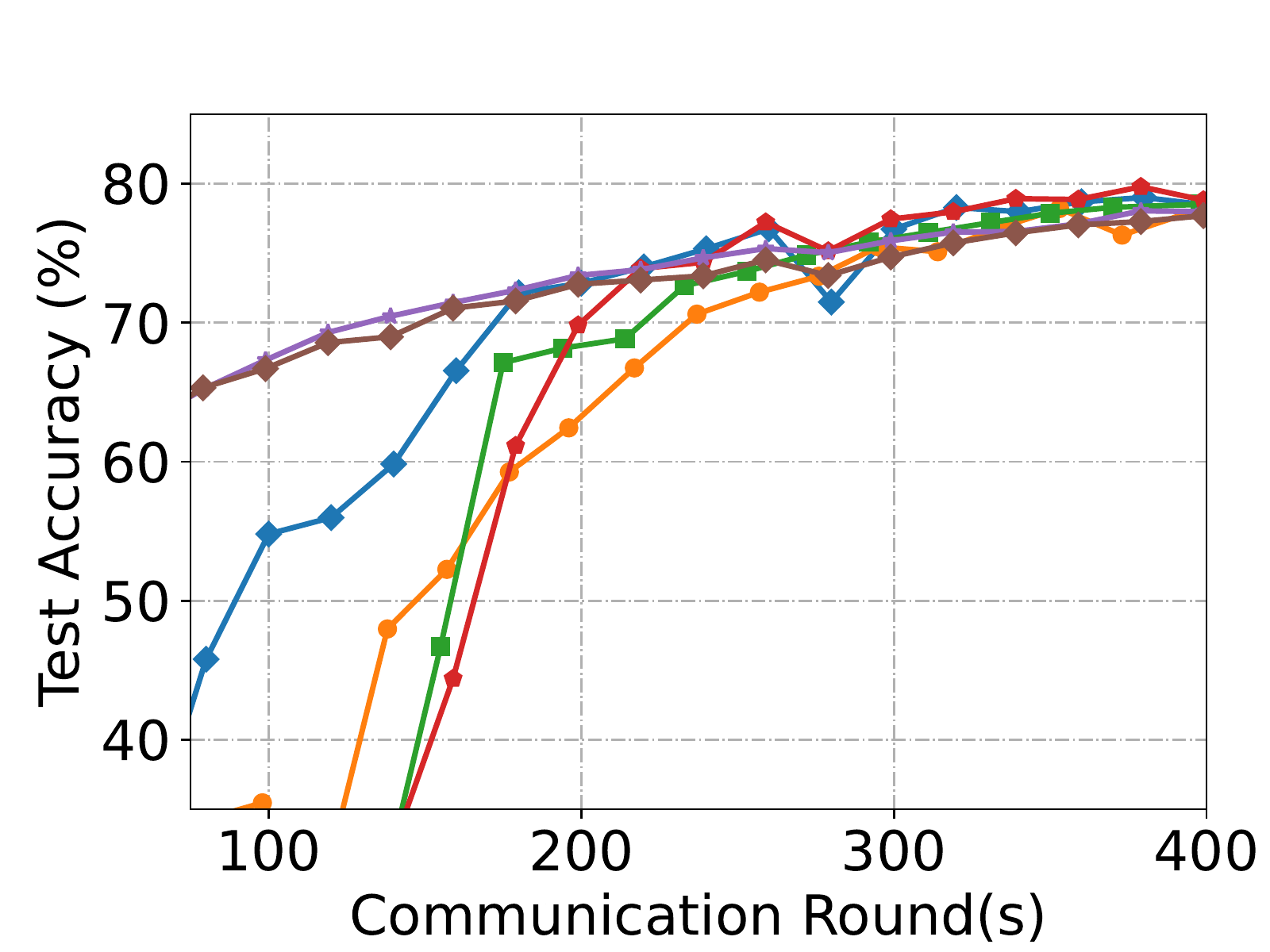}
}%
\hspace{12px}
\subfloat[100 clients ($K = 20$)]{
\centering
\includegraphics[width=0.28\textwidth]{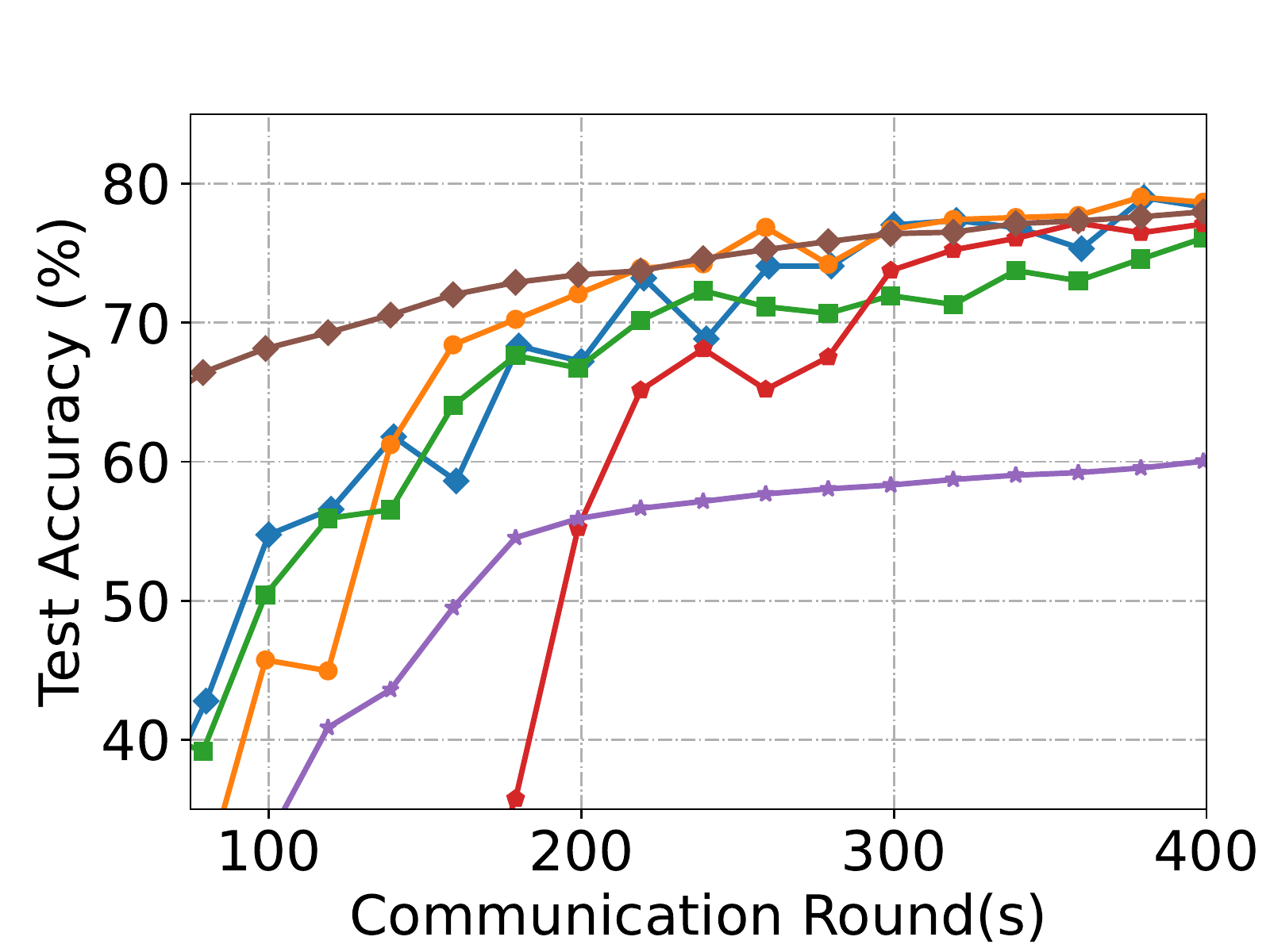}
}%
\\ \vspace{-12px}
\subfloat[20 clients ($K = 20$)]{
\centering
\includegraphics[width=0.28\textwidth]{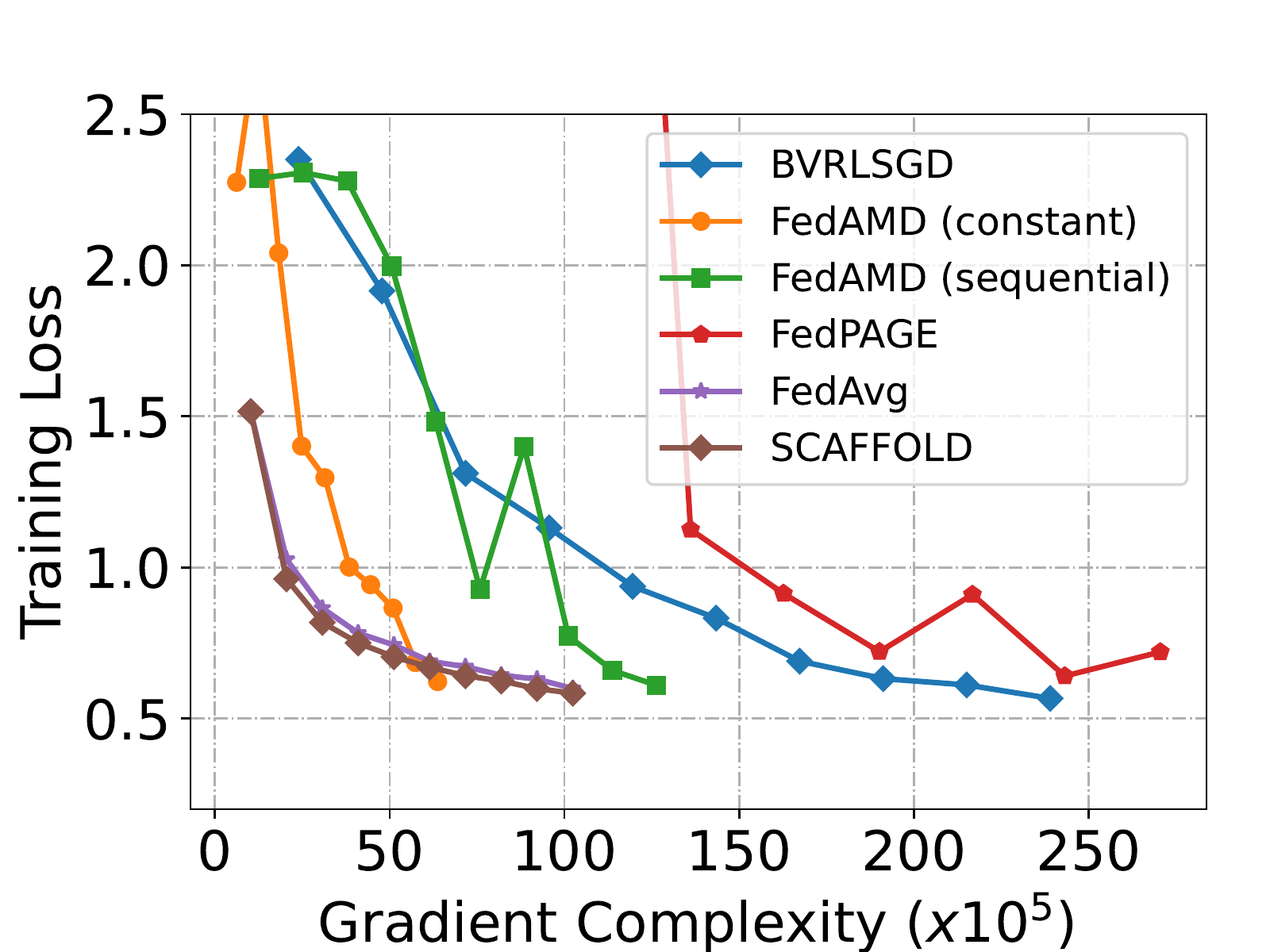}
}%
\hspace{12px}
\subfloat[40 clients ($K = 20$)]{
\centering
\includegraphics[width=0.28\textwidth]{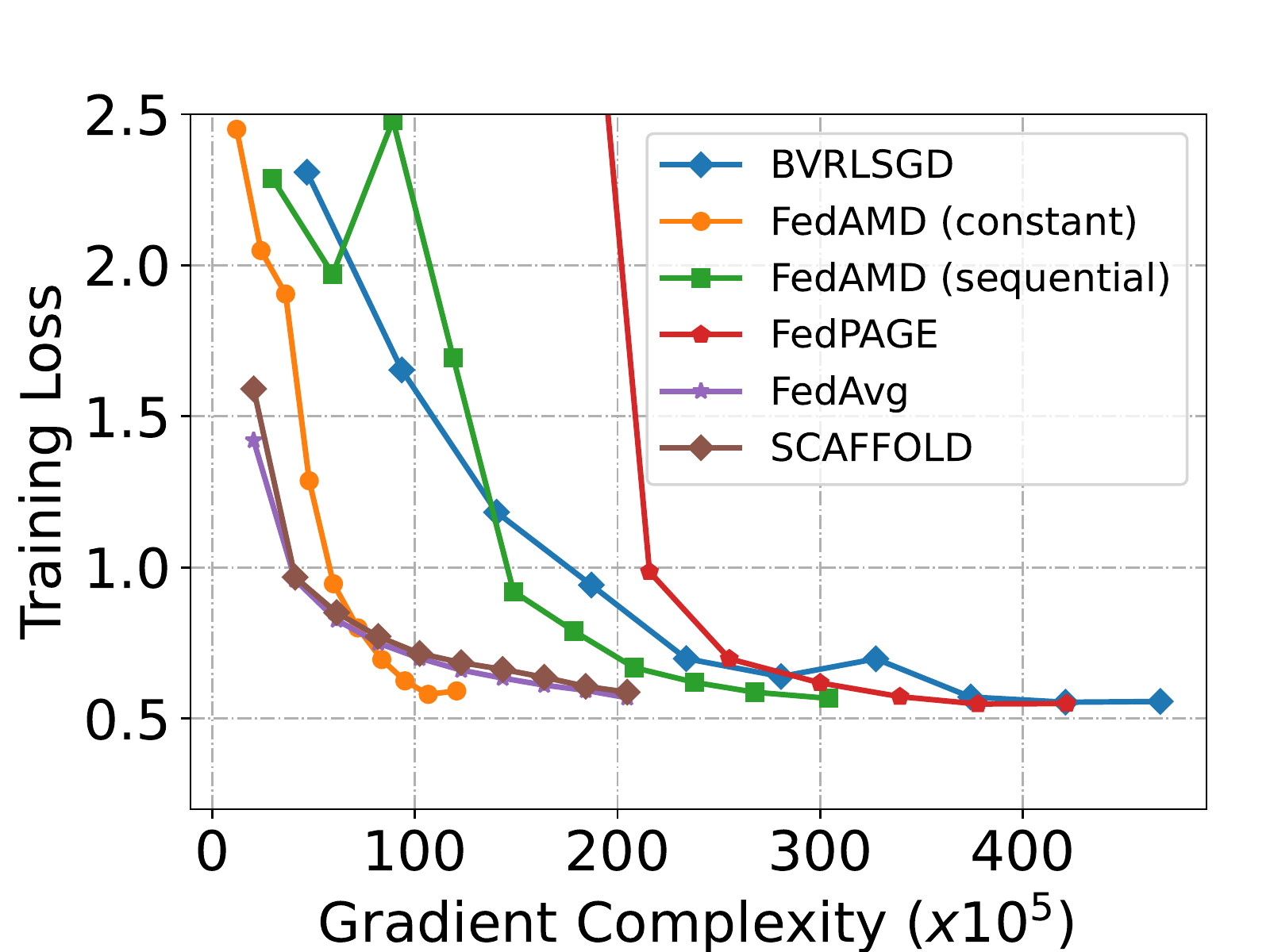}
}%
\hspace{12px}
\subfloat[100 clients ($K = 20$)]{
\label{subfig:w100_k20_baselines}
\centering
\includegraphics[width=0.28\textwidth]{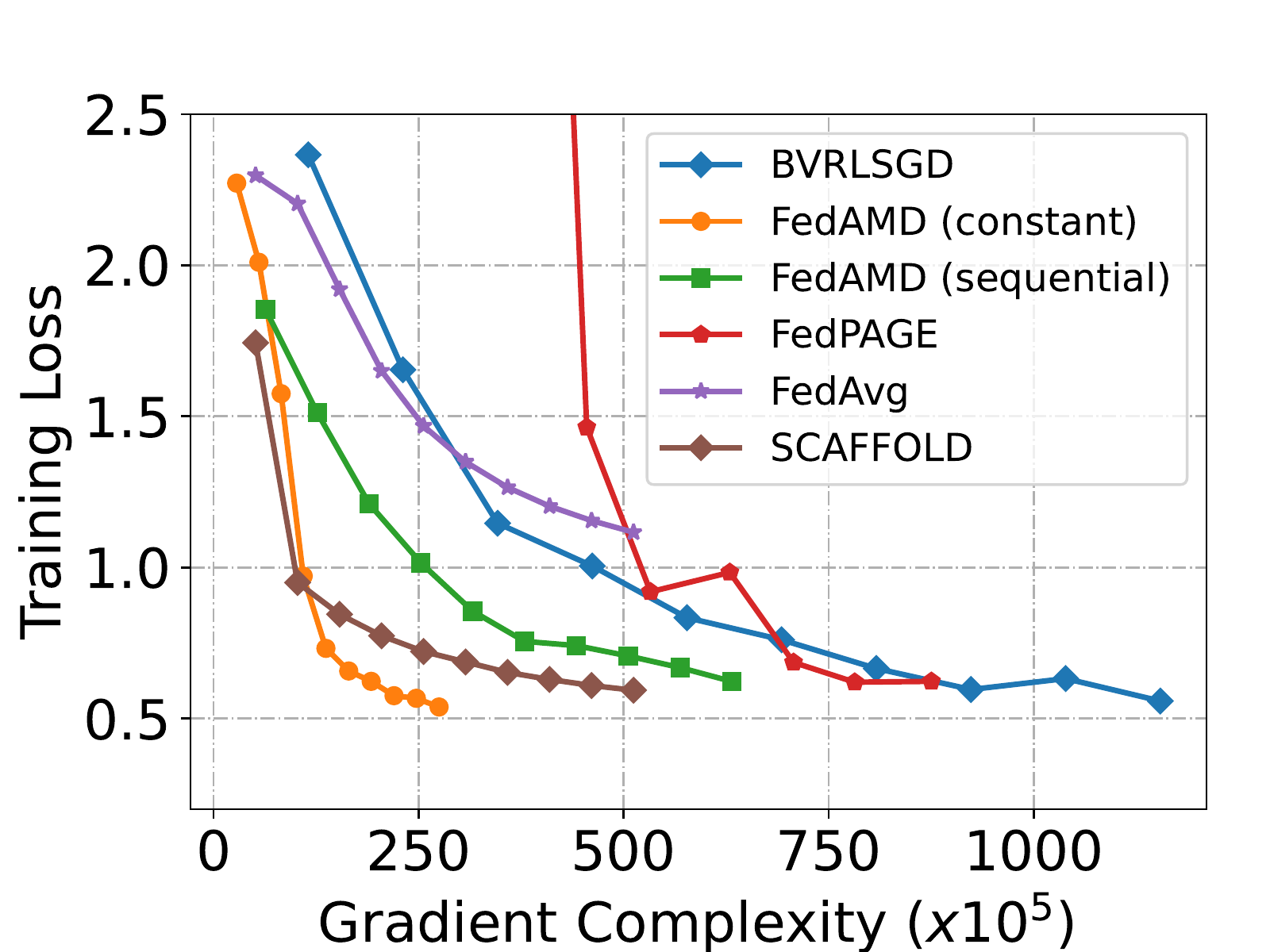}
}%
\\
\centering
\caption{Comparison of different algorithms using test accuracy against the communication rounds and training loss against gradient complexity when the number of local iterations is 20 ($K = 20$). } \label{fig:baselines3}
\end{figure*}

\vspace*{100px}
\subsection{Numerical Results on EMNIST digits}
In this section, Figure \ref{fig:emnist} analyzes our algorithm with one more dataset, i.e., EMNIST. In the first three figures, we evaluate different probability settings. As for the rest of the figures, we compare \algo{}with other baselines. 

With regards to different probability settings (Figure \ref{fig:emnist_k10_b256} -- \ref{fig:emnist_k10_b128}), we are still able to draw two conclusions as stated in Appendix \ref{appendix:subsec:fmnist_hyper}. As for the comparison among different algorithms, our proposed algorithm is able to outperform the state-of-the-art works when we take the test accuracy and the computation overhead into joint consideration.
\begin{figure*}[h]
\subfloat[$K = 10, b' = 256$]{
\label{fig:emnist_k10_b256}
\centering
\includegraphics[width=0.28\textwidth]{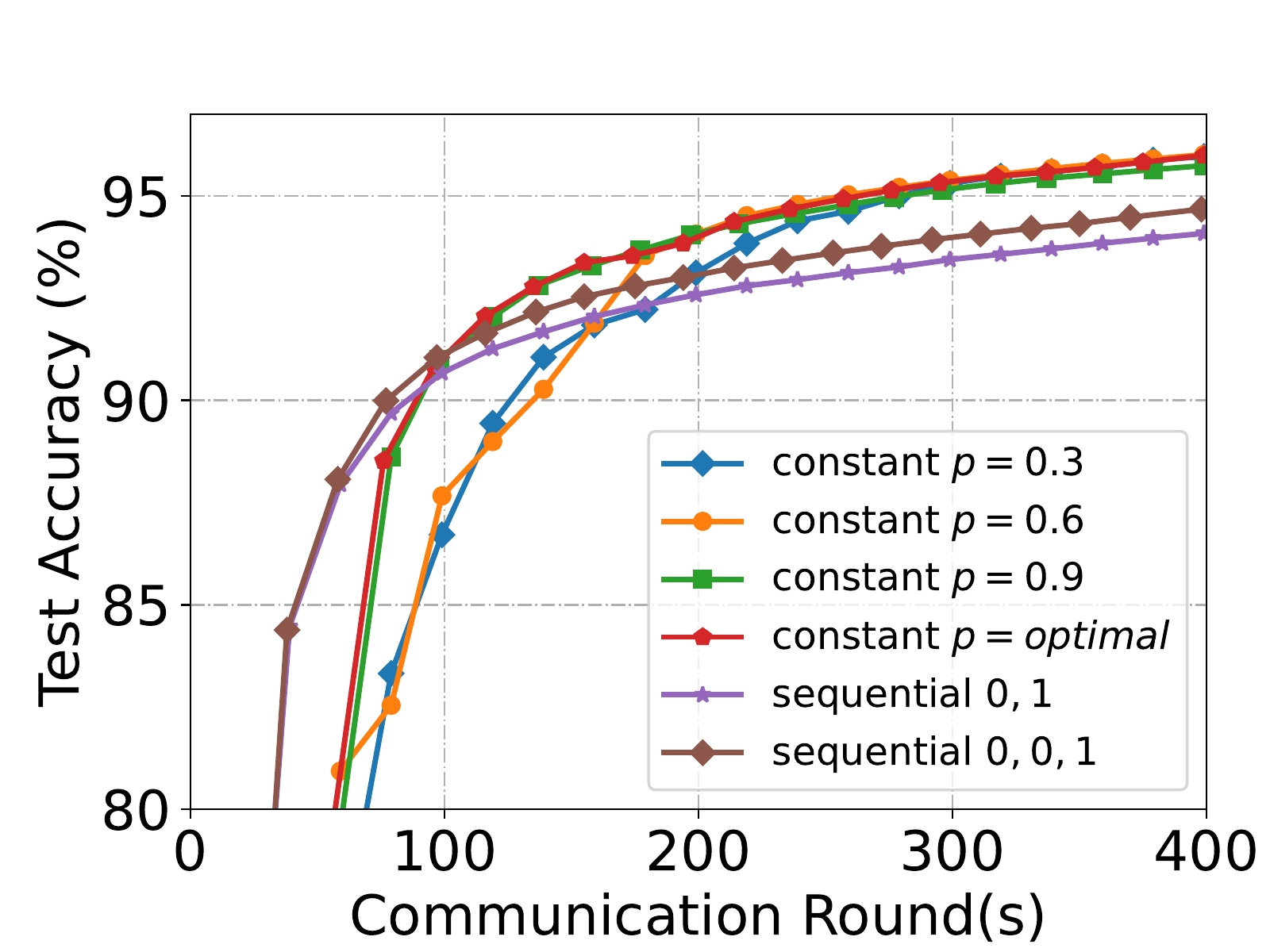}
}%
\hspace{12px}
\subfloat[$K = 20, b' = 128$]{
\centering
\includegraphics[width=0.28\textwidth]{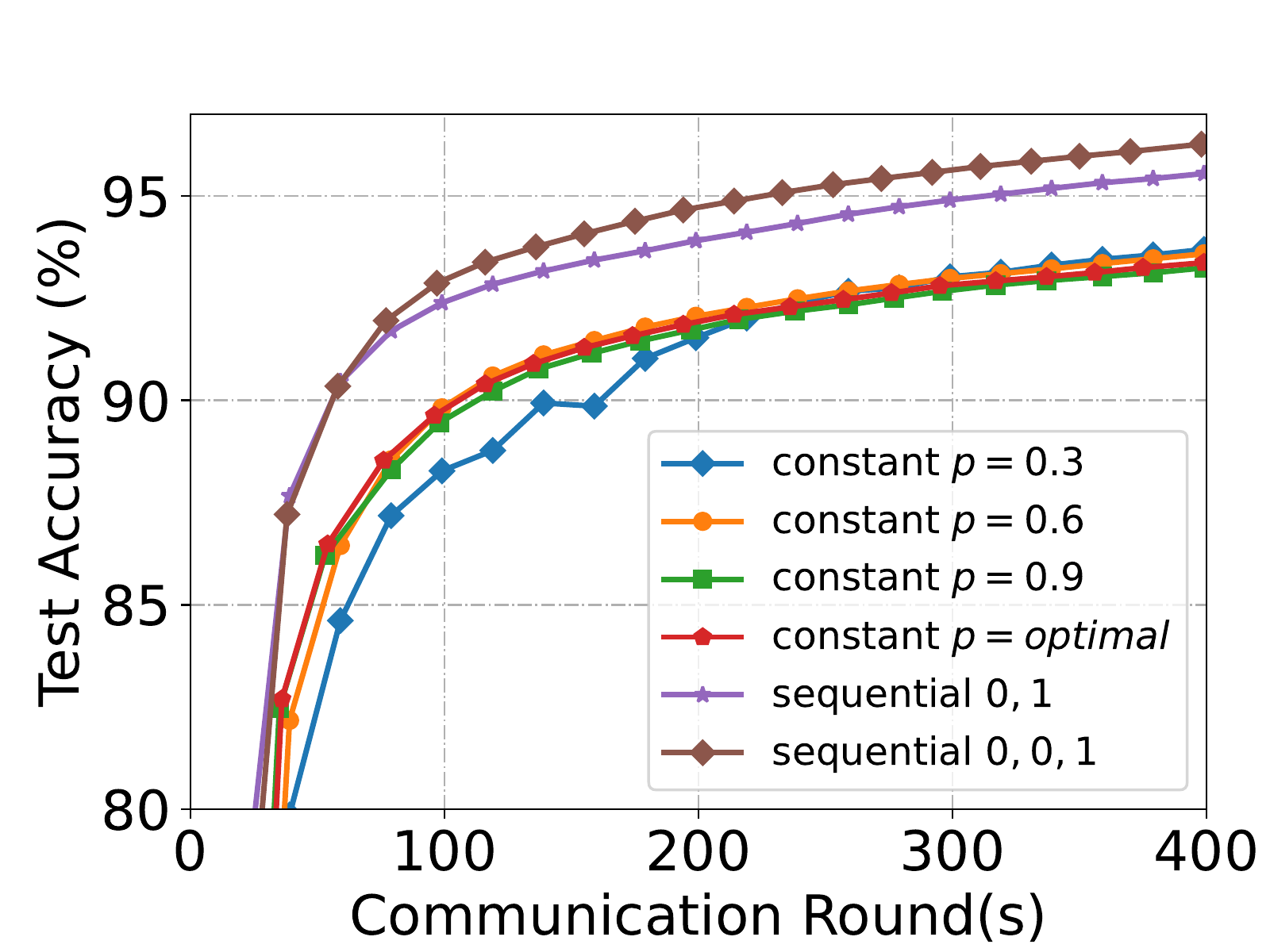}
}%
\hspace{12px}
\subfloat[$K = 10, b' = 128$]{
\label{fig:emnist_k10_b128}
\centering
\includegraphics[width=0.28\textwidth]{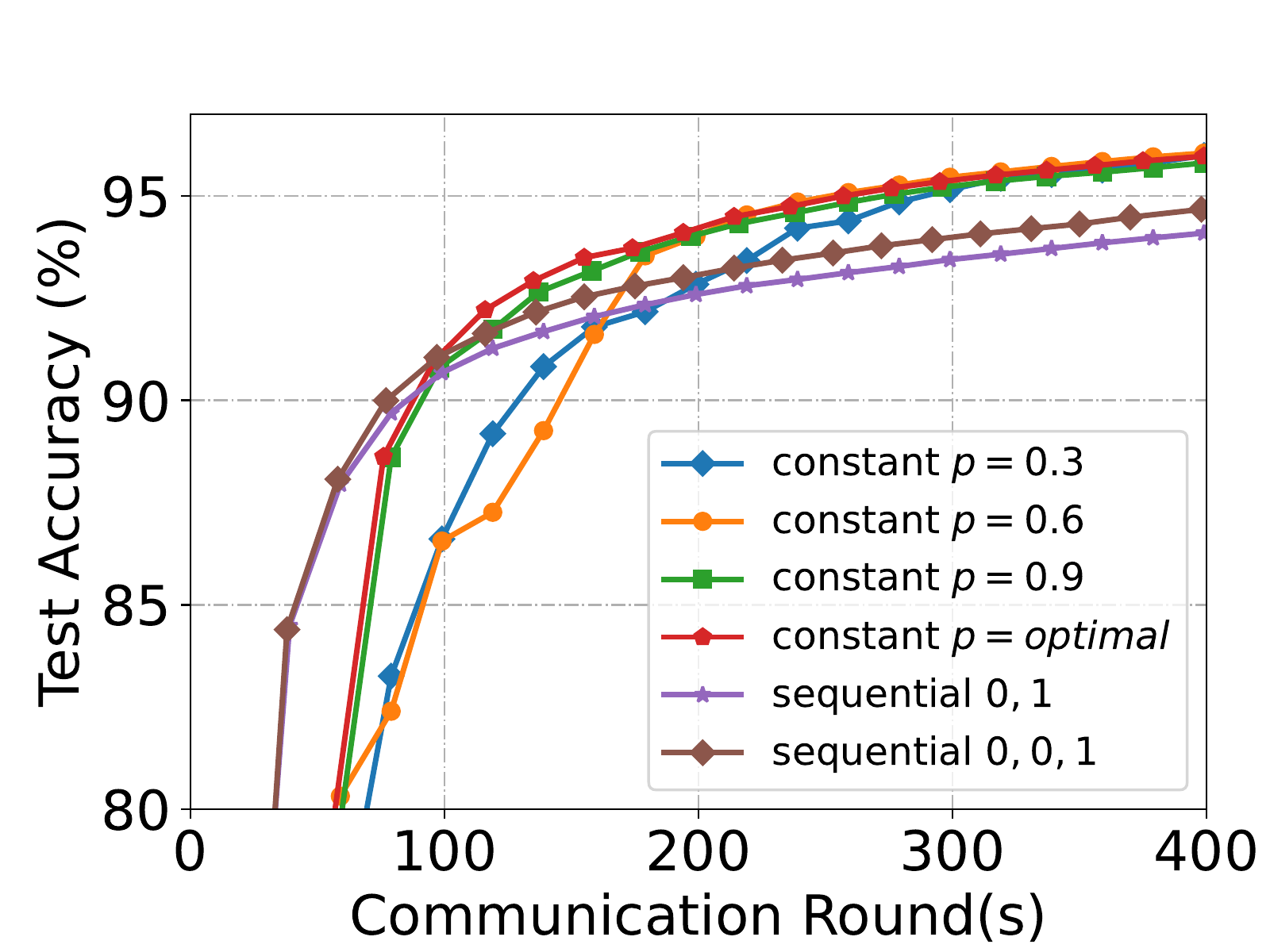}
}%
\\
\subfloat[$K = 10, b' = 256$]{
\centering
\includegraphics[width=0.28\textwidth]{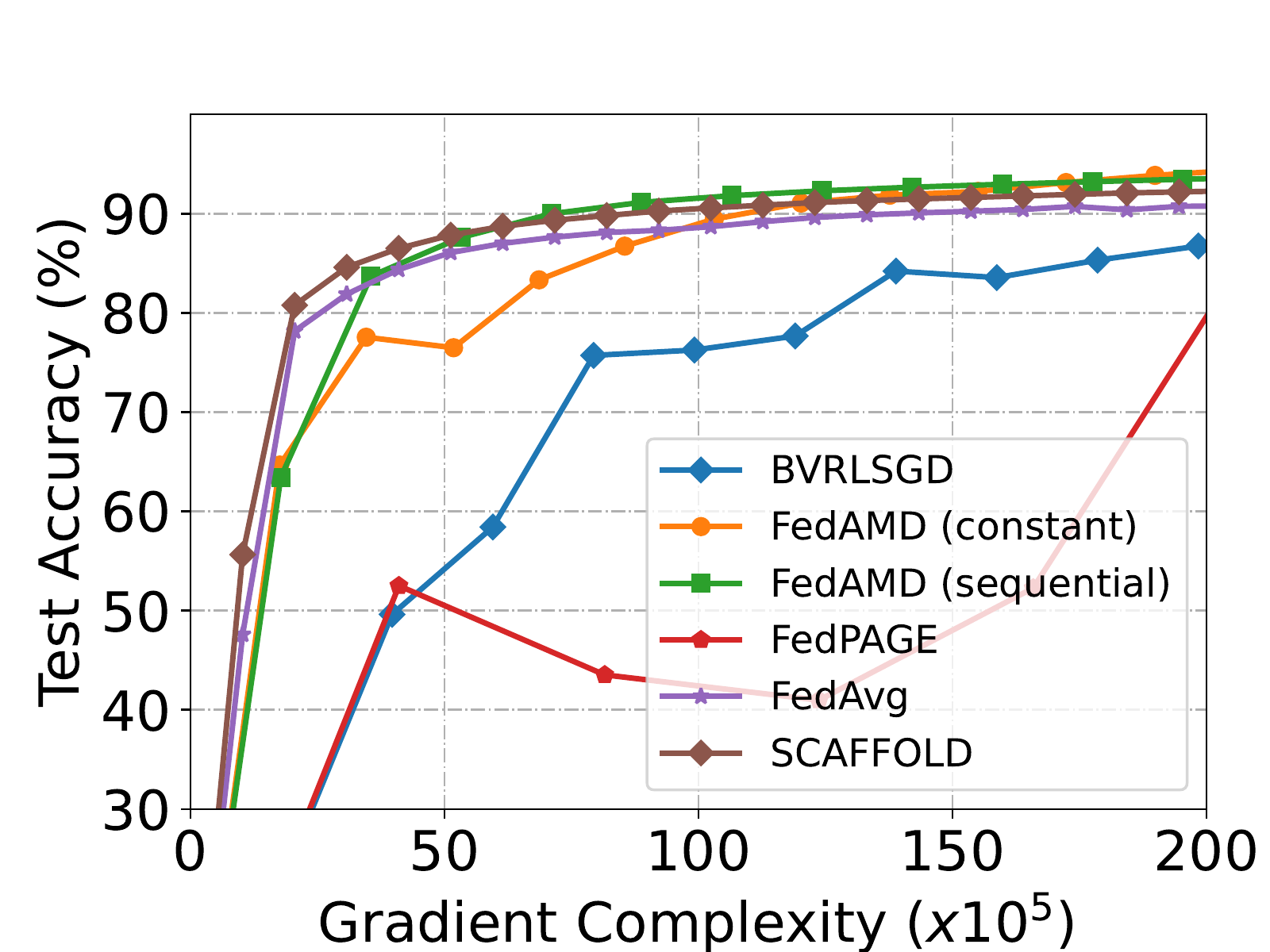}
}%
\hspace{12px}
\subfloat[$K = 20, b' = 128$]{
\centering
\includegraphics[width=0.28\textwidth]{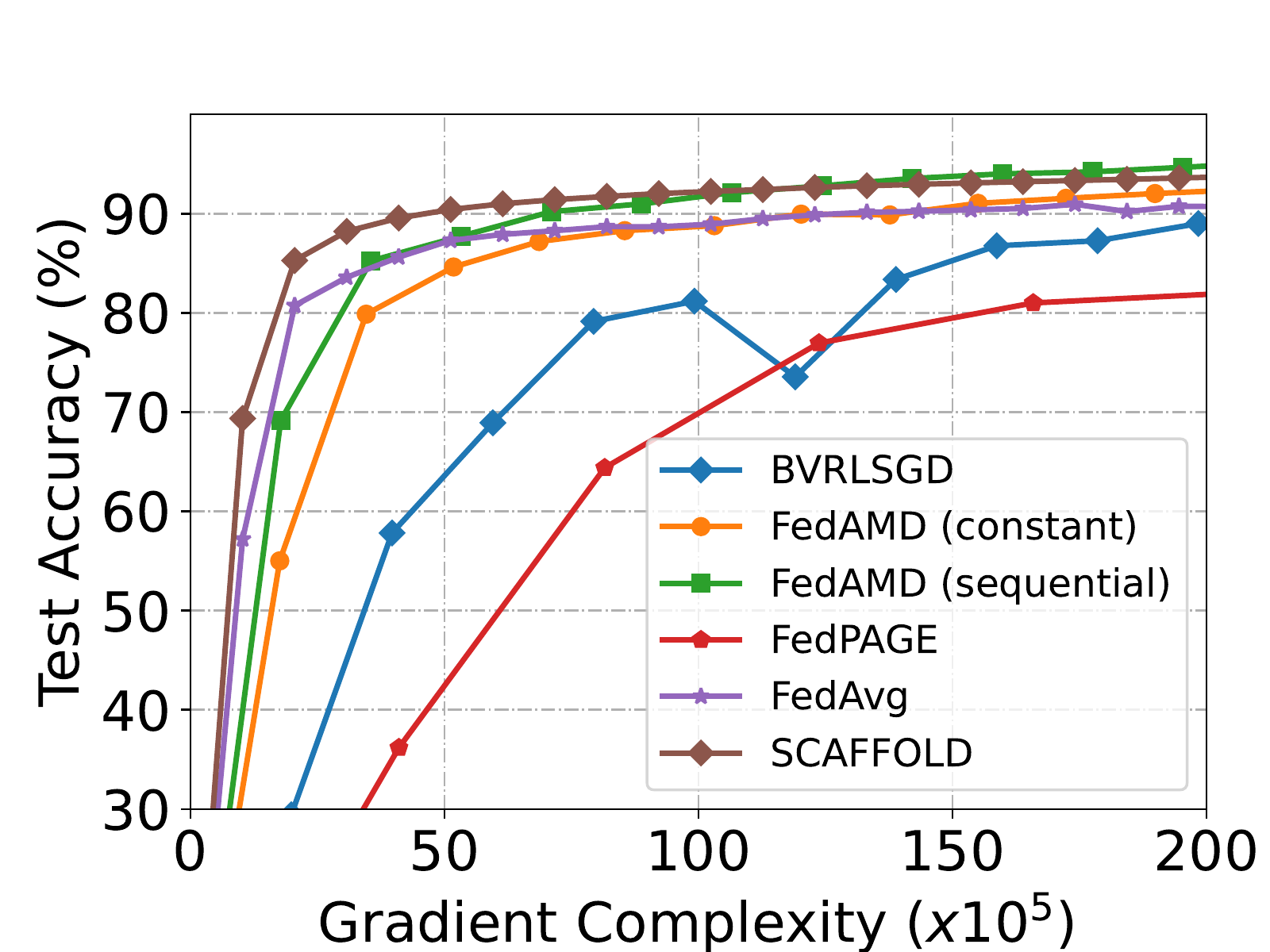}
}%
\hspace{12px}
\subfloat[$K = 10, b' = 128$]{
\centering
\includegraphics[width=0.28\textwidth]{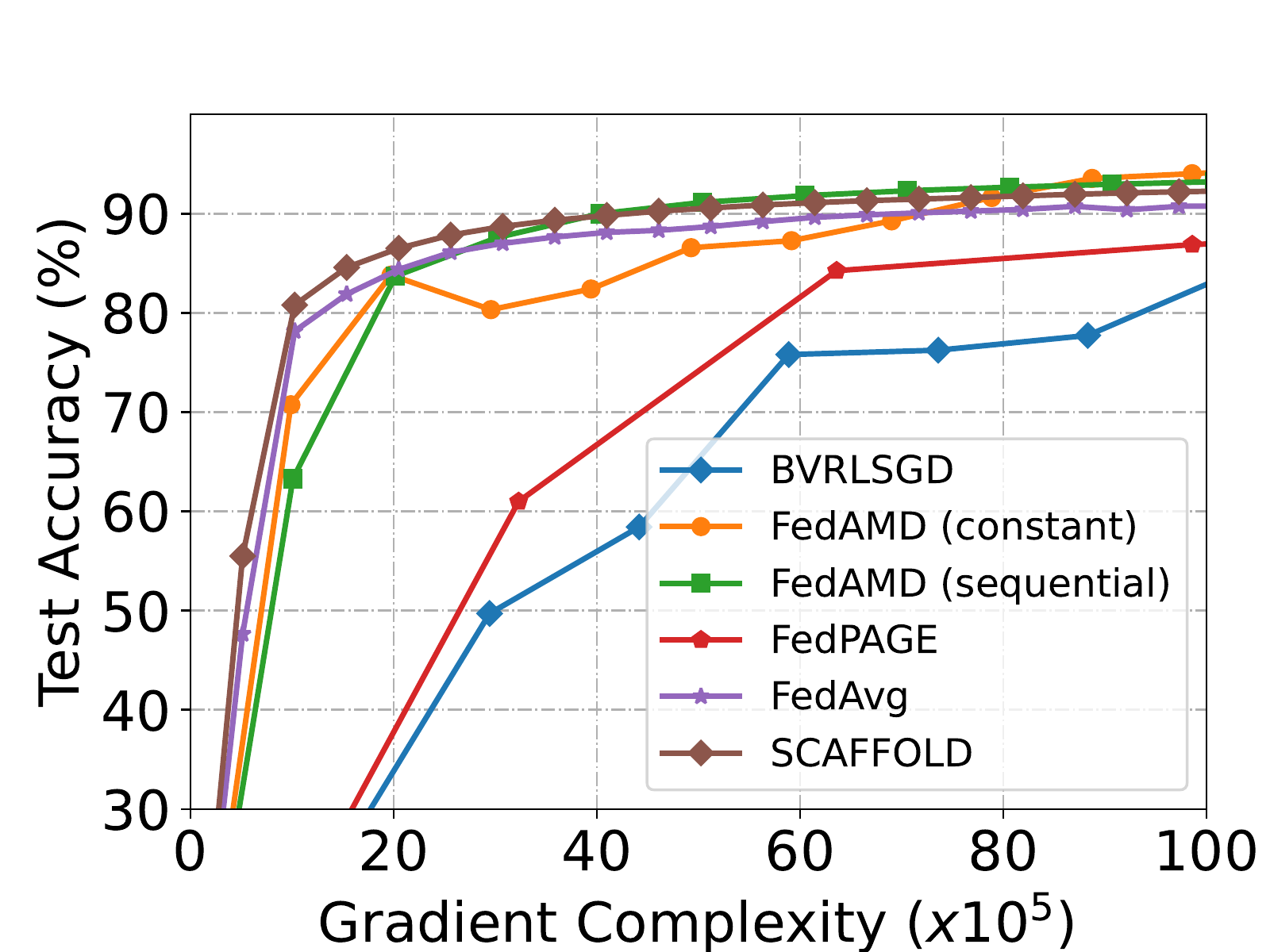}
}%
\centering
\caption{Comparison of different baselines and probability settings using test accuracy against communication rounds and gradient complexity, respectively.} \label{fig:emnist}
\end{figure*}


\end{document}